\renewcommand{\KwData}{\textbf{Input: }}
\renewcommand{\KwResult}{\textbf{Output: }}
\newtheorem{Def}{Definition}[section]
\newtheorem{Thm}{Theorem}[section]
\newenvironment{proof}{\noindent\emph{Proof}\newline}{\hfill$\square$}
\title{\textbf{Outlier Detection with Cluster Catch Digraphs}}
\author{Rui Shi\footnote{The Department of Mathematics and Statistics, Auburn University, rzs0112@auburn.edu}, Nedret Billor\footnote{The Department of Mathematics and Statistics, Auburn University, billone@auburn.edu}, and Elvan Ceyhan\footnote{The Department of Mathematics and Statistics, Auburn University, ezc0066@auburn.edu}}
\date{}
\begin{document}
\maketitle
% \tableofcontents

\begin{abstract}
\noindent
This paper introduces a novel family of outlier detection algorithms based on Cluster Catch Digraphs (CCDs), specifically tailored to address the challenges of high dimensionality and varying cluster shapes,
which deteriorate the performance of most traditional outlier detection methods. 
We propose the Rapid Uniformity-Based CCD with Mutual Catch Graph (RU-MCCD), the Uniformity- and Neighbor-Based CCD with Mutual Catch Graph (UN-MCCD), and their shape-adaptive variants (SU-MCCD and SUN-MCCD), which are designed to detect outliers in data sets with arbitrary cluster shapes and high dimensions. 
We present the advantages and shortcomings of these algorithms 
and provide the motivation or need to define each particular algorithm. 
Through comprehensive Monte Carlo simulations, we assess their performance and demonstrate the robustness and effectiveness of our algorithms across various settings and contamination levels. 
We also illustrate the use of our algorithms on various real-life data sets. The RU-MCCD algorithm efficiently identifies outliers while maintaining high true negative rates, 
and the SU-MCCD algorithm shows substantial improvement in handling non-uniform clusters. Additionally, the UN-MCCD and SUN-MCCD algorithms address the limitations of existing methods in high-dimensional spaces by utilizing Nearest Neighbor Distances (NND) for clustering and outlier detection. 
Our results indicate that these novel algorithms offer substantial advancements in the accuracy and adaptability of outlier detection, providing a valuable tool for various real-world applications.
\end{abstract}

\begin{keywords} 
Outlier detection, Graph-based clustering, Cluster catch digraphs, \\ $k$-nearest-neighborhood, Mutual catch graphs, Nearest neighbor distance.
\end{keywords}

\section{Introduction}

Research on \emph{outlier detection} has a long and rich history. As early as $1620$, Francis Bacon recognized the existence of substantial deviations from commonly occurring phenomena in nature \cite{bacon1878novum}. In the 19th century, Legendre and Gauss discovered the least squares methodology \cite{stigler1981gauss}. Legendre was the first mathematician to realize the impact of outliers (which he referred to as ``errors") on the method. He suggested rejecting models that produce errors too large to be admissible \cite{hadi2009detection}. Later, Edgeworth and Ysidro proposed dropping a certain portion of abnormal data points (i.e., most likely outliers) to avoid their substantial influence on least squares estimates \cite{edgeworth1887xli}.

Today, outlier detection remains a popular research topic due to its wide range of applications. For instance, it can help financial institutions identify suspicious loan applications \cite{paula2016deep, samariya2023comprehensive}. It can be employed to detect faults in mechanical units \cite{samariya2023comprehensive}. It can also be used in network anomaly detection to build a security management system that protects against intrusion attempts \cite{alrawashdeh2016toward, olteanu2023meta, gogoi2011survey}, and. Furthermore, outlier detection is crucial in diagnosing diseases such as brain cancer and leukemia \cite{gebremeskel2016combined}.

There have been various definitions of outliers since the start of outlier detection research. Ayadi \textit{et al.} \cite{ayadi2017outlier} summarized twelve different definitions according to different researchers in chronological order. Among all the definitions, the one from Hawkins is widely accepted by statisticians:
``\emph{An outlier is an observation that deviates so much from other observations, and it arouses suspicions that it was generated by a different mechanism.}" \cite{hawkins1980identification}

According to the previous surveys, outliers can be classified as point, collective, and local outliers \cite{zhang2013advancements, samariya2023comprehensive}:
\begin{itemize}
    \item[(1)] \textbf{Point outliers:} An individual point that is outlying.
    \item[(2)] \textbf{Collective outliers:} Several or a group of close data points showing a nonconforming pattern compared to the entire data set. Identifying an outlying group is generally a more challenging task.
    \item[(3)] \textbf{Local outliers:} A single (or a group of) points exhibits anomaly in terms of its (their) neighbors.
\end{itemize}

Outlier detection is essential for data analysis and pre-processing. It is easy to spot outliers visually in one or two-dimensional space. However, virtual inspection becomes challenging in higher dimensions. Thus, developing outlier detection algorithms is necessary, especially for a space with many dimensions.

Although many methods have been proposed, outlier detection remains challenging for the following reasons. (\romannumeral1) It is difficult to find precise support for regular data points in real-life data \cite{chandola2009anomaly}; (\romannumeral2) the definition of outliers varies substantially from one domain to another \cite{wang2019survey}; (\romannumeral3) distinguishing outliers from \emph{noise} is not trivial \cite{wang2019survey}. Furthermore, most outlier detection algorithms require input parameters that are too technical for non-professionals to understand, and the trial-and-error processes can be tedious and time-consuming. For this reason, we propose outlier detection methods that are either input-parameter-free or require only understandable input parameters that can be determined easily beforehand.

Additionally, \emph{masking} and \emph{swamping} are common problems in outlier detection. The masking problem occurs when an outlier is hidden by similar outliers that are close. Generally, it occurs among collective outliers. On the other hand, the swamping problem occurs when a regular observation is falsely labeled as outliers given either the effect of nearby true outliers or other close regular points that exhibit different behaviour \cite{bhar2013detection}.

Several strategies are proposed to avoid masking and swamping in outlier detection: employing robust statistics like median, trimmed means, and Median Absolute Deviation about the median (MAD) \cite{huber2011robust}; visualizing data with graphics (e.g., box plots) \cite{wang2018masking}; set the number of outliers to detect as an input parameter \cite{fung1999outlier}. These approaches help identify true outliers accurately without mislabeling non-outliers.

We propose outlier detection algorithms based on \emph{Cluster Catch Digraphs} (CCDs), which were first introduced by Devinney \cite{devinney2003class} and improved by Marchette \cite{marchette2005random}, developed from a similar classification digraph called \textit{Class Cover Catch Digraphs} (CCCDs). Later, Manukyan and Ceyhan \cite{manukyan2019parameter} modified and improved this approach further, developing two variants that use a Kolmogorov-Smirnov (KS) based statistic and Ripley's \textit{K} function, respectively, calling the associated digraphs KS-CCDs and RK-CCDs. RK-CCDs and KS-CCDs work similarly in clustering, and RK-CCDs are almost parameter-free, making them especially appealing. However, our experimental analysis shows that RK-CCDs may not be suitable for moderate to high dimensionality. Thus, we introduce another CCD-based approach that uses \emph{nearest neighbors} instead of Ripley's $K$ function to test underlying point-process patterns.

Given a data set, RK-CCDs and UN-CCDs construct an open (hyper)sphere for each latent cluster, called covering balls. Experimental results show that the covering balls catch the majority of points of a data set, which are considered regular points \cite{manukyan2019parameter}. On the other hand, we can find outliers among those points not covered by any covering balls, which are generally far away from any clusters and located in low-density regions. This is appealing and is also the motivation of this paper. We adapt RK-CCDs and UN-CCDs on two CCD-based outlier detection algorithms called the RU-MCCD and UN-MCCD algorithms; then, we propose two ``flexible" variations called the SU-MCCD and SUN-MCCD algorithms aiming at outlier detection on the data sets with arbitrary-shaped clusters.

By conducting comprehensive Monte Carlo experiments, we demonstrate that our algorithms exhibit wide adaptability and can deliver promising results across different data sets, even with high dimensionality. The paper is organized as follows: 

Section \ref{sec:prelim} covers previously proposed algorithms in outlier detection. We focus on the graph-based, density-based, cluster-based methods and previous works on CCCDs and CCDs. In Section \ref{sec:CCD4outliers}, we proposed \emph{Mutual Catch Graphs} (MCGs) based on KS-CCDs and its application on outlier detection given a single cluster. Then, we combine MCGs and CCDs, proposing four CCD-based outlier detection algorithms, called RU-MCCDs, UN-MCCDs, SU-MCCDs, and SUN-MCCDs, respectively. We conduct extensive simulations to assess the performance of all the CCD-based outlier detection algorithms starting from Section \ref{sec:Simul_CCDs}.

To help readers navigating the specialized terminology used throughout this paper, we enumerate a list of acronyms and their full terms below.

\begin{table}[htb]
  \centering
  \resizebox{\textwidth}{!}{\begin{tabular}{|c|c|}
  \hline
  \textbf{Abbreviation} & \textbf{Full Term} \\ \hline
  CCDs & Cluster Catch Digraphs\\ \hline
  RK-CCDs & The CCDs based on the Ripley's $K$ function\\ \hline
  KS-CCDs & The CCDs based on the a KS-based statistic\\ \hline
  UN-CCDs & Uniformity- and Neighbor-based CCDs\\ \hline
  D-MCGs & Density-based Mutual Catch Graphs\\ \hline
  U-MCCDs & Uniformity-Based CCDs with Mutual catch graph\\ \hline
  RU-MCCDs & Rapid Uniformity-Based CCDs with Mutual catch graph\\ \hline
  SU-MCCDs & Shape-adaptive Uniformity-based CCDs with Mutual catch graph\\ \hline
  UN-MCCDs & Uniformity- and Neighbor-based CCDs with Mutual catch graph\\ \hline
  SUN-MCCDs & Shape-adaptive Uniformity- and Neighbor-Based CCD with Mutual catch graph\\ \hline
  SR-MCT & Spatial Randomness Monte Carlo Test\\ \hline
  HPP & Homogeneous Poisson Process\\ \hline
  CSR & Complete Spatial Randomness\\ \hline
  NND & Nearest Neighbor Distance\\ \hline
  MAD & Median Absolute Deviation about the median\\ \hline
  MADN & Normalized Median Absolute Deviation about the median\\ \hline
  TPR & True Positive Rate\\ \hline
  TNR & True Negative Rate\\ \hline
  BA & Balance Accuracy\\ \hline
  \end{tabular}} \label{tab:acronyms}
\end{table}

\section{Background and Preliminaries}
\label{sec:prelim}
Researchers have proposed various outlier detection methods, and they are mainly categorized into graph-based, density-based, cluster-based, and statistical-based methods based on their core ideas \cite{smiti2020critical,olteanu2023meta}. The cluster-based methods generally operate in two phases: identifying clusters and pinpointing outliers within them \cite{wang2019survey}. We focus on the non-parametric categories (i.e., graph-based, density-based, and cluster-based methods), enumerating well-known methods, and include some (possible) subsequent variants developed following the prototypes.

\subsection{Graph-Based Methods}

Graph-based outlier detection methods employ graph theoretic techniques that capture outliers by constructing interdependence ties among observations \cite{wang2019survey}. These methods are suitable in scenarios where data is inherently relational, such as social networks, biological networks, and communication networks. We enumerate some well-known algorithms in this category below.

Noble and Cook proposed two graph-based anomaly detection methods with the \textit{Subdue system} \cite{noble2003graph}, which flag unusual subgraphs or substructures. \emph{OddBall} \cite{akoglu2010oddball} discovers substantial outlying patterns by four features. Hautamaki \textit{et al.} introduced \emph{Outlier Detection using In-degree Number} (ODIN), which assumes that outliers have a substantially lower in-degree than regular points in a $k$-Nearest Neighbor ($k$NN) graph \cite{hautamaki2004outlier}. Liu \textit{et al.} proposed an unsupervised-learning algorithm called \emph{Isolation Forest} \cite{liu2008isolation}, with the notion that outlier points have distinct characteristics, making them easier to isolate than regular data points in a binary tree. Other well-known method include \emph{OutRank} \cite{moonesinghe2006outlier}, \emph{Community Outlier Detection Algorithm} (CODA) \cite{gao2010community}, and \emph{Local Information Graph-based Random Walk model} (LIGRW) \cite{wang2018new}.

\subsection{Density-Based Methods}

Density-based methods identify outliers among points in low-density regions. Typically, these approaches measure a point's outlyingness by comparing its local density with those of its nearest neighbors.

\emph{Local outlier factor} (LOF) \cite{breunig2000lof} is one of the prototype methods in this category, which introduces \emph{local reachability density} to compute the local outlyingness of a point. Tang \textit{et al.} proposed \emph{Connectivity-based Outlier Factor} (COF) \cite{tang2002enhancing} that performs better than LOF on the outliers that deviate from their neighbor patterns but with similar local density. A similar method called \emph{LOcal Correlation Integral} (LOCI) \cite{papadimitriou2003loci} was proposed by Papadimitriou \textit{et al.}, coming with a data-orientated threshold for outlyingness score. Kriegel \textit{et al.} formulated a new outlyingness score called \emph{Local Outlier Probabilities} (LoOP) \cite{kriegel2009loop}, which represents the probability of a point being an outlier, greatly enhancing the interpretability. Other density-based outlier detection algorithms include \emph{Relative Density Factor} (RDF) \cite{ren2004rdf}, \emph{INFLuenced Outlier-ness} (INFLO) \cite{jin2006ranking}, \emph{Resolution-based Outlier Factor} (ROF) \cite{fan2009resolution}, \emph{Dynamic Window Outlier Factor} (DWOF) \cite{momtaz2013dwof}, \emph{High Contrast Subspaces} (HiCS) \cite{keller2012hics}, \emph{Simplified LOF} \cite{schubert2014local}, \emph{Global-Local Outlier Scores from Hierarchies} (GLOSH) \cite{campello2015hierarchical}, and \emph{Simple uni-variate Probabilistic Anomaly Detector} (SPAD) \cite{aryal2016revisiting}.

\subsection{Cluster-Based Methods}

Clustering is an unsupervised method that groups points that are close or behave similarly. Small clusters with substantially fewer points or isolated points far apart from other clusters could be labeled as outliers. Outliers often come as by-products of clustering algorithms.

So far, cluster-based methods have been classified into several subgroups, known as partitional, hierarchical, and density-based. Many are formulated with robust mechanisms against outliers \cite{wang2019survey}.

Partitional clustering methods create a single-level partition of the data set \cite{wang2019survey}. These algorithms typically begin with a pre-specified number of clusters, often represented by their centers, which can be obtained through a simple method like random selection. The partitions are then iteratively updated until a specific object function is optimized. The most commonly known algorithms include \emph{$k$-means} \cite{gareth2013introduction}, \emph{MacQueen} \cite{macqueen1967classification}, \emph{Partitioning Around Medoids} (PAM) \cite{kaufman2009finding}, \emph{Clustering LARge Applications} (CLARA) \cite{kaufman2009finding} and \textit{Clustering Large Applications based on RANdomized Search} (CLARANS) \cite{ng2002clarans}.

Hierarchical clustering methods construct a hierarchical tree-like structure called \emph{dendrogram} and partition the whole data set based on the desired granularity. It can be divided into two subgroups called \emph{agglomerative} and \emph{divisive clustering} \cite{zhang2013advancements}. One of the popular algorithms is the \emph{Minimal Spanning Tree} (MST) method \cite{MST}, which constructs a minimal spanning tree that connects all data points and removes ``inconsistent" edges to obtain clusters and outliers. Other algorithms include \emph{Clustering Using Representatives} (CURE) \cite{cure}, \textit{CHAMELEON} \cite{karypis1999chameleon}, \emph{Robust Clustering using links} (ROCK) \cite{guha2000rock}.

The core idea of the density-based clustering method involves identifying the regions where data points are dense as clusters. Some well-known examples include \emph{Density-Based Spatial Clustering of Applications with Noise} (DBSCAN) \cite{ester1996density}, which captures clusters by first finding some core points and expanding them to clusters. Other well-known algorithms include \emph{Ordering Points To Identify the Clustering Structure} (OPTICS) \cite{ankerst1999optics}, \emph{Distribution Based Clustering of LArge Spatial Databases} (DBCLASD) \cite{xu1998distribution}, and \emph{DENsity-based CLUstEring} (DENCLUE) \cite{hinneburg1998efficient}.

\subsection{Evaluation Metrics in Outlier Detection}

Although many outlier detection algorithms have been introduced over the last two decades, there has yet to be an agreed-upon answer to how to measure the performance of an outlier detection algorithm \cite{wang2019survey}. Although researchers have always concluded that their approaches are comparable to or outperform existing algorithms, some of their conclusions are subjective due to the choice of the evaluation metric, and a more comprehensive empirical analysis is needed \cite{wang2019survey}. We choose \emph{True Positive Rate} (TPR), \emph{True Negative Rate} (TNR), \emph{Balanced Accuracy} (BA), and $F_2$-scores as evaluation matrics in Monte Carlo simulations.

TPR (i.e., recall) and TNR measure the ratio of correctly identified outliers and regular points. However, outlier detection is essentially a classification problem over highly imbalanced data sets, the performance of which should not be solely measured by plain accuracies or errors \cite{chawla2004special, daskalaki2006evaluation}. Therefore, we also consider using BA and $F_2$-scores. BA is the mean of TPR and TNR, and $F_2$-scores is the weighted harmonic mean of precision and recall. Both of them focus on positive and negative observations and are widely used in highly imbalanced data sets; they are suitable for evaluating the performance of outlier detection algorithms \cite{sokolova2006beyond}.

The outlier detection algorithms we propose are based on CCDs. CCDs are digraphs with all data points as vertices and arcs determined by the spherical balls centered at the vertices. Actually, \textit{Class Cover Catch Digraphs} (CCCDs), formulated by Priebe \textit{et al.} \cite{marchette2003classification}, are prototypes of CCDs. CCCDs are powerful tools for supervised classification. Following the chronological order, we will first discuss CCCDs briefly.

\subsection{Class Cover Catch Digraphs}

Given a data set $\mathcal{X} \subset \mathbb{R}^d$ that consists of \emph{i.i.d} points from two classes $\mathfrak{X_0}=\{x_1,x_2,...,x_n\}$ and $\mathfrak{X_1}=\{y_1,y_2,...,y_m\}$, i.e., $\mathcal{X} = \mathfrak{X_0} \cup \mathfrak{X_1}$. Without loss of generality, here we refer to the class of interest  $\mathfrak{X_0}$ as \emph{target class} and $\mathfrak{X_1}$ as \emph{non-target class}.

\emph{Class Cover Problem} (CCP) aims to distinguish the target class ($\mathfrak{X_0}$) from the non-target class ($\mathfrak{X_1}$) by finding a minimum collection of open balls or hyperspheres $B_i =B(a_i,r_i) = \{x|d(a_i,x)<r_i,\ x \in \mathcal{X} \}$ such that $\cup_iB_i$ covers all the points of the target class $\mathfrak{X_0}$ while excluding the non-target class ($\mathfrak{X_1}$) \cite{marchette2005random}.

CCCDs address the CCP. A CCCD for $\mathfrak{X_0}$, denoted as $D_0=(V_0,A_0)$, is a digraph with vertex set $V_0=\mathfrak{X_0}$ and arc set $A_0$. It starts by constructing a \textit{covering ball} $B(x_i,r_{x_i})$ centered at each $x_i \in V_0$. For any two distinct vertices $x_i,x_j \in V_0$, the arc $(x_i,x_j) \in A_0$ if and only if $x_j \in B(x_i,r_{x_i})$. We could also build a CCCD for $\mathfrak{X_1}$ by swapping the roles of the two classes. Currently, there are two variants of CCCDs: pure-CCCDs (P-CCCDs) and random walk-CCCDs (RW-CCCDs). They differ in the criterion used to determine the radius $r_{x_i}$ for each covering ball $B(x_i,r_{x_i})$ \cite{manukyan2016classification}. We will not discuss their details here.

\subsubsection{The Approximate Minimum Dominating Sets}
\label{sec:aMDS}
With the above construction, a digraph $D_i=(V_i,A_i)$ and a cover $\cup_i B_i$ for the target class $\mathcal{X_i}$ ($i=0$ or $1$) either by P-CCCDs or by RW-CCCDs can be obtained. However, to avoid the over-fitting problem, we may want to reduce the complexity of the covers by keeping only a certain number of covering balls and dropping the others \cite{marchette2003classification}. The centers of these retained covering balls are called the \textit{prototype set}. Obtaining a \textit{minimum dominating set} (MDS) $S_i$ for $D_i$ is one way to achieve this goal.

Finding an MDS is generally an NP-Hard optimization problem \cite{karp1972reducibility}. Fortunately, the Greedy Algorithm \ref{alg:greedy1} below provides an efficient way to find an approximate MDS in $O(|V_0|^2)$ time \cite{chvatal1979greedy, hochbaum1982approximation}. The algorithm initializes with all vertices as uncovered and an empty dominating set. It iteratively selects the vertex with the maximum outdegree, adds it to the dominating set, and removes its closed neighborhood from the set of uncovered vertices. This process repeats until all vertices are covered.
Additionally, there are two more variants of greedy algorithms proposed by Manukyan and Ceyhan \cite{manukyan2019parameter}, differing in the way of choosing vertex at each iteration. The first variant is presented as the Greedy Algorithm \ref{alg:greedy2} below, and this variant is tailored for CCDs. At each iteration, it selects the vertex with the maximum outdegree in the initial digraph, such that the members of the dominating set will be closer to the cluster centers. The second variant is greedy in a score function, i.e., chooses a vertex $v$ that maximizes a score function $sc(v)$ at each iteration. It is presented as the Greedy Algorithm \ref{alg:greedy3}.

In general, RW-CCCD outperforms P-CCCD in classification, especially when the data set is highly imbalanced \cite{manukyan2016classification}.

\begin{algorithm}[htb]
\setcounter{algocf}{0}
\SetAlgorithmName{Greedy Algorithm}{}{}
\caption{(A greedy algorithm finding an approximate MDS) $D^{sub}(S)$ is the induced sub-digraph of vertex set $S$ from a digraph $D$, $\bar{N}(v)$ is the closed neighborhood of a vertex $v$. $V_{temp}$ represents the uncovered vertices at current iteration.}
\label{alg:greedy1}
\KwData{A digraph $D=(V(D),A(D)).$ for a given data set $\mathcal{X} = \{x_1,x_2,...,x_n\}$}\\
\KwResult{A approximate minimum dominate set $\hat{S}.$}\\
\nl\textbf{Initialization:} $V_{temp} \gets V(D)$, $\hat{S} \gets \emptyset$\\
\nl\While{$V_{temp} \neq \emptyset$}{
\nl $v_{temp}\gets \arg\max_{v \in V(D)}\{d_{out}(v)\}$\Comment*[r]{$d_{out}(v)$:the outdegree of $v$ in $A(D)$}
\nl $V_{temp} \gets V_{temp} \symbol{92} \bar{N}(v_{temp})$\;
\nl $\hat{S} \gets \hat{S} \cup \{v_{temp}\}$\;
\nl $D \gets D^{sub}(V_{temp})$\;
}
\end{algorithm}

\begin{algorithm}[htb]
\SetAlgorithmName{Greedy Algorithm}{}{}
\caption{(A greedy algorithm finding an approximate MDS) This greedy algorithm is adapted for Cluster Catch Digraphs (CCDs).}\label{alg:greedy2}
\KwData{A digraph $D=(V(D),A(D))$ for a given data set $\mathcal{X} = \{x_1,x_2,...,x_n\}$}\\
\KwResult{A approximate minimum dominate set $\hat{S}$}\\
\nl\textbf{Algorithm Steps:} It is similar to Greedy Algorithm \ref{alg:greedy1}, except that it iteratively selects the vertex with the maximum outdegree in the initial digraph.
\end{algorithm}

\begin{algorithm}[htb]
\SetAlgorithmName{Greedy Algorithm}{}{}
\caption{(A greedy algorithm finding an approximate MDS) This algorithm is similar to Greedy Algorithm \ref{alg:greedy1}, except that it is greedy in a score function $sc(v)$ at each iteration.}\label{alg:greedy3}
\KwData{A digraph $D=(V(D),A(D))$ for a given data set $\mathcal{X} = \{x_1,x_2,...,x_n\}$}\\
\KwResult{A approximate minimum dominate set $\hat{S}$}\\
\end{algorithm}

\subsection{Cluster Catch Digraphs Using a KS-Based Statistic}

The CCCD approach for classification was adapted to clustering, and CCDs were introduced by DeVinney \cite{devinney2003class}. Suppose there is an unlabeled data set $\mathcal{X} = \{x_1,x_2,...,x_n\}$ in $\mathbb{R}^d$ drawn from a mixture distribution, where each component of the mixture represents a cluster, the goal is to determine the number of clusters and the optimal partition. Unlike CCCDs, CCDs determine the optimal radius of each covering ball using a Kolmogorov-Smirnov (KS)-based statistic. The KS-based statistic measures the ``clustered-ness" around a point $x_i \in \mathcal{X}$ \cite{marchette2005random}, and it is defined as follows,
\begin{equation}\label{equ:KS_stat}
	T_{KS}(x_i,r)= F_{rw}(x_i,r)-F_0(x_i,r),
\end{equation}
where $F_{rw}(x_i,r)$ equals the number of points caught by the covering ball $B(x_i,r)$. The second term $F_0(x_i,r)$ represents the expected number of points in $B(x_i,r)$ under a null distribution. For example, under the common assumption of \textit{Complete Spatial Randomness} (CSR), which is also known as \textit{Homogeneous Poisson Process} (HPP), we can take $F_0(x_i,r)=\delta r^d$ \cite{marchette2005random}, where $d$ represents the dimensionality and $\delta$ is an input density parameter. Based on the Kolmogorov-Smirnov (KS) type test, the optimal radius $r_{x_i}$ is chosen to maximize $T_{KS}(x_i,r)$, i.e.,
\begin{equation}\label{equ:KS_rad}
	r_{x_i}=\arg\max_{r \geq 0}\{T_{KS}(x_i,r)\}.
\end{equation}

By maximizing $T_{KS}(x_i,r)$, the value of the radius is selected with the notion that the most clustered points around $x_i$ are covered by $B(x_i,r_x)$ \cite{marchette2005random}.

Once the radii are determined, a CCD for $\mathcal{X}$, denoted as $D=(V(D), A(D))$, can be constructed. The weakly connected components of $D$ (i.e., $\cup_{i=1}^{m} C_i=\mathcal{X}$) can be returned as clusters. However, for each cluster found, its covering balls are not equally important. Thus, for the same reason as CCCDs, obtaining a lower complexity cover is desired. Similar to CCCDs, this goal can be achieved by finding an approximate MDS. Marchette proposed two versions of modified greedy algorithms to find an approximate MDS for CCDs \cite{marchette2005random}. Despite the two modified versions, Manukyan and Ceyhan prefer the Greedy Algorithm \ref{alg:greedy2}.

Although an approximate MDS $\hat{S}$ reduces the cover complexity, not all its covering balls are necessary. To further reduce the complexity of the cluster cover, one can identify the ``core" covering balls by constructing an \emph{intersection graph}, denoted as $G_{MD}=(V_{MD}, E_{MD})$, where $V_{MD}=\hat{S}$, and for any points $u,v \in \hat{S}$, the edge $uv\in E_{MD}$ if and only if $B(u,r_u)$ and $B(v,r_v)$ cover some common points in $\mathcal{X}$. With the intersection graph $G_{MD}$, one can implement Greedy Algorithm \ref{alg:greedy1} to prune $\hat{S}$ again. The approximate MDS of $G_{MD}$ is denoted as $\hat{S}(G_{MD})$, and each covering ball of $\hat{S}(G_{MD})$ represents a \emph{latent cluster}.

Although we have reduced the cover complexity in two sequenced phases and can obtain a partition $P=\{P_1, P_2,..., P_k\}$ for $\mathcal{X}$, the clustering result is not robust to noise and outlier clusters. Therefore, Manukyan and Ceyhan \cite{manukyan2019parameter} employs the \textit{silhouette index} \cite{gan2020data} to identify and remove redundant clusters. Silhouette index of $x_i$, written as $sil(x_i)$, is a metric measuring how well $x_i$ is clustered in terms of the partition $P$. Manukyan and Ceyhan \cite{manukyan2019parameter} first rank the partitions in $P$ in a decreasing order based on their size. Starting from the first two, they add partitions incrementally as valid clusters until the average silhouette index of the entire data set (denoted as $sil(P)$) is maximized. Indicating that no more clusters are necessary, and we call the covering balls retained as the \emph{dominating covering balls} of the intersection graph.

For the point $x_i \in \mathcal{X}$ that is not covered by any selected clusters (covering balls), it can be assigned to the nearest cluster (covering ball) with minimal relative similarity measure. The relative similarity measure between $x_i$ and the covering ball $B(x_j,r_{x_j})$, denoted as $\rho(x_i,B(x_j,r_{x_j}))$, can be computed as follows,
\begin{equation}\label{equ:ReDist}
	\rho(x_i,B(x_j,r_{x_j})) =d(x_i,x_j)/r_{x_j}.
\end{equation}
For simplicity, Manukyan and Ceyhan \cite{manukyan2019parameter} refer to the CCDs based on a KS-based statistic as KS-CCDs.

\subsection{Cluster Catch Digraphs using Ripley's \textit{K} Function}
Although utilizing silhouette index enhances the robustness of KS-CCDs to outliers or noise clusters, there are still a few shortcomings due to the intrinsic property of the KS-based statistic. It is a density-based statistic falling short of delivering insight into the spatial distribution of data points. As a result, it may falsely return two or more clusters as one \cite{manukyan2019parameter}. Additionally, the input density parameter $\delta$ is usually unknown beforehand. As a result, an appropriate value of this parameter can only be obtained via a costly trial-and-error process in most cases.

To tackle the shortcomings above, instead of using the KS-based statistic, Manukyan and Ceyhan \cite{manukyan2019parameter} applied Ripley's \textit{K} function \cite{ripley1976second}, denoted as $K(t)$, and designed a distribution-based test to determine whether the points inside each covering ball follow an HPP. This test will be referred to as the \textit{Spatial Randomness Monte Carlo Test} (SR-MCT) with Ripley's \textit{K} function in this article. For each covering ball, an optimal radius can be specified as the maximum possible value that the points covered satisfy an HPP. The resulting algorithms are called the RK-CCD algorithm. It is worth noting that the only difference between RK-CCDs and KS-CCDs is the way to determine the values of radii.

Manukyan and Ceyhan \cite{manukyan2019parameter} also proposed another variant of RK-CCDs, which aims to find clusters with arbitrary shape. In this variant, rather than the approximate MDS, the connected components of the intersection graph $G_{MD}$ are considered to be clusters.

\subsection{Our Contribution}

In this paper, we first introduce the RU-MCCD algorithm, which combines RK-CCDs and the \emph{Mutual Catch Graph} from KS-CCD, and find potent outliers within some low-density regions. To tackle the data sets in high dimensional space, we introduce another CCD-based clustering algorithm called UN-CCDs, which utilizes the \emph{Nearest-Neighbor Distance} (NND) to test CSR. Then, we adapt UN-CCDs similarly for outlier detection, and the resulting algorithm is called the UN-MCCD algorithm.

The RU-MCCD and UN-MCCD algorithms find clusters in (approximate) spherical shapes. To construct covers for arbitrary-shaped clusters, we introduce the SU-MCCD and SUN-MCCD algorithms, the ``flexible" variants of the first two CCD-based outlier detection algorithms. Extensive experiments show they deliver better performance in general when the shape of the clusters is arbitrary or the dimensionality of a data set is high.

Besides the four CCD-based outlier detection algorithms, we have also introduced two types of scores, \emph{Outbound Outlyingness Score} (OOS) and \emph{Inbound Outlyingness Score} (IOS), to quantify the outlyingness of a point. To be used, they must be combined with a CCD-based algorithm. In experimental analysis, we found that IOS performs exceptionally well; it is robust to the masking and swamping problem and achieves promising results even on a data set with a dimensionality of 100.

In summary, we enumerate our contributions as the follows:
\begin{itemize}
  \item[\romannumeral1] \textbf{The RU-MCCD algorithm}: Combines RK-CCDs and Mutual Catch Graphs (MCGs) for outlier detection in low-density regions.
  \item[\romannumeral2] \textbf{UN-CCDs for clustering}: Utilize the Nearest-Neighbor Distance instead of Ripley's $K$ function to test CSR and are more effective in high-dimensional spaces.
  \item[\romannumeral3] \textbf{The UN-MCCD algorithm}: An adaptation of UN-CCDs specifically tailored for outlier detection.
  \item[\romannumeral4] \textbf{The SU-MCCD and SUN-MCCD algorithms}: The shape-adaptive version of the UN-MCCD and SU-MCCD algorithms to handle data sets with clusters of arbitrary shapes.
  \item[\romannumeral5] \textbf{Outbound Outlyingness Score (OOS) and Inbound Outlyingness Score (IOS)}: New metrics to quantify how much a data point deviates from regular points, particularly with IOS demonstrating robustness to masking and swamping issues.
\end{itemize}

\section{Outlier Detection with Cluster Catch Digraphs}
\label{sec:CCD4outliers}
\subsection{The Mutual $k$-Nearest-Neighbor Graphs}

Brito \textit{et al.} \cite{brito1997connectivity} proposed an approach that uses the mutual $k$-nearest neighbor (m$k$NN) graph to detect latent clusters, and Marchette \cite{marchette2005random} pointed out that this approach is appropriate for outlier detection. The main idea of Brito's approach is to identify latent clustering structures or outliers by examining the local connectivity of each point of a data set $\mathcal{X}$. To achieve this goal, they formulated a test measuring the connectivity of the m$k$NN graph (which is denoted as $\tilde{G}_k(\mathcal{X})$) for the given data set $\mathcal{X}$. Under the null hypothesis $$H_0: \text{``no clustering structure or no outliers" (i.e., data forms a single cluster),}$$ the test assumes that $\tilde{G}_k(\mathcal{X})$ should be connected given a $k$ value less than or equal to a certain threshold $k_{max}$. Appropriate value(s) of $k_{max}$ is(are) determined by Monte Carlo simulation and a model using the Ordinary Least Squares (OLS). Once $k_{max}$ has been found, the m$k$NN graph, $\tilde{G}_{k_{max}}(\mathcal{X})$, is examined to determine whether it can be partitioned into multiple components. In general, these components are returned as separate clusters or outliers. However, labeling these components can be somewhat challenging \cite{brito1997connectivity}. Later, Marchette \textit{et al.} \cite{marchette2005random} suggested that this method is more suitable for outlier detection because it is susceptible to the presence of contextual (local) outliers. In the following section, we introduce our first approach based on a similar idea.

\subsection{The Mutual Catch Graphs}

Inspired by Brito's approach that focuses on the connectivity of the m$k$NN graph, we have adopted a similar idea to KS-CCDs or RK-CCDs. Rather than constructing an m$k$NN graph, we introduce \emph{mutual catch graphs} (MCGs), and it is defined as follows:

\begin{Def}[Mutual Catch Graphs (MCGs)]\label{sec:Def_MCG}
Given a data set $\mathcal{X} = \{x_1,...,x_n\}$ of i.i.d points and a CCD denoted as $D(\mathcal{X})$, a Mutual Catch Graph (MCG), denoted as $G_M(\mathcal{X}) := (V(\mathcal{X}), E_M(\mathcal{X}))$, is constructed with $V(\mathcal{X}) = \mathcal{X}$. $E_M(\mathcal{X})$ is comprised with the edges $x_ix_j$ for distinct $x_i,x_j \in \mathcal{X}$ iff $d(x_i,x_j) < \min(r_{x_i},r_{x_j})$. Here, $r_{x_i}$ and $r_{x_j}$ represent the radii of covering balls for $x_i$ and $x_j$ regarding $D(\mathcal{X})$, respectively, implying an edge exists if their covering balls satisfy the ``mutual catch" property (i.e., catch (or cover) each other mutually).
\end{Def}

\subsection{The Density-based Mutual Catch Graph Algorithm}
Recall that a covering ball $B(x_i,r_i)$ in CCDs captures the largest possible latent cluster structure around a point $x_i$. Thus, any points captured by $B(x_i,r_i)$ seem to belong to the same cluster with $x_i$. With this notion, a pair of points connected in $G_M(\mathcal{X})$ are likely to belong to the same cluster.

Similar to Brito \textit{et al.}'s approach, we take the same null hypothesis that there is only one cluster with no outliers. Under $H_0$, every point is drawn from a distribution $F$ with compact and connected support $S$ and bounded density $f$. Therefore, with all observations aggregating within $S$, the MCG $G_M(\mathcal{X})$ obtained from a KS-CCD should be connected even when the density parameter $\delta$ for the KS-based statistic is relatively large. Therefore, $\delta$ is analogous to the $k$ in an m$k$NN graph. Hence, we want to find a threshold for $\delta$ and to test $H_0$, identifying latent clusters or outliers when possible. Similar to Brito \textit{et al.}'s approach, the threshold can be determined by Monte Carlo simulations. More specifically, we simulate a data set $\mathcal{X}^*$ from the distribution $F$ (estimate it if unknown) with the same size as the given data set $\mathcal{X}$. We record the maximal value of $\delta$ such that the MCG, $G_M(\mathcal{X}^*)$, is connected. We repeat this procedure $m$ times and obtain a sample of $m$ $\delta$ values. Finally, we use a chosen sample quantile as the threshold for $\delta$.

Although Brito \textit{et al.}'s and our approaches are similar, our approach is density-based. In contrast, Brito's approach only measures the connectivity of the whole data set globally and ignores local density. Our approach is proposed as Algorithm \ref{alg:DMCG_Algo} below, and we call this approach the \emph{Density-based Mutual Catch Graph} (D-MCG) algorithm for clustering and outlier detection.

In Algorithm \ref{alg:DMCG_Algo}, the vertex set $V(\mathcal{X}^*)=\mathcal{X}^*$ in step 8. In step 9, for any $u,v \in \mathcal{X}^*$, the edge $uv \in E(\mathcal{X}^*)$ if and only if $v$ and $u$ are ``mutually caught" with their covering balls $B_u$ and $B_v$, respectively. Finally, if more than one component is detected, further investigation is needed to decide whether these components are clusters or outliers.

\begin{algorithm}[htb]
\small
\setcounter{algocf}{0}
\SetAlgorithmName{Algorithm}{}{}
\KwData{$\delta_0$, $\Delta$, $M$, $\alpha$ and a dataset $\mathcal{X}$}\;
\KwResult{Connected components of $\mathcal{X}$ (potential clusters or outliers)}\;
\textbf{Algorithm Steps:}\\
\nl Initialize under the assumption that $\mathcal{X}$ has no outliers or other clusters, based on a distribution $F$ with connected (estimated) support $S$\;
\nl $n \gets |\mathcal{X}|$ (i.e., the size of $\mathcal{X}$)\;
\nl $i \gets 1$\;
\nl $\delta_{seq} \gets \emptyset$\;
\nl \While{$i \leq M$}{
\nl	$\delta \gets \delta_0$\;
\nl	Simulate a data set $\mathcal{X}^*$ of size $n$ from the distribution $F$\;
\nl	Construct $D(\mathcal{X}^*)=(V(\mathcal{X}^*),A(\mathcal{X}^*))$: the KS-CCD of $\mathcal{X}^*$ (with density parameter $\delta$)\;
\nl	Construct $G_M(\mathcal{X}^*)=(V(\mathcal{X}^*),E(\mathcal{X}^*))$: the MCG of $D(\mathcal{X}^*)$\;
\nl	\While{$G_M(\mathcal{X}^*)$ is not connected}{
\nl		$\delta \gets \delta-\Delta$\;
\nl		Repeat steps 8 and 9 to update $D(\mathcal{X}^*)$ and $G_M(\mathcal{X}^*)$\;
	}
\nl $\delta_{seq} \gets \delta_{seq}\cup \{\delta\}$\;
\nl $i \gets i+1$\;
}
\nl Find the $\alpha$ quantile of $\delta_{seq}$, and denote it as $\delta_{\alpha}$\;
\nl Construct $D_\alpha(\mathcal{X})=(V(\mathcal{X}),A(\mathcal{X}))$: the KS-CCD of $\mathcal{X}$ (with density parameter $\delta_{\alpha}$)\;
\nl	Construct $G_{\alpha,M}(\mathcal{X})=(V(\mathcal{X}),E(\mathcal{X}))$: the MCG of $D_\alpha(\mathcal{X})$\;
\nl\eIf{$G_{\alpha,M}(\mathcal{X})$ is connected}{
\nl	Retain $H_0$, and return $\mathcal{X}$ as the single component\;
    }{
\nl Reject $H_0$ at $\alpha$ level and return the connected components of $G_M(\mathcal{X})$ either as clusters or outliers\;
}
\caption{\textbf{(D-MCG algorithm)} Tests for presence of clusters or outliers in $\mathcal{X}$, utilizing a density parameter $\delta$ adjusted through simulation. Parameters: initial density $\delta_0$, density decrement $\Delta$, simulation count $M$, quantile $\alpha$.}
\label{alg:DMCG_Algo}
\end{algorithm}

Recall that when we apply RK-CCDs and KS-CCDs for clustering, we use an intersection graph to reduce the cover complexity of the approximate MDS because covering balls for the same cluster are likely to overlap. We want to find only one representative covering ball for each cluster. On the contrary, considering applying CCDs for outlier detection, we see that the covering balls of an outlier and a regular observation often cover some common points but rarely catch the center of each other simultaneously. Therefore, we employ the MCG technique instead of an intersection graph for outlier detection to avoid any edges between outliers and regular observations.

We illustrate this algorithm under two simple artificial data sets with outliers. Under the first simulation setting, the regular data points are generated uniformly within a unit hypersphere $B(0_d,1)$ ($0_d$ is the origin of a $d$-dimensional space), i.e., $x_i$ are drawn from Uniform[$B(0_d,1)$]. Outliers are drawn uniformly from another unit hypersphere with a certain distance (3 units) from the first. Figure \ref{fig:2d_fig_DMCG_Algo_a} presents a realization under this setting when $d=2$, where we have $3$ outliers out of $50$ (the contamination level is $6\%$). \label{sec:D-MCG_Simul}

Under the second simulation setting, the regular data points are also generated uniformly within a unit hypersphere $B(0_d,1)$. Outliers are distributed uniformly within the annulus between two hyperspheres $B(0_d,R_1)$ and $B(0_d,R_2)$, where $R_1=1.5$ and $R_2=3$. Thus, the distance from any outliers to $0_d$ is at least $1.5$, making the outliers separable from the regular data points. A realization in $\mathbb{R}^2$ is presented in Figure \ref{fig:2d_fig_DMCG_Algo_b}, where a data set of size $50$ is generated with $6\%$ of it being outliers.\label{sec:D-MCG_Simu2}

\begin{figure}[htb]
\centering
\subfigure[]{
\label{fig:2d_fig_DMCG_Algo_a}
\includegraphics[width=0.35\textwidth]{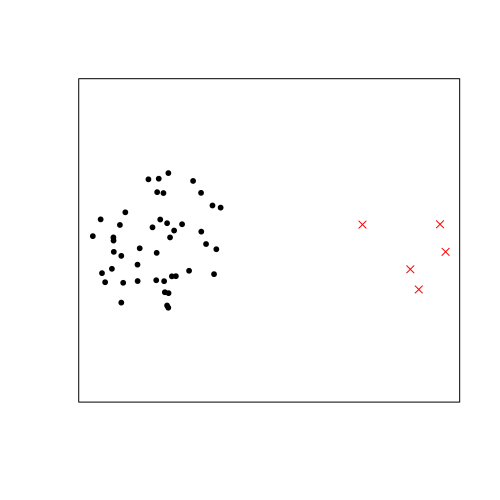}}
\subfigure[]{
\label{fig:2d_fig_DMCG_Algo_b}
\includegraphics[width=0.35\textwidth]{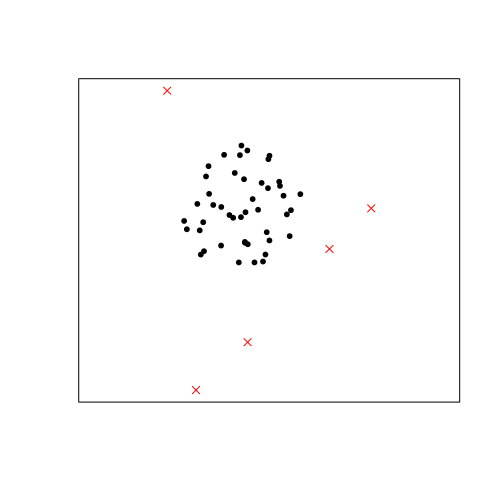}}
\subfigure[]{
\label{fig:2d_fig_DMCG_Algo_result_a}
\includegraphics[width=0.35\textwidth]{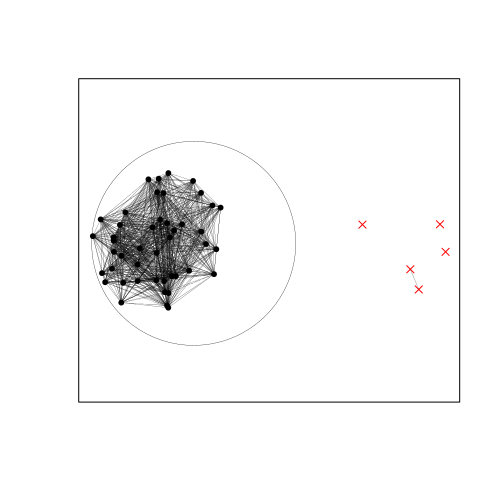}}
\subfigure[]{
\label{fig:2d_fig_DMCG_Algo_result_b}
\includegraphics[width=0.35\textwidth]{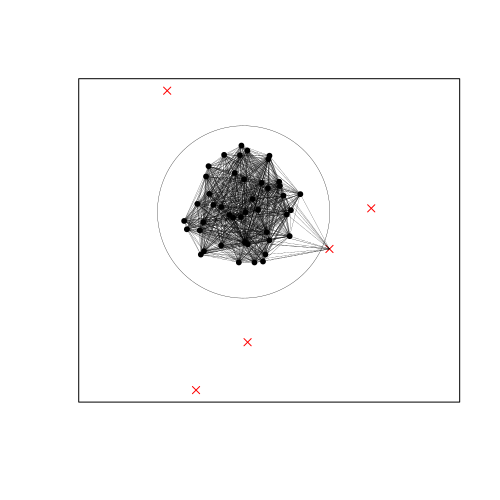}}
\caption{\small (a) A data set with 45 regular points (black) generated uniformly within a unit circle $B((0,0),1)$, and 5 outliers (red crosses) are drawn (uniformly) from another unit circle $B((3,0),1)$ that is 3 units away from the first one. (b) A data set that consists of 45 regular points (black) which are distributed uniformly within a unit circle $B((0,0),1)$, and $5$ outliers (red) that are drawn uniformly in the annular region between $B((0,0),1.5)$ and $B((0,0),3)$. (c) \& (d) The connected components returned by the D-MCG algorithm, the circles are the estimated support for regular data points, which are obtained by SVDD with the polynomial kernel of degree 1.}
\label{fig:2d_fig_DMCG_Algo}
\end{figure}

Here we know the support of the regular data points is hypersphere under both simulation settings, but this information is usually unavailable in real-world applications. Thus, when the support is unknown, we try to estimate the support using \emph{Support Vector Data Description} (SVDD) \cite{tax2004support} by assuming the regular data points are uniformly distributed. SVDD is a one-class classification method that constructs a boundary encompassing all regular points while excluding potential outliers. Similar to \emph{Support Vector Machine} (SVM), there are many kernels to choose from when conducts SVDD. We adopt a polynomial kernel with degree $1$ such that the boundary is a rigorous hypersphere, and the tuning parameter $C$ (which controls the volume of the hypersphere) is set to 0.05.

We present the estimated supports of SVDD and the connected components returned by the D-MCG algorithm in Figures \ref{fig:2d_fig_DMCG_Algo_result_a} and \ref{fig:2d_fig_DMCG_Algo_result_b}. For both data sets, the D-MCG algorithm delivers promising results in the sense that it can identify almost all outliers as single components (except one outlier in Figure \ref{fig:2d_fig_DMCG_Algo_result_b}) while connecting all the regular points. However, the estimated support is not robust to outliers. For instance, given the existence of outliers, the estimated supports of SVDD in Figure \ref{fig:2d_fig_DMCG_Algo} are not compact enough; additionally, the estimated support in Figure \ref{fig:2d_fig_DMCG_Algo_result_a} is dragged to the right by the five outliers, resulting in a misalignment.

\begin{Thm}[Time Complexity of Algorithm \ref{alg:DMCG_Algo}]\label{thm:D-MCG_Time}
  Given a data set $\mathcal{X} \subset \mathbb{R}^d$ of size $n$ ($d<n$). Suppose we simulate $M$ data sets from the (estimated) $F$ and $S$, then the time complexity of Algorithm \ref{alg:DMCG_Algo} is $O(M(n^2(d+\log n))+M\log M)$.
\end{Thm}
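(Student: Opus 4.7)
The plan is to decompose the runtime of Algorithm \ref{alg:DMCG_Algo} into three pieces: (a) the work of a single Monte Carlo replicate inside the outer loop (steps 5--14), (b) the quantile extraction in step 15, and (c) the final KS-CCD/MCG construction on the actual data set $\mathcal{X}$ (steps 16--21). I then sum these contributions and check that nothing inside the inner \emph{while} loop inflates the bound.

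First I would bound the cost of building one KS-CCD on $n$ points in $\mathbb{R}^d$. The full pairwise distance matrix is computed once in $O(n^2 d)$ time. For each vertex $x_i$, the radius $r_{x_i}$ maximizing $T_{KS}(x_i,r) = F_{rw}(x_i,r) - \delta r^d$ is located by observing that $F_{rw}(x_i,\cdot)$ is a non-decreasing step function with jumps at the $n-1$ interpoint distances from $x_i$, while $\delta r^d$ is smooth; sorting those distances costs $O(n \log n)$ and a single linear sweep then identifies the optimum. Aggregating over the $n$ vertices gives $O(n^2 \log n)$, so a whole KS-CCD costs $O(n^2(d+\log n))$. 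Forming the MCG from the CCD takes $O(n^2)$ (one mutual-catch check per unordered pair), and deciding connectivity of $G_M(\mathcal{X}^*)$ via BFS or DFS over at most $O(n^2)$ edges is also $O(n^2)$; both are dominated by the CCD cost.

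Next I would handle the inner \emph{while} loop in which $\delta$ is decremented by $\Delta$ until $G_M(\mathcal{X}^*)$ becomes connected. Because $\delta$ starts at the fixed input $\delta_0$ and the loop must terminate by the time $\delta$ reaches $0$, the number of rebuilds is bounded by $\lceil \delta_0/\Delta \rceil$, a constant independent of $n$, $d$, and $M$. Moreover, the pairwise distances need not be recomputed when $\delta$ changes; only the radii and the edge set are updated. Hence one full outer iteration costs $O(n^2(d+\log n))$, and the Monte Carlo loop over $M$ replicates contributes $O(M \cdot n^2(d+\log n))$ in total.

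Finally, extracting the $\alpha$-quantile of $\delta_{seq}$ (of length $M$) costs $O(M \log M)$ by sorting, and the closing operations on $\mathcal{X}$ — constructing $D_\alpha(\mathcal{X})$, $G_{\alpha,M}(\mathcal{X})$, and its connected components — add another $O(n^2(d+\log n))$, which is absorbed into the outer-loop term. Summing the three contributions yields the stated bound $O(M(n^2(d+\log n)) + M \log M)$. The main obstacle I anticipate is not a calculation but a modelling point: treating the inner \emph{while} loop as $O(1)$ in $n$ requires that $\delta_0$ and $\Delta$ be fixed input parameters rather than functions of the data size, and it relies on reusing the already-computed distance matrix inside each rebuild. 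With that bookkeeping in place, the inner loop contributes no extra factor in $n$ and the theorem follows.
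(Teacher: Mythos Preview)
Your proposal is correct and follows essentially the same decomposition as the paper: KS-CCD construction at $O(n^2(d+\log n))$ per simulated set, MCG and connectivity at $O(n^2)$, sorting $\delta_{seq}$ at $O(M\log M)$, and a final pass on $\mathcal{X}$ absorbed into the first term. If anything, you are more careful than the paper, which simply cites the KS-CCD bound and does not explicitly address the inner \emph{while} loop; your observation that it iterates at most $\lceil \delta_0/\Delta\rceil=O(1)$ times (with the distance matrix reused) makes rigorous a step the paper leaves implicit.
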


\begin{proof}
   When implementing Algorithm \ref{alg:DMCG_Algo}, we construct KS-CCDs and find connected components of the obtained MCGs for each simulated data set (of size $n$). Constructing KS-CCDs takes $O(n^2(d+\log n))$ time \cite{manukyan2019parameter}, establishing MCGs and finding the connected components can be done within $O(n^2)$ time in the worst cases. Thus, a total of $O(M(n^2(d+\log n)))$ time is needed for $M$ simulated data sets; sorting $\delta_{seq}$ and finding the $\alpha$ quantile requires $O(M\log M)$ time at most. Establishing a KS-CCD and finding connected components for $\mathcal{X}$ takes another $O(n^2(d+\log n))$ time. Thus, Algorithm \ref{alg:DMCG_Algo} runs in $O(M(n^2(d+\log n))+M\log M)$ time. Therefore, that is $O(n^2\log n)$ time for fixed $M$ as $n \rightarrow \infty$.
\end{proof}

\subsection{Outlier Detection with RK-CCDs and D-MCGs}
\subsubsection{Mutual Catch Graph with Cluster Catch Digraphs}
\label{sec:U-MCCD}
Although the D-MCG algorithm gives promising results on data sets with simple simulation settings, several limitations may affect their performance under more complex settings. These limitations include (1) The difficulty in determining whether the resulting connected components are outliers or clusters. It is a common problem for most outlier detection algorithms. Decisions can be made based on the cardinality, density, and (spatial) layout of connected components, but this approach is often unreliable and subjective, especially for high-dimensional data sets. (2) An appropriate distribution $F$ with support $S$ must be specified for the given data set before any simulations. Although we assume $F$ and $S$ are known in the D-MCG algorithm, they are usually unavailable beforehand. One solution would be estimating $F$ and $S$, and we conduct SVDD to estimate the support $S$ in Section \ref{sec:D-MCG_Simul}, but the performance is mediocre when the data size gets larger. Other possible ways include \emph{empirical CDF} and \emph{kernel density estimation}, but they are feasible only when $d \leq 5$, especially the latter, which requires large samples for reliable results with high dimensions \cite{samiuddin1990nonparametric}. (3) The intensity parameter $\delta_{\alpha}$ obtained by simulations in the D-MCG algorithm (line 14 of Algorithm \ref{alg:DMCG_Algo}) is a global parameter. While relying heavily on $\delta_{\alpha}$, this approach may not work well for local outliers or clusters differing drastically in densities.

To address the abovementioned limitations, we can use the RK-CCDs clustering approach to the given data set and then apply the D-MCG algorithm to each resulting cluster. For each cluster, points within the dominating covering ball are considered part of the cluster rather than outliers. With this approach, we only need to focus on points not covered by the covering balls. Under the MCG obtained from a KS-CCD, any point not connected to the dominating covering ball of its respective cluster will be considered an outlier.

By conducting RK-CCDs first on a given data set $\mathcal{X}$, we can obtain a reasonable partition of clusters by their local distribution. Under the MCG for a cluster, any connected component other than the dominating covering balls is more likely to be outliers than a cluster. We could address the limitation (1) above with this approach. RK-CCDs capture clusters by Spatial Randomness Monte Carlo Test (SR-MCT). Thus, following this notion, we could specify $F$ as an HPP under the null assumption $H_0$, specified in the D-MCG algorithm (Algorithm \ref{alg:DMCG_Algo}). Thus, limitation (2) could also be resolved.

Finally, since we are applying the D-MCG algorithm on each cluster separately rather than on the entire data set globally, we can get the intensity threshold $\delta_\alpha$ for each cluster in the data set. In this sense, limitation (3) should no longer be a problem.

We call this approach the \emph{Uniformity-based Cluster Catch Digraphs with Mutual catch graphs} (U-MCCD) algorithm.

However, obtaining the threshold $\delta_{\alpha}$ for each cluster via hundreds or thousands of simulations is computationally expensive. Therefore, we propose a faster alternative, called the \textit{Rapid Uniformity-based Cluster Catch Digraphs with Mutual catch graphs} (RU-MCCD) algorithm (Algorithm \ref{alg:RUMCCD}), which sets the threshold $\delta_{\alpha}$ as the largest density parameter $\delta$ such that the points within the dominating covering ball are connected under the D-MCG. Therefore, we can skip the intensive simulation step.

More specifically, the algorithm first partitions the data set into clusters using RK-CCDs. For each cluster, it determines the dominating covering ball and creates a KS-CCD with a given density parameter $\delta_j$. It then constructs the MCG of this cluster. 
If the MCG is not connected, the intensity parameter is adjusted (i.e. reduced by $\Delta$) iteratively until connectivity is achieved. 
The algorithm identifies outliers as points not connected within the final MCG of each cluster.

\begin{algorithm}[htb]
\SetAlgorithmName{Algorithm}{}{}
\KwData{$\delta_0$, $\Delta$ and a dataset $\mathcal{X}=\{x_1,x_2,...,x_n\}$}\;
\KwResult{Clusters and outliers in $\mathcal{X}$}\;
\textbf{algorithm Steps:}\\
\nl Partition $\mathcal{X}$ into clusters $\mathbf{P}=\{P_1,P_2,...,P_m\}$ using RK-CCDs\;
\nl \For{$P_j \in \mathbf{P}$}{
\nl $B_j \gets$  the dominating covering ball of  $P_j$\;
\nl $P_{j,c} \gets \{x:x \in \mathcal{X} \cap B_j\}$\;
\nl $\delta_j \gets \delta_0$\;
\nl	$D(P_{j,c})=(V(P_{j,c}),A(P_{j,c}))$: the KS-CCD of $P_{j,c}$ (with density parameter $\delta_j$)\;
\nl	$G_M(P_{j,c})=(V(P_{j,c}),E(P_{j,c}))$: the MCG of $D(P_{j,c})$\;
\nl	\While{$G_M(P_{j,c})$ is not connected}{
\nl		$\delta_j \gets \delta_j-\Delta$\;
\nl		Repeat steps 6 and 7 to update $D(P_{j,c})$ and $G_M(P_{j,c})$\;
	}
\nl $D(P_j) = (V(P_j),A(P_j))$: the KS-CCD of $P_j$ (with density parameter  $\delta_j$)\;
\nl $G_M(P_j) =(V(P_j),E(P_j))$: the MCG of $D(P_j)$\;
\nl Label $x \notin P_{j,c}$ as an outlier if disconnected in $G_M(P_j)$\;
}
\nl Return the constructed clusters $\mathbf{P}$ and outliers\;
\caption{(\textbf{RU-MCCD Algorithm}, a faster version of the U-MCCD algorithm) Outlier detection by utilizing RK-CCDs for initial cluster partition and KS-CCDs for determining clusters and outliers, based on density adjustments $\delta_0$ and $\Delta$.}\label{alg:RUMCCD}
\end{algorithm}

\begin{Thm}[Time Complexity of Algorithm \ref{alg:RUMCCD}]\label{thm:RUMCCD_Time}
  Given a data set $\mathcal{X} \subset \mathbb{R}^d$ of size $n$ ($d<n$). The time complexity of Algorithm \ref{alg:RUMCCD} is $O(n^3(\log n +N)+n^2(d+\log n))$, where $N$ is the number of simulated data sets for the confidence envelopes of $\widehat{K}(t)$.
\end{Thm}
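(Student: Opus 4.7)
The plan is to decompose Algorithm \ref{alg:RUMCCD} into two computational stages whose costs I can bound separately: the RK-CCD clustering invoked in Step~1, and the for-loop over clusters that occupies Steps~2--13. The return on Step~14 is free.

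For Step~1, I would appeal to the known cost of RK-CCD construction on $n$ points in $\mathbb{R}^d$ (as analyzed in \cite{manukyan2019parameter}). For each of the $n$ vertices the SR-MCT with Ripley's $\widehat{K}$ must be performed: this requires $N$ Monte-Carlo simulations for the confidence envelope and, within each envelope evaluation, an $O(n\log n)$ sort of pairwise distances plus $O(n^2)$ pair-counting to evaluate $\widehat{K}(t)$ at the candidate radii. Summed across all $n$ vertices this yields $O(n^3(\log n + N))$; the downstream MDS extraction via Greedy Algorithm~\ref{alg:greedy2}, the intersection graph construction, and the silhouette-based pruning are each $O(n^2)$ or cheaper and are absorbed into this bound. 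This is the origin of the $n^3$ part of the claimed complexity.

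For the loop on Steps~2--13, I would use a telescoping-over-cluster-sizes argument. Writing $n_j := |P_j|$ so that $\sum_j n_j \le n$: for each cluster, locating the dominating covering ball $B_j$ and extracting $P_{j,c}$ takes $O(n_j)$ (the radii are already stored from Step~1); each KS-CCD construction on either $P_{j,c}$ or $P_j$ costs $O(n_j^2(d+\log n_j))$ by the KS-CCD complexity cited from \cite{manukyan2019parameter}; and each MCG construction plus connected-components test runs in $O(n_j^2)$. Because $\sum_j n_j^2 \le n^2$ when clusters are disjoint, the per-cluster work telescopes to $O(n^2(d+\log n))$, matching the lower-order term in the bound.

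The main obstacle I anticipate is justifying that the inner \textbf{while} loop on Lines~8--10, which decreases $\delta_j$ by $\Delta$ until $G_M(P_{j,c})$ becomes connected, does not inflate the per-cluster cost. My argument would be that $\delta_0$ and $\Delta$ are user-supplied constants independent of $n$, so the loop terminates after at most $\lceil \delta_0/\Delta \rceil$ rebuilds of the KS-CCD and MCG, a constant factor absorbed into the $O(\cdot)$ bound above. Combining Stage~1 and Stage~2 then yields the claimed $O(n^3(\log n + N) + n^2(d + \log n))$.
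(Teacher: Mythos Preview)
Your proposal is correct and mirrors the paper's own proof: both decompose the cost into the RK-CCD clustering step (quoted as $O(n^3(\log n + N) + n^2 d)$ from \cite{manukyan2019parameter}) and the per-cluster KS-CCD/MCG loop, with the latter telescoping to $O(n^2\log n)$ in the paper versus $O(n^2(d+\log n))$ in your version, the difference being that the paper explicitly reuses the already-computed distance matrix from Stage~1 and invokes Theorem~\ref{thm:D-MCG_Time} rather than re-deriving the KS-CCD cost inline. One small inaccuracy: the intersection-graph and silhouette steps in RK-CCDs are $O(n^3)$, not $O(n^2)$ as you state, but since these are dominated by $O(n^3\log n)$ your conclusion is unaffected.
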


\begin{proof}
  In Algorithm \ref{alg:RUMCCD}, we first obtain a partition of $\mathbf{P}=\{P_1,P_2,...,P_m\}$ for $\mathcal{X}$ with RK-CCDs, which takes $O(n^3(\log n +N)+n^2d)$ time \cite{manukyan2019parameter}; then, we loop through each partition $P_j$, constructing KS-CCDs and finding connected components for both $P_j$ and $P_{j,c}$. According to Theorem \ref{thm:D-MCG_Time}, and given the fact that distance matrix (which costs $O(n^2d)$ time to compute) of $\mathcal{X}$ is already available with RK-CCDs, the above process runs in $O(n^2\log n)$ time at most for all partitions in total. Therefore, Algorithm \ref{alg:RUMCCD} costs $O(n^3(\log n +N)+n^2(d+\log n))$ time in the worst case, which boils down to $O(n^3\log n)$ for fixed $d$ and $N$.
\end{proof}

We present several synthetic data sets in Figure \ref{2d_fig_RUMCCD_Algo2}. These data sets vary in several factors, including the sizes of data sets, the number of clusters, and the percentage of outliers within the entire data set. Each cluster's observations follow a uniform distribution within a unit circle. It is important to note that the number of observations within each cluster may not necessarily be identical since we want to evaluate the effectiveness of the RU-MCCD algorithm (Algorithm \ref{alg:RUMCCD}) on local outliers, which may not be easy to capture when considered globally.

Figure \ref{2d_fig_RUMCCD_Algo2_result} presents the connected components and outliers identified by the RU-MCCD algorithm (Algorithm \ref{alg:RUMCCD}). The RU-MCCD algorithm demonstrates effectiveness across all six data sets, accurately identifying nearly all outliers while excluding regular data points. This is true even in scenarios where the clusters vary in size. Furthermore, the RU-MCCD algorithm successfully connects regular data points outside the dominating covering balls to the main clusters, thereby minimizing the number of false positives.

\begin{figure}[htb]
\centering
\subfigure[]{
\label{2d_fig_RUMCCD_Algo2_a}
\includegraphics[width=0.3\textwidth]{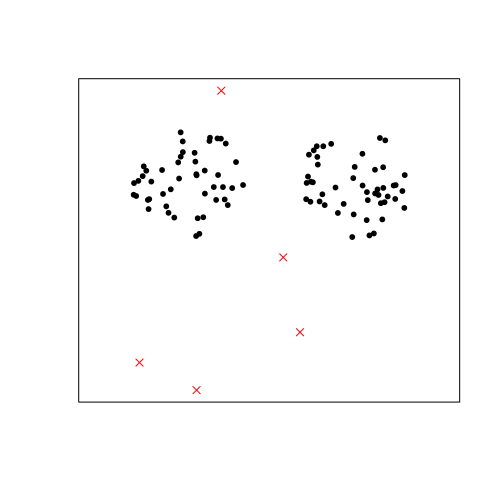}}
\subfigure[]{
\label{2d_fig_RUMCCD_Algo2_b}
\includegraphics[width=0.3\textwidth]{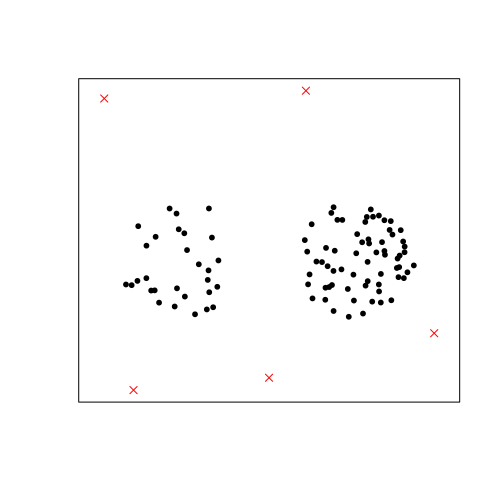}}
\subfigure[]{
\label{2d_fig_RUMCCD_Algo2_c}
\includegraphics[width=0.3\textwidth]{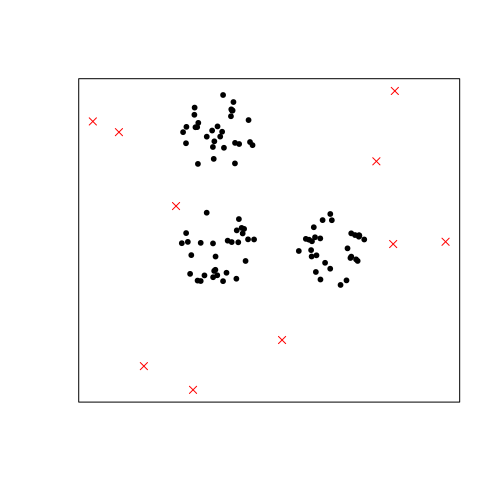}}
\subfigure[]{
\label{2d_fig_RUMCCD_Algo2_d}
\includegraphics[width=0.3\textwidth]{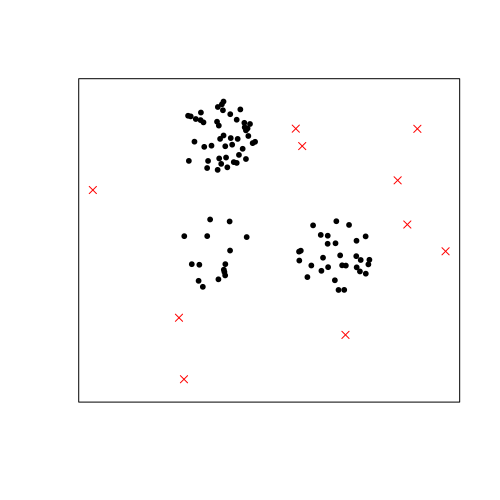}}
\subfigure[]{
\label{2d_fig_RUMCCD_Algo2_e}
\includegraphics[width=0.3\textwidth]{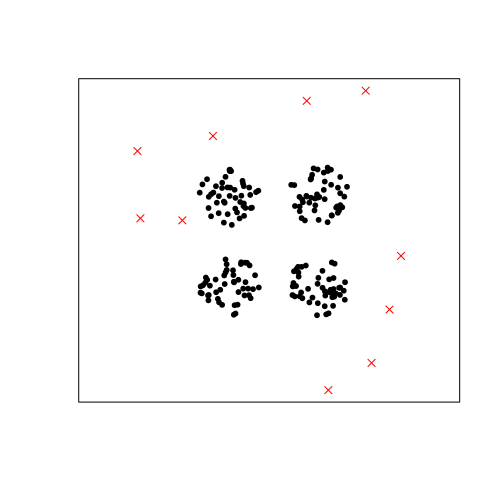}}
\subfigure[]{
\label{2d_fig_RUMCCD_Algo2_f}
\includegraphics[width=0.3\textwidth]{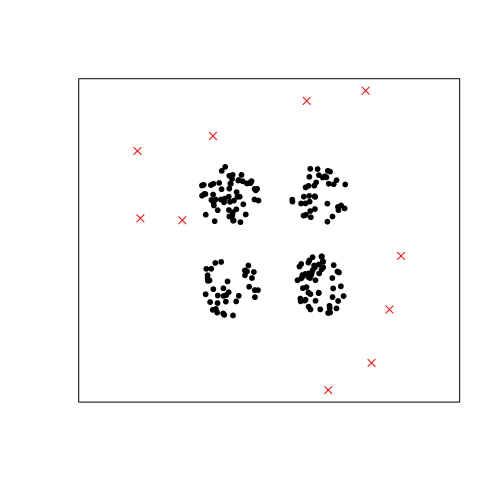}}
\caption{Some simulated uniform data sets, black points are regular data points, red crosses are outliers, (a) 2 clusters, 5\% outliers, $n=100$. (b) 2 clusters with different sizes, 5\% outliers, $n=100$. (c) 3 clusters, 10\% outliers, $n=100$. (d) 3 clusters with different sizes, 10\% outliers, $n=100$. (e) 4 clusters, 10\% outliers, $n=200$. (f) 4 clusters with different sizes, 5\% outliers, $n=200$}
\label{2d_fig_RUMCCD_Algo2}
\end{figure}

\begin{figure}[htb]
\centering
\subfigure[]{
\label{2d_fig_RUMCCD_Algo2_result_a}
\includegraphics[width=0.3\textwidth]{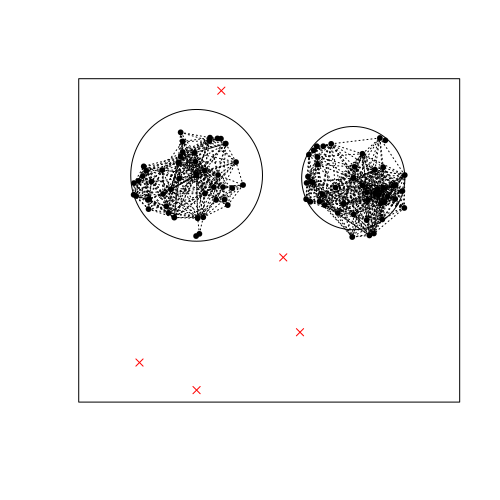}}
\subfigure[]{
\label{2d_fig_RUMCCD_Algo2_result_b}
\includegraphics[width=0.3\textwidth]{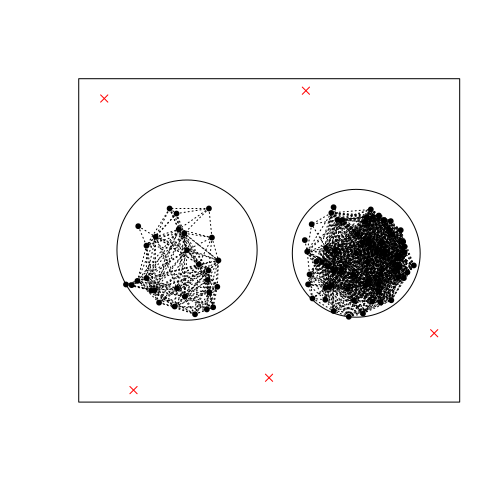}}
\subfigure[]{
\label{2d_fig_RUMCCD_Algo2_result_c}
\includegraphics[width=0.3\textwidth]{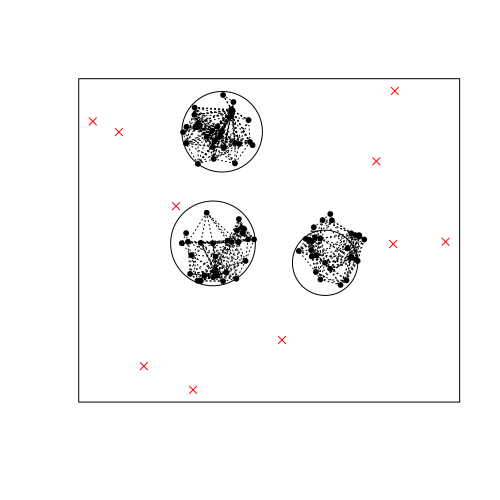}}
\subfigure[]{
\label{2d_fig_RUMCCD_Algo2_result_d}
\includegraphics[width=0.3\textwidth]{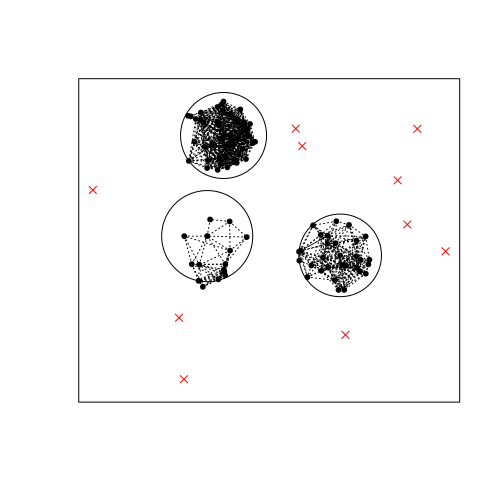}}
\subfigure[]{
\label{2d_fig_RUMCCD_Algo2_result_e}
\includegraphics[width=0.3\textwidth]{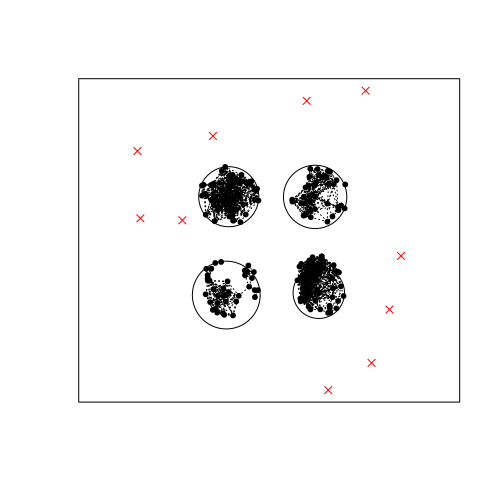}}
\subfigure[]{
\label{2d_fig_RUMCCD_Algo2_result_f}
\includegraphics[width=0.3\textwidth]{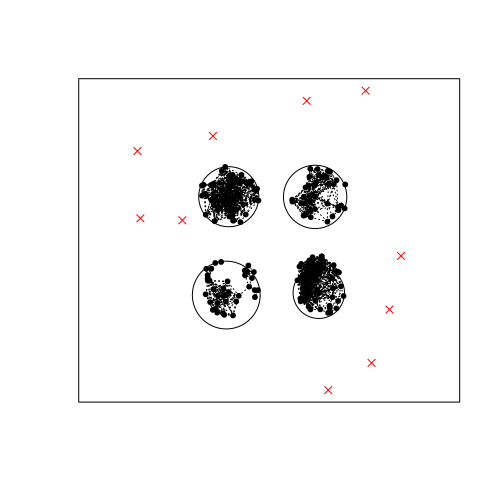}}
\caption{The connected components and outliers determined by the RU-MCCD algorithm (Algorithm \ref{alg:RUMCCD}) for the settings in Figure \ref{2d_fig_RUMCCD_Algo2}. The solid black circles are the dominating covering balls of RK-CCDs.}
\label{2d_fig_RUMCCD_Algo2_result}
\end{figure}

The experimental analysis in Section \ref{sec:Simul_CCDs} demonstrates that the RU-MCCD algorithm performs effectively on simulated data sets when each cluster is uniformly distributed and the dimensionality $d\leq10$, where the $F_2$-scores exceed 0.9 under most simulation settings. The TPRs (for outlier detection) are generally satisfactory under low dimensions ($d=2,3,5$), with most TPRs exceeding $90\%$ or even $95\%$. Additionally, due to the effectiveness of RK-CCDs on clustering with less dimensions, the TNRs under almost all simulation settings are substantially above $95\%$, even when the size of a data set is as low as $50$.

However, the performance of the RU-MCCD algorithm begins to decline with more dimensions ($d\geq20$), as shown in Section \ref{sec:Simul_CCDs}. Although the TPRs tend to increase towards $1$ in almost all the cases, the TNRs become substantially lower than those within a lower-dimensional space. Increasing the data size to $1000$ does not yield substantial improvement. This can be explained by the increased sparsity of regular data points as $d$ increases, complicating clustering with RK-CCDs and leaving more regular observations uncovered by the dominating covering balls. Additionally, higher dimensions bring considerable intensity differences between a cluster's center and boundary. As a result, regular data points not covered by the dominant covering balls are unlikely to be connected in an MCG. This phenomenon further decreases the TNRs. Some shortcomings of RK-CCDs also contribute to this decreased performance, which will be discussed in subsequent sections.

In Section \ref{sec:Simul_CCDs}, we also conduct the simulations with Gaussian clusters. We aim to investigate the performance of the RU-MCCD algorithm (Algorithm \ref{alg:RUMCCD}) when points within a cluster are non-uniformly distributed. As expected, the RU-MCCD algorithm yields less satisfactory results with substantially lower TNRs due to the SR-MCT of RK-CCDs, which implies approximately uniformity within each covering ball. However, this is not true for a Gaussian cluster due to nonuniform intensity. As a result, the resulting dominating covering balls tend to be much smaller than the span of Gaussian clusters and are generally located around the center of Gaussian clusters, leaving many regular points uncovered. Additionally, due to the substantial intensity difference over a (multivariate) normal distribution, it is unlikely for the mutual catch digraphs to connect relatively sparse points with the points covered by the dominating covering balls, which generally have much higher intensities.

\subsubsection{Mutual Catch Graph with Shape-Adaptive Cluster Catch Digraphs}
\label{sec:shape_adapt_CCD}
As shown in Section \ref{sec:Simul_CCDs}, with the Gaussian clusters, the RU-MCCD algorithm can result in a substantially low TNR with regular points labeled as outliers due to the nonuniform intensity within a cluster. Although the RU-MCCD algorithm can still identify the correct number of clusters in most cases, the dominating covering balls only cover the densest part of Gaussian clusters, leaving many regular points of lower intensity uncovered. Thus, a single dominating covering ball may not be sufficient to cover a Gaussian cluster entirely.

In order to address this limitation, an intuitive solution is to increase the number of covering balls for each latent cluster. Thus, we propose a flexible approach using multiple covering balls of RK-CCDs for each cluster. Similar to the RU-MCCD algorithm, we first implement the clustering process with RK-CCDs and obtain a dominating covering ball for each cluster. Although a single dominating covering ball may not be large enough to cover a cluster fully, it can be perceived as the core and location of the corresponding cluster. Therefore, one may want to expand the coverage outward from the core. We consider the MCGs based on RK-CCDs for the objective. A pair of points are more likely to be drawn from the same local HPP when connected. Generally, this happens when two close points from the same cluster have similar local intensities and spatial distributions. With this notion, each connected component of an MCG can be considered a latent cluster, which was first proposed by Marchette \cite{marchette2005random}. However, this approach is not robust to noise. In experimental analysis (not presented here), when noise is in the gaps between different clusters, the above approach may falsely identify two or more clusters as one since noise may ``connect" them together. This is due to the "over-fitting" effect when we use all the covering balls of a data set. Due to this reason, we only consider the points connected to the center point of one of the dominating covering balls, and we extend the coverage of a cluster to the union of the corresponding covering balls. All the points belonging to an enlarged coverage are assigned to the same cluster. Theoretically, this new approach could return clusters with more precise boundaries compared to the RU-MCCD algorithm, especially when the shape of clusters is not spherical or the point intensities over the support are not even. Additionally, the adverse effect of noise on clustering could also be minimized.

To determine the optimal number of clusters, we apply an approach similar to the RK-CCDs and KS-CCDs algorithms \cite{manukyan2019parameter}, which employs the silhouette index. All connected components are ranked in decreasing order in terms of their cardinalities. Following the rank, we incrementally add components as valid clusters (starting from the first two components) until the maximum average silhouette index for the whole data set is reached. However, when a group of connected outliers is far from true clusters, they could be identified as small but valid clusters. To handle this problem, we introduce another input parameter $S_{min}$. A point set can only be considered a valid cluster when its cardinality is at least $S_{min}$. The value of $S_{min}$ is flexible and can be specified by the user.

Given a data set, we may have some points (mainly outliers) that are far from others. As a result, they are not in the scope of any existing clusters. Either the partition size they belong to is smaller than $S_{min}$, or the corresponding partition has yet to be added as a valid cluster. Therefore, we must find a method that assigns these points to appropriate clusters. We have tried the \emph{Local Distance-based Outlier Factor} (LDOF) \cite{zhang2009new} and adapted it differently so that this measurement can be used to determine the optimal cluster for each unlabeled point. Unfortunately, the algorithms utilizing LDOF do not work well in simulation. Therefore, we have to give up this measure. 

Recall that Manukyan and Ceyhan \cite{manukyan2019parameter} introduced the convex distance between an uncovered point and a dominating covering ball and assigned every uncovered point accordingly. We utilize this idea in the new algorithm. 

The new approach is presented in Algorithm \ref{alg:SUMCCD_Algo} below, and we call it the \emph{Shape-adaptive Uniformity-based CCDs with Mutual catch graph} (SU-MCCD) algorithm.

Figure \ref{fig:2d_Demo_SUMCCD} presents the realization of the SU-MCCD algorithm on a synthetic data set (Figure \ref{fig:2d_fig_SUMCCD1}) with two Gaussian clusters (black points) of different intensities and a few outliers (red crosses); Figure \ref{fig:2d_fig_SUMCCD2} presents the dominating covering balls for the two clusters, which are not large enough to cover all the regular points; Figure \ref{fig:2d_fig_SUMCCD3} shows all the covering balls (dashed lines) of the points that are connected to the center of any dominating covering balls under the MCG of RK-CCDs, the union of these covering balls exhibits an extension of cluster covers. Figure \ref{fig:2d_fig_SUMCCD4} presents the connected components and outliers identified by the SU-MCCD algorithm, by using multiple covering balls for each cluster, it manages to connect most regular points from the same cluster and excludes all the outliers.

\begin{figure}[htb]
\centering
\subfigure[]{
\label{fig:2d_fig_SUMCCD1}
\includegraphics[width=0.35\textwidth]{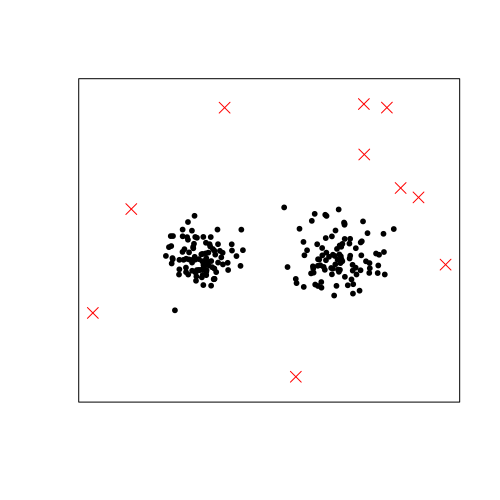}}
\subfigure[]{
\label{fig:2d_fig_SUMCCD2}
\includegraphics[width=0.35\textwidth]{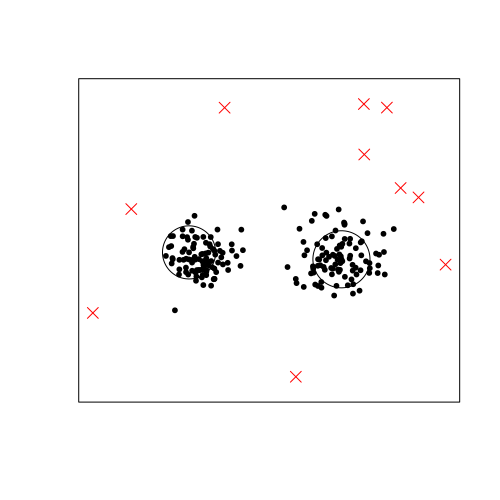}}

\subfigure[]{
\label{fig:2d_fig_SUMCCD3}
\includegraphics[width=0.35\textwidth]{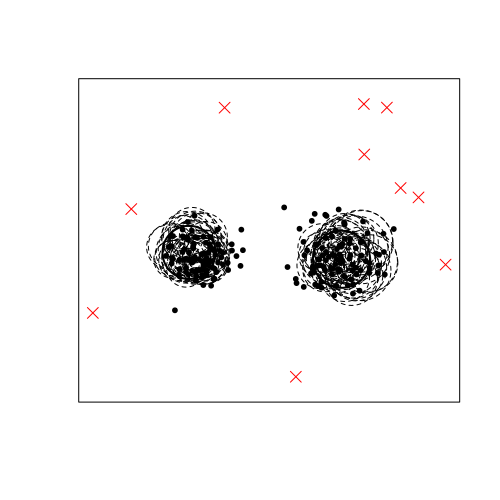}}
\subfigure[]{
\label{fig:2d_fig_SUMCCD4}
\includegraphics[width=0.35\textwidth]{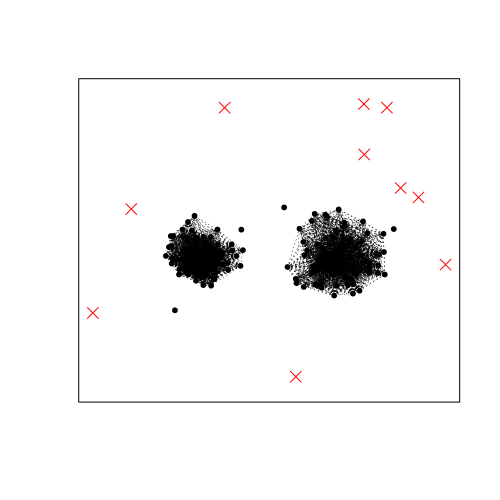}}
\caption{A illustration of the SU-MCCD algorithm.}
\label{fig:2d_Demo_SUMCCD}
\end{figure}

\begin{algorithm}[htb]
\small
\SetAlgorithmName{Algorithm}{}{}
\KwData{$\delta_0$, $\Delta$, $k$, $S_{min}$, and a data set $\mathcal{X}=\{x_1,x_2,...,x_n\}$}\;
\KwResult{Clusters and outliers of $\mathcal{X}$}\;
\textbf{Algorithm Steps:}\\
\nl Construct $D(\mathcal{X})=(V(\mathcal{X}), A(\mathcal{X}))$: a RK-CCD of $\mathcal{X}$\;
\nl Obtain the dominating covering balls from $D(\mathcal{X})$\;
\nl Construct $G_M(\mathcal{X})=(V(\mathcal{X}),E(\mathcal{X}))$: the MCG of $D(\mathcal{X})$\;
\nl $\mathbf{P}=\{P_1,P_2,...,P_s\} \gets$ the partition obtained by extending the coverage from each dominating covering ball, ordered from high to low by size (is by the number of points of a partition)\;
\nl Incrementally form clusters $\mathbf{C}$ from $\mathbf{P}$, assigning isolated points based on smallest relative distance, until maximizing $sil(\mathbf{C})$ or partition sizes drop below $S_{min}$\;
\nl  $\mathbf{C}=\{C_1,C_2,...,C_m\} \gets$ the clusters obtained in step 4 (which also serves as a partition of $\mathcal{X}$)\;
\nl \For{$C_j \in \mathbf{C}$}{
\nl $C_{j,c} \gets P_j$ (the cluster $C_j$ is constructed from the partition $P_j$)\;
\nl $\delta_j \gets \delta_0$\;
\nl	Construct $D(C_{j,c})=(V(C_{j,c}),A(C_{j,c}))$: a KS-CCD of $C_{j,c}$ (with density parameter $\delta_j$)\;
\nl	Construct $G_M(C_{j,c})=(V(C_{j,c}),E(C_{j,c}))$: the MCG of $D(C_{j,c})$\;
\nl	\While{$G_M(C_{j,c})$ is not connected}{
\nl		$\delta_j \gets \delta_j-\Delta$\;
\nl		Repeat lines 9 and 10 to update $D(C_{j,c})$ and $G_M(C_{j,c})$\;
	}
\nl Construct $D(C_j)= (V(C_j),A(C_j))$: the KS-CCD of $C_j$ (with density parameter $\delta_j$)\;
\nl Construct $G_M(C_j)=(V(C_j),E(C_j))$: the MCG of $D(C_j)$\;
\nl Under $G_M(C_j)$, for each $x \notin C_{j,c}$, $x$ is labeled as an outlier if it is not connected to any vertices in $C_{j,c}$\;
}
\nl Return the constructed clusters $\mathbf{C}$ and outliers\;
\caption{(\textbf{SU-MCCD Algorithm}) Outlier detection using RK-CCDs for cluster formation and KS-CCDs for density-based validation, incorporating $S_{min}$ for minimum cluster size, with initial density $\delta_0$ and decrement $\Delta$ as in Algorithm \ref{alg:DMCG_Algo}. Adapted for arbitrary-shaped clusters.}\label{alg:SUMCCD_Algo}
\end{algorithm}

\begin{Thm}[The Time Complexity of Algorithm \ref{alg:SUMCCD_Algo}]\label{thm:SU-MCCD_Time}
  Given a data set $\mathcal{X} \subset \mathbb{R}^d$ of size $n$ (with $d<n$ being fixed). The time complexity of Algorithm \ref{alg:SUMCCD_Algo} is $O(n^3(\log n +N)+n^2(d+\log n))$ (the same order as Algorithm \ref{alg:RUMCCD}), where $N$ is the number of simulated data sets for the confidence envelopes of $\widehat{K}(t)$.
\end{Thm}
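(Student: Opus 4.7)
The plan is to mirror the argument used for Theorem \ref{thm:RUMCCD_Time} (time complexity of Algorithm \ref{alg:RUMCCD}), and show that the additional steps specific to Algorithm \ref{alg:SUMCCD_Algo}, namely the coverage extension from each dominating covering ball (line 4) and the incremental silhouette-based cluster selection (line 5), do not alter the dominating order. First I would account for the initial construction of the RK-CCD $D(\mathcal{X})$ together with the identification of the dominating covering balls (lines 1--2), which by the result cited in \cite{manukyan2019parameter} costs $O(n^3(\log n + N) + n^2 d)$; this also leaves the full pairwise distance matrix available for subsequent steps at no extra cost.

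Next, I would bound the cost of the MCG construction $G_M(\mathcal{X})$ and the extension of each dominating covering ball along its connected component in $G_M(\mathcal{X})$. Since the MCG has at most $O(n^2)$ edges and connected components can be found by BFS/DFS in $O(|V| + |E|) = O(n^2)$, the partition $\mathbf{P}$ in line 4 is produced in $O(n^2)$ time, and ordering its at most $n$ parts by size takes $O(n \log n)$. For the incremental silhouette step (line 5), the key observation is that the silhouette index of a candidate partition can be evaluated in $O(n^2)$ time once the distance matrix is available, and there are at most $O(n)$ candidate partitions considered as clusters are added one-by-one; hence this step contributes at most $O(n^3)$.

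The \textbf{for}-loop (lines 7--16) is structurally identical to the corresponding loop in Algorithm \ref{alg:RUMCCD}: for each $C_j$ it builds a KS-CCD and its MCG on the points in $C_{j,c}$, iteratively adjusting $\delta_j$ until connectivity, then builds a KS-CCD and MCG on $C_j$ itself to flag outliers. By Theorem \ref{thm:D-MCG_Time} and the reuse of the precomputed distance matrix, the total cost of this loop across all clusters is $O(n^2(d + \log n))$, exactly as in the proof of Theorem \ref{thm:RUMCCD_Time}. Returning clusters and outliers in line 17 is $O(n)$. Summing all contributions gives
\begin{equation*}
O\bigl(n^3(\log n + N) + n^2 d\bigr) + O(n^2) + O(n^3) + O\bigl(n^2(d + \log n)\bigr) + O(n),
\end{equation*}
which collapses to $O(n^3(\log n + N) + n^2(d + \log n))$, matching the stated bound.

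The part I expect to require the most care is the incremental silhouette selection in line 5: one must argue that the number of candidate partitions evaluated is $O(n)$ (not larger, since parts are added one at a time from at most $n$ components) and that each evaluation can amortize the already-computed pairwise distances to run in $O(n^2)$, so that the whole step is absorbed by the $O(n^3(\log n + N))$ cost of the RK-CCD construction. Once this is made precise, the remaining terms follow essentially verbatim from the proof of Theorem \ref{thm:RUMCCD_Time}.
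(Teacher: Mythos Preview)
Your proposal is correct and follows essentially the same decomposition as the paper's proof: RK-CCD construction at $O(n^3(\log n+N)+n^2d)$, MCG plus coverage extension at $O(n^2)$, the incremental cluster-selection step at $O(n^3)$, and the per-cluster KS-CCD/MCG loop at $O(n^2\log n)$ (the paper drops the $d$ here since the distance matrix is already in hand). The only cosmetic difference is that the paper splits the $O(n^3)$ bound for line~5 into two pieces---the relative-distance reassignment of isolated points ($O(n)$ per point, $O(n^2)$ per iteration, $O(n^3)$ over all iterations) and the silhouette updates (also at most $O(n^3)$)---whereas you fold both into a single $O(n^3)$ estimate; either way the total matches.
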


\begin{proof}
   When implementing Algorithm \ref{alg:SUMCCD_Algo}, we need to construct an RK-CCD and obtain dominating covering balls for $\mathcal{X}$ first, which takes $O(n^3(\log n +N)+n^2d)$ time \cite{manukyan2019parameter}. Constructing the MCG of the RK-CCD and extend the coverage from each dominating covering ball costs $O(n^2)$ time at most. Each time we add a new cluster, we need to re-partition the data set. Finding the nearest cluster based on the relative distance needs $O(n)$ at most for each $x_i \in \mathcal{X}$, thus a maximum of $O(n^2)$ time for the entire data set at each iteration and $O(n^3)$ time for all iterations. Updating and maximizing the average silhouette measure take less than $O(n^3)$ time \cite{manukyan2019parameter}. Finally, similar to Theorem \ref{thm:RUMCCD_Time}, looping through each partition in $\mathbf{P}$ to identify outliers takes no more than $O(n^2\log n)$ time. Hence, Algorithm \ref{alg:SUMCCD_Algo} runs in $O(n^3(\log n +N)+n^2(d+\log n))$ time. Note that for fixed $d$ and $N$, Algorithm \ref{alg:SUMCCD_Algo} runs in $O(n^3\log n)$ when $n \rightarrow \infty$.
\end{proof}

In Section \ref{sec:Simul_CCDs}, we evaluate the SU-MCCD algorithm's performance under the same simulation settings as the RU-MCCD algorithm. And we set $S_{min}$ (the minimal size of a cluster) to be half of the contamination level.

The results are summarized from Tables \ref{tab:Uni_General_Results1} to \ref{tab:Gau_General_Results2}. Generally, when points within each cluster are uniformly distributed (Tables \ref{tab:Uni_General_Results1} and \ref{tab:Uni_General_Results2}), the performance of the SU-MCCD algorithm is comparable to or slightly better than that of the RU-MCCD algorithm under lower dimensions ($d\leq5$). Additionally, it can achieve substantially higher TNRs when $d=10$ and $20$. Besides, this flexible approach can identify all the outliers (i.e., TPRs are near $100\%$) while maintaining relatively high TNRs.

Under the second simulation setting, where clusters are (multivariate) normally distributed (Tables \ref{tab:Gau_General_Results1}), the SU-MCCD algorithm yields considerably higher TNRs when $d\leq10$ compared to the RU-MCCD algorithm. This can be attributed to the flexibility of the SU-MCCD algorithm in capturing clusters with arbitrary shapes or uneven intensities and returning precise boundaries.

Furthermore, the SU-MCCD algorithm is robust against the contamination level, as shown in the simulation study in Section \ref{sec:Simul_CCDs}. A higher contamination level results in high-intensity outliers, and a masking problem typically arises in such cases due to a substantial increase in small outlier groups. Thanks to the mechanism that filters small clusters, the SU-MCCD algorithm can correctly label small outlier groups whose sizes are smaller than $S_{min}$ as outliers. Furthermore, as an input parameter, $S_{min}$ is relatively easy to specify in various disciplines (e.g., with a pilot study).

However, similar to the RU-MCCD algorithm, the SU-MCCD algorithm tends to underperform in higher dimensions ($d\geq$20). Like the RU-MCCD algorithm, the SU-MCCD algorithm still employs RK-CCDs for clustering, inheriting the same limitations. When the dimensionality is high, the covering balls of RK-CCDs tend to be much smaller, making it challenging for any two points to connect under the MCG, even if they are nearest neighbors. Consequently, the simulation results of the SU-MCCD algorithm are disappointing when $d>20$ and close to or even the same as the RU-MCCD algorithm when $d \geq 50$ as almost every point is isolated in the MCG. Additionally, increasing the data size to as large as $1000$ only provides a small improvement.

\subsection{Outlier Detection with UN-CCDs}
\subsubsection{Complete Spatial Randomness and the Nearest Neighbor Distance}

The Monte Carlo experiments conducted starting from Sections \ref{sec:Simul_CCDs} show that the outlier detection algorithms based on RU-MCCDs and SU-MCCDs typically deliver low TNRs when applied to high-dimensional data sets, particularly when $d\geq10$. A similar limitation encountered by these outlier detection algorithms in higher dimensions is the size of the dominating covering balls returned by RK-CCDs. These balls are not sufficiently large and leave too many regular observations uncovered. And unfortunately, this shortcoming is only partially addressed by the subsequent D-MCG algorithm.

We have identified several limitations of RK-CCDs that eventually lead to the shortcomings mentioned above (on high-dimensional data sets). Firstly, recall that to find the optimal radius $r_{x_i}$ for each covering ball $B(x_i, r_{x_i})$, RK-CCDs expand the size of $B(x_i, r_{x_i})$ from the center $x_i$ incrementally until the points captured within can no longer pass SR-MCT (Spatial Randomness Monte Carlo Test). It is known that Ripley's $K$ function can be used to describe the second-order moments of a point process \cite{ripley1976second}. The SR-MCT was developed based on one of Ripley's $K$ functions ($K(t)$), which measures the number of pairs of points whose distance is less than $t$ within a window or a region of interest. However, the first point $x_i$ to be involved in the test is not random as it is always the center of $B(x_i, r_{x_i})$. Thus, any successive points to be covered will be less than $1$ unit distance (scaled by radius) apart from $x_i$. It may not be a big issue when $d$ is small because, inside a unit ball, it is expected to see a pair of points whose distance is less than $1$ under CSR with sufficiently high probability; adding a few more such non-random pairs would not considerably affect the validity of the test with a high probability. However, close pairs of points become extremely rare when $d$ is large. For example, the probability that two random points are at most $1$ unit away is approximately $0.122$ (estimated by simulation) when $d=10$; this probability decreases to roughly $0.0222$ when $d=20$. Under high-dimensional settings, adding a few more close non-random pairs can make a huge difference. Therefore, the test conducted on these covering balls is no longer accurate. Except for the non-random center $x_i$, the point-wise confidence band for $K(t)$ raises another problem on the test. In RK-CCDs, $t_{max}$ was specified to be half of the radius of a unit sphere, namely $0.5$ \cite{manukyan2019parameter}. The commonly chosen values for $t$ are $0.1, 0.2,..., 0.5$. The point-wise confidence band (for $K(t)$) built on these fixed values is only appropriate when $d$ is small because the distances between any points increase substantially as $d$ increases. Consequently, the small and fixed $t$ values are no longer suitable.

Some potential ways to improve RK-CCDs include the following: (1) Remove the center point $x_i$ when conducting the SR-MCT on a covering ball $B(x_i, r_{x_i})$. (2) Make the values for $t$ dynamically adaptable to $d$. The first should be easy to implement, while the second may be challenging. Determining appropriate $t$ values for different dimensions is difficult because the distribution of Ripley's $K$ function is unknown, and so are the theoretical quantiles, whose values change with $d$.

Nevertheless, we have attempted to obtain appropriate values for $t$ through Monte Carlo simulations. First, we simulate $M$ data sets of the same size as the given data set. Then, with each simulated data set, we record the distances between any two points and aggregate these distances from all data sets into a sample. Finally, we take the $10\%$, $20\%$, $30\%$, $40\%$, and $50\%$ quantiles of the sample and set these quantiles as the values for $t$. The experimental results (not presented here) exhibit substantial improvement but are still not good enough, and determining the values of $t$ is another hurdle against RK-CCDs in real-life applications. Therefore, the test based on Ripley's $K$ function seems unsuitable for high-dimensional clustering.

To address this shortcoming, we introduce an alternative way to test CSR using the \emph{Nearest Neighbor Distance} (NND) and employ the CCDs with NND for outlier detection. First, we review NND and the existing methods for testing CSR.

Suppose we have a set of \emph{i.i.d} points $\mathcal{X} = \{x_1,x_2,...,x_n\}$ in a subspace of $\mathbb{R}^d$ with a specified intensity $\rho$. Let $\mathcal{d}_i$ be the distance of $x_i$ to its nearest neighbor. Then, the mean NND of the data set $\mathcal{X}$ can be computed as $\bar{\mathcal{d}}=\frac{\sum_{i=0}^{n}\mathcal{d}_i}{n}$. To measure how much $\mathcal{X}$ departs from randomness, we also want to know the expected mean NND of $\mathcal{X}$ (denoted as $\mu_{\mathcal{d}}$) under CSR. Fortunately, when $d=2$,  Clark and Evans \cite{clark1954distance} have shown the following,
\begin{equation}\label{equ:NND1}
\mu_{\mathcal{d}} = \frac{1}{2\sqrt{\rho}},\ \ \
\sigma_{\bar{\mathcal{d}}} = \frac{0.26136}{\sqrt{\rho}},
\end{equation}
where $\sigma_{\bar{\mathcal{d}}}$ is the standard deviation of $\bar{\mathcal{d}}$.

The significance of the difference between $\mu_{\mathcal{d}}$ and $\bar{\mathcal{d}}$ can be measured by the widely used Gaussian Z-score when $n$ is sufficiently large \cite{clark1954distance},
\begin{equation}\label{equ:Z-score}
Z = \frac{\bar{\mathcal{d}}-\mu_{\mathcal{d}}}{\sigma_{\bar{\mathcal{d}}}}.
\end{equation}

Clark and Evans \cite{clark1979generalization} had also generalized the expressions in Equation \eqref{equ:NND1} to arbitrary dimensionality as
\begin{equation}\label{equ:NND2}
\mu_{\mathcal{d}} = \frac{\Gamma(d/2+1)^{1+1/d}}{\rho^{1/d}\pi^{1/2}}, \ \ \
\sigma_{\bar{\mathcal{d}}} = \frac{\Gamma(d/2+1)^{1/d}((2/d+1)-\Gamma(d/2+1)^2)^{1/2}}{\rho^{1/d}\pi^{1/2}}.
\end{equation}

Although the normality test conducted by measuring $\bar{\mathcal{d}}$ is convenient and easy to interpret for non-statisticians, its accuracy is questionable when the sample size is too small. Actually, the distribution of $\bar{\mathcal{d}}$ is skewed to the left, and its skewness cannot be ignored when $n$ is relatively small \cite{clark1954distance}. In addition, Besag and Diggle \cite{besag1977simple} argued that Clark and Evans’s derivation of $\mu_{\mathcal{d}}$ and $\sigma_{\bar{\mathcal{d}}}$ ignored the fact that the NNDs $\mathcal{d}_1,\mathcal{d}_2,...,\mathcal{d}_n$ are not \emph{i.i.d}. Therefore, they proposed an alternative, more reliable way by employing the Monte Carlo test \cite{besag1977simple}. They simulate $m$ data sets of size $n$ that are uniformly distributed, the mean NND values $\bar{\mathcal{d}}_1,\bar{\mathcal{d}}_2,...,\bar{\mathcal{d}}_m$ can be obtained for the $m$ simulated data sets. Then, the significance level of $\bar{\mathcal{d}}$ can be measured by its quantile in the $m$ simulated mean NND values. This Monte Carlo test for CSR is easy to conduct and does not require formulas or parameters. It is also well adapted to subspaces with any shape, as correction for edge effects is not needed \cite{besag1977simple}. With these advantages, we consider using Besag and Diggle's Monte Carlo approach to test CSR rather than calculating theoretical values of the quantiles.

\subsubsection{Mutual Catch Graph with the Nearest Neighbor Cluster Catch Digraphs}
\label{sec:UN-CCDs}

We propose another outlier detection method based on CCDs, which conducts the SR-MCT with the NND instead of Ripley's $K$ function. However, Clark's and Besag's approaches \cite{besag1977simple, clark1954distance, clark1979generalization} only consider the mean NNDs when measuring the significance of outlyingness, which is not robust and can be highly affected by a few extreme values, especially with in lower-dimensional space where the distances between points could be very different. For example, a group of observations consists of a cluster and a few outliers can still pass the SR-MCT if those outliers are far from the cluster. Thus, when implementing the Monte Carlo test, we consider using the median and mean of NND simultaneously when conducting the SR-MCT.

Furthermore, we make three additional modifications to the previous Monte Carlo test: (1) The center point $x_i$ of the covering ball $B(x_i, r_{x_i})$ will not be used in the test. (2) When the dimensionality $d$ is large, larger covering balls are preferred to compensate for the increasing sparseness. Thus, we offer the option to test the candidate values for the radius in descending order and stop decreasing the radius once the $H_0$ (i.e., CSR) is not rejected. (3) We make the test lower-tailed as we are not interested in the upper tail when the point pattern is significantly ``regular". The Monte Carlo test is presented in Algorithm \ref{alg:NNtest}.

\begin{algorithm}[htb]
\KwData{A hypersphere in $\mathbb{R}^d$ with radius $r$ covering i.i.d point set $\mathcal{X}_{sub}$ of size $n_{sub}$ from $\mathcal{X}$}\;
\KwResult{Decision on CSR rejection for $\mathcal{X}_{sub}$ at level $\alpha$}\;
\textbf{Algorithm Steps:}\\
\nl Compute mean $\bar{\mathcal{d}}$ and median $\widetilde{\mathcal{d}}$ of nearest neighbor distance (NND) in $\mathcal{X}_{sub}$, scaled by $r$\;
\nl Simulate $m$ sets within a unit sphere in $\mathbb{R}^d$, each of size $n_{sub}$, under CSR\;
\nl Calculate mean $\{\bar{\mathcal{d}}_i\}$ and median $\{\widetilde{\mathcal{d}}_i\}$ NNDs for simulations\;
\nl Determine empirical p-values $p_1$ for $\bar{\mathcal{d}}$ and $p_2$ for $\widetilde{\mathcal{d}}$, then order $p_{(1)} \leq p_{(2)}$\;
\nl Reject CSR for $\mathcal{X}_{sub}$ if $p_{(1)} \leq \alpha/2$ or $p_{(2)} \leq \alpha$ using Holm's Step-Down Procedure \cite{witten2013introduction}\;
\caption{Spatial Randomness Monte Carlo Test (SR-MCT) Using NND}
\label{alg:NNtest}
\end{algorithm}

With the above construction, we propose a clustering approach based on the NND as Algorithm \ref{alg:UN-CCDs}, and call it \emph{Uniformity- and Neighbor-based CCD} (UN-CCD) clustering algorithm.
The UN-CCD clustering algorithm identifies cluster centers in a data set $\mathcal{X}$ using Cluster Catch Digraphs (CCDs) based on the Nearest Neighbor Distance (NND). 
For each point in $\mathcal{X}$, the algorithm calculates the distances to all other points and sorts them.
A Monte Carlo test is performed on increasing radii until rejection at a specified level $\alpha$. 
Using the determined radii, a CCD is constructed, and an approximate minimum dominating set is found. 
An intersection graph is created from this set, and another minimum dominating set is found using a greedy algorithm, 
aiming to maximize the silhouette score. The final set of cluster centers are returned.

\begin{algorithm}[htb]
\KwData{$\alpha$, data set $\mathcal{X} = \{x_1,x_2,...,x_n\}$}\;
\KwResult{Cluster centers of $\mathcal{X}$}\;
\textbf{Algorithm Steps:}\\
\nl \ForEach{$x_i \in \mathcal{X}$}{
\nl Calculate distances $D(x_i) = \{d(x_i, x_j) | x_j \in \mathcal{X}, x_i \neq x_j\}$\;
\nl \ForEach{distance $r_{(j)}$ in $D(x_i)$ sorted}{
\nl Perform Monte Carlo test (Algorithm \ref{alg:NNtest}) on $B(x_i, r_{(j)})$\;
\nl \If{test rejected at level $\alpha$}{
\nl Set $r_{x_i} = r_{(j-1)}$; \textbf{break}\;}
}}
\nl Construct a CCD $D=(V,A)$ using the pre-determined radii\;
\nl Find the approximate minimum dominating set $\hat{S}(V)$ in $D$ with the Greedy Algorithm \ref{alg:greedy2}\;
\nl Create intersection graph $G_{MD}=(V_{MD},E_{MD})$ with $\hat{S}(V)$\;
\nl Find an approximate minimum dominating set $\hat{S}(G_{MD})$ in $G_{MD}$ using the Greedy Algorithm \ref{alg:greedy3} with a score function measuring the number of points covered, stops when the average silhouette index $sil(P)$ is maximized\;
\nl Output $\hat{S}(G_{MD})$ as cluster centers\;
\caption{(\textbf{UN-CCD Clustering Algorithm}) Cluster Catch Digraphs based on the Nearest Neighbor Distance (NND). $\alpha$ is the level of the Monte Carlo test with NND.}
\label{alg:UN-CCDs}
\end{algorithm}

\begin{Thm}[Time Complexity of Algorithm \ref{alg:UN-CCDs}]\label{UNCCD_Time}
  Given a data set $\mathcal{X} \subset \mathbb{R}^d$ of size $n$. The time complexity of Algorithm \ref{alg:UN-CCDs} is $O((N+d)n^2 + n^3)$, where $N$ represents the number of simulated data sets.
\end{Thm}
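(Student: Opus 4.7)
The plan is to add up the cost of each stage of Algorithm \ref{alg:UN-CCDs}: (i) precomputing pairwise distances, (ii) the radius-determination loop over all points (lines 1--6), which includes many invocations of the Monte Carlo test in Algorithm \ref{alg:NNtest}, and (iii) the subsequent digraph construction together with the two approximate-MDS computations (lines 7--10). A pairwise distance matrix for $\mathcal{X}$ can be built once in $O(dn^2)$ time, after which the sort on line 3 costs $O(n\log n)$ per point, totalling $O(n^2\log n)$ which is absorbed into the $O(n^3)$ term.

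For stage (ii), the key observation is that the reference distributions used in Algorithm \ref{alg:NNtest} (mean and median NND under CSR inside a unit sphere, for each candidate subset size $n_{sub} \le n$) do not depend on $\mathcal{X}$ and can be generated once, up front. If, within each of the $N$ simulated unit-sphere samples, points are added one at a time and nearest neighbours are maintained incrementally, one simulation produces the statistics for all subset sizes in $O(n^2)$ time, giving $O(Nn^2)$ for the whole preprocessing. During the actual radius search we iterate over at most $n-1$ candidate radii for each of the $n$ points, and each test then needs only $O(1)$ lookups from the cached reference distributions plus an incremental update of the mean and median NND of the current data subset via the precomputed distance matrix; bookkeeping across all $n^2$ tests therefore contributes another $O(n^3)$.

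Once the radii are fixed, the CCD $D=(V,A)$ is built in $O(n^2)$ arc checks, Greedy Algorithm \ref{alg:greedy2} returns $\hat{S}(V)$ in $O(n^2)$ (the running time already cited for this greedy procedure in Section \ref{sec:aMDS}), and the intersection graph $G_{MD}$ is assembled in $O(|\hat{S}|^2)=O(n^2)$. The silhouette-driven Greedy Algorithm \ref{alg:greedy3} performs at most $O(n)$ vertex additions, and since each average-silhouette update takes $O(n^2)$ (as argued in the proof of Theorem \ref{thm:RUMCCD_Time}), this phase fits within $O(n^3)$. Summing the three stages gives
\[
O(dn^2) + O(Nn^2) + O(n^3) \;=\; O\bigl((N+d)n^2 + n^3\bigr),
\]
which is the claimed bound.

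The main obstacle is the bookkeeping for the Monte Carlo phase: a naive implementation that re-simulates $N$ CSR samples inside each of the $\Theta(n^2)$ invocations of Algorithm \ref{alg:NNtest} would inflate the bound well beyond $O((N+d)n^2+n^3)$. The crux of the proof is therefore to recognize that the CSR reference distributions are a function only of $(d, n_{sub})$ and can be tabulated once, and that incremental NND updates on top of the precomputed distance matrix reduce both the simulation preprocessing and the per-test data computation to amortized linear work per added point.
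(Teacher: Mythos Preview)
Your proposal is correct and follows essentially the same decomposition as the paper: precompute the $O(dn^2)$ distance matrix, tabulate the CSR reference statistics once in $O(Nn^2)$, spend $O(n^3)$ on the radius-determination loop, and bound the greedy/intersection-graph/silhouette phase by $O(n^3)$. One minor point: the silhouette-update bound you cite is argued in Theorem~\ref{thm:SU-MCCD_Time} (and ultimately in \cite{manukyan2019parameter}), not in Theorem~\ref{thm:RUMCCD_Time}.
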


\begin{proof}
   The UN-CCDs are similar to KS-CCDs and RK-CCDs. The only difference between them is the way to determine $r_{x_i}$ for each $x_i \in \mathcal{X}$.

   For each simulated data set of size $n$, the median and mean of the NNDs can both be obtained in $O(n)$ time (e.g., the median can be found by the \emph{Quick-select} Algorithm \cite{rcoh2020}, which only costs $O(n)$ time). Repeating for subsets of sizes $2,3,...,n$ (take one subset for each size) takes less than $O(n^2)$ time, that is $O(Nn^2)$ time in total for $N$ simulated data sets.

   Considering the given data set $\mathcal{X}$, the distance matrix can be computed in $O(dn^2)$ time. For each $x_i \in \mathcal{X}$, sorting the distances $D(x_i)$ takes $O(\log n)$ time, and we need $O(n)$ time at most to obtain the median and mean of the NNDs of the points covered by $B(x_i,r_{(j)})$. Thus, a total of $O(n^2)$ time is needed for all $r_{(j)} \in D(x_i)$. Therefore, constructing a UN-CCD for the entire data set takes $O(n(\log n + n^2))$ time. Finding an approximate minimum dominating set $\hat{S}(V)$ by the Greedy Algorithm 2 costs $O(n^2)$ time in worst cases. Finally, we can construct $G_{MD}$ and $\hat{S}(G_{MD})$ in $O(n^3)$ time \cite{manukyan2019parameter}. Therefore, Algorithm \ref{alg:UN-CCDs} runs in $O((N+d)n^2+n^3)$ time. Note that if $N$ and $d$ are fixed, the time complexity reduces to $O(n^3)$.
\end{proof}

Additionally, we propose an outlier detection algorithm based on UN-CCDs as Algorithm \ref{alg:UNMCCD_Algo} and refer to it as the Uniformity- and Neighbor-based CCD with mutual catch graph (UN-MCCD) algorithm. Although its acronym is suggestive, we want to emphasize that it is based on UN-CCDs to distinguish it from one of the previous approaches, the RU-MCCD algorithm (Algorithm \ref{alg:SUMCCD_Algo}). Furthermore, it is worth noting that the UN-MCCD algorithm is the same as the RU-MCCD algorithm, except that RK-CCDs are replaced by UN-CCDs for clustering.

\begin{algorithm}[htb]
\SetAlgorithmName{Algorithm}{}{}
\KwData{$\delta_0$, $\Delta$ and a data set $\mathcal{X}=\{x_1,x_2,...,x_n\}$}\;
\KwResult{Clusters and outliers in $\mathcal{X}$}\;
\textbf{Algorithm Steps:}\\
\nl The same as Algorithm \ref{alg:RUMCCD}, except that RK-CCDs are replaced by UN-CCDs for clustering (line 1 of Algorithm \ref{alg:RUMCCD}).
\caption{(\textbf{UN-MCCD Algorithm}), an outlier detection algorithm with UN-CCDs and KS-CCDs, incorporating $\delta_0$ and $\Delta$ adjustments as in Algorithm \ref{alg:DMCG_Algo}.}\label{alg:UNMCCD_Algo}
\end{algorithm}

\begin{Thm}[Time Complexity of Algorithm \ref{alg:UNMCCD_Algo}]\label{M-UNCCD_Time}
  Given a data set $\mathcal{X} \subset \mathbb{R}^d$ of size $n$ ($d<n$). The time complexity of Algorithm \ref{alg:UNMCCD_Algo} is $O((N+d+\log n)n^2+n^3)$, where $N$ represents the number of simulated data sets when constructing UN-CCDs.
\end{Thm}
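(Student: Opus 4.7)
The plan is to treat Algorithm \ref{alg:UNMCCD_Algo} as a two-stage pipeline and bound each stage separately, then add the bounds. Stage one is the UN-CCD clustering step (the only step that differs from Algorithm \ref{alg:RUMCCD}), and stage two is the per-cluster KS-CCD / MCG construction and outlier labeling, identical to steps 2--13 of Algorithm \ref{alg:RUMCCD}.

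For stage one, I would simply invoke Theorem \ref{UNCCD_Time}, which already establishes that a UN-CCD with accompanying dominating-set extraction on an $n$-point data set in $\mathbb{R}^d$ runs in $O((N+d)n^2+n^3)$ time, where $N$ is the number of Monte Carlo replicates used by Algorithm \ref{alg:NNtest}. A useful side effect to note is that the full $n\times n$ distance matrix is computed during this stage at cost $O(dn^2)$, so it can be reused freely in stage two.

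For stage two, I would mimic the aggregation argument of Theorem \ref{thm:RUMCCD_Time}. After clustering, let the partition be $\mathbf{P}=\{P_1,\dots,P_m\}$ with $\sum_j |P_j| = n$. For each $P_j$, we build a KS-CCD on $P_{j,c}$ and on $P_j$, then construct the associated MCGs and find their connected components. By Theorem \ref{thm:D-MCG_Time} (and because the distance matrix is already at hand), each KS-CCD costs $O(|P_j|^2 \log |P_j|)$ and each MCG plus its component search costs $O(|P_j|^2)$. The adjustment of $\delta_j$ in the inner while loop multiplies these by a factor depending only on $\delta_0$ and $\Delta$, not on $n$. Summing across partitions and using $\sum_j |P_j|^2 \leq (\sum_j |P_j|)^2 = n^2$, the aggregate cost of stage two is $O(n^2 \log n)$.

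Adding the two bounds gives
\[
O\bigl((N+d)n^2+n^3\bigr) + O(n^2 \log n) = O\bigl((N+d+\log n)n^2 + n^3\bigr),
\]
which is the claimed rate. The only subtle point is the aggregation across the (data-dependent) partition $\mathbf{P}$ in stage two, and the mild care required to observe that the distance matrix computed inside the UN-CCD step can be reused without recomputation; both are routine once the decomposition is set up, so I do not anticipate a genuine obstacle here.
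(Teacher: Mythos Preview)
Your proposal is correct and follows essentially the same approach as the paper: invoke Theorem~\ref{UNCCD_Time} for the clustering stage to get $O((N+d)n^2+n^3)$, then appeal to the aggregation argument from Theorem~\ref{thm:RUMCCD_Time} to bound the per-partition KS-CCD/MCG work by $O(n^2\log n)$, and add. The paper's proof is terser (it simply says ``similar to Algorithm~\ref{alg:RUMCCD}'' for stage two), but your explicit unpacking of the partition sum and the distance-matrix reuse is exactly the content being abbreviated.
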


\begin{proof}
  From Theorem \ref{UNCCD_Time}, we know UN-CCD partitions $\mathcal{X}$ in $O((N+d)n^2 + n^3)$ time. Similar to Algorithm \ref{alg:RUMCCD}, building an MCG for each partition and identifying outliers takes $O(n^2\log n)$ time in the worst cases. Therefore, Algorithm \ref{alg:UNMCCD_Algo} runs in $O((N+d+\log n)n^2 + n^3)$ time, the same as UN-CCD (Algorithm \ref{alg:UN-CCDs}).
\end{proof}

In Section \ref{sec:Simul_CCDs}, we evaluate the average performance of the UN-MCCD algorithm and compare its results with those of the RU-MCCD algorithm. We perform Monte Carlo simulations under the same two simulation settings with uniform clusters and Gaussian clusters, respectively. The performance of them are summarized in Tables \ref{tab:Uni_General_Results1} to \ref{tab:Gau_General_Results2}.

The simulation results from both simulation settings show that the SU-MCCD algorithm outperforms the RU-MCCD algorithm under most simulation settings. In the first simulation setting, where points in each cluster are uniformly distributed following CSR, the UN-MCCD algorithm performs comparable or better than the RU-MCCD algorithm when $d\leq5$. Under most simulation cases, the TPRs and TNRs are much higher than $0.95$. Notably, both TPRs and TNRs are relatively insensitive to the number of clusters, the size of each cluster, and even the contamination level (which are shown in Section \ref{sec:Simul_Changing_Factors}), which we will discuss in detail.

When compared with the RU-MCCD algorithm with $d\geq10$, the UN-MCCD algorithm reduces the number of false negatives substantially while still maintaining high TPRs ($\approx 1$), the TNPs remain acceptable even when $d=20$, as most of them are around $0.9$.

The advantages of the UN-MCCD algorithm are even more apparent under the second simulation setting, where it outperforms the RU-MCCD algorithm in nearly all the dimensions we considered. However,  we do not expect the UN-MCCD algorithm to achieve as high TNRs as in the first simulation setting because UN-CCDs are conducting SR-MCT while Gaussian clusters are distributed nonuniformly, which deviates from CSR.

\subsubsection{Shape-Adaptive Uniformity- and Neighbor-Based CCD with Mutual Catch Graph}

Recall that in the previous section, we adapted the RU-MCCD algorithm to the cases where the cluster's shapes are arbitrary, or the intensities within clusters are nonuniform. We called the resulting algorithm the SU-MCCD algorithm. Different from the RU-MCCD algorithm that uses only one covering ball for each cluster, the SU-MCCD algorithm extends the coverage of each dominating covering ball by finding points that are connected to the center in the MCG obtained from an RK-CCD, and the union of their covering balls represents the scope of a latent cluster. With the above construction, we find the optimal number of clusters (connected components) by maximizing the silhouette index. Meanwhile, we assign each isolated point to a ``nearest" cluster with the smallest relative distance. Furthermore, we have introduced an input parameter, $S_{min}$, representing the minimal size of a cluster. The $S_{min}$ value should be easy to specify in real-life applications.

However, as discussed earlier, the SU-MCCD algorithm's performance shows little or no improvement when $d$ is large (see Tables \ref{tab:Uni_General_Results1} to \ref{tab:Gau_General_Results2}) due to the limitations of RK-CCDs: the covering balls are too small for any two points to be connected in the MCG even if they are nearest neighbors.

Fortunately, we were able to fix these limitations by introducing another version of CCDs that uses the NND to conduct SR-MCT, and the resulting approach is called the UN-MCCD algorithm. Like the SU-MCCD algorithm, we modify the UN-MCCD algorithm in a similar fashion, hence the name SUN-MCCD (\textit{Shape-adaptive Uniformity- and Neighbor-based CCD with Mutual catch graph}) algorithm, presented as Algorithm \ref{alg:SUN-MCCD} below. SUN-MCCDs differ from SU-MCCDs only in the clustering phase, and we expect this new algorithm to outperform the SU-MCCD algorithm.

\begin{algorithm}[htb]
\SetAlgorithmName{Algorithm}{}{}
\KwData{$\delta_0$, $\Delta$, $k$, $S_{min}$, and a data set $\mathcal{X}=\{x_1,x_2,...,x_n\}$}\;
\KwResult{Clusters and outliers of $\mathcal{X}$}\;
\textbf{Algorithm Steps:}\\
\nl The same as Algorithm \ref{alg:SUMCCD_Algo}, except that RK-CCDs are replaced by UN-CCDs for clustering (line 1).
\caption{(\textbf{SUN-MCCD Algorithm}), outlier detection using RK-CCDs for cluster formation and KS-CCDs for density-based validation, incorporating $S_{min}$ for minimum cluster size, with initial density $\delta_0$ and decrement $\Delta$ as in Algorithm \ref{alg:DMCG_Algo}. Adapted for arbitrary-shaped clusters.}\label{alg:SUN-MCCD}
\end{algorithm}

\begin{Thm}[Time Complexity of Algorithm \ref{alg:SUN-MCCD}]\label{M-FUNCCD_Time}
  Given a data set $\mathcal{X} \subset \mathbb{R}^d$ of size $n$ ($d<n$), the time complexity of Algorithm \ref{alg:SUN-MCCD} is $O((N+d+\log n)n^2+n^3)$, where $N$ represents the number of simulated data sets when constructing UN-CCDs.
\end{Thm}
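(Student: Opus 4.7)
The plan is to follow the same bookkeeping as in Theorem \ref{thm:SU-MCCD_Time}, but substitute the UN-CCD clustering cost from Theorem \ref{UNCCD_Time} in place of the RK-CCD clustering cost, since, as Algorithm \ref{alg:SUN-MCCD} explicitly states, the only difference between SUN-MCCD and SU-MCCD is in the clustering phase (line 1). Hence the other steps of the proof should be essentially identical to those in the proof of Theorem \ref{thm:SU-MCCD_Time}, and the work reduces to accounting for the change in the dominant clustering term.

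First, I would invoke Theorem \ref{UNCCD_Time} to replace the clustering step: constructing the UN-CCD on $\mathcal{X}$ and extracting the cluster centers (equivalently, the dominating covering balls) requires $O((N+d)n^2+n^3)$ time, where $N$ is the number of simulated data sets used in the SR-MCT based on NND. Next, I would inherit the analysis from Theorem \ref{thm:SU-MCCD_Time} for the remaining steps: building the MCG of the UN-CCD and extending coverage outward from each dominating covering ball takes $O(n^2)$ time; ranking the connected components by size, incrementally adding them as candidate clusters while reassigning each isolated point to the nearest cluster via the relative distance, and maximizing the average silhouette index together take $O(n^3)$ time in the worst case; and looping over each extracted cluster $C_j$ to construct its local KS-CCD, form the MCG, iteratively reduce $\delta_j$ until connectivity of $C_{j,c}$, and finally identify the outliers costs $O(n^2\log n)$ time overall, by the same NND-free argument used in Theorems \ref{thm:D-MCG_Time} and \ref{thm:RUMCCD_Time}, using the distance matrix already computed while constructing the UN-CCD.

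Summing these contributions yields a total running time of
\[
O\!\bigl((N+d)n^2+n^3\bigr) + O(n^2) + O(n^3) + O(n^2 \log n) \;=\; O\!\bigl((N+d+\log n)n^2+n^3\bigr),
\]
which matches the claimed bound. For fixed $d$ and $N$, this simplifies to $O(n^3)$ as $n\to\infty$, consistent with Theorem \ref{M-UNCCD_Time}.

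I do not anticipate any genuinely difficult step: each ingredient is supplied by a previously established theorem (Theorems \ref{thm:D-MCG_Time}, \ref{thm:RUMCCD_Time}, \ref{thm:SU-MCCD_Time}, \ref{UNCCD_Time}, and \ref{M-UNCCD_Time}). The only mild point of care is ensuring that the silhouette-index maximization with incremental cluster addition, and the nearest-cluster reassignment of isolated points, are correctly bounded by $O(n^3)$; this was already carried out in the proof of Theorem \ref{thm:SU-MCCD_Time} and applies verbatim here, since neither of those sub-procedures depends on whether the underlying clustering was produced by RK-CCDs or UN-CCDs.
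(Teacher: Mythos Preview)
Your proposal is correct and follows essentially the same approach as the paper: invoke Theorem \ref{UNCCD_Time} for the UN-CCD construction cost $O((N+d)n^2+n^3)$, reuse the bookkeeping from Theorem \ref{thm:SU-MCCD_Time} to bound the remaining steps by $O(n^3+n^2\log n+n^2)$, and sum. The paper's proof is simply a terser version of what you wrote.
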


\begin{proof}
  As shown in Theorem \ref{UNCCD_Time}, constructing a UN-CCD for $\mathcal{X}$ costs $O((N+d)n^2+n^3)$ time. According to Theorem \ref{thm:SU-MCCD_Time}, the remaining steps take $O(n^3 + n^2 \log n + n^2)$. So, Algorithm \ref{alg:SUN-MCCD} requires $O((N+d+\log n)n^2+n^3)$ time to capture outliers, and it reduces to $O(n^3)$ for fixed $N$ and $d$.
\end{proof}

Similar to the previous Monte Carlo experiments, we assess the average performance of the SU-MCCD algorithm and compare it with the SU-MCCD algorithm that is based on RK-CCDs. We perform Monte Carlo simulations using the same two settings presented in Section \ref{sec:Simul_CCDs}. In the first setting, the points of each cluster are uniformly distributed, while in the second simulation setting, they follow Gaussian distributions. The results are summarized from Tables \ref{tab:Uni_General_Results1} to \ref{tab:Gau_General_Results2}.

According to the simulation results, the SUN-MCCD algorithm performs well. Under most simulation settings, the TPRs are close to $1$, which is comparable to the previous algorithms. Additionally, when compared to the UN-MCCD algorithm, the SUN-MCCD algorithm delivers higher TPRs when the size of a data set is large enough or larger TNRs when the dimensionality $d$ is relatively large. We will discuss its performance in detail in the next section.

\section{The Space Complexity of CCD-Based Algorithms}
\label{sec:space_complexity}

In this section, we analyze the space complexity of all the CCD-based algorithms, which determines the memory consumption. We prove that each algorithm requires $O(n^2)$ space in the following.\\

\noindent \textbf{The KS-CCD, RK-CCD, and UN-CCD algorithms:}
\begin{itemize}
\item[(1)] \textit{Data storage:} The space requirement for a $d$-dimensional data set is $O(dn)$.
\item[(2)] \textit{Distance matrix:} Computing and storing pairwise distances between all points require $O(n^2)$ space.
\item[(3)] \textit{Simulations:} Both RK-CCDs and UN-CCDs require $N$ simulated data sets, storing each data set and its distance matrix requires $O(Ndn+Nn^2)$ space, which boils down to $O(n^2)$ when $N$ and $d$ are fixed. The space requirements for the $n$ upper envelopes of the Ripley’s $K$ function \cite{manukyan2019parameter}, and the $2n$ confidence intervals of the mean and median NNDs, are both $O(n)$.
\item[(4)] \textit{Radii of the covering balls}: There are $n$ covering balls in total, whose radii require $O(n)$ space to store.
\item[(5)] \textit{Misc:} The two approximate MDSs (i.e., $\hat{S}$ and $\hat{S}(G_{MD})$) require $O(n)$ space at most. The silhouette index of the data set takes $O(n)$ in memory.
\end{itemize}
In summary, the space complexities of the KS-CCD, RK-CCD, and UN-CCD algorithms are $O(n^2)$. This complexity arises primarily from the need to store distance matrices.\\

\noindent \textbf{The RU-MCCD and UN-MCCD algorithms:}
\begin{itemize}
    \item[(1)] \textit{Clustering:} Constructing RK-CCDs or UN-CCDs for clustering takes $O(n^2)$ space.
    \item[(2)] \textit{D-MCGs:} The D-MCG algorithm involving constructing KS-CCDs for each cluster, whose space complexity is $O(n)$ at most.
\end{itemize}

Therefore, the space space complexities of the RU-MCCD and UN-MCCD algorithms are $O(n^2)$.\\

\noindent \textbf{The SU-MCCD and SUN-MCCD algorithms:}

Both algorithms are similar to their prototype except that they use multiple covering balls for each cluster, which does not take additional memory. Thus, the space complexity remains $O(n^2)$.

Besides, we summarize the time and space complexity of all CCD-based algorithms in Table \ref{tab:space_time}, which we have proven. 

\begin{table}[htb]
  \center
  \footnotesize{\begin{tabular}{|c|c|}\hline
  \textbf{Algorithms} & \textbf{Time Complexity} \\ \hline
  KS-CCDs & $O(n^3+n^2(d+\log n))$ \\ \cline{1-2}
  RK-CCDs & $O(n^3(\log n+ N)+n^2d)$ \\ \cline{1-2}
  UN-CCDs & $O((N+d)n^2+n^3)$ \\ \cline{1-2}
  RU-MCCDs & $O(n^3(\log n +N)+n^2(d+\log n))$ \\ \cline{1-2}
  SU-MCCDs & $O(n^3(\log n +N)+n^2(d+\log n))$ \\ \cline{1-2}
  UN-MCCDs & $O((N+d+\log n)n^2+n^3)$ \\ \cline{1-2}
  SUN-MCCD & $O((N+d+\log n)n^2+n^3)$ \\ \hline
\end{tabular}}
\caption{The time complexity of all CCD-based algorithms.}
\label{tab:space_time}
\end{table}

\section{Monte Carlo Experiments}
\label{sec:Simul_CCDs}

\subsection{Monte Carlo Experiments: General Settings}
\label{sec:Simul_General}

In this section, we conduct Monte Carlo experiments under various simulation settings to evaluate the performance of the new CCD-based outlier detection algorithms. These experiments are conducted under general settings that involve many factors (e.g., dimensionality, data set sizes, cluster volumes, etc.) whose values vary among different data sets. In the next section, we will conduct an empirical analysis focusing on only one factor each time, and we call it empirical analysis under focus settings.

We will begin with the simulation settings where points within each cluster are uniformly distributed, and we will refer to them as \emph{uniform clusters} in the rest of the section). The simulated data sets involve two clusters, each exhibiting a standard spherical shape whose radius is a uniform random value ranging from 0.7 to 1.3. We aim to assess whether our proposed algorithms can effectively identify local outliers that may not be prominent when considered globally. Additionally, we will consider data sets with dimensionality ($d$) as high as 100, which is particularly challenging as the distance between an outlier and a regular point gets closer to that of any two points due to the low spatial intensity with more dimensions. This effect is particularly pronounced when the size of a data set is small.

Each simulation setting varies in: \label{sec:Uni_General_Settings_Des}

\begin{itemize}
  \item[\romannumeral1.] The dimensionality ($d$) of the simulated data sets with values 2, 3, 5, 10, 20, 50, 100;
  \item[\romannumeral2.] The size of data sets ($n$) with values 50, 100, 200, 500, 1000;
\end{itemize}

On the other hand, all the simulated data sets have the following common features:

\begin{itemize}
  \item[\romannumeral1.] The cluster sizes are equal (although the volume of the supports can be different);
  \item[\romannumeral2.] The radius of each cluster is randomly chosen between 0.7 and 1.3;
  \item[\romannumeral3.] The centers of clusters are: $\bm{\mu_1} = (\underbrace{3,...,3}_{d})$, $\bm{\mu_2} = (6,\underbrace{3,...,3}_{d-1})$;
  \item[\romannumeral4.] The proportion of outliers over the entire data set is fixed to $5\%$;
  \item[\romannumeral5.] The outlier set $C_{outlier}$ is generated uniformly within a much larger hypersphere with radius 5, centered at the mean of the cluster center. and each outlier is at least 2 units away from any cluster center.
\end{itemize}

Two realizations of the simulation settings in 2-dimensional space with data sizes of 100 and 200 are presented in Figure \ref{fig:Demo_2d_2ucls_cont5}.

\begin{figure}[htb]
\centering
\subfigure{
\label{fig:Demo_2d_2ucls_n100_cont5}
\includegraphics[width=0.35\textwidth]{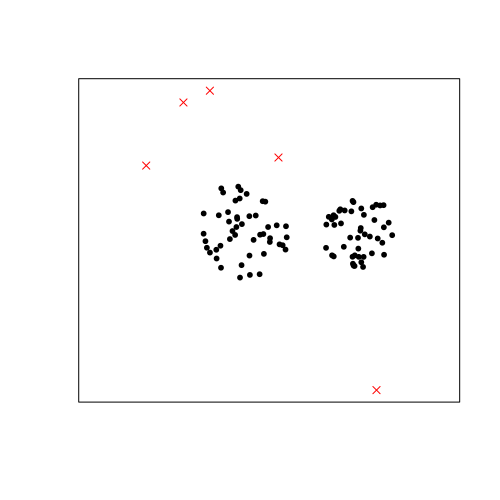}}
\subfigure{
\label{fig:Demo_2d_2ucls_n200_cont5}
\includegraphics[width=0.35\textwidth]{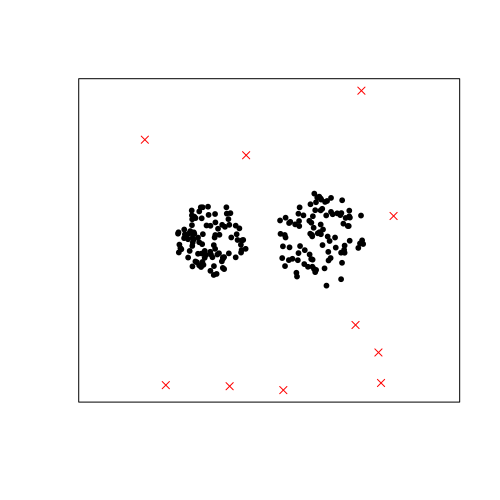}}
\caption{Two realizations of the simulation settings described in Section \ref{sec:Uni_General_Settings_Des} with $n=100$ and $200$ respectively. Each data set has 2 clusters of the same size but different intensities. Black points are regular data points, and red crosses are outliers.}
\label{fig:Demo_2d_2ucls_cont5}
\end{figure}

We repeat each subsequent simulation setting 1000 times to ensure precise and meaningful evaluation. The average TPR for outliers (i.e., the percentage of outliers captured) and TNR for regular data points (i.e., the percentage of regular points falsely identified as outliers) are recorded. However, outlier detection is essentially a classification problem over highly imbalanced data sets. Therefore, In this study, we also use BA and $F_{\beta}$-score with $\beta=2$, indicating recall is two times as important as precision.

Consider the RU-MCCD and SU-MCCD algorithms, which depend on RK-CCDs for clustering. Although RK-CCDs are parameter-free \cite{manukyan2019parameter}, the levels of the SR-MCT based on Ripley's $K$ function can be adjusted. Under moderate and high dimensions, notice that the average inter-cluster distance between any two points increases substantially, and the odds that points located near the border of clusters are substantially higher. As a result, covering balls with much higher volumes is generally preferred. Therefore, we choose the optimal levels of $\alpha$ for each dimensionality $d$. We set $\alpha$ to $1\%$ when $d<10$, and $0.1\%$ when $d\geq10$. This adjustment boosts the performance of both RK-CCD based algorithms under higher-dimensional space. Similarly, we tune the levels of the SR-MCT based on NN distances to optimize the performance of the UN-MCCD and SUN-MCCD algorithms, and we set $\alpha$ to $\{15\%, 10\%, 5\%, 1\%, 0.1\%, 0.1\%, 0.1\% \}$ as the dimensionality $d$ increases from 2 to 100. The simulation results are summarized in Tables \ref{tab:Uni_General_Results1} and \ref{tab:Uni_General_Results2}, providing comprehensive information. For better visualization, we summarize the simulation results in the following line plots (Figures \ref{fig:Uniform_TPR_TNR_Lines} and \ref{fig:Uniform_BA_F_Lines}), illustrating the trend of the performance with varying dimensions and data sizes.

\begin{table}[htb]
  \centering
  \resizebox{\columnwidth}{!}{\begin{tabular}{|c|c|c|c|c|c|c|c|c|c|c|c|}
    \hline
    \multicolumn{2}{|c|}{} & \multicolumn{10}{|c|}{The Size of Data Sets} \\ \cline{3-12}

    \multicolumn{2}{|c|}{} & \multicolumn{2}{|c|}{50} & \multicolumn{2}{|c|}{100} & \multicolumn{2}{|c|}{200} & \multicolumn{2}{|c|}{500} & \multicolumn{2}{|c|}{1000} \\ \cline{3-12}

    \multicolumn{2}{|c|}{} & TPR & TNR & TPR & TNR & TPR & TNR & TPR & TNR & TPR & TNR \\ \hline

    \multirow{4}*{$d=2$} & RU-MCCDs & 0.986 & 0.989 & 0.986 & 0.993 & 0.931 & 0.995 & 0.814 & 0.997 & 0.681 & 0.999 \\ \cline{2-12}
    & SU-MCCDs & 0.973 & 0.993 & 0.994 & 0.998 & 0.997 & 0.999 & 1.000 & 1.000 & 1.000 & 1.000 \\ \cline{2-12}
    & UN-MCCDs & 0.992 & 0.979 & 0.988 & 0.983 & 0.961 & 0.988 & 0.935 & 0.993 & 0.930 & 0.995 \\ \cline{2-12}
    & SUN-MCCDs & 0.979 & 0.987 & 0.995 & 0.992 & 1.000 & 0.994 & 1.000 & 0.996 & 1.000 & 0.997 \\ \cline{1-12}

    \multirow{4}*{$d=3$} & RU-MCCDs & 0.995 & 0.980 & 0.987 & 0.985 & 0.967 & 0.992 & 0.926 & 0.997 & 0.872 & 0.998 \\ \cline{2-12}
    & SU-MCCDs & 0.988 & 0.995 & 0.997 & 0.998 & 1.000 & 0.999 & 1.000 & 1.000 & 1.000 & 1.000 \\ \cline{2-12}
    & UN-MCCDs & 0.997 & 0.979 & 0.991 & 0.986 & 0.983 & 0.991 & 0.963 & 0.996 & 0.922 & 0.998 \\ \cline{2-12}
    & SUN-MCCDs & 0.991 & 0.990 & 0.998 & 0.997 & 1.000 & 0.998 & 1.000 & 0.999 & 1.000 & 0.999 \\ \cline{1-12}

    \multirow{4}*{$d=5$} & RU-MCCDs & 0.998 & 0.950 & 0.999 & 0.972 & 1.000 & 0.988 & 0.999 & 0.996 & 0.996 & 0.999 \\ \cline{2-12}
    & SU-MCCDs & 0.998 & 0.978 & 1.000 & 0.989 & 1.000 & 0.996 & 1.000 & 0.999 & 1.000 & 1.000 \\ \cline{2-12}
    & UN-MCCDs & 0.997 & 0.975 & 0.997 & 0.984 & 0.996 & 0.992 & 0.997 & 0.997 & 0.996 & 0.999 \\ \cline{2-12}
    & SUN-MCCDs & 0.997 & 0.994 & 1.000 & 0.997 & 1.000 & 0.999 & 1.000 & 1.000 & 1.000 & 1.000 \\ \cline{1-12}

    \multirow{4}*{$d=10$} & RU-MCCDs & 1.000 & 0.935 & 1.000 & 0.957 & 1.000 & 0.976 & 1.000 & 0.993 & 1.000 & 0.999 \\ \cline{2-12}
    & SU-MCCDs & 1.000 & 0.961 & 1.000 & 0.975 & 1.000 & 0.991 & 1.000 & 0.996 & 1.000 & 1.000 \\ \cline{2-12}
    & UN-MCCDs & 1.000 & 0.973 & 1.000 & 0.986 & 1.000 & 0.994 & 1.000 & 0.999 & 1.000 & 1.000 \\ \cline{2-12}
    & SUN-MCCDs & 1.000 & 0.998 & 1.000 & 0.999 & 1.000 & 0.999 & 1.000 & 1.000 & 1.000 & 1.000 \\ \cline{1-12}

    \multirow{4}*{$d=20$} & RU-MCCDs & 1.000 & 0.846 & 1.000 & 0.865 & 1.000 & 0.883 & 1.000 & 0.861 & 1.000 & 0.850 \\ \cline{2-12}
    & SU-MCCDs & 1.000 & 0.881 & 1.000 & 0.896 & 1.000 & 0.924 & 1.000 & 0.908 & 1.000 & 0.893 \\ \cline{2-12}
    & UN-MCCDs & 1.000 & 0.951 & 1.000 & 0.971 & 1.000 & 0.984 & 1.000 & 0.992 & 1.000 & 0.994 \\ \cline{2-12}
    & SUN-MCCDs & 1.000 & 0.974 & 1.000 & 0.983 & 1.000 & 0.992 & 1.000 & 0.996 & 1.000 & 1.000 \\ \cline{1-12}

    \multirow{4}*{$d=50$} & RU-MCCDs & 1.000 & 0.567 & 1.000 & 0.542 & 1.000 & 0.534 & 1.000 & 0.534 & 1.000 & 0.543 \\ \cline{2-12}
    & SU-MCCDs & 1.000 & 0.568 & 1.000 & 0.542 & 1.000 & 0.534 & 1.000 & 0.534 & 1.000 & 0.542 \\ \cline{2-12}
    & UN-MCCDs & 1.000 & 0.659 & 1.000 & 0.681 & 1.000 & 0.708 & 1.000 & 0.723 & 1.000 & 0.733 \\ \cline{2-12}
    & SUN-MCCDs & 1.000 & 0.682 & 1.000 & 0.727 & 1.000 & 0.794 & 1.000 & 0.824 & 1.000 & 0.864 \\ \cline{1-12}

    \multirow{4}*{$d=100$} & RU-MCCDs & 1.000 & 0.550 & 1.000 & 0.540 & 1.000 & 0.529 & 1.000 & 0.521 & 1.000 & 0.514 \\ \cline{2-12}
    & SU-MCCDs & 1.000 & 0.550 & 1.000 & 0.541 & 1.000 & 0.530 & 1.000 & 0.522 & 1.000 & 0.515 \\ \cline{2-12}
    & UN-MCCDs & 1.000 & 0.131 & 1.000 & 0.161 & 1.000 & 0.228 & 1.000 & 0.456 & 1.000 & 0.434 \\ \cline{2-12}
    & SUN-MCCDs & 1.000 & 0.131 & 1.000 & 0.161 & 1.000 & 0.228 & 1.000 & 0.456 & 1.000 & 0.435 \\ \cline{1-12}
  \end{tabular}}
  \caption{Summary of the TPR and TNR of all the CCD-based outlier detection algorithms, with the simulation settings elaborated in Section \ref{sec:Uni_General_Settings_Des}.}\label{tab:Uni_General_Results1}
\end{table}

\begin{figure}[htb]
\centering
\subfigure{\includegraphics[width=1\linewidth]{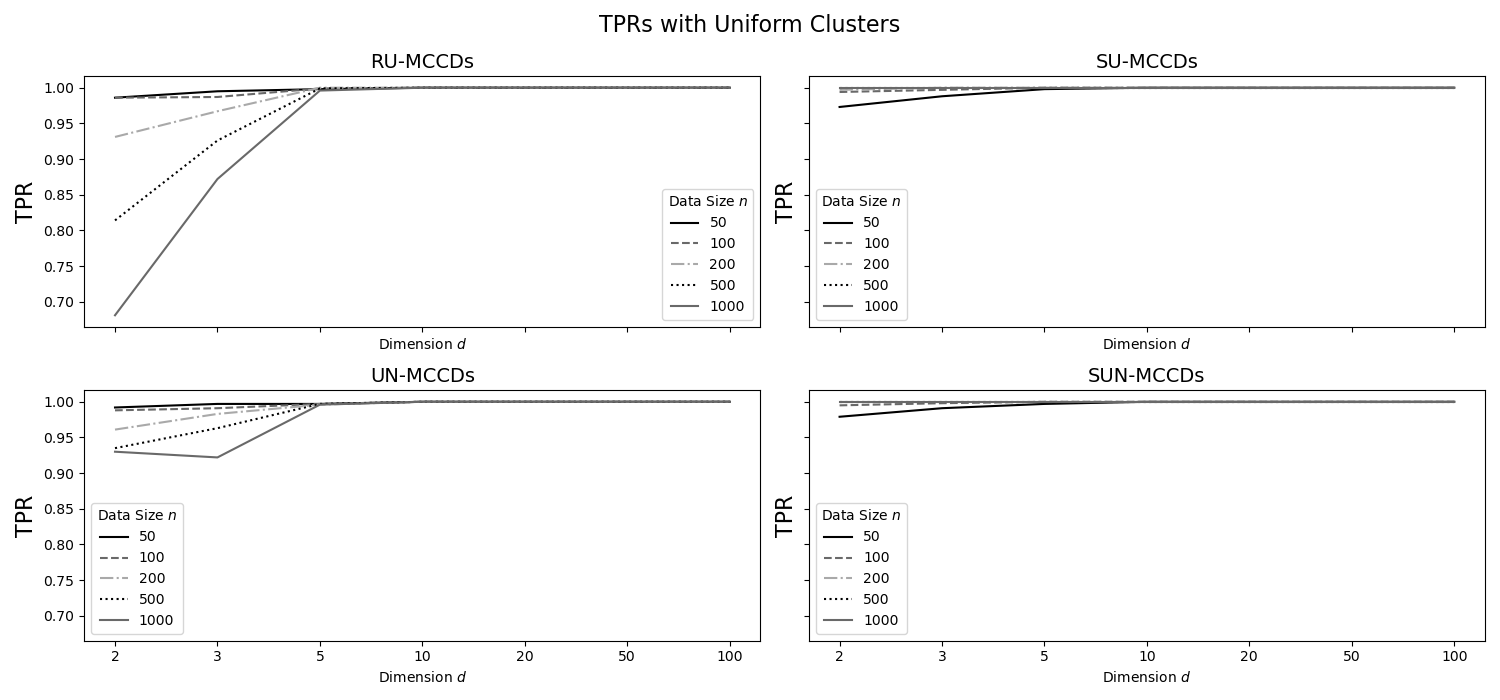}}
\subfigure{\includegraphics[width=1\linewidth]{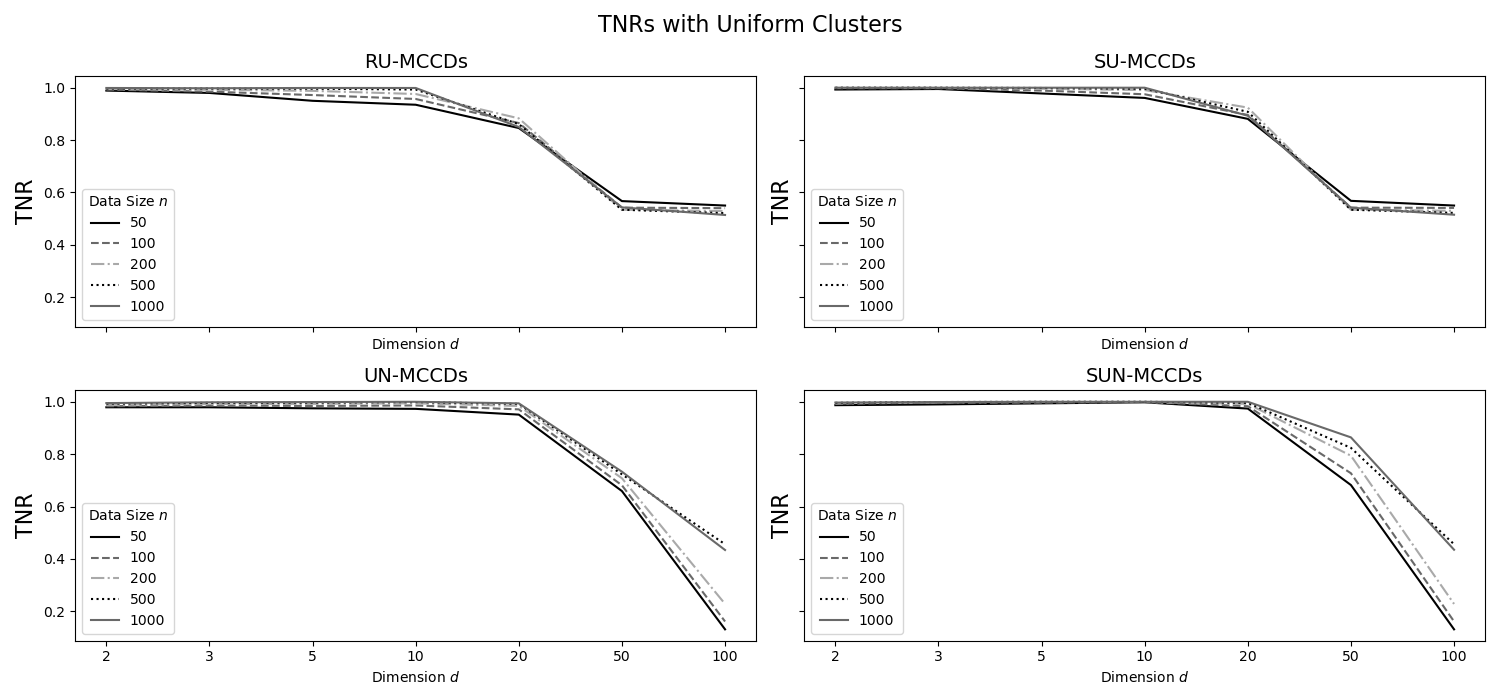}}
\caption{The line plots of the TPRs and TNRs of all CCD-based outlier detection algorithms, under the simulation settings (with uniform clusters) elaborated in Section \ref{sec:Uni_General_Settings_Des}.}
\label{fig:Uniform_TPR_TNR_Lines}
\end{figure}

\begin{table}[htb]
  \centering
  \resizebox{\columnwidth}{!}{\begin{tabular}{|c|c|c|c|c|c|c|c|c|c|c|c|}
    \hline
    \multicolumn{2}{|c|}{} & \multicolumn{10}{|c|}{The Size of Data Sets} \\ \cline{3-12}

    \multicolumn{2}{|c|}{} & \multicolumn{2}{|c|}{50} & \multicolumn{2}{|c|}{100} & \multicolumn{2}{|c|}{200} & \multicolumn{2}{|c|}{500} & \multicolumn{2}{|c|}{1000} \\ \cline{3-12}

    \multicolumn{2}{|c|}{} & BA & $F_2$-score & BA & $F_2$-score & BA & $F_2$-score & BA & $F_2$-score & BA & $F_2$-score \\ \hline

    \multirow{4}*{$d=2$} & RU-MCCDs & 0.980 & 0.949 & 0.990 & 0.963 & 0.963 & 0.926 & 0.906 & 0.836 & 0.840 & 0.724 \\ \cline{2-12}
    & SU-MCCDs & 0.983 & 0.953 & 0.996 & 0.988 & 0.998 & 0.994 & 1.000 & 1.000 & 1.000 & 1.000 \\ \cline{2-12}
    & UN-MCCDs & 0.986 & 0.920 & 0.986 & 0.930 & 0.975 & 0.926 & 0.964 & 0.922 & 0.963 & 0.925 \\ \cline{2-12}
    & SUN-MCCDs & 0.983 & 0.937 & 0.994 & 0.967 & 0.997 & 0.978 & 0.998 & 0.985 & 0.999 & 0.989 \\ \cline{1-12}

    \multirow{4}*{$d=3$} & RU-MCCDs & 0.988 & 0.926 & 0.986 & 0.936 & 0.980 & 0.945 & 0.962 & 0.929 & 0.935 & 0.888 \\ \cline{2-12}
    & SU-MCCDs & 0.992 & 0.972 & 0.998 & 0.990 & 1.000 & 0.996 & 1.000 & 1.000 & 1.000 & 1.000 \\ \cline{2-12}
    & UN-MCCDs & 0.988 & 0.924 & 0.989 & 0.943 & 0.987 & 0.954 & 0.980 & 0.956 & 0.960 & 0.929 \\ \cline{2-12}
    & SUN-MCCDs & 0.991 & 0.956 & 0.998 & 0.987 & 0.999 & 0.992 & 1.000 & 0.996 & 1.000 & 0.996 \\ \cline{1-12}

    \multirow{4}*{$d=5$} & RU-MCCDs & 0.974 & 0.839 & 0.986 & 0.903 & 0.994 & 0.956 & 0.998 & 0.984 & 0.998 & 0.993 \\ \cline{2-12}
    & SU-MCCDs & 0.988 & 0.921 & 0.995 & 0.960 & 0.998 & 0.985 & 1.000 & 0.996 & 1.000 & 1.000 \\ \cline{2-12}
    & UN-MCCDs & 0.986 & 0.911 & 0.991 & 0.940 & 0.994 & 0.967 & 0.997 & 0.986 & 0.998 & 0.993 \\ \cline{2-12}
    & SUN-MCCDs & 0.996 & 0.975 & 0.999 & 0.989 & 1.000 & 0.996 & 1.000 & 1.000 & 1.000 & 1.000 \\ \cline{1-12}

    \multirow{4}*{$d=10$} & RU-MCCDs & 0.968 & 0.802 & 0.979 & 0.860 & 0.988 & 0.916 & 0.997 & 0.974 & 1.000 & 0.996 \\ \cline{2-12}
    & SU-MCCDs & 0.981 & 0.871 & 0.988 & 0.913 & 0.996 & 0.967 & 0.998 & 0.985 & 1.000 & 1.000 \\ \cline{2-12}
    & UN-MCCDs & 0.987 & 0.907 & 0.993 & 0.949 & 0.997 & 0.978 & 1.000 & 0.996 & 1.000 & 1.000 \\ \cline{2-12}
    & SUN-MCCDs & 0.999 & 0.992 & 1.000 & 0.996 & 1.000 & 0.996 & 1.000 & 1.000 & 1.000 & 1.000 \\ \cline{1-12}

    \multirow{4}*{$d=20$} & RU-MCCDs & 0.923 & 0.631 & 0.933 & 0.661 & 0.942 & 0.692 & 0.931 & 0.654 & 0.925 & 0.637 \\ \cline{2-12}
    & SU-MCCDs & 0.941 & 0.689 & 0.948 & 0.717 & 0.962 & 0.776 & 0.954 & 0.741 & 0.947 & 0.711 \\ \cline{2-12}
    & UN-MCCDs & 0.976 & 0.843 & 0.986 & 0.901 & 0.992 & 0.943 & 0.996 & 0.970 & 0.997 & 0.978 \\ \cline{2-12}
    & SUN-MCCDs & 0.987 & 0.910 & 0.992 & 0.939 & 0.996 & 0.970 & 0.998 & 0.985 & 1.000 & 1.000 \\ \cline{1-12}

    \multirow{4}*{$d=50$} & RU-MCCDs & 0.784 & 0.378 & 0.771 & 0.365 & 0.767 & 0.361 & 0.767 & 0.361 & 0.772 & 0.365 \\ \cline{2-12}
    & SU-MCCDs & 0.784 & 0.379 & 0.771 & 0.365 & 0.767 & 0.361 & 0.767 & 0.361 & 0.771 & 0.365 \\ \cline{2-12}
    & UN-MCCDs & 0.830 & 0.436 & 0.841 & 0.452 & 0.854 & 0.474 & 0.862 & 0.487 & 0.867 & 0.496 \\ \cline{2-12}
    & SUN-MCCDs & 0.841 & 0.453 & 0.864 & 0.491 & 0.897 & 0.561 & 0.912 & 0.599 & 0.932 & 0.659 \\ \cline{1-12}

    \multirow{4}*{$d=100$} & RU-MCCDs & 0.775 & 0.369 & 0.770 & 0.364 & 0.765 & 0.358 & 0.761 & 0.355 & 0.757 & 0.351 \\ \cline{2-12}
    & SU-MCCDs & 0.775 & 0.369 & 0.771 & 0.364 & 0.765 & 0.359 & 0.761 & 0.355 & 0.758 & 0.352 \\ \cline{2-12}
    & UN-MCCDs & 0.566 & 0.232 & 0.581 & 0.239 & 0.614 & 0.254 & 0.728 & 0.326 & 0.717 & 0.317 \\ \cline{2-12}
    & SUN-MCCDs & 0.566 & 0.232 & 0.581 & 0.239 & 0.614 & 0.254 & 0.728 & 0.326 & 0.718 & 0.318 \\ \cline{1-12}
  \end{tabular}}
  \caption{Summary of the Balanced Accuracy (BA) and $F_2$-score of all the CCD-based outlier detection algorithms, with the simulation settings elaborated in Section \ref{sec:Uni_General_Settings_Des}.}\label{tab:Uni_General_Results2}
\end{table}

\begin{figure}[htb]
\centering
\subfigure{\includegraphics[width=1\linewidth]{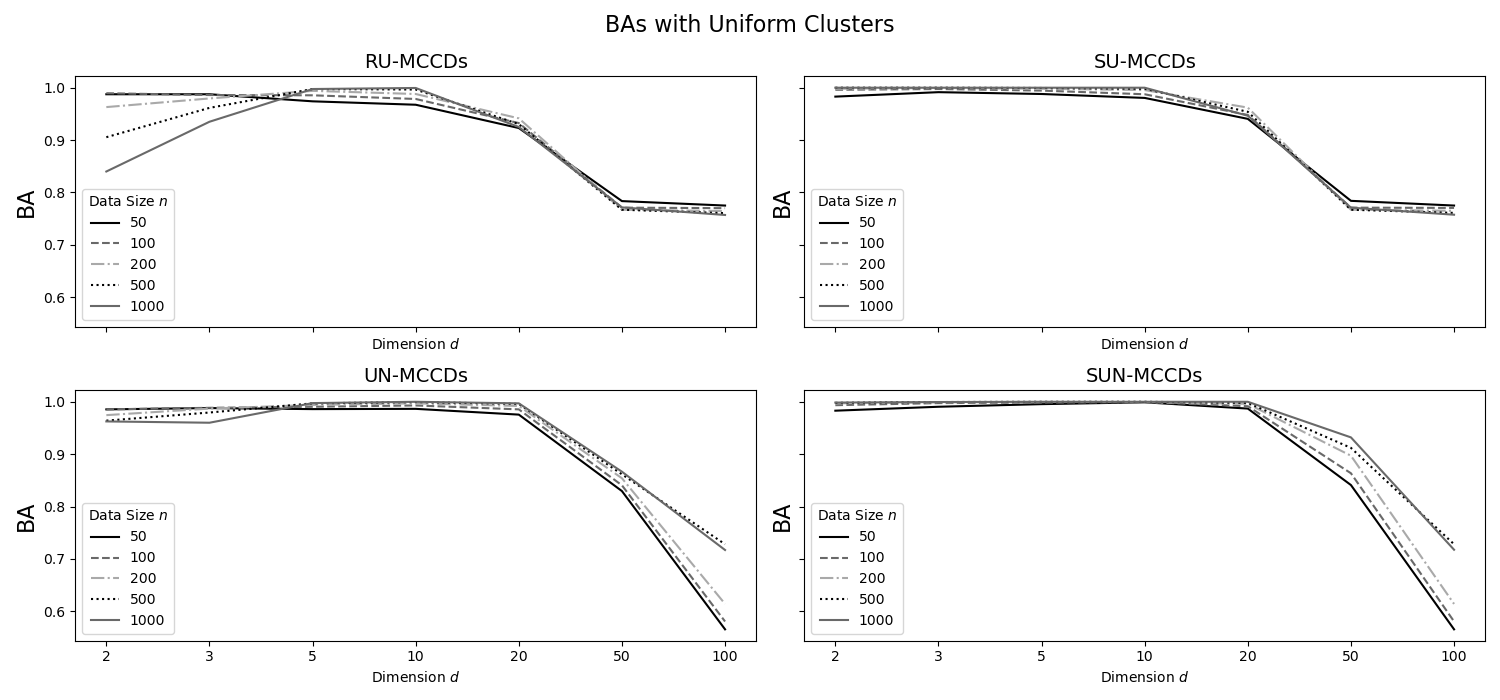}}
\subfigure{\includegraphics[width=1\linewidth]{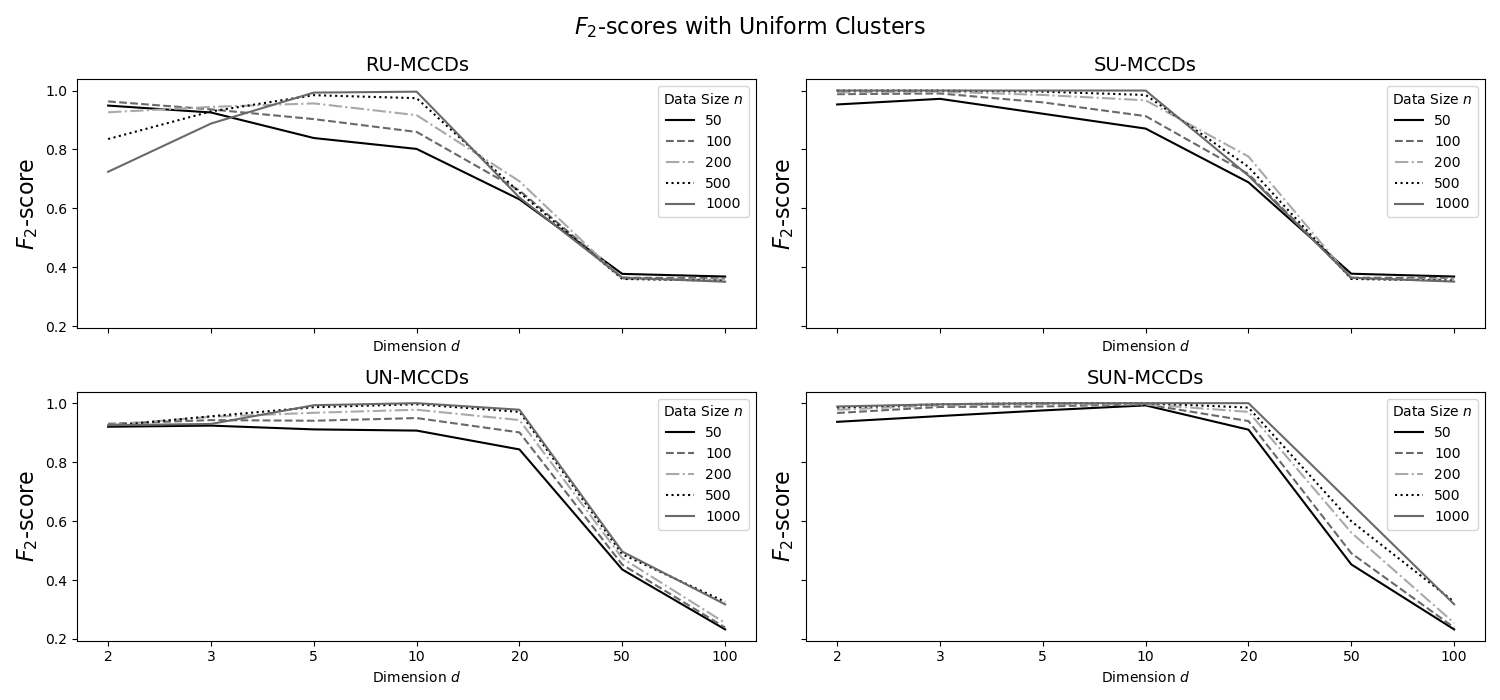}}
\caption{The line plots of the BAs and $F_2$-scores of all CCD-based outlier detection algorithms, under the simulation settings (with uniform clusters) elaborated in Section \ref{sec:Uni_General_Settings_Des}.}
\label{fig:Uniform_BA_F_Lines}
\end{figure}

We first focus on the simulation results under low- and moderate-dimensional space, which are ($d\leq20$) presented in Tables \ref{tab:Uni_General_Results1} and \ref{tab:Uni_General_Results2}.

The RU-MCCD algorithm delivers satisfactory performance considering the percentage of outliers captured. Most of the TPRs are well above $0.9$ or even $0.95$ and equal to $1$ when $d=10,20$; additionally, due to the effectiveness of RK-CCDs on clustering in low dimensions ($d=2,3,5$), the TNRs are well above $0.95$, even when the number of observations is as low as 50. Therefore, the RU-MCCD algorithm also delivers high BAs and $F_2$-score under those dimensions, most of which are above $0.9$. However, there are some exceptions: (1) when $d=2,3$, the effectiveness of the RU-MCCD algorithm declines substantially with a higher number of observations (e.g., the TPRs of the RU-MCCD algorithm are 0.986, 0.986, 0.931, 0.814, 0.681 when $d=2$ as $n$ increases). The declining performance is due to the increasing intensities of outliers, especially in the cases with fewer dimensions ($d=2,3$) where the volume or the area of the support is relatively small. With high intensities, RK-CCDs may falsely construct clusters for a bunch of close outliers and perceive them as regular (i.e., non-outlier) observations, which is called the \emph{masking problem} in outlier detection. Thus, all four measures reduce as the number of observations increases. The lowest readings are observed when $n=1000$ and $d=2$, each falling below $0.9$. (2) While most TNRs are near 1, they fall substantially and are less than $0.9$ when $d=20$. Increasing the number of observations provides little help. We have discussed its reason in Section \ref{sec:UN-CCDs}. In short, several drawbacks of SR-MCT based on the Ripley's $K$ function lead to the limitation, which is negligible when $d$ is small but is greatly exacerbated as $d$ increases when RK-CCDs provide much smaller covering balls and leaves many regular observations uncovered. Therefore, although the BAs are still above $0.9$ when $d=20$, the $F_2$-scores drop to less than $0.7$ since $F_2$-score is much more sensitive to TNR and has less tolerance on false positives.

Next, we consider the performance of the UN-MCCD algorithm. UN-CCDs work similarly to RK-CCDs except for the SR-MCT. Instead of using the Ripley's $K$ function, UN-CCDs conduct SR-MCT based on the average and median NND, which avoids RK-CCDs' shortcomings. Therefore, the UN-MCCD algorithm performs better than the RU-MCCD algorithm across all the simulation settings. However, since both algorithms share almost identical mechanisms, the UN-MCCD algorithm captures almost all outliers with slight errors when $d=2,3$, and the lowest TPR of $0.930$ is observed when $d=2$ and $n=1000$, where BA and $F_2-score$ are $0.963$ and $0.925$ respectively. When $d=20$ and $n=50$, the TNR decreases slightly to $0.951$ due to the low spatial intensity in $\mathbb{R}^{20}$, where BA and $F_2-score$ are $0.976$ and $0.843$. Fortunately, all four measures increase with increasing data sizes ($n$) when $d \leq 5$.

The SU-MCCD and SUN-MCCD algorithms are the flexible adaptations of the RU-MCCD and UN-MCCD algorithms, respectively. Rather than using a single dominating covering ball, they look for a bunch of points connected to the center of a dominating covering ball in the MCG and construct a cluster by taking the union of their covering balls. Theoretically, both could deliver better performance when clusters are arbitrarily shaped (including the cases when the intensities of clusters are uneven). Nevertheless, it is still interesting to compare the performance of the two ``flexible" algorithms with their prototypes when the support of each cluster is standard hyperspheres.

The SU-MCCD and SUN-MCCD algorithms deliver higher TNRs when $d\leq20$, especially the SUN-MCCD algorithms, whose TNRs are close to 1 under all simulation settings. This is expected since using more covering balls leads to better coverage for each cluster. For example, when $d=20$, notice that the SU-MCCD algorithm performs better compared to its prototype (the RU-MCCD algorithm) due to much higher TNRs, the $F_2$-scores of the SU-MCCD algorithms are 0.689, 0.717, 0.776, 0.741 and 0.711 versus 0.631, 0.661, 0.692, 0.654 and 0.637 of the RU-MCCD algorithm. Recall that the effectiveness of both the RU-MCCD and UN-MCCD algorithms declines due to the masking problem as the intensity of outliers grows when $d=2,3$. Fortunately, the SU-MCCD and SUN-MCCD algorithms overcome this problem and yield high TPR even when $n=1000$, attributed to the new mechanism that filters small clusters.

The performance of the two ``flexible" algorithms is comparable when $d \leq 5$, and the SUN-MCCD algorithm delivers slightly greater $F_2$ scores when the data size $n$ is small, and it performs much better when $d=10,20$ due to the disadvantages of the SU-MCCD algorithm under a high-dimensional space. For example, when $d=20$, the $F_2$-scores of the SUN-MCCD algorithms are 0.910, 0.939, 0.970, 0.985, and 1.000 versus 0.689, 0.717, 0.776, 0.741, and 0.711 of the SU-MCCD algorithm.

However, when $d=50,100$, all the four algorithms perform worse. The TNRs become substantially smaller than those with fewer dimensions, particularly when $d=100$, where most BAs are between 0.5 and 0.7, close to random guesses. The $F_2$-scores, sensitive to precision, drop between 0.2 and 0.5. This is because, under high-dimensional space, all the regular points tend to be distributed along the border of the clusters they belong to, even if they are uniformly distributed. Hence, the difficulty in capturing most of them increases substantially as $d$ increases, and few clustering-based outlier detection algorithms could still deliver promising performance without dimensionality reduction techniques. Additionally, it is worth noting that the performance of the two ``flexible" algorithms degrades and is close or equal to the results of their prototypes. It can be explained by the reason that almost every point is isolated points under the MCG constructed on extremely high-dimensional space, and there are none or few points that are connected to the center of dominating covering balls, resulting in only one covering ball for most clusters.

We know that RK-CCDs and UN-CCDs conduct SR-MCT that finds clusters following HPP, which means the points within each constructed cluster are approximately uniformly distributed. Therefore, the CCD-based algorithms prefer the simulation experiments with only uniform clusters, particularly the RU-MCCD and UN-MCCD algorithms. Thus, in addition to the above experiments, we perform similar simulations under Gaussian settings, where regular data points from the same cluster are multivariate-normally distributed (but uncorrelated). We aim to investigate the effectiveness of these CCD-based algorithms when data points within a cluster are nonuniformly distributed. There are two major challenges to finding outliers with Gaussian clusters: capturing the regular data points near the boundary of a cluster where the intensity is much lower than the center while distinguishing outliers with similar intensities. \label{sec:Gau_General_Settings_Des}

To make the simulation experiments with Gaussian clusters comparable to the previous ones with uniform clusters, we choose the scale of the covariance matrix according to the dimensionality $d$ and radius $R$ such that approximate $99\%$ points of the Gaussian cluster located within a hypersphere with radius $R$ (recall $R$ is random number from 0.7 to 1.3), and the approximate $1\%$ points located beyond the hypersphere are perceived to be noise near the cluster (The noise level here represents the percentage of data points that are randomly generated near the range of the clusters. The outliers are data points that are far away from the cluster centers). Similarly, $R$ is a random variable generated uniformly between 0.7 and 1.3, so clusters with different volumes and intensities can be constructed. Except for the way to simulate Gaussian clusters, which we have elaborated on particularly, all the other settings (dimensionality, the sizes of data sets, the centers of clusters, etc.) remain the same. Two realizations with data sizes of 100 and 200 are presented in Figure \ref{fig:Demo_2d_2gcls_cont5}. The performance measures of the four algorithms are summarized in Tables \ref{tab:Gau_General_Results1} and \ref{tab:Gau_General_Results2}. Similarly, we present the line plots of the results in Figures \ref{fig:Gaussian_TPR_TNR_Lines} and \ref{fig:Gaussian_BA_F_Lines}. 

\begin{figure}[htb]
\centering
\subfigure{
\label{Demo_2d_2gcls_n100_cont5}
\includegraphics[width=0.35\textwidth]{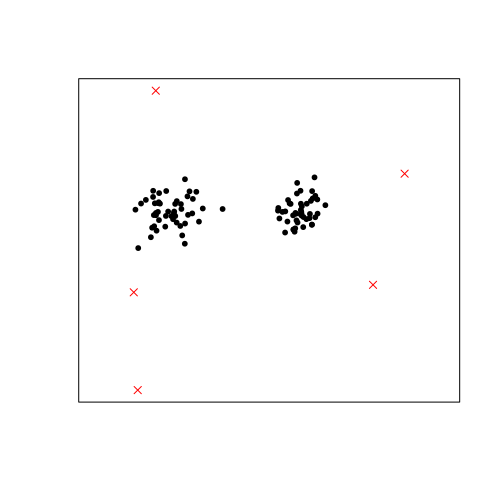}}
\subfigure{
\label{Demo_2d_2gcls_n200_cont5}
\includegraphics[width=0.35\textwidth]{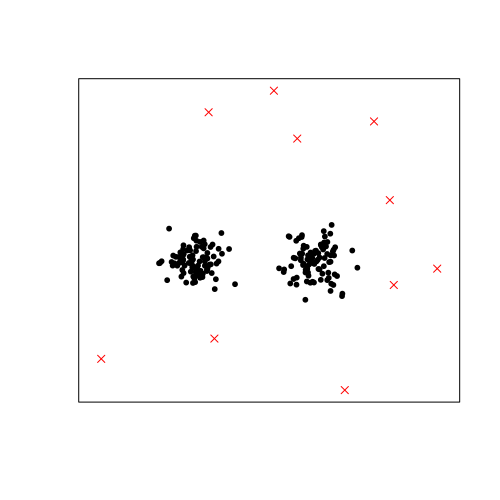}}
\caption{Two realizations of the simulation settings with Gaussian clusters, where $n=100$ and $200$ respectively. Each data set has 2 clusters of the same size but different intensities. Black points are regular data points, and red points are outliers. The numbers of observations are indicated below each sub-figure.}
\label{fig:Demo_2d_2gcls_cont5}
\end{figure}

\begin{table}[htb]
  \centering
  \resizebox{\columnwidth}{!}{\begin{tabular}{|c|c|c|c|c|c|c|c|c|c|c|c|}
    \hline
    \multicolumn{2}{|c|}{} & \multicolumn{10}{|c|}{The Size of Data Sets} \\ \cline{3-12}

    \multicolumn{2}{|c|}{} & \multicolumn{2}{|c|}{50} & \multicolumn{2}{|c|}{100} & \multicolumn{2}{|c|}{200} & \multicolumn{2}{|c|}{500} & \multicolumn{2}{|c|}{1000} \\ \cline{3-12}

    \multicolumn{2}{|c|}{} & TPR & TNR & TPR & TNR & TPR & TNR & TPR & TNR & TPR & TNR \\ \hline

    \multirow{4}*{$d=2$} & RU-MCCDs & 0.994 & 0.918 & 0.998 & 0.886 & 1.000 & 0.851 & 1.000 & 0.818 & 1.000 & 0.792 \\ \cline{2-12}
    & SU-MCCDs & 0.994 & 0.970 & 0.999 & 0.962 & 1.000 & 0.951 & 1.000 & 0.927 & 1.000 & 0.903 \\ \cline{2-12}
    & UN-MCCDs & 0.995 & 0.942 & 0.978 & 0.927 & 0.988 & 0.914 & 0.994 & 0.895 & 0.999 & 0.880 \\ \cline{2-12}
    & SUN-MCCDs & 0.996 & 0.964 & 0.995 & 0.964 & 0.999 & 0.962 & 1.000 & 0.958 & 1.000 & 0.950 \\ \cline{1-12}

    \multirow{4}*{$d=3$} & RU-MCCDs & 0.999 & 0.880 & 1.000 & 0.849 & 1.000 & 0.818 & 1.000 & 0.784 & 1.000 & 0.760 \\ \cline{2-12}
    & SU-MCCDs & 0.999 & 0.948 & 1.000 & 0.943 & 1.000 & 0.937 & 1.000 & 0.921 & 1.000 & 0.903 \\ \cline{2-12}
    & UN-MCCDs & 0.998 & 0.922 & 0.995 & 0.902 & 0.997 & 0.884 & 0.999 & 0.862 & 1.000 & 0.842 \\ \cline{2-12}
    & SUN-MCCDs & 0.998 & 0.957 & 0.999 & 0.958 & 1.000 & 0.959 & 1.000 & 0.955 & 1.000 & 0.947 \\ \cline{1-12}

    \multirow{4}*{$d=5$} & RU-MCCDs & 1.000 & 0.821 & 1.000 & 0.797 & 1.000 & 0.777 & 1.000 & 0.755 & 1.000 & 0.727 \\ \cline{2-12}
    & SU-MCCDs & 1.000 & 0.887 & 1.000 & 0.886 & 1.000 & 0.890 & 1.000 & 0.891 & 1.000 & 0.879 \\ \cline{2-12}
    & UN-MCCDs & 1.000 & 0.888 & 0.999 & 0.865 & 1.000 & 0.846 & 1.000 & 0.820 & 1.000 & 0.794 \\ \cline{2-12}
    & SUN-MCCDs & 1.000 & 0.939 & 1.000 & 0.941 & 1.000 & 0.943 & 1.000 & 0.945 & 1.000 & 0.942 \\ \cline{1-12}

    \multirow{4}*{$d=10$} & RU-MCCDs & 1.000 & 0.748 & 1.000 & 0.715 & 1.000 & 0.698 & 1.000 & 0.693 & 1.000 & 0.684 \\ \cline{2-12}
    & SU-MCCDs & 1.000 & 0.813 & 1.000 & 0.797 & 1.000 & 0.791 & 1.000 & 0.797 & 1.000 & 0.794 \\ \cline{2-12}
    & UN-MCCDs & 1.000 & 0.856 & 1.000 & 0.832 & 1.000 & 0.816 & 1.000 & 0.797 & 1.000 & 0.770 \\ \cline{2-12}
    & SUN-MCCDs & 1.000 & 0.960 & 1.000 & 0.949 & 1.000 & 0.945 & 1.000 & 0.946 & 1.000 & 0.945\\ \cline{1-12}

    \multirow{4}*{$d=20$} & RU-MCCDs & 1.000 & 0.620 & 1.000 & 0.592 & 1.000 & 0.567 & 1.000 & 0.541 & 1.000 & 0.531 \\ \cline{2-12}
    & SU-MCCDs & 1.000 & 0.660 & 1.000 & 0.644 & 1.000 & 0.625 & 1.000 & 0.609 & 1.000 & 0.606 \\ \cline{2-12}
    & UN-MCCDs & 1.000 & 0.736 & 1.000 & 0.701 & 1.000 & 0.668 & 1.000 & 0.627 & 1.000 & 0.604 \\ \cline{2-12}
    & SUN-MCCDs & 1.000 & 0.826 & 1.000 & 0.804 & 1.000 & 0.796 & 1.000 & 0.789 & 1.000 & 0.784 \\ \cline{1-12}

    \multirow{4}*{$d=50$} & RU-MCCDs & 1.000 & 0.580 & 1.000 & 0.562 & 1.000 & 0.552 & 1.000 & 0.542 & 1.000 & 0.544 \\ \cline{2-12}
    & SU-MCCDs & 1.000 & 0.581 & 1.000 & 0.562 & 1.000 & 0.553 & 1.000 & 0.542 & 1.000 & 0.544 \\ \cline{2-12}
    & UN-MCCDs & 1.000 & 0.448 & 1.000 & 0.420 & 1.000 & 0.380 & 1.000 & 0.308 & 1.000 & 0.275 \\ \cline{2-12}
    & SUN-MCCDs & 1.000 & 0.457 & 1.000 & 0.444 & 1.000 & 0.417 & 1.000 & 0.366 & 1.000 & 0.351 \\ \cline{1-12}

    \multirow{4}*{$d=100$} & RU-MCCDs & 1.000 & 0.574 & 1.000 & 0.547 & 1.000 & 0.521 & 1.000 & 0.513 & 1.000 & 0.510 \\ \cline{2-12}
    & SU-MCCDs & 1.000 & 0.575 & 1.000 & 0.547 & 1.000 & 0.521 & 1.000 & 0.513 & 1.000 & 0.511 \\ \cline{2-12}
    & UN-MCCDs & 1.000 & 0.302 & 1.000 & 0.317 & 1.000 & 0.318 & 1.000 & 0.275 & 1.000 & 0.231 \\ \cline{2-12}
    & SUN-MCCDs & 1.000 & 0.302 & 1.000 & 0.317 & 1.000 & 0.319 & 1.000 & 0.276 & 1.000 & 0.232 \\ \cline{1-12}
  \end{tabular}}
  \caption{Summary of the TPR and TNR of all the CCD-based outlier detection algorithms, with the simulation settings elaborated in Section \ref{sec:Gau_General_Settings_Des}.}\label{tab:Gau_General_Results1}
\end{table}

\begin{figure}[htb]
\centering
\subfigure{\includegraphics[width=1\linewidth]{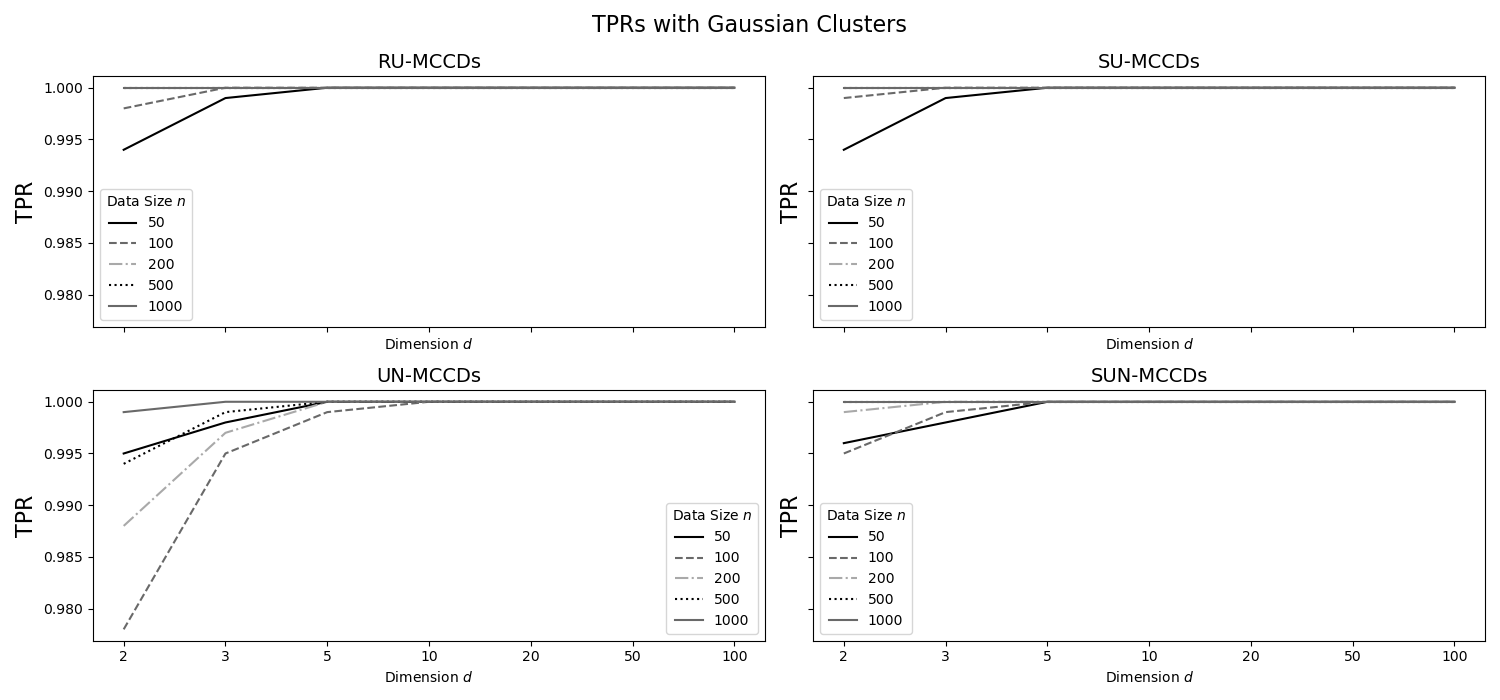}}
\subfigure{\includegraphics[width=1\linewidth]{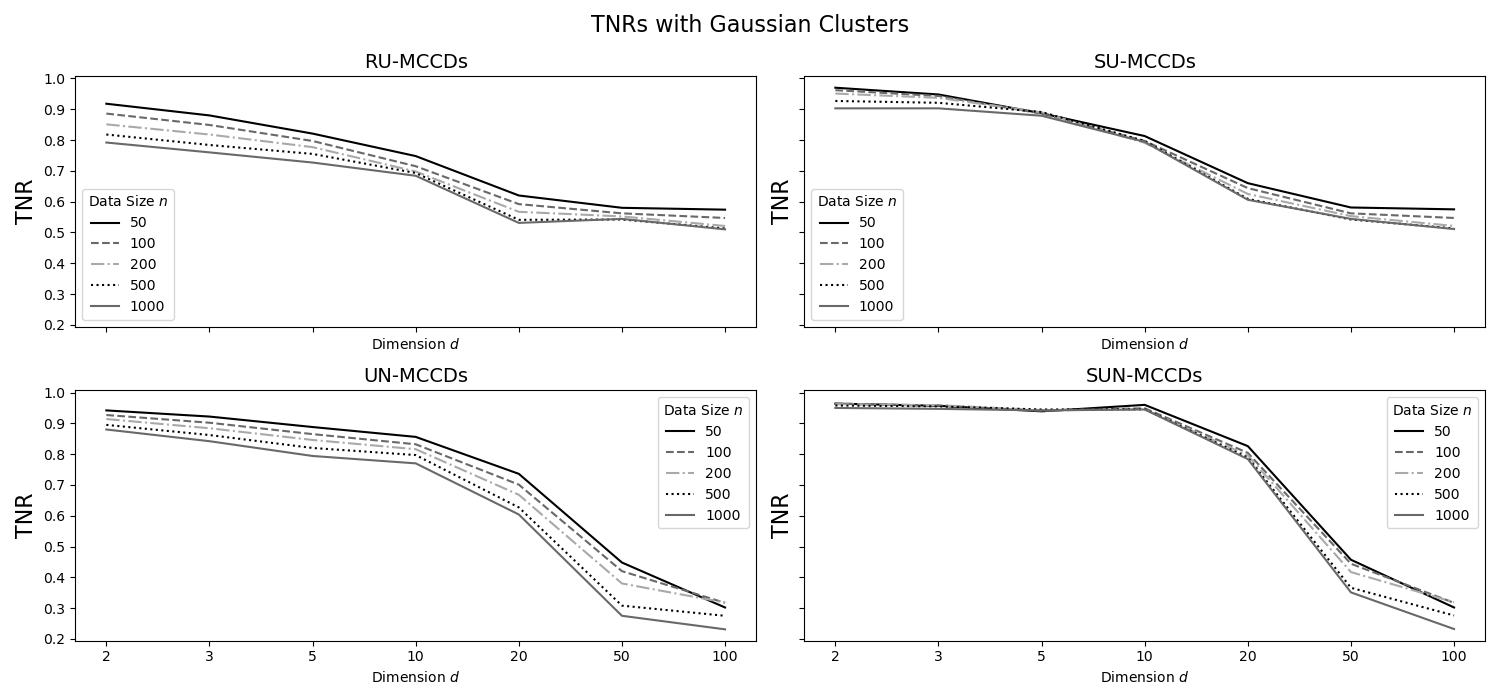}}
\caption{The line plots of the TPRs and TNRs of all CCD-based outlier detection algorithms, under the simulation settings (with Gaussian clusters) elaborated in Section \ref{sec:Gau_General_Settings_Des}.}
\label{fig:Gaussian_TPR_TNR_Lines}
\end{figure}

\begin{table}[htb]
  \centering
  \resizebox{\columnwidth}{!}{\begin{tabular}{|c|c|c|c|c|c|c|c|c|c|c|c|}
    \hline
    \multicolumn{2}{|c|}{} & \multicolumn{10}{|c|}{The Size of Data Sets} \\ \cline{3-12}

    \multicolumn{2}{|c|}{} & \multicolumn{2}{|c|}{50} & \multicolumn{2}{|c|}{100} & \multicolumn{2}{|c|}{200} & \multicolumn{2}{|c|}{500} & \multicolumn{2}{|c|}{1000} \\ \cline{3-12}

    \multicolumn{2}{|c|}{} & BA & $F_2$-score & BA & $F_2$-score & BA & $F_2$-score & BA & $F_2$-score & BA & $F_2$-score \\ \hline

    \multirow{4}*{$d=2$} & RU-MCCDs & 0.956 & 0.759 & 0.942 & 0.697 & 0.926 & 0.638 & 0.909 & 0.591 & 0.896 & 0.559 \\ \cline{2-12}
    & SU-MCCDs & 0.982 & 0.893 & 0.981 & 0.873 & 0.976 & 0.843 & 0.964 & 0.783 & 0.952 & 0.731 \\ \cline{2-12}
    & UN-MCCDs & 0.969 & 0.816 & 0.953 & 0.768 & 0.951 & 0.746 & 0.945 & 0.711 & 0.940 & 0.686 \\ \cline{2-12}
    & SUN-MCCDs & 0.980 & 0.877 & 0.980 & 0.876 & 0.981 & 0.873 & 0.979 & 0.862 & 0.975 & 0.840 \\ \cline{1-12}

    \multirow{4}*{$d=3$} & RU-MCCDs & 0.940 & 0.686 & 0.925 & 0.635 & 0.909 & 0.591 & 0.892 & 0.549 & 0.880 & 0.523 \\ \cline{2-12}
    & SU-MCCDs & 0.974 & 0.834 & 0.972 & 0.822 & 0.969 & 0.807 & 0.961 & 0.769 & 0.952 & 0.731 \\ \cline{2-12}
    & UN-MCCDs & 0.960 & 0.770 & 0.949 & 0.726 & 0.941 & 0.692 & 0.931 & 0.655 & 0.921 & 0.625 \\ \cline{2-12}
    & SUN-MCCDs & 0.978 & 0.858 & 0.979 & 0.862 & 0.980 & 0.865 & 0.978 & 0.854 & 0.974 & 0.832 \\ \cline{1-12}

    \multirow{4}*{$d=5$} & RU-MCCDs & 0.911 & 0.595 & 0.899 & 0.565 & 0.889 & 0.541 & 0.878 & 0.518 & 0.864 & 0.491 \\ \cline{2-12}
    & SU-MCCDs & 0.944 & 0.700 & 0.943 & 0.698 & 0.945 & 0.705 & 0.946 & 0.707 & 0.940 & 0.685 \\ \cline{2-12}
    & UN-MCCDs & 0.944 & 0.701 & 0.932 & 0.660 & 0.923 & 0.631 & 0.910 & 0.594 & 0.897 & 0.561 \\ \cline{2-12}
    & SUN-MCCDs & 0.970 & 0.812 & 0.971 & 0.817 & 0.972	& 0.822 & 0.973 & 0.827 & 0.971 & 0.819 \\ \cline{1-12}

    \multirow{4}*{$d=10$} & RU-MCCDs & 0.874 & 0.511 & 0.858 & 0.480 & 0.849 & 0.466 & 0.847 & 0.462 & 0.842 & 0.454 \\ \cline{2-12}
    & SU-MCCDs & 0.907 & 0.585 & 0.899 & 0.565 & 0.896 & 0.557 & 0.899 & 0.565 & 0.897 & 0.561 \\ \cline{2-12}
    & UN-MCCDs & 0.928 & 0.646 & 0.916 & 0.610 & 0.908 & 0.589 & 0.899 & 0.565 & 0.885 & 0.534 \\ \cline{2-12}
    & SUN-MCCDs & 0.980 & 0.868 & 0.975 & 0.838 & 0.973 & 0.827 & 0.973 & 0.830 & 0.973 & 0.827 \\ \cline{1-12}

    \multirow{4}*{$d=20$} & RU-MCCDs & 0.810 & 0.409 & 0.796 & 0.392 & 0.784 & 0.378 & 0.771 & 0.364 & 0.766 & 0.359 \\ \cline{2-12}
    & SU-MCCDs & 0.830 & 0.436 & 0.822 & 0.425 & 0.813 & 0.412 & 0.805 & 0.402 & 0.803 & 0.400 \\ \cline{2-12}
    & UN-MCCDs & 0.868 & 0.499 & 0.851 & 0.468 & 0.834 & 0.442 & 0.814 & 0.414 & 0.802 & 0.399 \\ \cline{2-12}
    & SUN-MCCDs & 0.913 & 0.602 & 0.902 & 0.573 & 0.898 & 0.563 & 0.895 & 0.555 & 0.892 & 0.549 \\ \cline{1-12}

    \multirow{4}*{$d=50$} & RU-MCCDs & 0.790 & 0.385 & 0.781 & 0.375 & 0.776 & 0.370 & 0.771 & 0.365 & 0.772 & 0.366 \\ \cline{2-12}
    & SU-MCCDs & 0.791 & 0.386 & 0.781 & 0.375 & 0.777 & 0.371 & 0.771 & 0.365 & 0.772 & 0.366 \\ \cline{2-12}
    & UN-MCCDs & 0.724 & 0.323 & 0.710 & 0.312 & 0.690 & 0.298 & 0.654 & 0.276 & 0.638 & 0.266 \\ \cline{2-12}
    & SUN-MCCDs & 0.729 & 0.326 & 0.722 & 0.321 & 0.709 & 0.311 & 0.683 & 0.293 & 0.676 & 0.289 \\ \cline{1-12}

    \multirow{4}*{$d=100$} & RU-MCCDs & 0.787 & 0.382 & 0.774 & 0.367 & 0.761 & 0.355 & 0.757 & 0.351 & 0.755 & 0.349 \\ \cline{2-12}
    & SU-MCCDs & 0.788 & 0.382 & 0.774 & 0.367 & 0.761 & 0.355 & 0.757 & 0.351 & 0.756 & 0.350 \\ \cline{2-12}
    & UN-MCCDs & 0.651 & 0.274 & 0.659 & 0.278 & 0.659 & 0.278 & 0.638 & 0.266 & 0.616 & 0.255 \\ \cline{2-12}
    & SUN-MCCDs & 0.651 & 0.274 & 0.659 & 0.278 & 0.660 & 0.279 & 0.638 & 0.267 & 0.616 & 0.255\\ \cline{1-12}
  \end{tabular}}
  \caption{Summary of the Balanced Accuracy (BA) and $F_2$-score of all the CCD-based outlier detection algorithms, with the simulation settings elaborated in Section \ref{sec:Gau_General_Settings_Des}.}\label{tab:Gau_General_Results2}
\end{table}

\begin{figure}[htb]
\centering
\subfigure{\includegraphics[width=1\linewidth]{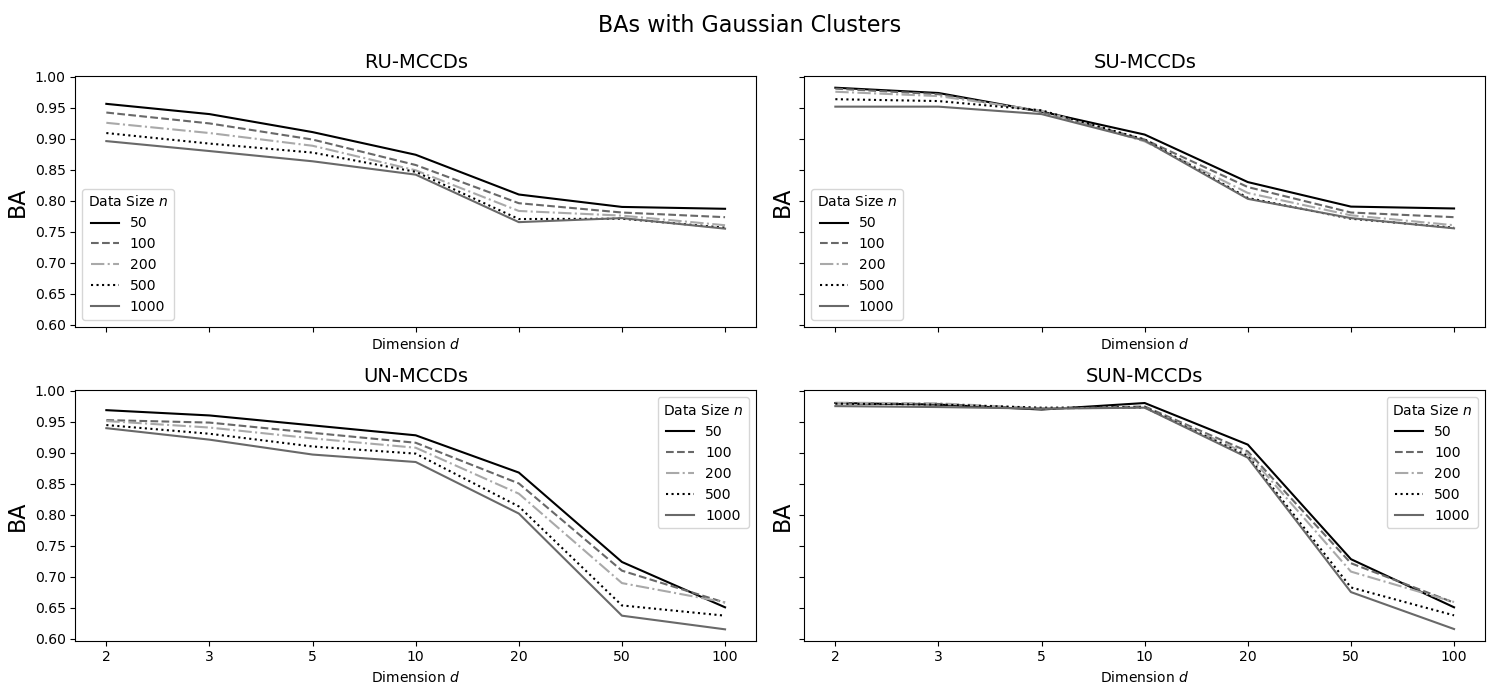}}
\subfigure{\includegraphics[width=1\linewidth]{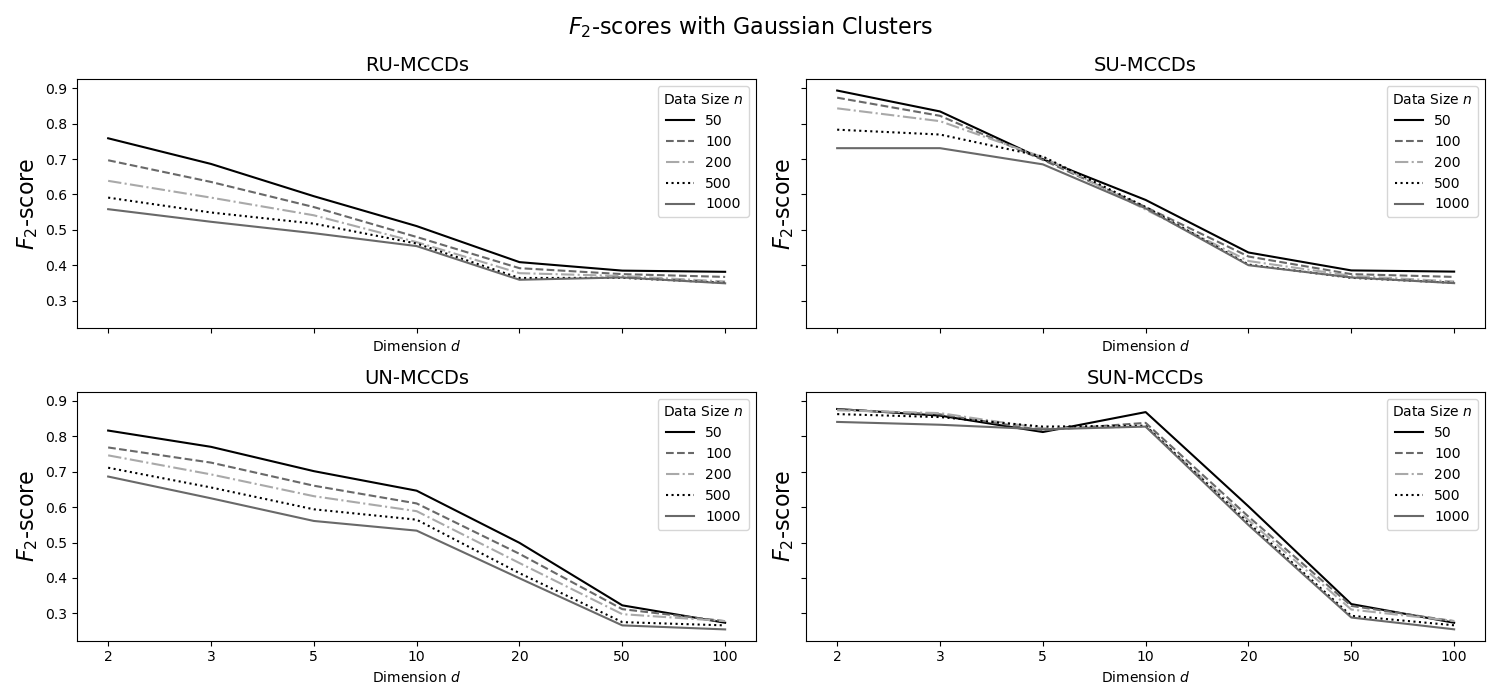}}
\caption{The line plots of the TPRs and TNRs of all CCD-based outlier detection algorithms, under the simulation settings (with Gaussian clusters) elaborated in Section \ref{sec:Gau_General_Settings_Des}.}
\label{fig:Gaussian_BA_F_Lines}
\end{figure}

As in the previous cases, we discuss the results under low and moderate dimensionality ($d\leq20$) and consider the RU-MCCD algorithm first.

The RU-MCCD algorithm generally performs much worse with Gaussian clusters; although it can still capture most outliers and provide high TPRs, the TNPs exhibit a substantial decrease. For example, when $d=3$, the RU-MCCD algorithm delivers TNRs of 0.880, 0.849, 0.818, 0.784, and 0.760, which show a major drop from 0.980, 0.985, 0.992, 0.997, and 0.998 under similar simulation settings with uniform clusters. It is within our expectation because RK-CCDs find support for each cluster by conducting SR-MCT; the point pattern of each constructed cluster is close to a uniform distribution, deviating from Gaussian clusters with uneven intensities. Furthermore, a Gaussian density has unbounded support, but each covering ball has bounded volume. Consequently, the resulting dominating covering balls tend to be smaller than the scope of Gaussian clusters and generally located around the center, leaving many regular points of less intensity uncovered. It is unlikely for the D-MCG algorithm to connect these relatively sparse uncovered points to the points of dominating covering balls, which generally have much higher intensities. Furthermore, notice that as the number of observations increases from $50$ to $1000$, the TNR decreases from 0.880 to 0.760, and as a result, the $F_2$-score decreases from 0.686 to 0.523. The reason can be explained as follows: the larger the size of a Gaussian cluster, the more deviation of its point pattern from a uniform density. Therefore, it becomes more difficult for the RU-MCCD and UN-MCCD algorithms to capture regular observations. Also, it is worth noting that the RU-MCCD algorithm performs worse with more dimensions $d$. For instance, when $n$ is fixed to 200, the RU-MCCD algorithm delivers $F_2$-scores of 0.638, 0.591, 0.541, 0.466, and 0.378 as $d$ increases from 2 to 20; it is due to the same reason that the effectiveness of RK-CCDs degenerates rapidly with increasing number of dimensions.

In Tables \ref{tab:Gau_General_Results1} and \ref{tab:Gau_General_Results2}, observe that the UN-MCCD algorithm also exhibits a performance drop in the simulation cases with Gaussian clusters; e.g., when $d=5$, the TNRs are 0.888, 0.865, 0.846, 0.820, and 0.794, compared to 0.975, 0.984, 0.992, 0.997, and 0.999 with uniform clusters. The corresponding $F_2$-scores also decrease substantially from 0.911, 0.940, 0.967, 0.986, and 0.993 under uniform setting to 0.701, 0.660, 0.631, 0.594, and 0.561 under Gaussian setting. For the same reason as the RU-MCCD algorithm, the $F_2$-score of the UN-MCCD algorithm decreases when $n$ increases. The performance of the UN-MCCD algorithm also shows a downward trend with increasing dimensionality $d$ (e.g., for $n=100$, the $F_2$-scores are 0.768, 0.726, 0.660, 0.610, and 0.468), but much less severely affected than the UN-MCCD algorithm. In summary, although the performance of the UN-MCCD algorithm deteriorates from uniform to Gaussian clusters, it still outperforms compared to the RU-MCCD algorithm thanks to the improved SR-MCT with NND.

Next, we consider the SU-MCCD and SUN-MCCD algorithms, both of which yield promising results compared with the two prototypes because they provide much better coverage for Gaussian clusters with multiple covering balls. For instance, when $d=10$, the TNRs of the SUN-MCCD algorithm are 0.960, 0.949, 0.945, 0.946, and 0.945, much higher than those of the UN-MCCD algorithm, therefore, the SUN-MCCD algorithm deliver $F_2$-scores of 0.868, 0.838, 0.827, 0.830, and 0.828, versus 0.646, 0.610, 0.589, 0.565, and 0.534 of the UN-MCCD algorithm. A similar performance gap is observed from the RU-MCCD to the SU-MCCD algorithms. Additionally, unlike the RU-MCCD and UN-MCCD algorithms, the two ``flexible" algorithms perform better when $n$ is larger. The reason is that when multiple covering balls are allowed for a single cluster, increasing the size of a cluster results in performance gain since the point pattern is easier to capture with more observations.

When $d\leq3$, the SU-MCCD algorithm slightly outperforms the SUN-MCCD algorithm; e.g., when $d=3$, the $F_2$-scores of the SUN-MCCD algorithm are 0.858, 0.862, 0.865, 0.854 and 0.832, higher than the $F_2$-scores of the SU-MCCD algorithm, which are 0.834, 0.822, 0.807, 0.769, and 0.731. Starting from $d=5$, the SUN-MCCD algorithm outperforms the SU-MCCD algorithm substantially. The most substantial performance difference is observed when $d=10$, where the $F_2$-scores of the SUN-MCCD algorithm are 0.868, 0.835, 0.827, 0.830, and 0.843, substantially higher than those of the SU-MCCD algorithm, which are less than $0.6$. This is due to the same reason for the degeneration of the RU-MCCD algorithm when $d$ is large.

For a similar reason explained under the simulation settings with only uniform clusters, all four CCD-based algorithms fail to deliver promising results without dimensionality reduction when $d=50,100$.

\subsection{Monte Carlo Experiments: Focus Settings}
\label{sec:Simul_Focus}
In the simulations we conducted in the previous section, we set up two clusters of data points with $5\%$ outliers and $1\%$ noise (the latter is only for Gaussian clusters). We fixed the distances between the cluster centers and the minimal distances between the cluster centers and the outliers to 3 and 2 units, respectively. We compared the balanced accuracies and $F_2$-scores of the CCD-based outlier detection algorithms on this setting. Next, we will investigate how the performance of these algorithms changes with varying factors such as the number of clusters, the noise level, the outlier percentage, and the distances between the clusters and the outliers, which we call focus settings. We conduct such simulation analysis to get a better understanding of the robustness and behaviors of the four CCD-based algorithms under different simulation settings and to identify the sensitivity of each algorithm. \label{sec:Simul_Changing_Factors}

\subsubsection{Varying the Number of Clusters}
\label{sec:Change_Clusters}
After assessing the effectiveness of CCD-based outlier detection algorithms on data sets with two distinct clusters, the next goal involves examining how their performance changes as the number of clusters increases from 2 to 5, while keeping other factors constant as in Section \ref{sec:Gau_General_Settings_Des}. We conduct two series of simulations, one with uniform clusters and another with Gaussian clusters. Additionally, we simulate both 3-dimensional and 10-dimensional data sets to understand how $d$ impacts performance on data sets of both small and high dimensions. Specific details are outlined below.

\begin{itemize}
  \item[\romannumeral1.] The dimensionality ($d$) of the simulated data sets: 3, 10;
  \item[\romannumeral2.] The size of data sets ($n$): 200;
  \item[\romannumeral3.] The size of each cluster is equal (although the volume of the supports is different), and we conduct two series of simulations with uniform clusters and Gaussian clusters, respectively;
  \item[\romannumeral4.] Number of clusters: 2, 3, 4, and 5 (the study of focus in this section);
  \item[\romannumeral5.] The radius of each cluster is randomly chosen between 0.7 and 1.3;
  \item[\romannumeral6.] When $d=3$, the centers of clusters are: (1) Two clusters: $\bm{\mu_1} = (3,3,3)$ and $\bm{\mu_2} = (6,3,3)$; (2) Three clusters: $\bm{\mu_1} = (3,3,3)$, $\bm{\mu_2} = (6,3,3)$, and $\bm{\mu_3} = (3,6,3)$; (3) Four clusters: $\bm{\mu_1} = (3,3,3)$, $\bm{\mu_2} = (6,3,3)$, $\bm{\mu_3} = (3,6,3)$, and $\bm{\mu_4} = (3,3,6)$; (4) Five clusters: $\bm{\mu_1} = (3,3,3)$, $\bm{\mu_2} = (6,3,3)$, $\bm{\mu_3} = (3,6,3)$, $\bm{\mu_4} = (3,3,6)$, and $\bm{\mu_5} = (6,6,3)$;
  \item[\romannumeral7.] When $d=10$, the centers of clusters are: (1) Two clusters: $\bm{\mu_1} = (\underbrace{3,...,3}_{d})$ and $\bm{\mu_2} = (6,\underbrace{3,...,3}_{d-1})$; (2) Three clusters: $\bm{\mu_1} = (\underbrace{3,...,3}_{d})$, $\bm{\mu_2} = (6,\underbrace{3,...,3}_{d-1})$, and $\bm{\mu_3} = (3,6,\underbrace{3,...,3}_{d-2})$; (3) Four clusters: $\bm{\mu_1} = (\underbrace{3,...,3}_{d})$, $\bm{\mu_2} = (6,\underbrace{3,...,3}_{d-1})$, $\bm{\mu_3} = (3,6,\underbrace{3,...,3}_{d-2})$, and $\bm{\mu_4} = (3,3,6,\underbrace{3,...,3}_{d-3})$; (4) Five clusters: $\bm{\mu_1} = (\underbrace{3,...,3}_{d})$, $\bm{\mu_2} = (6,\underbrace{3,...,3}_{d-1})$, $\bm{\mu_3} = (3,6,\underbrace{3,...,3}_{d-2})$, $\bm{\mu_4} = (3,3,6,\underbrace{3,...,3}_{d-3})$, and $\bm{\mu_5} = (3,3,3,6,\underbrace{3,...,3}_{d-4})$;
  \item[\romannumeral8.] The proportion of outliers is fixed to $5\%$;
  \item[\romannumeral9.] The outlier set $C_{outlier}$ is generated uniformly within a much larger hypersphere of radius 5, centered at the mean of the cluster center. and each outlier is at least 2 units away from any cluster center;
  \item[\romannumeral10.] The noise level of each Gaussian cluster is set to $1\%$.
  \label{sec:N_Cls_Setting}
\end{itemize}

The simulation results are summarized from Tables \ref{tab:N_U_Cls1} to \ref{tab:N_G_Cls2}. For better visualization, we present the results of BAs and $F_2$-scores (Tables \ref{tab:N_U_Cls2} and \ref{tab:N_G_Cls2}) as barplots in Figures \ref{fig:Barplot_NClusters} and \ref{fig:Barplot_GClusters}, respectively.

\begin{table}[htb]
  \centering
  \resizebox{0.7\columnwidth}{!}{\begin{tabular}{|c|c|c|c|c|c|c|c|c|c|}
    \hline
    \multicolumn{2}{|c|}{} & \multicolumn{8}{|c|}{Number of Clusters} \\ \cline{3-10}

    \multicolumn{2}{|c|}{} & \multicolumn{2}{|c|}{2} & \multicolumn{2}{|c|}{3} & \multicolumn{2}{|c|}{4} & \multicolumn{2}{|c|}{5} \\ \cline{3-10}

    \multicolumn{2}{|c|}{} & TPR & TNR & TPR & TNR & TPR & TNR & TPR & TNR \\ \hline

    \multirow{4}*{$d=3$} & RU-MCCDs & 0.970 & 0.992 & 0.984 & 0.988 & 0.993 & 0.985 & 0.989 & 0.982 \\ \cline{2-10}
    & SU-MCCDs & 1.000 & 0.999 & 0.999 & 0.999 & 0.997 & 0.998 & 0.994 & 0.996 \\ \cline{2-10}
    & UN-MCCDs & 0.983 & 0.991 & 0.989 & 0.989 & 0.997 & 0.985 & 0.993 & 0.984 \\ \cline{2-10}
    & SUN-MCCDs & 1.000 & 0.998 & 0.998 & 0.997 & 0.995 & 0.996 & 0.990 & 0.995 \\ \cline{1-10}

    \multirow{4}*{$d=10$} & RU-MCCDs & 1.000 & 0.976 & 1.000 & 0.916 & 1.000 & 0.900 & 1.000 & 0.900 \\ \cline{2-10}
    & SU-MCCDs & 1.000 & 0.991 & 1.000 & 0.933 & 1.000 & 0.916 & 1.000 & 0.917 \\ \cline{2-10}
    & UN-MCCDs & 1.000 & 0.995 & 1.000 & 0.990 & 1.000 & 0.985 & 1.000 & 0.984 \\ \cline{2-10}
    & SUN-MCCDs & 1.000 & 0.999 & 1.000 & 0.999 & 1.000 & 0.998 & 1.000 & 0.998 \\ \cline{1-10}
  \end{tabular}}
  \caption{The TPRs and TNRs of the CCD-based algorithms as the number of uniform clusters increases from 2 to 5.}
  \label{tab:N_U_Cls1}
\end{table}

\begin{table}[htb]
  \centering
  \resizebox{0.7\columnwidth}{!}{\begin{tabular}{|c|c|c|c|c|c|c|c|c|c|}
    \hline
    \multicolumn{2}{|c|}{} & \multicolumn{8}{|c|}{Number of Clusters} \\ \cline{3-10}

    \multicolumn{2}{|c|}{} & \multicolumn{2}{|c|}{2} & \multicolumn{2}{|c|}{3} & \multicolumn{2}{|c|}{4} & \multicolumn{2}{|c|}{5} \\ \cline{3-10}

    \multicolumn{2}{|c|}{} & BA & $F_2$-score & BA & $F_2$-score & BA & $F_2$-score & BA & $F_2$-score \\ \hline

    \multirow{4}*{$d=3$} & RU-MCCDs & 0.981 & 0.947 & 0.986 & 0.944 & 0.989 & 0.941 & 0.986 & 0.928 \\ \cline{2-10}
    & SU-MCCDs & 1.000 & 0.996 & 0.999 & 0.995 & 0.998 & 0.990 & 0.995 & 0.980 \\ \cline{2-10}
    & UN-MCCDs & 0.987 & 0.954 & 0.989 & 0.951 & 0.991 & 0.944 & 0.989 & 0.937 \\ \cline{2-10}
    & SUN-MCCDs & 0.999 & 0.992 & 0.998 & 0.987 & 0.996 & 0.981 & 0.993 & 0.973 \\ \cline{1-10}

    \multirow{4}*{$d=10$} & RU-MCCDs & 0.988 & 0.916 & 0.958 & 0.758 & 0.950 & 0.725 & 0.950 & 0.725 \\ \cline{2-10}
    & SU-MCCDs & 0.996 & 0.967 & 0.967 & 0.797 & 0.958 & 0.758 & 0.959 & 0.760 \\ \cline{2-10}
    & UN-MCCDs & 0.998 & 0.981 & 0.995 & 0.963 & 0.993 & 0.946 & 0.992 & 0.943 \\ \cline{2-10}
    & SUN-MCCDs & 1.000 & 0.996 & 1.000 & 0.996 & 0.999 & 0.992 & 0.999 & 0.992 \\ \cline{1-10}
  \end{tabular}}
  \caption{The TPRs and TNRs of the CCD-based algorithms as the number of uniform clusters increases from 2 to 5.}
  \label{tab:N_U_Cls2}
\end{table}

\begin{figure}[htb]
\centering
\subfigure[The BAs when $d=3$.]{
\label{fig:Barplot3_NClusters_BA}
\includegraphics[width=0.45\textwidth]{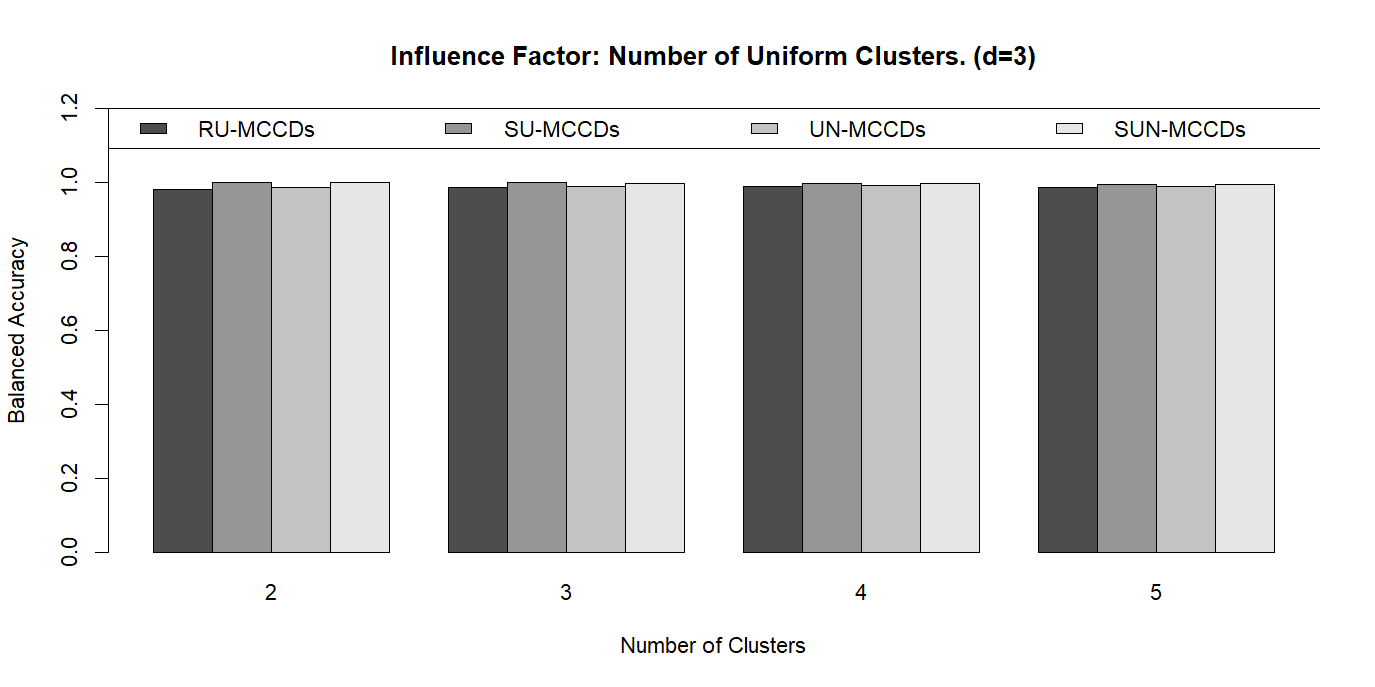}}
\subfigure[The $F_2$-scores when $d=3$.]{
\label{fig:Barplot3_NClusters_FS}
\includegraphics[width=0.45\textwidth]{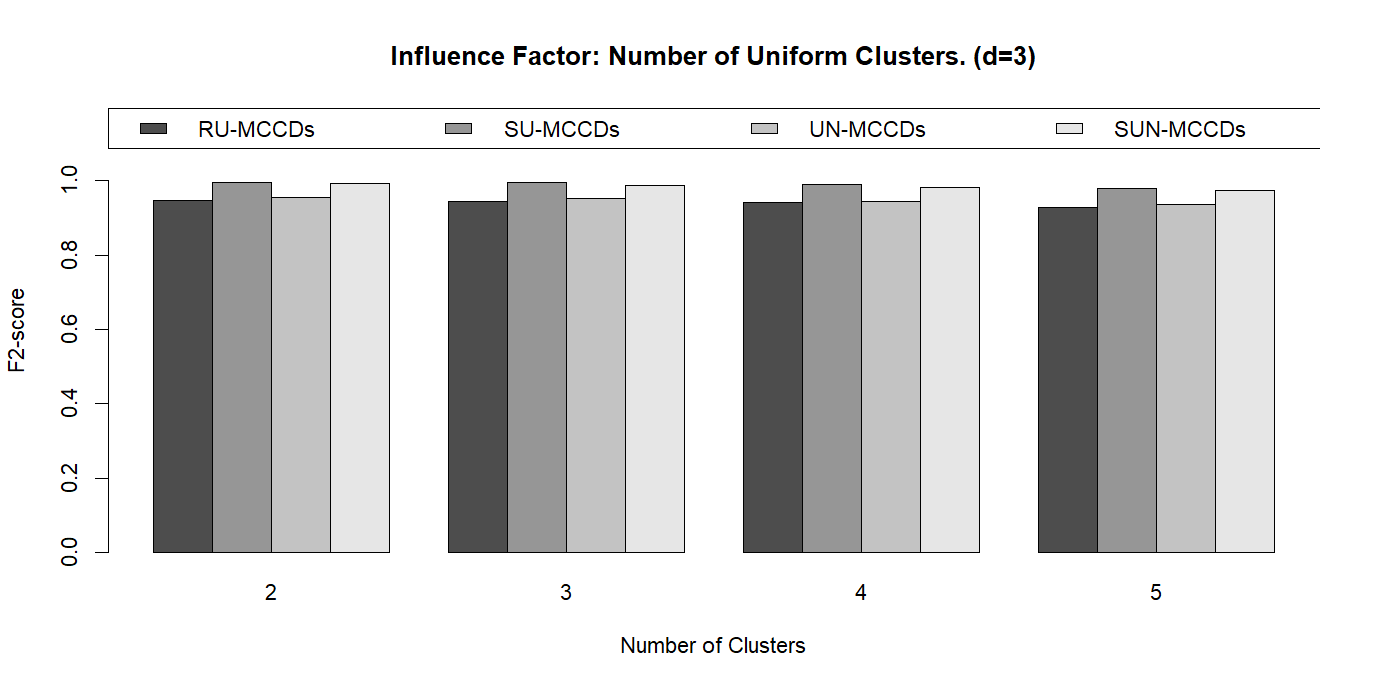}}

\subfigure[The BAs when $d=10$.]{
\label{fig:Barplot10_NClusters_BA}
\includegraphics[width=0.45\textwidth]{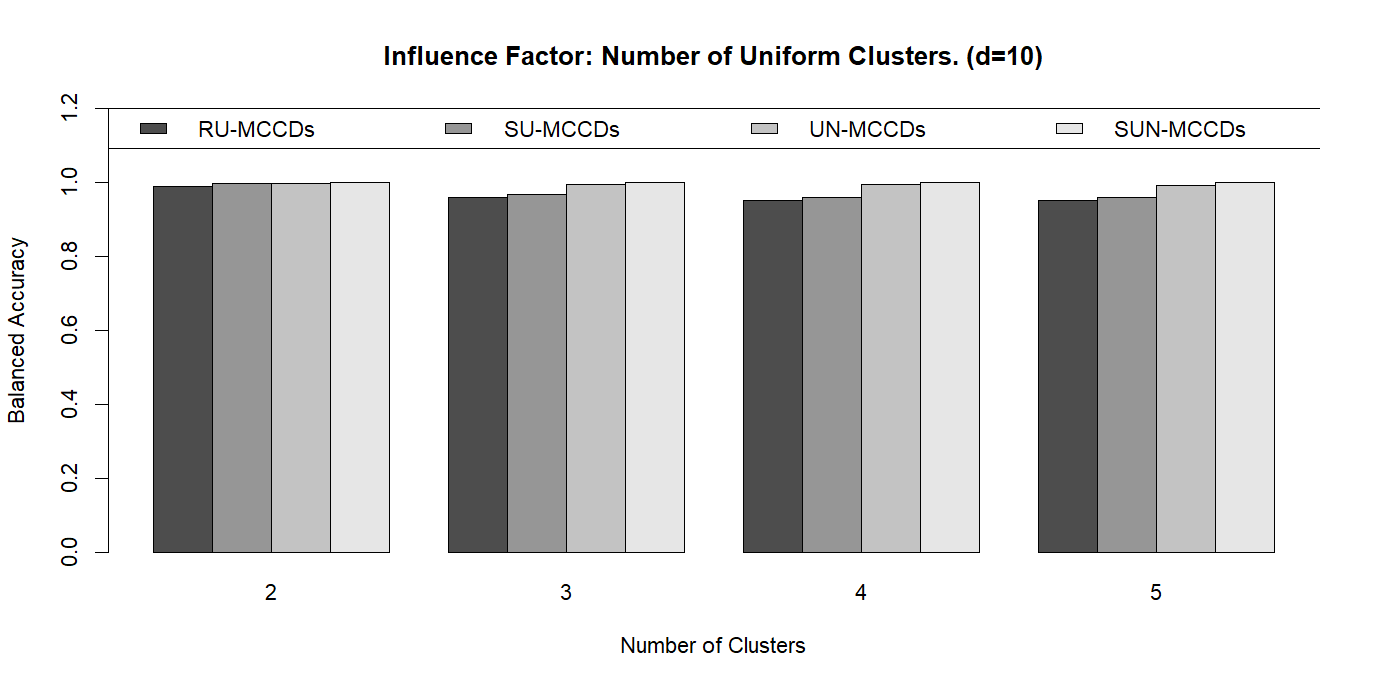}}
\subfigure[The $F_2$-scores when $d=10$.]{
\label{fig:Barplot10_NClusters_FS}
\includegraphics[width=0.45\textwidth]{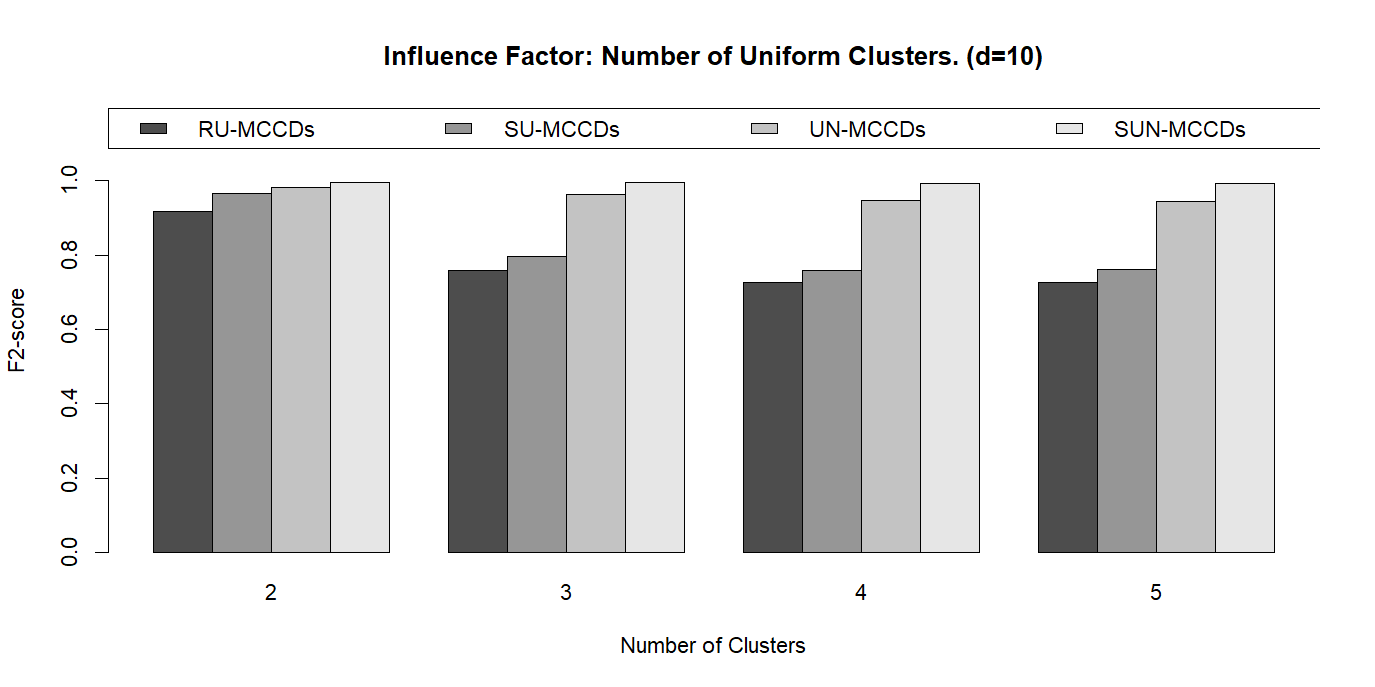}}
\caption{The barplots summarizing the performances of the CCD-based outlier detection algorithms as the number of uniform clusters increases. (a) The BAs for $d=3$. (b) The $F_2$-scores for $d=3$. (c) The BAs for $d=10$. (d) The $F_2$-scores for $d=10$.}
\label{fig:Barplot_NClusters}
\end{figure}

\begin{table}[htb]
  \centering
  \resizebox{0.7\columnwidth}{!}{\begin{tabular}{|c|c|c|c|c|c|c|c|c|c|}
    \hline
    \multicolumn{2}{|c|}{} & \multicolumn{8}{|c|}{Number of Clusters} \\ \cline{3-10}

    \multicolumn{2}{|c|}{} & \multicolumn{2}{|c|}{2} & \multicolumn{2}{|c|}{3} & \multicolumn{2}{|c|}{4} & \multicolumn{2}{|c|}{5} \\ \cline{3-10}

    \multicolumn{2}{|c|}{} & TPR & TNR & TPR & TNR & TPR & TNR & TPR & TNR \\ \hline

    \multirow{4}*{$d=3$} & RU-MCCDs & 1.000 & 0.818 & 1.000 & 0.836 & 1.000 & 0.847 & 1.000 & 0.860 \\ \cline{2-10}
    & SU-MCCDs & 1.000 & 0.938 & 1.000 & 0.941 & 1.000 & 0.945 & 1.000 & 0.947 \\ \cline{2-10}
    & UN-MCCDs & 0.997 & 0.884 & 0.995 & 0.986 & 0.996 & 0.902 & 0.997 & 0.908 \\ \cline{2-10}
    & SUN-MCCDs & 1.000 & 0.959 & 1.000 & 0.958 & 1.000 & 0.958 & 1.000 & 0.958 \\ \cline{1-10}

    \multirow{4}*{$d=10$} & RU-MCCDs & 1.000 & 0.698 & 1.000 & 0.689 & 1.000 & 0.700 & 1.000 & 0.708 \\ \cline{2-10}
    & SU-MCCDs & 1.000 & 0.791 & 1.000 & 0.771 & 1.000 & 0.779 & 1.000 & 0.782 \\ \cline{2-10}
    & UN-MCCDs & 1.000 & 0.817 & 1.000 & 0.825 & 1.000 & 0.832 & 1.000 & 0.836 \\ \cline{2-10}
    & SUN-MCCDs & 1.000 & 0.945 & 1.000 & 0.947 & 1.000 & 0.950 & 1.000 & 0.952 \\ \cline{1-10}
  \end{tabular}}
  \caption{The TPRs and TNRs of the CCD-based algorithms as the number of Gaussian clusters increases from 2 to 5.}
  \label{tab:N_G_Cls1}
\end{table}

\begin{table}[htb]
  \centering
  \resizebox{0.7\columnwidth}{!}{\begin{tabular}{|c|c|c|c|c|c|c|c|c|c|}
    \hline
    \multicolumn{2}{|c|}{} & \multicolumn{8}{|c|}{Number of Clusters} \\ \cline{3-10}

    \multicolumn{2}{|c|}{} & \multicolumn{2}{|c|}{2} & \multicolumn{2}{|c|}{3} & \multicolumn{2}{|c|}{4} & \multicolumn{2}{|c|}{5} \\ \cline{3-10}

    \multicolumn{2}{|c|}{} & BA & $F_2$-score & BA & $F_2$-score & BA & $F_2$-score & BA & $F_2$-score \\ \hline

    \multirow{4}*{$d=3$} & RU-MCCDs & 0.909 & 0.591 & 0.918 & 0.616 & 0.924 & 0.632 & 0.930 & 0.653 \\ \cline{2-10}
    & SU-MCCDs & 0.969 & 0.809 & 0.971 & 0.817 & 0.973 & 0.827 & 0.974 & 0.832 \\ \cline{2-10}
    & UN-MCCDs & 0.941 & 0.692 & 0.946 & 0.714 & 0.949 & 0.726 & 0.953 & 0.739 \\ \cline{2-10}
    & SUN-MCCDs & 0.980 & 0.865 & 0.979 & 0.862 & 0.979 & 0.862 & 0.979 & 0.862 \\ \cline{1-10}

    \multirow{4}*{$d=10$} & RU-MCCDs & 0.849 & 0.466 & 0.845 & 0.458 & 0.850 & 0.467 & 0.854 & 0.474 \\ \cline{2-10}
    & SU-MCCDs & 0.896 & 0.557 & 0.886 & 0.535 & 0.890 & 0.544 & 0.891 & 0.547 \\ \cline{2-10}
    & UN-MCCDs & 0.909 & 0.590 & 0.913 & 0.601 & 0.916 & 0.610 & 0.918 & 0.616 \\ \cline{2-10}
    & SUN-MCCDs & 0.973 & 0.827 & 0.974 & 0.832 & 0.975 & 0.840 & 0.976 & 0.846 \\ \cline{1-10}
  \end{tabular}}
  \caption{The BAs and $F_2$-scores of the CCD-based algorithms as the number of Gaussian clusters increases from 2 to 5.}
  \label{tab:N_G_Cls2}
\end{table}

\begin{figure}[htb]
\centering
\subfigure[The BAs when $d=3$.]{
\label{fig:Barplot3_GClusters_BA}
\includegraphics[width=0.45\textwidth]{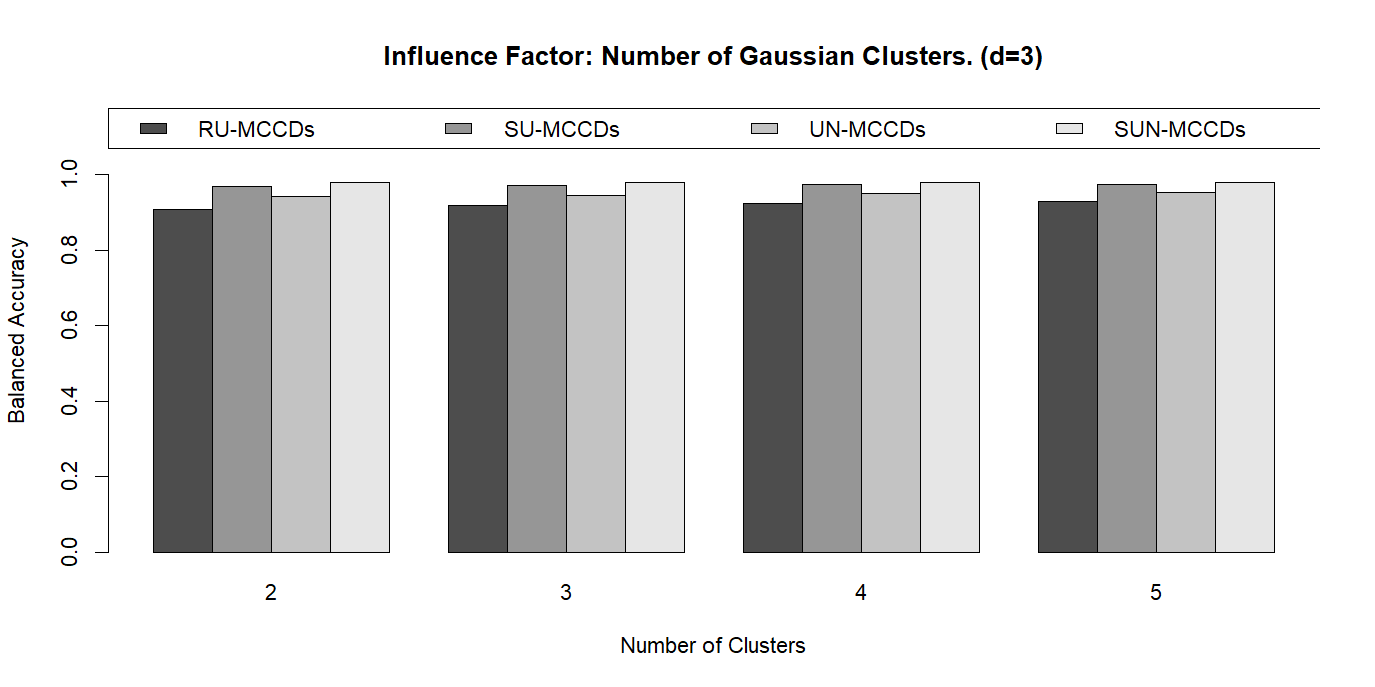}}
\subfigure[The $F_2$-scores when $d=3$.]{
\label{fig:Barplot3_GClusters_FS}
\includegraphics[width=0.45\textwidth]{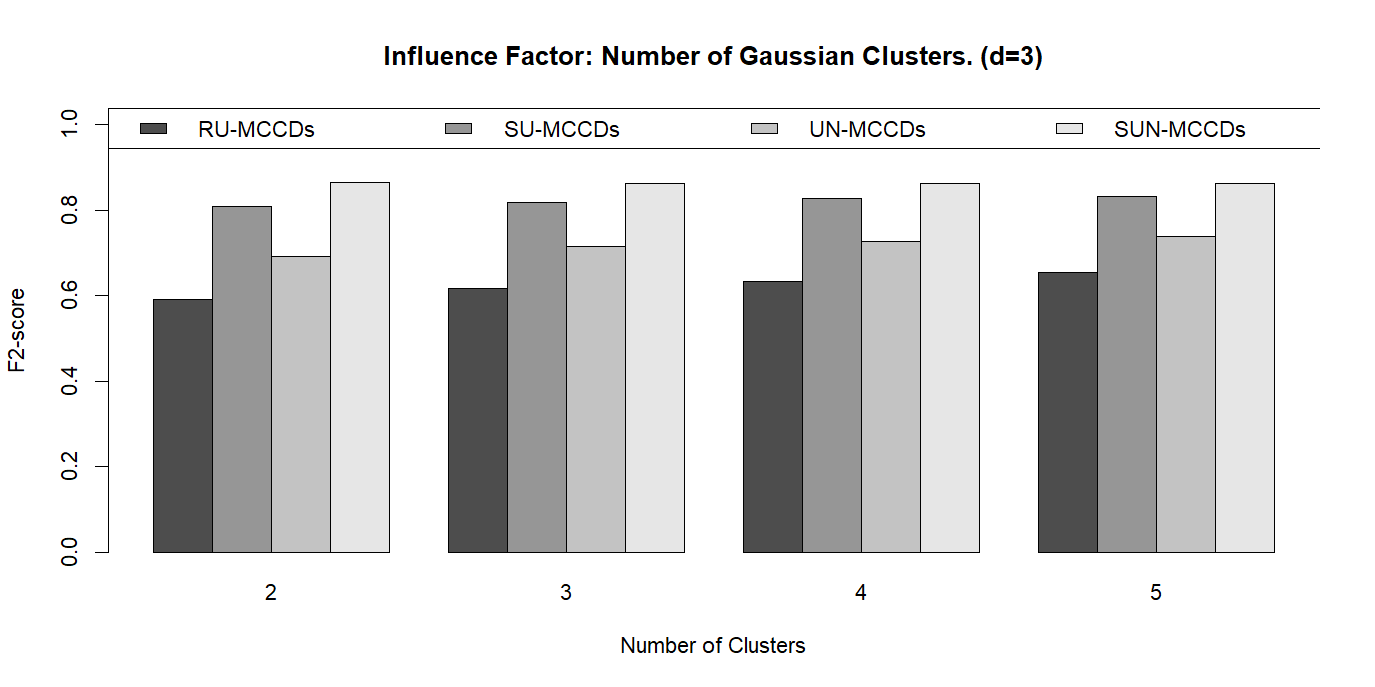}}

\subfigure[The BAs when $d=10$.]{
\label{fig:Barplot10_GClusters_BA}
\includegraphics[width=0.45\textwidth]{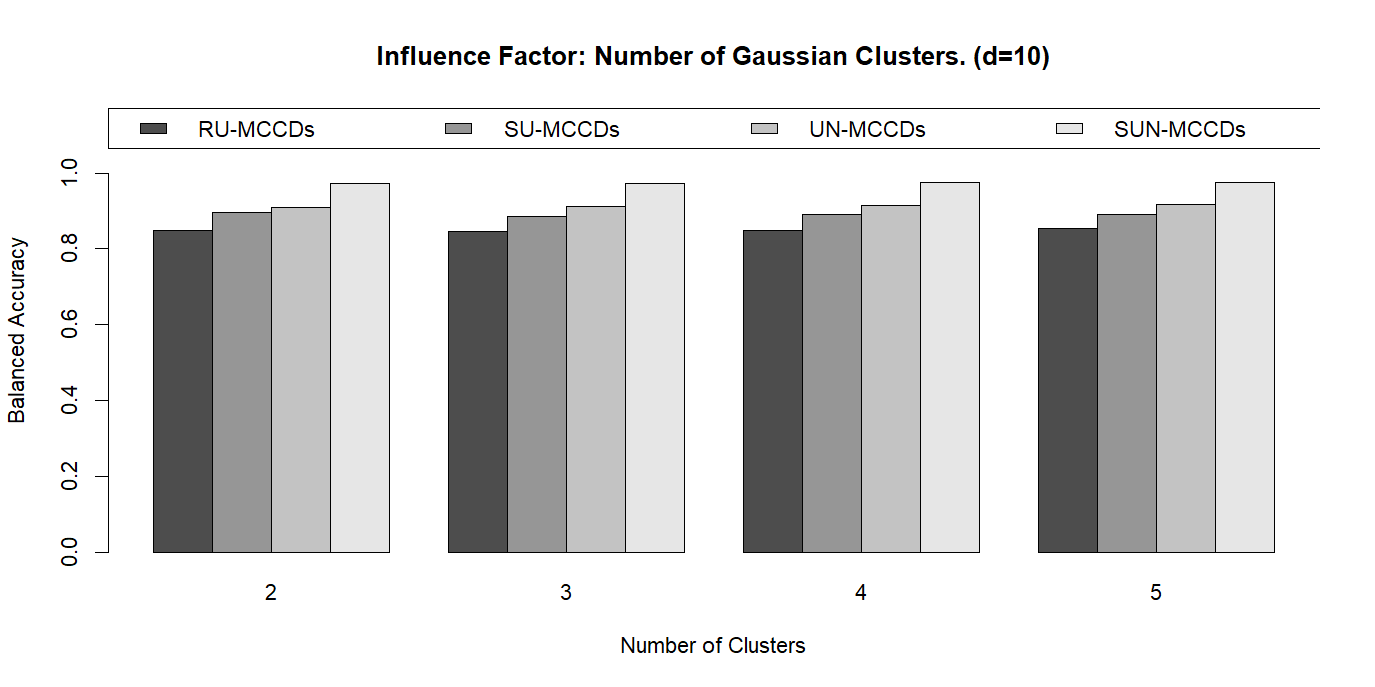}}
\subfigure[The $F_2$-scores when $d=10$.]{
\label{fig:Barplot10_GClusters_FS}
\includegraphics[width=0.45\textwidth]{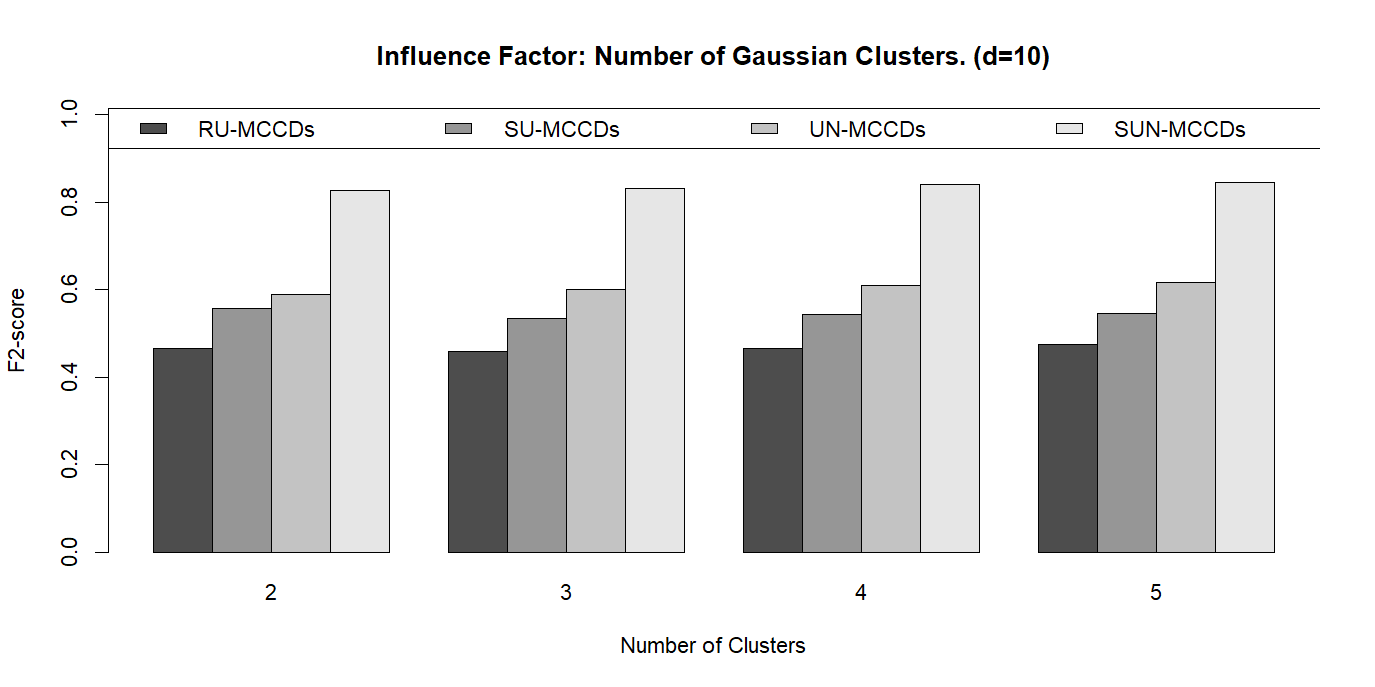}}
\caption{The barplots summarizing the performances of the CCD-based outlier detection algorithms as the number of Gaussian clusters increases. (a) The BAs for $d=3$. (b) The $F_2$-scores for $d=3$. (c) The BAs for $d=10$. (d) The $F_2$-scores for $d=10$.}
\label{fig:Barplot_GClusters}
\end{figure}

Considering the simulation settings with uniform clusters (Tables \ref{tab:N_U_Cls1} and \ref{tab:N_U_Cls2}), observe that almost all the algorithms perform well with $F_2$-scores exceeding $90\%$ except the RU-MCCD and SU-MCCD algorithms, which tends to have low TNRs when $d=10$ (for the same reason that has been discussed in Section \ref{sec:UN-CCDs}). The performances of these algorithms decrease slightly as the number of clusters increases because when we fix $n$ to 200, more clusters indicate less intensity for each uniform cluster; thus, the difficulty level to identify the correct number of clusters and capture an entire cluster increases.

With Gaussian clusters, similar to the results we obtained in the previous section, the SU-MCCD and SUN-MCCD algorithms outperform their prototypes by a large margin, especially the SUN-MCCD algorithm, which delivers high $F_2$-scores of 0.827, 0.832, 0.840, and 0.846 when $d=10$ as the number of clusters increases. It is interesting to find that the $F_2$-scores of the RU-MCCD and SU-MCCD algorithms increase with the cluster numbers when $d=3$, e.g., the $F_2$-score of the RU-MCCD algorithm rises from 0.591 to 0.653 when the cluster number increases; because when the intensities of Gaussian clusters decrease, their point patterns are closer to uniform clusters, which give advantage to the performance of the two algorithms and outweigh the effect of intensity drops.

In summary, the effectiveness of all four algorithms is relatively robust against the number of clusters. With other factors fixed, although their performance tends to decrease as the number of clusters increases, the decrease is minimal. The SUN-MCCD algorithm offers better overall performance and could deliver promising results even if there are 5 Gaussian clusters.

\subsubsection{Varying the Outliers' Percentage}

The main goal of this section is to evaluate the performance of the four CCD-based algorithms under different levels of contamination. In Section \ref{sec:Uni_General_Settings_Des}, we present the results of the data sets with $5\%$ outliers, which is a moderate level of contamination. In this section, we aim to investigate the sensitivity of these algorithms by conducting a series of simulations with the percentage of outliers increasing from $2\%$ to $15\%$. To increase complexity, we set the number of clusters to 3 rather than 2; all the other factors, such as the number of observations, the distances between cluster centers, noise level, etc., are fixed at the same values as in Section \ref{sec:Change_Clusters}. We expect that the algorithms show different degrees of sensitivity to the presence of outliers.

Similar to Section \ref{sec:N_Cls_Setting}, we conduct two sets of simulations with uniform and Gaussian clusters, and we choose to simulate data sets with 3 and 10 dimensions. Details are presented below, it is worth noting that we only list the difference and skip the common parts compared to the simulation setting in Section \ref{sec:N_Cls_Setting}. Some realizations of data sets with Gaussian clusters in 2-dimensional space (although the simulation experiments are conducted on 3 and 10-dimensional space) are presented in Figure \ref{fig:Demo_2d_cont}. (for illustration purposes)

\begin{itemize}
   \item[\romannumeral1.] The proportion of outliers: $2\%$, $5\%$, $7\%$, $10\%$, and $15\%$ (the study of focus in this section).
  \label{sec:Cont_Setting}
\end{itemize}

\begin{figure}[htb]
  \centering
  \subfigure[$2\%$]{
  \includegraphics[width=0.30\textwidth]{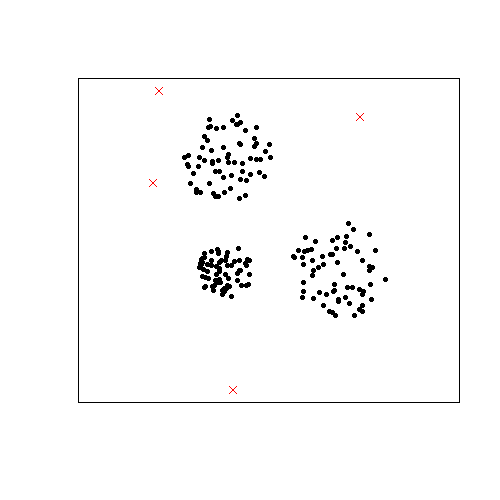}}
  \subfigure[$5\%$]{
  \includegraphics[width=0.30\textwidth]{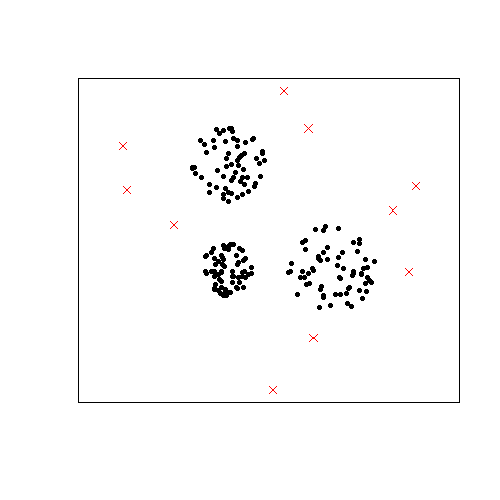}}
  \subfigure[$7\%$]{
  \includegraphics[width=0.30\textwidth]{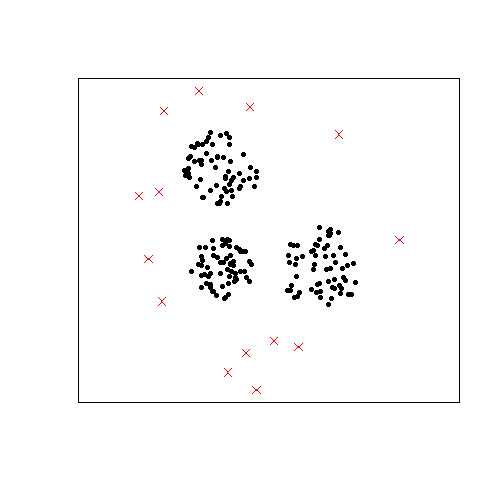}}
  \subfigure[$10\%$]{
  \includegraphics[width=0.30\textwidth]{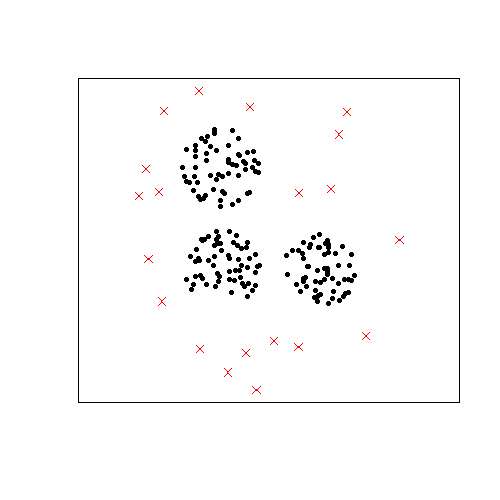}}
  \subfigure[$15\%$]{
  \includegraphics[width=0.30\textwidth]{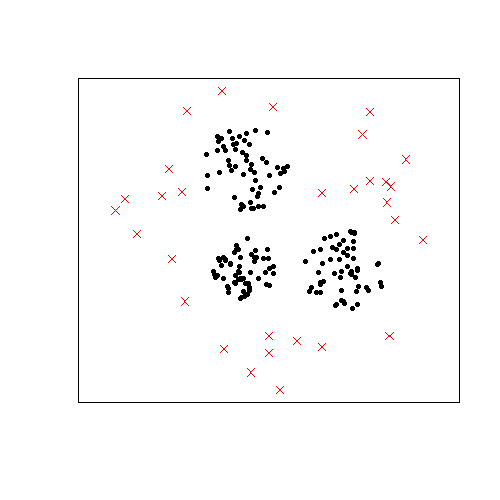}}
  \caption{Some realizations (with Gaussian clusters) of the simulation setting in Section \ref{sec:Cont_Setting}, the contamination level increases from $2\%$ to $15\%$. Red crosses are outliers, black points are regular observations. Contamination levels are indicated below each sub-figure.}\label{fig:Demo_2d_cont}
\end{figure}

The simulation results are summarized from Tables \ref{tab:Otl_U_Cls1} to \ref{tab:Otl_G_Cls2}. We also present the results of BAs and $F_2$-scores (Tables \ref{tab:Otl_U_Cls2} and \ref{tab:Otl_G_Cls2}) as barplots in Figures \ref{fig:Barplot_UCon} and \ref{fig:Barplot_GCon}, respectively.

\begin{table}[htb]
  \centering
  \resizebox{0.7\columnwidth}{!}{\begin{tabular}{|c|c|c|c|c|c|c|c|c|c|c|c|}
    \hline
    \multicolumn{2}{|c|}{} & \multicolumn{10}{|c|}{Percentage of Outliers} \\ \cline{3-12}

    \multicolumn{2}{|c|}{} & \multicolumn{2}{|c|}{$2\%$} & \multicolumn{2}{|c|}{$5\%$} & \multicolumn{2}{|c|}{$7\%$} & \multicolumn{2}{|c|}{$10\%$} & \multicolumn{2}{|c|}{$15\%$} \\ \cline{3-12}

    \multicolumn{2}{|c|}{} & TPR & TNR & TPR & TNR & TPR & TNR & TPR & TNR & TPR & TNR \\ \hline

    \multirow{4}*{$d=3$} & RU-MCCDs & 0.999 & 0.988 & 0.985 & 0.988 & 0.961 & 0.989 & 0.943 & 0.987 & 0.878 & 0.987 \\ \cline{2-12}
    & SU-MCCDs & 0.999 & 0.998 & 0.999 & 0.998 & 0.997 & 0.998 & 0.991 & 0.998 & 0.961 & 0.998 \\ \cline{2-12}
    & UN-MCCDs & 0.999 & 0.989 & 0.991 & 0.990 & 0.977 & 0.989 & 0.965 & 0.989 & 0.919 & 0.987 \\ \cline{2-12}
    & SUN-MCCDs & 0.999 & 0.997 & 0.998 & 0.998 & 0.996 & 0.997 & 0.996 & 0.997 & 0.973 & 0.998 \\ \cline{1-12}

    \multirow{4}*{$d=10$} & RU-MCCDs & 1.000 & 0.911 & 1.000 & 0.920 & 1.000 & 0.907 & 1.000 & 0.918 & 1.000 & 0.914 \\ \cline{2-12}
    & SU-MCCDs & 1.000 & 0.926 & 1.000 & 0.934 & 1.000 & 0.924 & 1.000 & 0.933 & 1.000 & 0.931 \\ \cline{2-12}
    & UN-MCCDs & 1.000 & 0.990 & 1.000 & 0.990 & 1.000 & 0.991 & 1.000 & 0.988 & 1.000 & 0.989 \\ \cline{2-12}
    & SUN-MCCDs & 1.000 & 0.999 & 1.000 & 0.999 & 1.000 & 0.999 & 1.000 & 0.999 & 0.999 & 0.998 \\ \cline{1-12}
  \end{tabular}}
  \caption{The TPRs and TNRs of the CCD-based algorithms as the percentage of outliers over the entire simulated data set increases from $2\%$ to $15\%$ (for simulations with uniform clusters).}
  \label{tab:Otl_U_Cls1}
\end{table}

\begin{table}[htb]
  \centering
  \resizebox{0.7\columnwidth}{!}{\begin{tabular}{|c|c|c|c|c|c|c|c|c|c|c|c|}
    \hline
    \multicolumn{2}{|c|}{} & \multicolumn{10}{|c|}{Percentage of Outliers} \\ \cline{3-12}

    \multicolumn{2}{|c|}{} & \multicolumn{2}{|c|}{$2\%$} & \multicolumn{2}{|c|}{$5\%$} & \multicolumn{2}{|c|}{$7\%$} & \multicolumn{2}{|c|}{$10\%$} & \multicolumn{2}{|c|}{$15\%$} \\ \cline{3-12}

    \multicolumn{2}{|c|}{} & BA & $F_2$-score & BA & $F_2$-score & BA & $F_2$-score & BA & $F_2$-score & BA & $F_2$-score \\ \hline

    \multirow{4}*{$d=3$} & RU-MCCDs & 0.994 & 0.894 & 0.987 & 0.945 & 0.975 & 0.941 & 0.965 & 0.932 & 0.933 & 0.887 \\ \cline{2-12}
    & SU-MCCDs & 0.999 & 0.980 & 0.999 & 0.992 & 0.998 & 0.992 & 0.995 & 0.989 & 0.980 & 0.966 \\ \cline{2-12}
    & UN-MCCDs & 0.994 & 0.902 & 0.991 & 0.956 & 0.983 & 0.954 & 0.977 & 0.953 & 0.953 & 0.920 \\ \cline{2-12}
    & SUN-MCCDs & 0.998 & 0.971 & 0.998 & 0.991 & 0.997 & 0.989 & 0.997 & 0.991 & 0.986 & 0.976 \\ \cline{1-12}

    \multirow{4}*{$d=10$} & RU-MCCDs & 0.956 & 0.534 & 0.960 & 0.767 & 0.954 & 0.802 & 0.959 & 0.871 & 0.957 & 0.911 \\ \cline{2-12}
    & SU-MCCDs & 0.963 & 0.580 & 0.967 & 0.799 & 0.962 & 0.832 & 0.967 & 0.892 & 0.966 & 0.927 \\ \cline{2-12}
    & UN-MCCDs & 0.995 & 0.911 & 0.995 & 0.963 & 0.996 & 0.977 & 0.994 & 0.979 & 0.995 & 0.988 \\ \cline{2-12}
    & SUN-MCCDs & 1.000 & 0.990 & 1.000 & 0.996 & 1.000 & 0.997 & 1.000 & 0.998 & 0.999 & 0.997 \\ \cline{1-12}
  \end{tabular}}
  \caption{The BAs and $F_2$-scores of the CCD-based algorithms as the percentage of outliers over the entire simulated data set increases from $2\%$ to $15\%$ (for simulations with uniform clusters).}
  \label{tab:Otl_U_Cls2}
\end{table}

\begin{figure}[htb]
\centering
\subfigure[The BAs when $d=3$.]{
\label{fig:Barplot3_NCon_BA}
\includegraphics[width=0.45\textwidth]{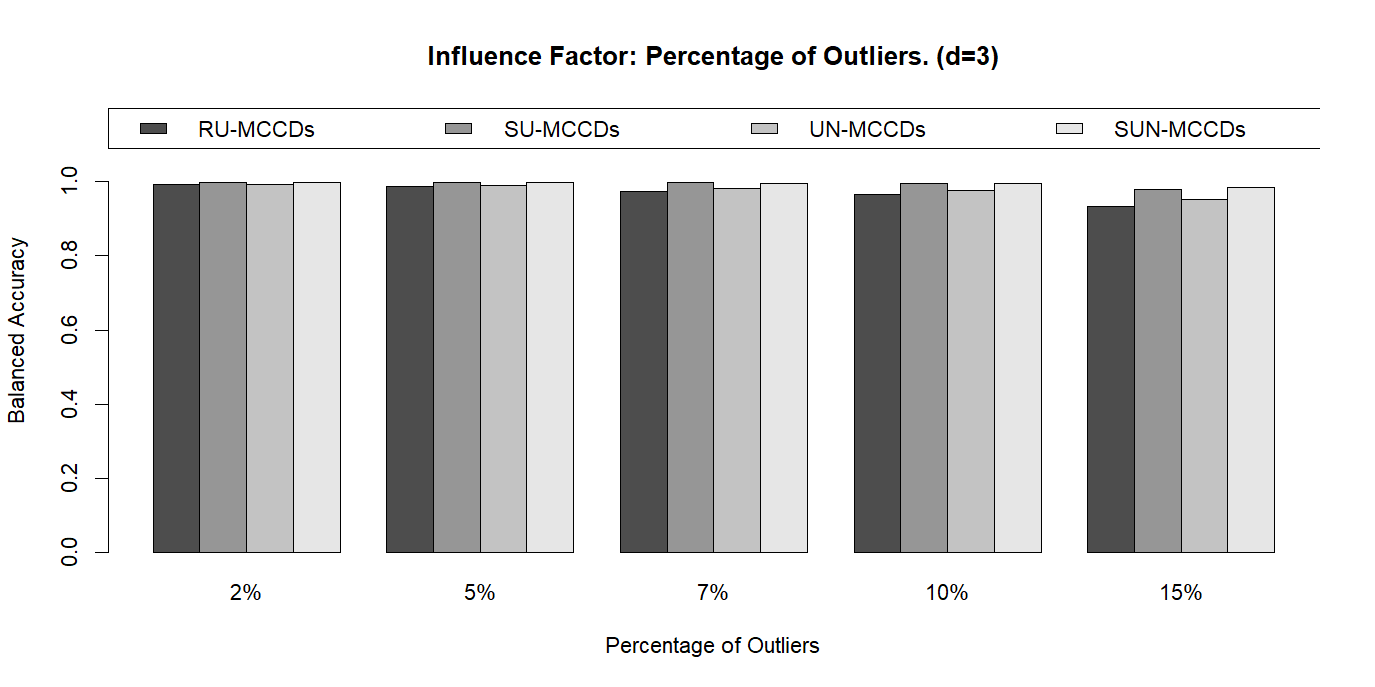}}
\subfigure[The $F_2$-scores when $d=3$.]{
\label{fig:Barplot3_NCon_FS}
\includegraphics[width=0.45\textwidth]{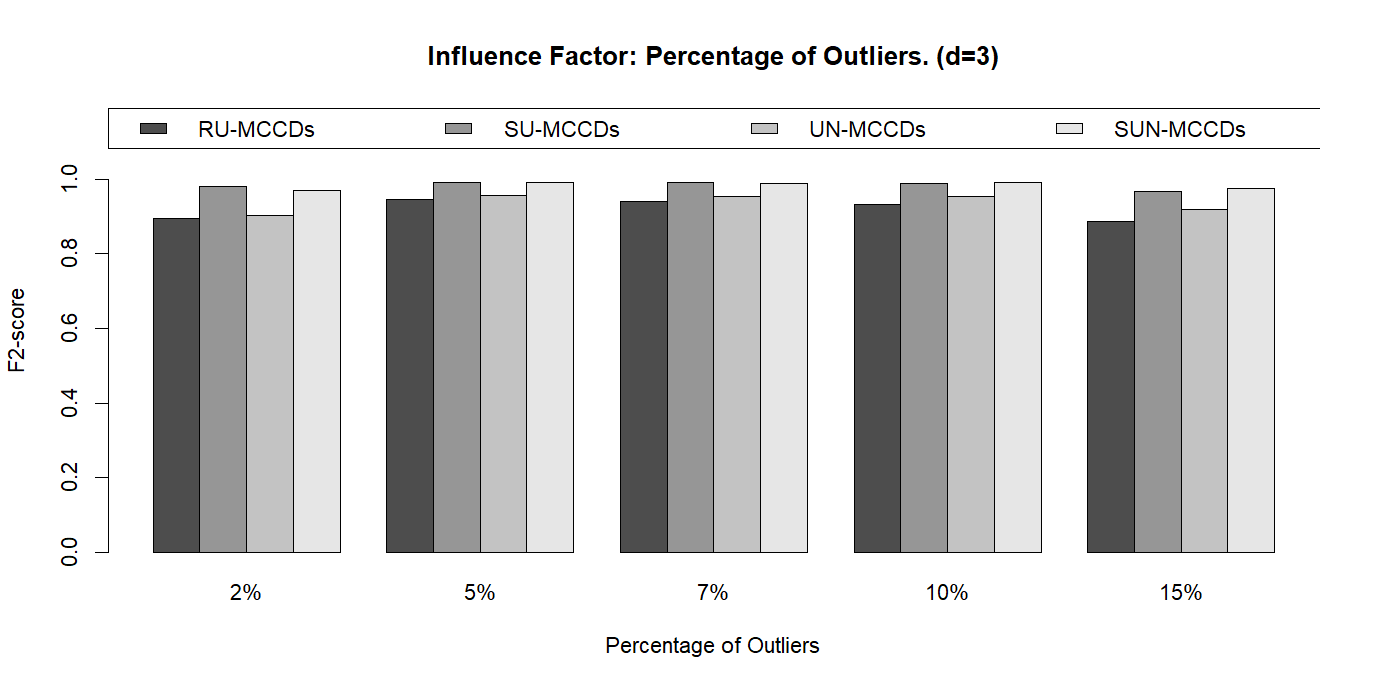}}

\subfigure[The BAs when $d=10$.]{
\label{fig:Barplot10_NCon_BA}
\includegraphics[width=0.45\textwidth]{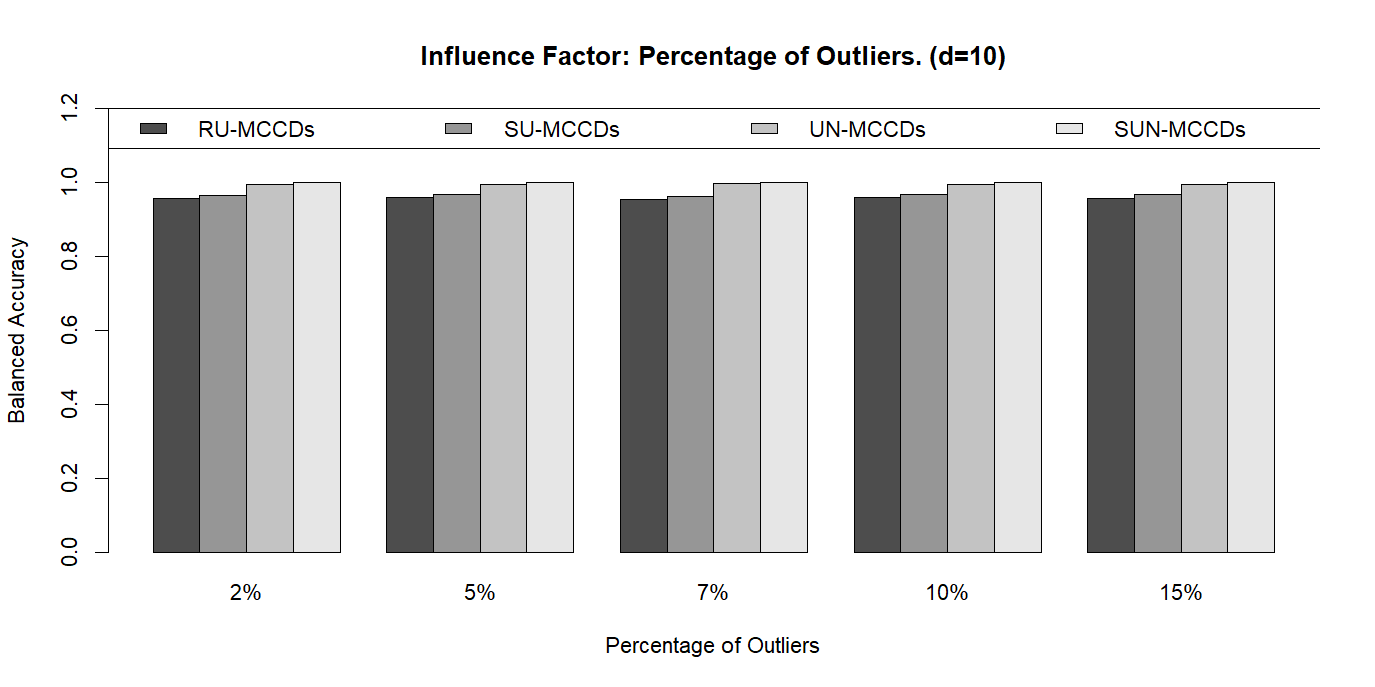}}
\subfigure[The $F_2$-scores when $d=10$.]{
\label{fig:Barplot10_NCon_FS}
\includegraphics[width=0.45\textwidth]{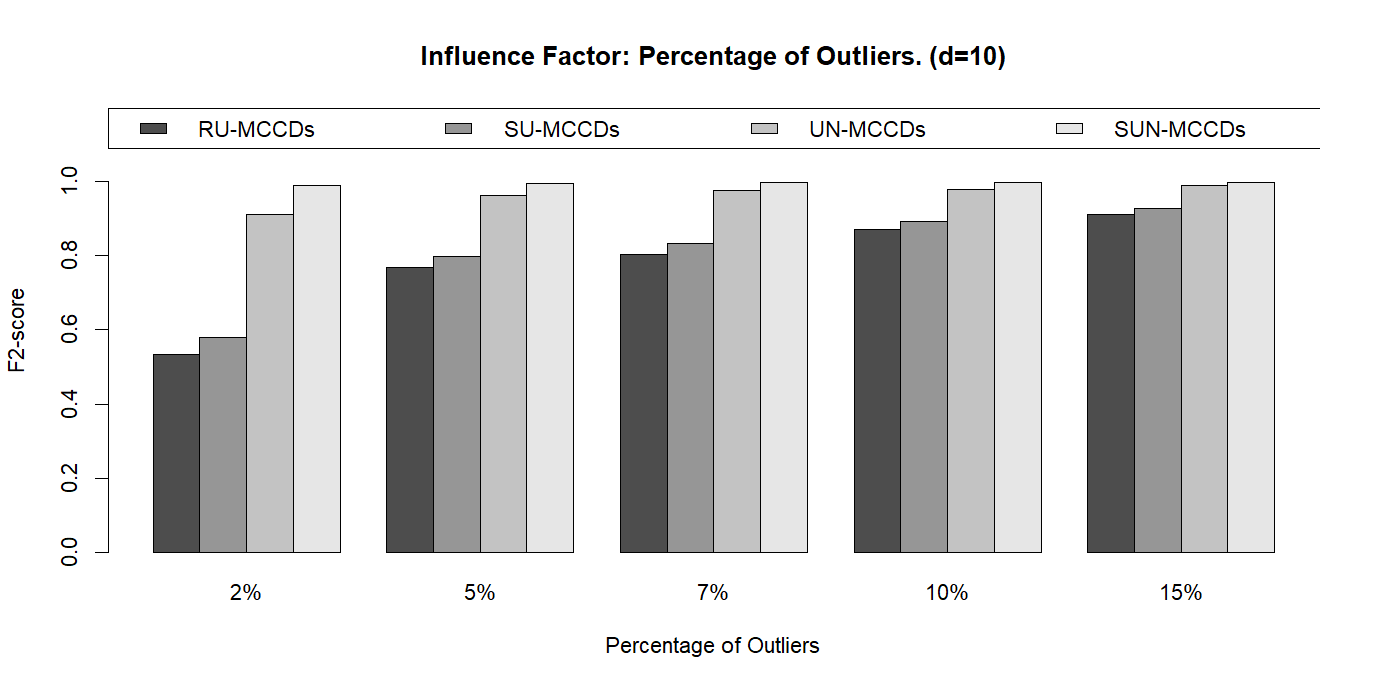}}
\caption{The barplots summarizing the performances of the CCD-based outlier detection algorithms as the percentage of outlier increases (points within each clusters are uniformly distributed). (a) The BAs for $d=3$. (b) The $F_2$-scores for $d=3$. (c) The BAs for $d=10$. (d) The $F_2$-scores for $d=10$.}
\label{fig:Barplot_UCon}
\end{figure}

\begin{table}[htb]
  \footnotesize
  \centering
  \resizebox{0.7\columnwidth}{!}{\begin{tabular}{|c|c|c|c|c|c|c|c|c|c|c|c|}
    \hline
    \multicolumn{2}{|c|}{} & \multicolumn{10}{|c|}{Percentage of Outliers} \\ \cline{3-12}

    \multicolumn{2}{|c|}{} & \multicolumn{2}{|c|}{$2\%$} & \multicolumn{2}{|c|}{$5\%$} & \multicolumn{2}{|c|}{$7\%$} & \multicolumn{2}{|c|}{$10\%$} & \multicolumn{2}{|c|}{$15\%$} \\ \cline{3-12}

    \multicolumn{2}{|c|}{} & TPR & TNR & TPR & TNR & TPR & TNR & TPR & TNR & TPR & TNR \\ \hline

    \multirow{4}*{$d=3$} & RU-MCCDs & 1.000 & 0.834 & 1.000 & 0.833 & 1.000 & 0.838 & 1.000 & 0.839 & 0.998 & 0.840 \\ \cline{2-12}
    & SU-MCCDs & 1.000 & 0.940 & 1.000 & 0.941 & 1.000 & 0.943 & 1.000 & 0.943 & 0.997 & 0.945 \\ \cline{2-12}
    & UN-MCCDs & 1.000 & 0.893 & 0.997 & 0.890 & 0.992 & 0.895 & 0.991 & 0.898 & 0.971 & 0.899 \\ \cline{2-12}
    & SUN-MCCDs & 1.000 & 0.957 & 1.000 & 0.956 & 1.000 & 0.958 & 0.998 & 0.959 & 0.983 & 0.960 \\ \cline{1-12}

    \multirow{4}*{$d=10$} & RU-MCCDs & 1.000 & 0.684 & 1.000 & 0.698 & 1.000 & 0.690 & 1.000 & 0.689 & 1.000 & 0.693 \\ \cline{2-12}
    & SU-MCCDs & 1.000 & 0.767 & 1.000 & 0.777 & 1.000 & 0.772 & 1.000 & 0.767 & 1.000 & 0.773 \\ \cline{2-12}
    & UN-MCCDs & 1.000 & 0.827 & 1.000 & 0.829 & 1.000 & 0.826 & 1.000 & 0.827 & 1.000 & 0.825 \\ \cline{2-12}
    & SUN-MCCDs & 1.000 & 0.947 & 1.000 & 0.947 & 1.000 & 0.948 & 1.000 & 0.948 & 1.000 & 0.948 \\ \cline{1-12}
  \end{tabular}}
  \caption{The TPRs and TNRs of the CCD-based algorithms as the percentage of outliers over the entire simulated data set increases from $2\%$ to $15\%$ (for simulations with Gaussian clusters).}
  \label{tab:Otl_G_Cls1}
\end{table}

\begin{table}[htb]
  \footnotesize
  \centering
  \resizebox{0.7\columnwidth}{!}{\begin{tabular}{|c|c|c|c|c|c|c|c|c|c|c|c|}
    \hline
    \multicolumn{2}{|c|}{} & \multicolumn{10}{|c|}{Percentage of Outliers} \\ \cline{3-12}

    \multicolumn{2}{|c|}{} & \multicolumn{2}{|c|}{$2\%$} & \multicolumn{2}{|c|}{$5\%$} & \multicolumn{2}{|c|}{$7\%$} & \multicolumn{2}{|c|}{$10\%$} & \multicolumn{2}{|c|}{$15\%$} \\ \cline{3-12}

    \multicolumn{2}{|c|}{} & BA & $F_2$-score & BA & $F_2$-score & BA & $F_2$-score & BA & $F_2$-score & BA & $F_2$-score \\ \hline

    \multirow{4}*{$d=3$} & RU-MCCDs & 0.917 & 0.381 & 0.917 & 0.612 & 0.919 & 0.699 & 0.920 & 0.775 & 0.919 & 0.845 \\ \cline{2-12}
    & SU-MCCDs & 0.970 & 0.630 & 0.971 & 0.817 & 0.972 & 0.868 & 0.972 & 0.907 & 0.971 & 0.939 \\ \cline{2-12}
    & UN-MCCDs & 0.947 & 0.488 & 0.944 & 0.703 & 0.944 & 0.777 & 0.945 & 0.839 & 0.935 & 0.876 \\ \cline{2-12}
    & SUN-MCCDs & 0.979 & 0.704 & 0.978 & 0.857 & 0.979 & 0.900 & 0.979 & 0.930 & 0.972 & 0.943 \\ \cline{1-12}

    \multirow{4}*{$d=10$} & RU-MCCDs & 0.842 & 0.244 & 0.849 & 0.466 & 0.845 & 0.548 & 0.845 & 0.641 & 0.847 & 0.742 \\ \cline{2-12}
    & SU-MCCDs & 0.884 & 0.305 & 0.889 & 0.541 & 0.886 & 0.623 & 0.884 & 0.705 & 0.887 & 0.795 \\ \cline{2-12}
    & UN-MCCDs & 0.914 & 0.371 & 0.915 & 0.606 & 0.913 & 0.684 & 0.914 & 0.763 & 0.913 & 0.834 \\ \cline{2-12}
    & SUN-MCCDs & 0.974 & 0.658 & 0.974 & 0.832 & 0.974 & 0.879 & 0.974 & 0.914 & 0.974 & 0.944 \\ \cline{1-12}
  \end{tabular}}
  \caption{The BAs and $F_2$-scores of the CCD-based algorithms as the percentage of outliers over the entire simulated data set increases from $2\%$ to $15\%$ (for simulations with Gaussian clusters).}
  \label{tab:Otl_G_Cls2}
\end{table}

\begin{figure}[htb]
\centering
\subfigure[The BAs when $d=3$.]{
\label{Barplot3_GCon_BA}
\includegraphics[width=0.45\textwidth]{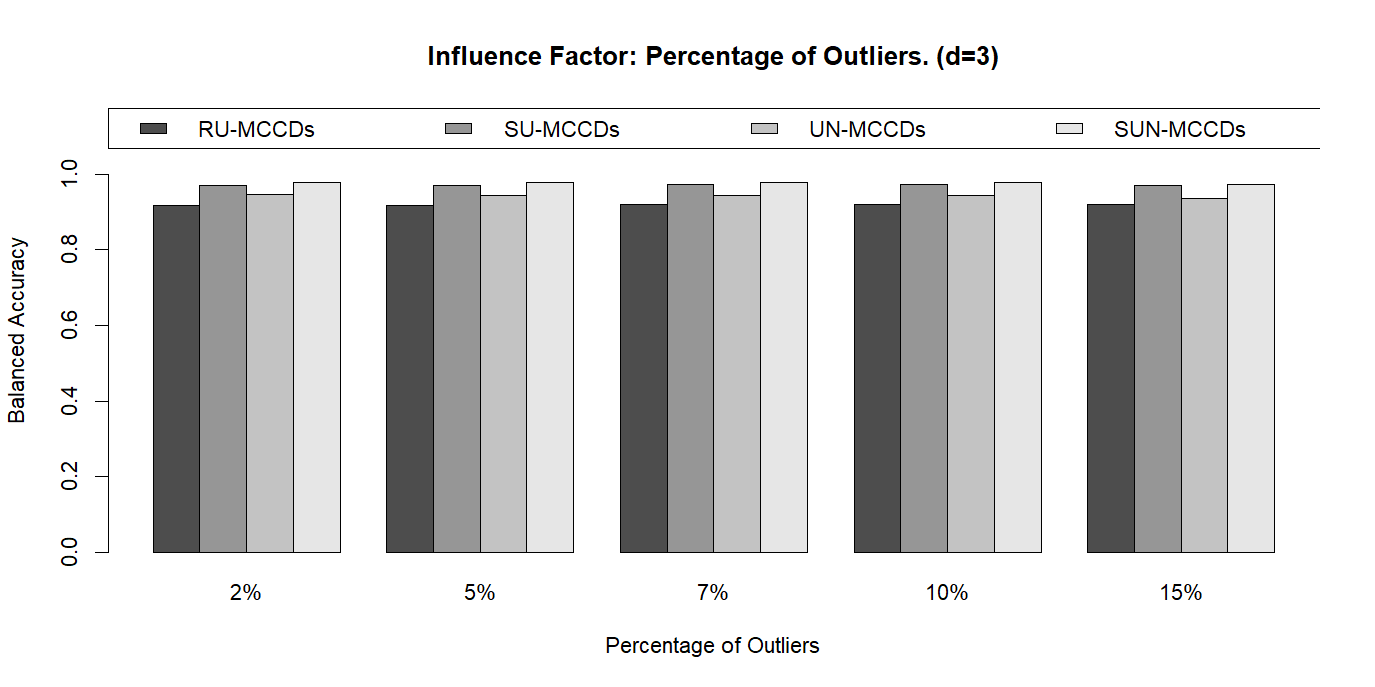}}
\subfigure[The $F_2$-scores when $d=3$.]{
\label{Barplot3_GCon_FS}
\includegraphics[width=0.45\textwidth]{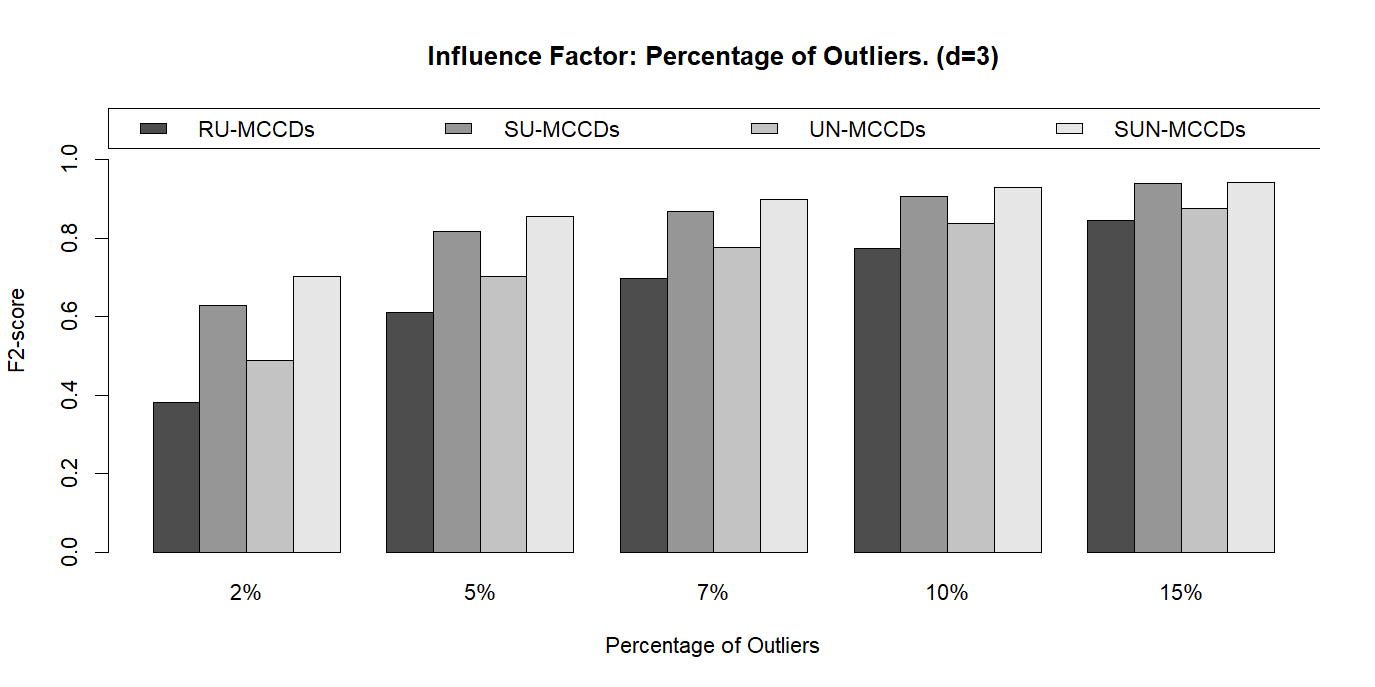}}

\subfigure[The BAs when $d=10$.]{
\label{Barplot10_GCon_BA}
\includegraphics[width=0.45\textwidth]{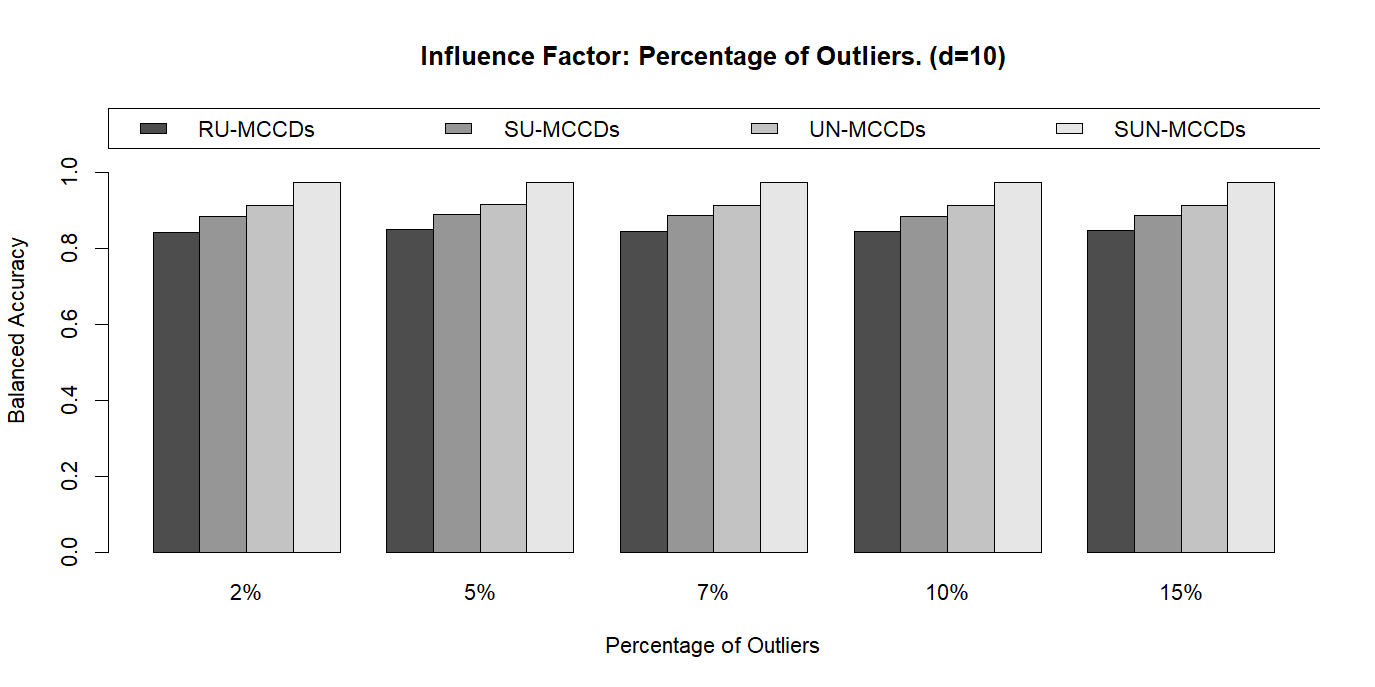}}
\subfigure[The $F_2$-scores when $d=10$.]{
\label{Barplot10_GCon_FS}
\includegraphics[width=0.45\textwidth]{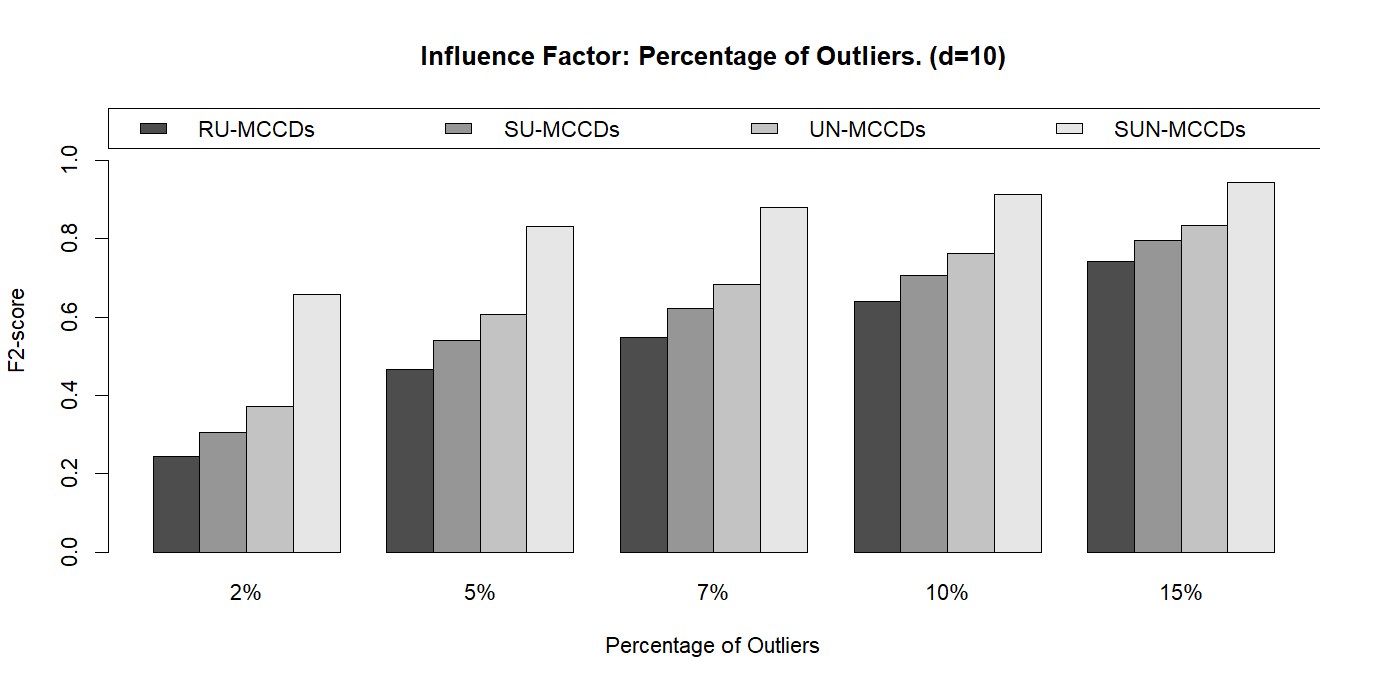}}
\caption{The barplots summarizing the performances of the CCD-based outlier detection algorithms as the percentage of outlier increases (points within each clusters are (multivariate) normally distributed). (a) The BAs for $d=3$. (b) The $F_2$-scores for $d=3$. (c) The BAs for $d=10$. (d) The $F_2$-scores for $d=10$.}
\label{fig:Barplot_GCon}
\end{figure}

In the current setting, the percentage of outliers is not fixed. As a result, the $F_2$-score is not an appropriate measure to compare the efficiency across the data sets with different outlier contamination levels, because precision is highly dependent on the size of outliers. For instance, suppose we have two data sets, each with 100 observations. The first data set has one outlier, and the second has 20 outliers. If an algorithm captures all the outliers and returns one false positive for the first data set and 20 false positives for the second, then the algorithm performs better on the first data set because it has much fewer false positives and higher overall accuracy ($99\%$ versus $80\%$). However, the algorithm would have the same $F_2$-score of 0.882 for both data sets, which is misleading. Therefore, we consider accuracies only instead of $F_2$-scores in the current setting.

We first consider the settings with uniform clusters, whose results are summarized in Tables \ref{tab:Otl_U_Cls1} and \ref{tab:Otl_U_Cls2}. All the algorithms achieve good performance with BAs close to 1. Similar to the previous simulation results, the RU-MCCD and SU-MCCD algorithms lag behind the other two when $d=10$. Furthermore, observe that the TPRs of the RU-MCCD and UN-MCCD algorithm decreases at a faster rate than the other two ``flexible" algorithms when the contamination level increases, e.g., when $d=3$, the TPRs of the RU-MCCD algorithm are 0.999, 0.985, 0.961, 0.943, and 0.878 as the contamination level rises from $2\%$ to $15\%$. It is due to the masking problem that we have explained in Section \ref{sec:Uni_General_Settings_Des}, which happens more frequently when the intensity of outliers is high. Fortunately, thanks to their mechanism that filters small clusters, the SU-MCCD and SUN-MCCD algorithms exhibit more robustness against a high percentage of outliers, e.g., when $d=3$, the SUN-MCCD algorithm can still provide a TPR of 0.973 when the contamination level is as high as $15\%$.

Consider the simulations with Gaussian clusters (Tables \ref{tab:Otl_G_Cls1} and \ref{tab:Otl_G_Cls2}), the SU-MCCD and SUN-MCCD algorithms are slightly better than the other two prototypes and perform similarly when $d=3$, and deliver BAs of at least $95\%$. When $d=10$, the SUN-MCCD algorithm offers substantially better results than the others. Furthermore, all the algorithms are insensitive to the changing contamination level under Gaussian simulation settings with the cost of some false positives.

\subsubsection{Varying the Minimal Distance Between Outliers and Cluster Centers}

In the previous simulation settings, the distances between outliers and cluster centers are at least 2. Given the fact that the support of each cluster is a hypersphere with a radius that varies from 0.7 to 1.3, there is a noticeable distance between an outlier and a regular observation. Under those settings, all four CCD-based algorithms can separate most outliers from regular observations in the majority of cases (except the RU-MCCD and UN-MCCD algorithms, which are affected by the masking problem when the intensity of outliers is relatively high). In this section, instead of fixing the minimal distance to 2, we simulate data sets with outliers and clusters being much closer in proximity. We conduct five simulations with the minimal distance between outliers and any cluster centers increasing from 1.25 to 2.25 and investigate the performance of all 4 CCD-based algorithms. We expect the difficulty of capturing most outliers to increase substantially, especially when the minimal distance is set to 1.25, where the outlier sets may even overlap with some clusters.

Similarly, all other factors are set to the same values as in the previous simulations, and details are presented below. Again, we only list the differences and skip the common parts compared to the simulation setting in Section \ref{sec:N_Cls_Setting}. Some realizations of data sets with uniform clusters in 2-dimensional space (although the simulation experiments are conducted on 3 and 10-dimensional space) are presented in Figure \ref{fig:Demo_2d_Dis_Otl} (for illustration purposes),

\begin{itemize}
  \item[\romannumeral1.] The minimal distance between an outliers and any cluster center varies with values: 1.25, 1.5, 1.75, 2, and 2.25 (the study of focus in this section).
  \label{sec:Olt_Dis_Setting}
\end{itemize}

\begin{figure}[htb]
  \centering
  \subfigure[1.25]{
  \includegraphics[width=0.30\textwidth]{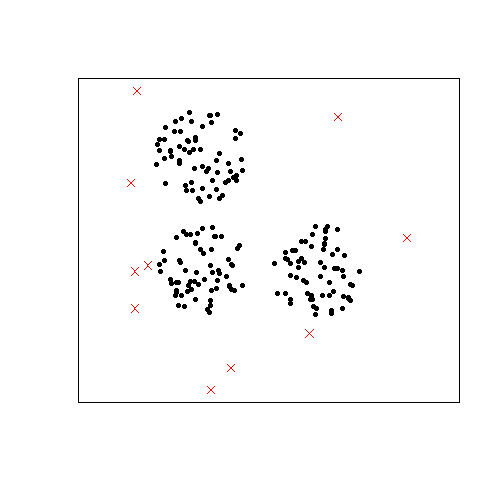}}
  \subfigure[1.5]{
  \includegraphics[width=0.30\textwidth]{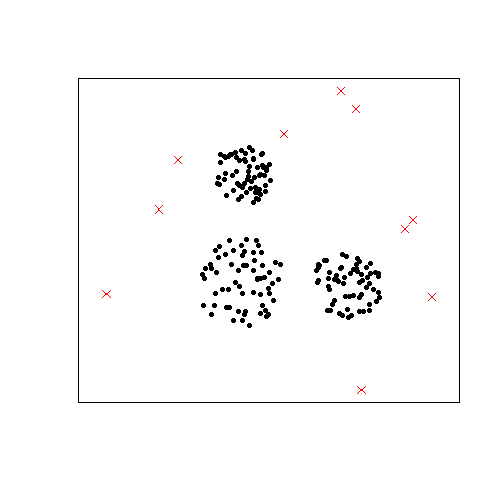}}
  \subfigure[1.75]{
  \includegraphics[width=0.30\textwidth]{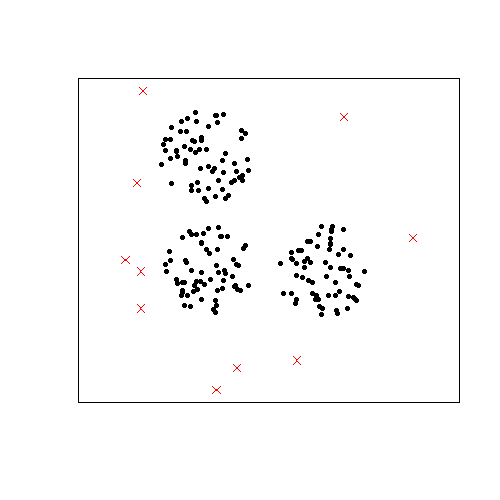}}
  \subfigure[2]{
  \includegraphics[width=0.30\textwidth]{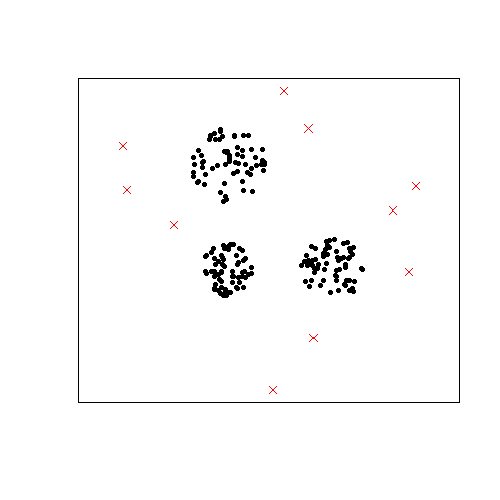}}
  \subfigure[2.25]{
  \includegraphics[width=0.30\textwidth]{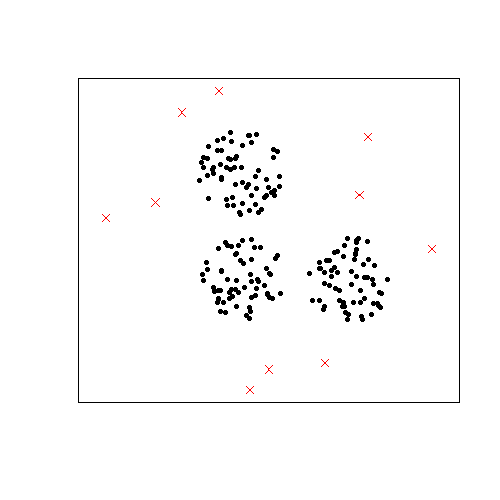}}
  \caption{Some realizations (with uniform clusters) of the simulation setting in Section \ref{sec:Olt_Dis_Setting} , the minimal distance between outliers and cluster centers increases from 1.25 to 2.25. Red crosses are outliers, black points are regular observations. The minimal distances are indicated below each sub-figure.}
  \label{fig:Demo_2d_Dis_Otl}
\end{figure}

The simulation results are summarized from Tables \ref{tab:Otl_Dis_U_Cls1} to \ref{tab:Otl_Dis_G_Cls2}. BAs and $F_2$-scores (Tables \ref{tab:Otl_Dis_U_Cls2} and \ref{tab:Otl_Dis_G_Cls2}) are also presented as barplots in Figures \ref{fig:Barplot_NOutlierDist} and \ref{fig:Barplot_GOutlierDist}, respectively.

\begin{table}[htb]
  \centering
  \resizebox{0.7\columnwidth}{!}{\begin{tabular}{|c|c|c|c|c|c|c|c|c|c|c|c|}
    \hline
    \multicolumn{2}{|c|}{} & \multicolumn{10}{|c|}{Minimal Distances between Outliers and Cluster Centers} \\ \cline{3-12}

    \multicolumn{2}{|c|}{} & \multicolumn{2}{|c|}{$1.25$} & \multicolumn{2}{|c|}{$1.5$} & \multicolumn{2}{|c|}{$1.75$} & \multicolumn{2}{|c|}{$2.00$} & \multicolumn{2}{|c|}{$2.25$} \\ \cline{3-12}

    \multicolumn{2}{|c|}{} & TPR & TNR & TPR & TNR & TPR & TNR & TPR & TNR & TPR & TNR \\ \hline

    \multirow{4}*{$d=3$} & RU-MCCDs & 0.979 & 0.988 & 0.982 & 0.988 & 0.986 & 0.988 & 0.985 & 0.988 & 0.984 & 0.988 \\ \cline{2-12}
    & SU-MCCDs & 0.966 & 0.998 & 0.983 & 0.999 & 0.997 & 0.998 & 0.999 & 0.998 & 1.000 & 0.998 \\ \cline{2-12}
    & UN-MCCDs & 0.980 & 0.989 & 0.989 & 0.989 & 0.985 & 0.988 & 0.991 & 0.990 & 0.990 & 0.989 \\ \cline{2-12}
    & SUN-MCCDs & 0.962 & 0.997 & 0.976 & 0.997 & 0.992 & 0.997 & 0.998 & 0.996 & 1.000 & 0.997 \\ \cline{1-12}

    \multirow{4}*{$d=10$} & RU-MCCDs & 1.000 & 0.914 & 1.000 & 0.914 & 1.000 & 0.914 & 1.000 & 0.920 & 1.000 & 0.920 \\ \cline{2-12}
    & SU-MCCDs & 1.000 & 0.929 & 1.000 & 0.929 & 1.000 & 0.929 & 1.000 & 0.934 & 1.000 & 0.934 \\ \cline{2-12}
    & UN-MCCDs & 1.000 & 0.990 & 1.000 & 0.990 & 1.000 & 0.990 & 1.000 & 0.990 & 1.000 & 0.990 \\ \cline{2-12}
    & SUN-MCCDs & 1.000 & 0.999 & 1.000 & 0.999 & 1.000 & 0.999 & 1.000 & 0.999 & 1.000 & 0.999 \\ \cline{1-12}
  \end{tabular}}
  \caption{The TPRs and TNRs of the CCD-based algorithms as the minimal distance from outliers to any cluster centers increases from 1.25 to 2.25 (for simulations with uniform clusters).}
  \label{tab:Otl_Dis_U_Cls1}
\end{table}

\begin{table}[htb]
  \centering
  \resizebox{0.7\columnwidth}{!}{\begin{tabular}{|c|c|c|c|c|c|c|c|c|c|c|c|}
    \hline
    \multicolumn{2}{|c|}{} & \multicolumn{10}{|c|}{Minimal Distances between Outliers and Cluster Centers} \\ \cline{3-12}

    \multicolumn{2}{|c|}{} & \multicolumn{2}{|c|}{$1.25$} & \multicolumn{2}{|c|}{$1.5$} & \multicolumn{2}{|c|}{$1.75$} & \multicolumn{2}{|c|}{$2.00$} & \multicolumn{2}{|c|}{$2.25$} \\ \cline{3-12}

    \multicolumn{2}{|c|}{} & BA & $F_2$-score & BA & $F_2$-score & BA & $F_2$-score & BA & $F_2$-score & BA & $F_2$-score \\ \hline

    \multirow{4}*{$d=3$} & RU-MCCDs & 0.984 & 0.940 & 0.985 & 0.942 & 0.987 & 0.946 & 0.987 & 0.945 & 0.986 & 0.944 \\ \cline{2-12}
    & SU-MCCDs & 0.982 & 0.964 & 0.991 & 0.983 & 0.998 & 0.990 & 0.999 & 0.992 & 0.999 & 0.992 \\ \cline{2-12}
    & UN-MCCDs & 0.985 & 0.895 & 0.989 & 0.951 & 0.987 & 0.945 & 0.991 & 0.956 & 0.990 & 0.952 \\ \cline{2-12}
    & SUN-MCCDs & 0.980 & 0.953 & 0.987 & 0.970 & 0.995 & 0.982 & 0.997 & 0.983 & 0.999 & 0.989 \\ \cline{1-12}

    \multirow{4}*{$d=10$} & RU-MCCDs & 0.957 & 0.754 & 0.957 & 0.754 & 0.957 & 0.754 & 0.960 & 0.767 & 0.960 & 0.767 \\ \cline{2-12}
    & SU-MCCDs & 0.965 & 0.788 & 0.965 & 0.788 & 0.965 & 0.788 & 0.967 & 0.799 & 0.967 & 0.799 \\ \cline{2-12}
    & UN-MCCDs & 0.995 & 0.963 & 0.995 & 0.963 & 0.995 & 0.963 & 0.995 & 0.963 & 0.995 & 0.963 \\ \cline{2-12}
    & SUN-MCCDs & 1.000 & 0.996 & 1.000 & 0.996 & 1.000 & 0.996 & 1.000 & 0.996 & 1.000 & 0.996 \\ \cline{1-12}
  \end{tabular}}
  \caption{The BAs and $F_2$-scores of the CCD-based algorithms as the minimal distance from outliers to any cluster centers increases from 1.25 to 2.25 (for simulations with uniform clusters).}
  \label{tab:Otl_Dis_U_Cls2}
\end{table}

\begin{figure}[htb]
\centering
\subfigure[The BAs when $d=3$.]{
\label{Barplot3_NOutlierDist_BA}
\includegraphics[width=0.45\textwidth]{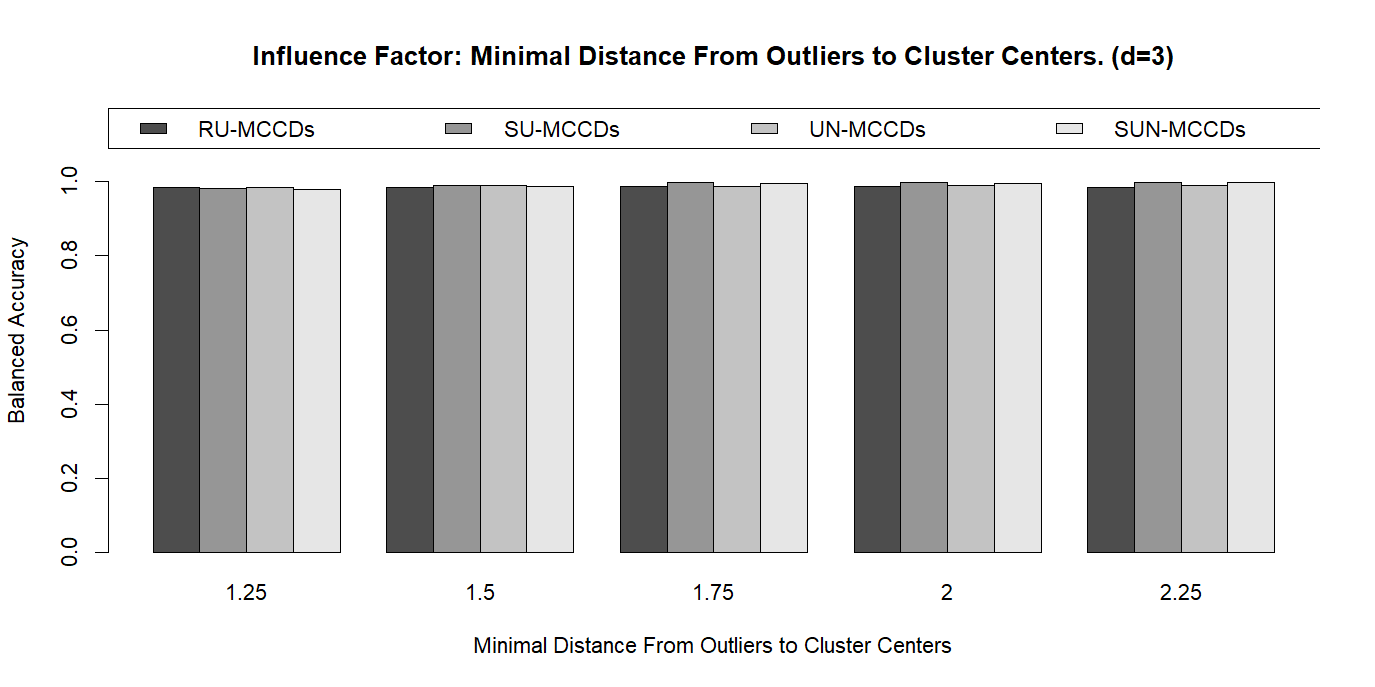}}
\subfigure[The $F_2$-scores when $d=3$.]{
\label{Barplot3_NOutlierDist_FS}
\includegraphics[width=0.45\textwidth]{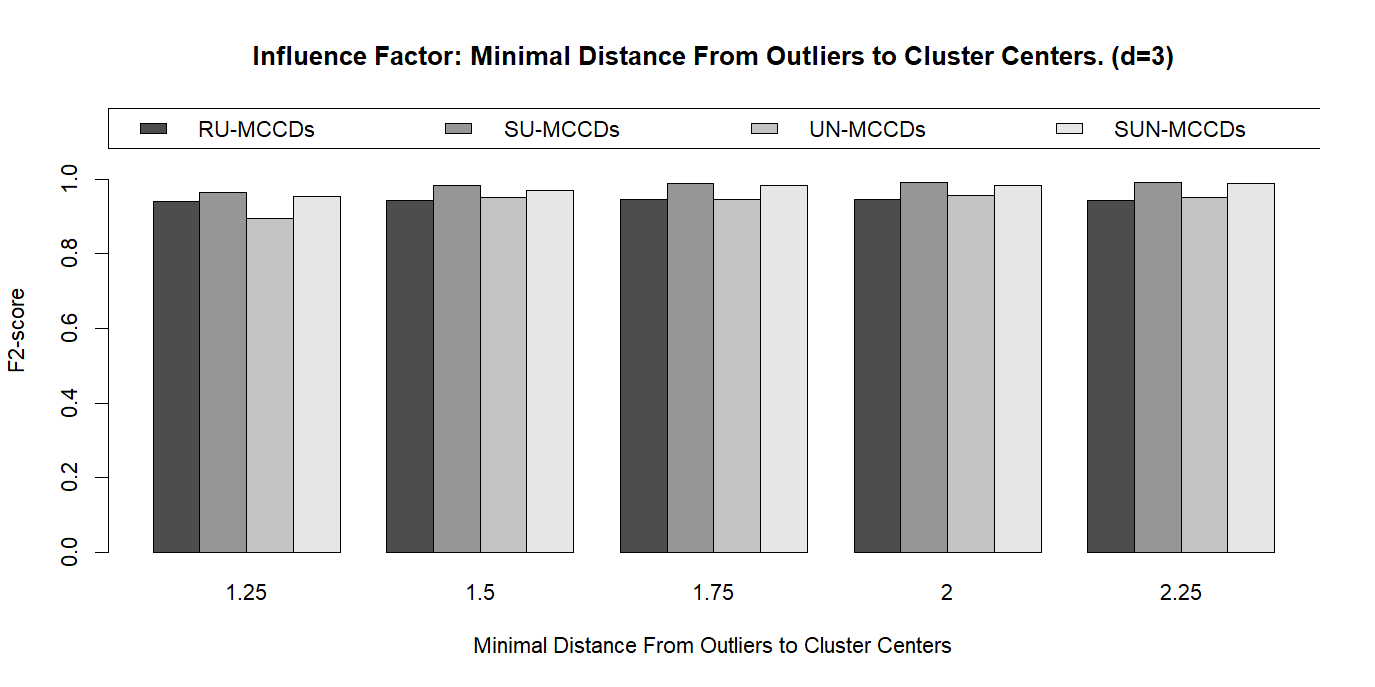}}

\subfigure[The BAs when $d=10$.]{
\label{Barplot10_NOutlierDist_BA}
\includegraphics[width=0.45\textwidth]{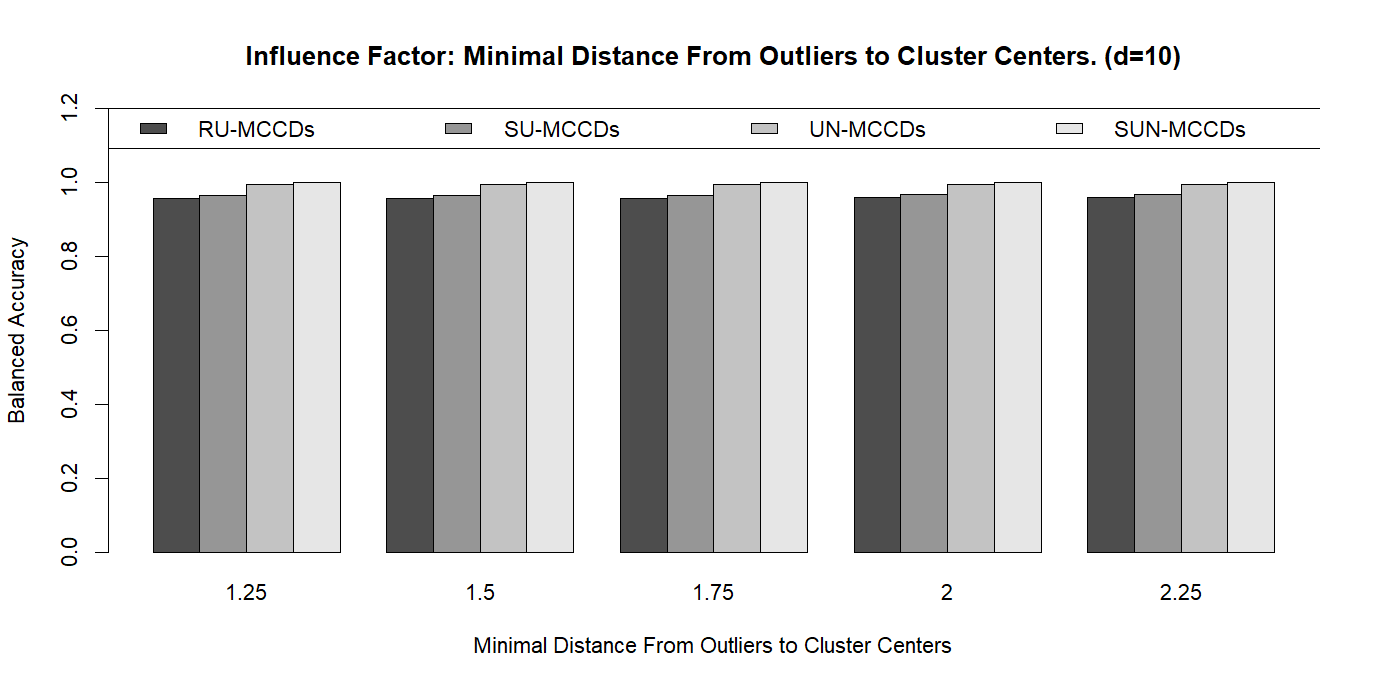}}
\subfigure[The $F_2$-scores when $d=10$.]{
\label{Barplot10_NOutlierDist FS}
\includegraphics[width=0.45\textwidth]{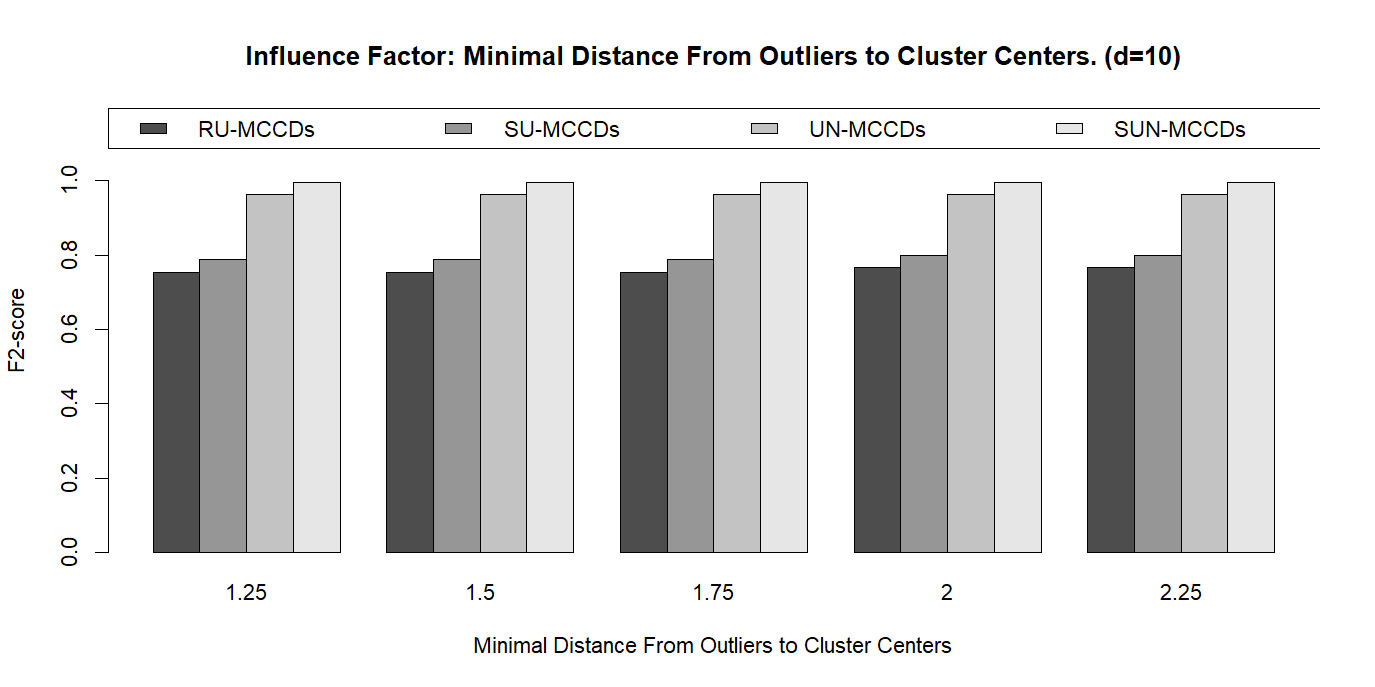}}
\caption{The barplots summarizing the performances of the CCD-based outlier detection algorithms as the minimal distance from outliers to any cluster centers increases (points within each clusters are uniformly distributed). (a) The BAs for $d=3$. (b) The $F_2$-scores for $d=3$. (c) The BAs for $d=10$. (d) The $F_2$-scores for $d=10$.}
\label{fig:Barplot_NOutlierDist}
\end{figure}

\begin{table}[htb]
  \centering
  \resizebox{0.7\columnwidth}{!}{\begin{tabular}{|c|c|c|c|c|c|c|c|c|c|c|c|}
    \hline
    \multicolumn{2}{|c|}{} & \multicolumn{10}{|c|}{Minimal Distances between Outliers and Cluster Centers} \\ \cline{3-12}

    \multicolumn{2}{|c|}{} & \multicolumn{2}{|c|}{$1.25$} & \multicolumn{2}{|c|}{$1.50$} & \multicolumn{2}{|c|}{$1.75$} & \multicolumn{2}{|c|}{$2.00$} & \multicolumn{2}{|c|}{$2.25$} \\ \cline{3-12}

    \multicolumn{2}{|c|}{} & TPR & TNR & TPR & TNR & TPR & TNR & TPR & TNR & TPR & TNR \\ \hline

    \multirow{4}*{$d=3$} & RU-MCCDs & 1.000 & 0.835 & 1.000 & 0.837 & 1.000 & 0.833 & 1.000 & 0.838 & 1.000 & 0.836 \\ \cline{2-12}
    & SU-MCCDs & 1.000 & 0.941 & 1.000 & 0.941 & 1.000 & 0.942 & 1.000 & 0.941 & 1.000 & 0.942 \\ \cline{2-12}
    & UN-MCCDs & 0.987 & 0.893 & 0.983 & 0.895 & 0.983 & 0.893 & 0.983 & 0.896 & 0.985 & 0.894 \\ \cline{2-12}
    & SUN-MCCDs & 0.999 & 0.957 & 1.000 & 0.957 & 1.000 & 0.959 & 1.000 & 0.956 & 1.000 & 0.959 \\ \cline{1-12}

    \multirow{4}*{$d=10$} & RU-MCCDs & 1.000 & 0.694 & 1.000 & 0.694 & 1.000 & 0.689 & 1.000 & 0.698 & 1.000 & 0.691 \\ \cline{2-12}
    & SU-MCCDs & 1.000 & 0.775 & 1.000 & 0.775 & 1.000 & 0.769 & 1.000 & 0.777 & 1.000 & 0.772 \\ \cline{2-12}
    & UN-MCCDs & 1.000 & 0.826 & 1.000 & 0.826 & 1.000 & 0.826 & 1.000 & 0.827 & 1.000 & 0.828 \\ \cline{2-12}
    & SUN-MCCDs & 1.000 & 0.948 & 1.000 & 0.948 & 1.000 & 0.948 & 1.000 & 0.947 & 1.000 & 0.949 \\ \cline{1-12}
  \end{tabular}}
  \caption{The TPRs and TNRs of the CCD-based algorithms as the minimal distance from outliers to any cluster centers increases from 1.25 to 2.25 (for simulations with Gaussian clusters).}
  \label{tab:Otl_Dis_G_Cls1}
\end{table}

\begin{table}[htb]
  \centering
  \resizebox{0.7\columnwidth}{!}{\begin{tabular}{|c|c|c|c|c|c|c|c|c|c|c|c|}
    \hline
    \multicolumn{2}{|c|}{} & \multicolumn{10}{|c|}{Minimal Distances between Outliers and Cluster Centers} \\ \cline{3-12}

    \multicolumn{2}{|c|}{} & \multicolumn{2}{|c|}{$1.25$} & \multicolumn{2}{|c|}{$1.50$} & \multicolumn{2}{|c|}{$1.75$} & \multicolumn{2}{|c|}{$2.00$} & \multicolumn{2}{|c|}{$2.25$} \\ \cline{3-12}

    \multicolumn{2}{|c|}{} & BA & $F_2$-score & BA & $F_2$-score & BA & $F_2$-score & BA & $F_2$-score & BA & $F_2$-score \\ \hline

    \multirow{4}*{$d=3$} & RU-MCCDs & 0.918 & 0.615 & 0.919 & 0.618 & 0.917 & 0.612 & 0.919 & 0.619 & 0.918 & 0.616 \\ \cline{2-12}
    & SU-MCCDs & 0.971 & 0.817 & 0.971 & 0.817 & 0.971 & 0.819 & 0.971 & 0.817 & 0.971 & 0.819 \\ \cline{2-12}
    & UN-MCCDs & 0.940 & 0.703 & 0.939 & 0.704 & 0.938 & 0.701 & 0.940 & 0.706 & 0.940 & 0.704 \\ \cline{2-12}
    & SUN-MCCDs & 0.978 & 0.859 & 0.979 & 0.860 & 0.980 & 0.865 & 0.978 & 0.857 & 0.980 & 0.865 \\ \cline{1-12}

    \multirow{4}*{$d=10$} & RU-MCCDs & 0.847 & 0.462 & 0.847 & 0.462 & 0.845 & 0.458 & 0.849 & 0.466 & 0.846 & 0.460 \\ \cline{2-12}
    & SU-MCCDs & 0.888 & 0.539 & 0.888 & 0.539 & 0.885 & 0.533 & 0.889 & 0.541 & 0.886 & 0.536 \\ \cline{2-12}
    & UN-MCCDs & 0.913 & 0.602 & 0.913 & 0.602 & 0.913 & 0.602 & 0.914 & 0.603 & 0.914 & 0.605 \\ \cline{2-12}
    & SUN-MCCDs & 0.974 & 0.835 & 0.974 & 0.835 & 0.974 & 0.835 & 0.974 & 0.832 & 0.975 & 0.838 \\ \cline{1-12}
  \end{tabular}}
  \caption{The BAs and $F_2$-scores of the CCD-based algorithms as the minimal distance from outliers to any cluster centers increases from 1.25 to 2.25 (for simulations with Gaussian clusters).}
  \label{tab:Otl_Dis_G_Cls2}
\end{table}

\begin{figure}[htb]
\centering
\subfigure[The BAs when $d=3$.]{
\label{Barplot3_GOutlierDist_BA}
\includegraphics[width=0.45\textwidth]{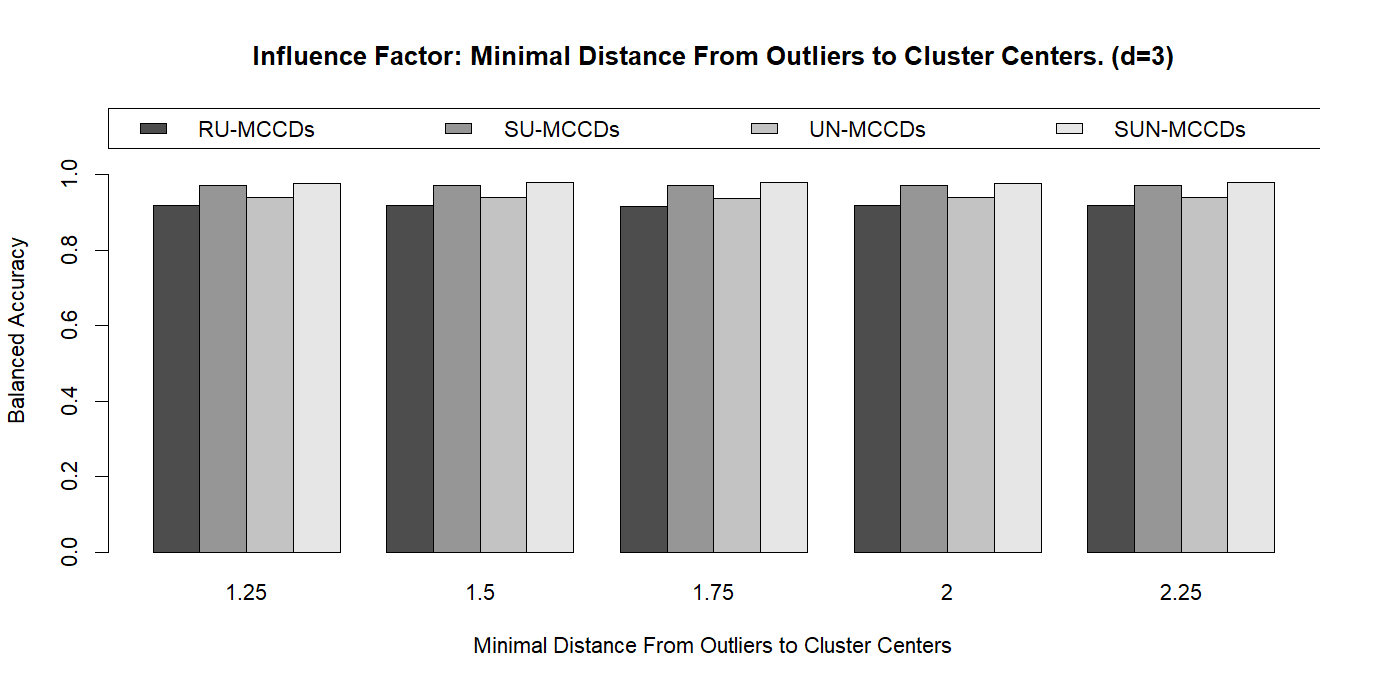}}
\subfigure[The $F_2$-scores when $d=3$.]{
\label{Barplot3_GOutlierDist_FS}
\includegraphics[width=0.45\textwidth]{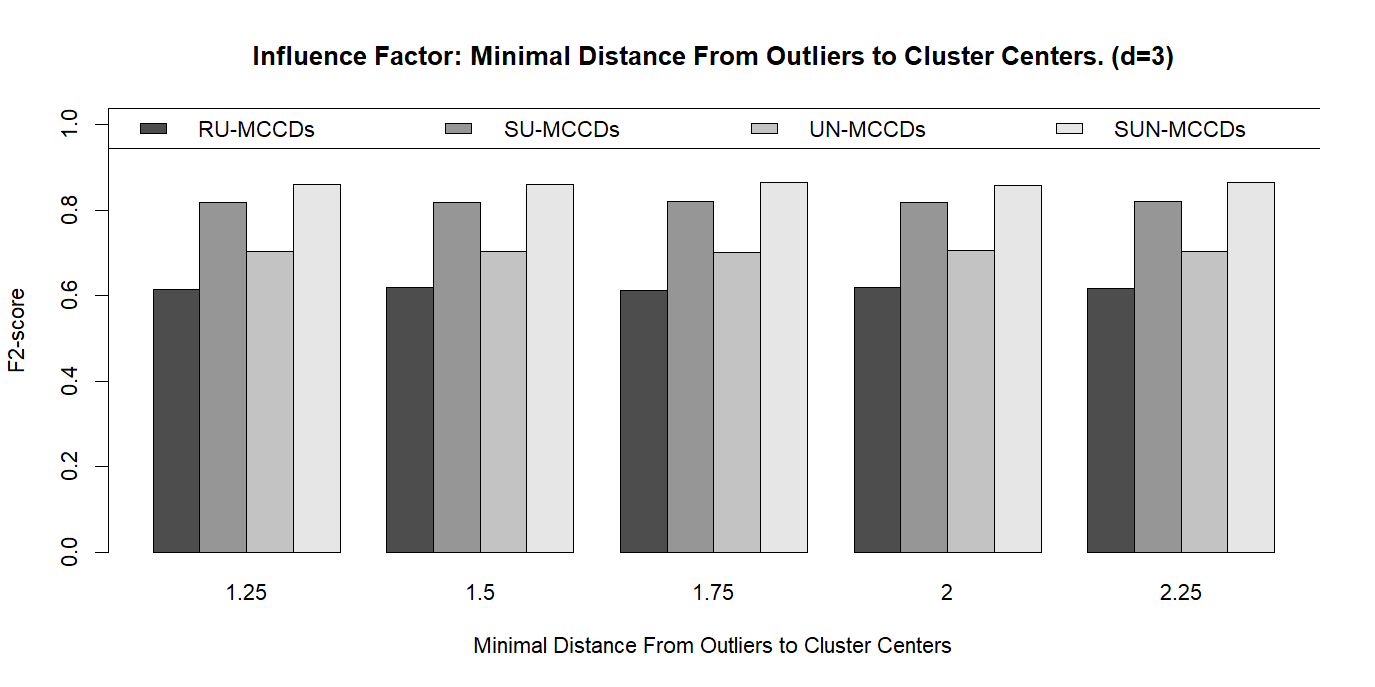}}

\subfigure[The BAs when $d=10$.]{
\label{Barplot10_GOutlierDist_BA}
\includegraphics[width=0.45\textwidth]{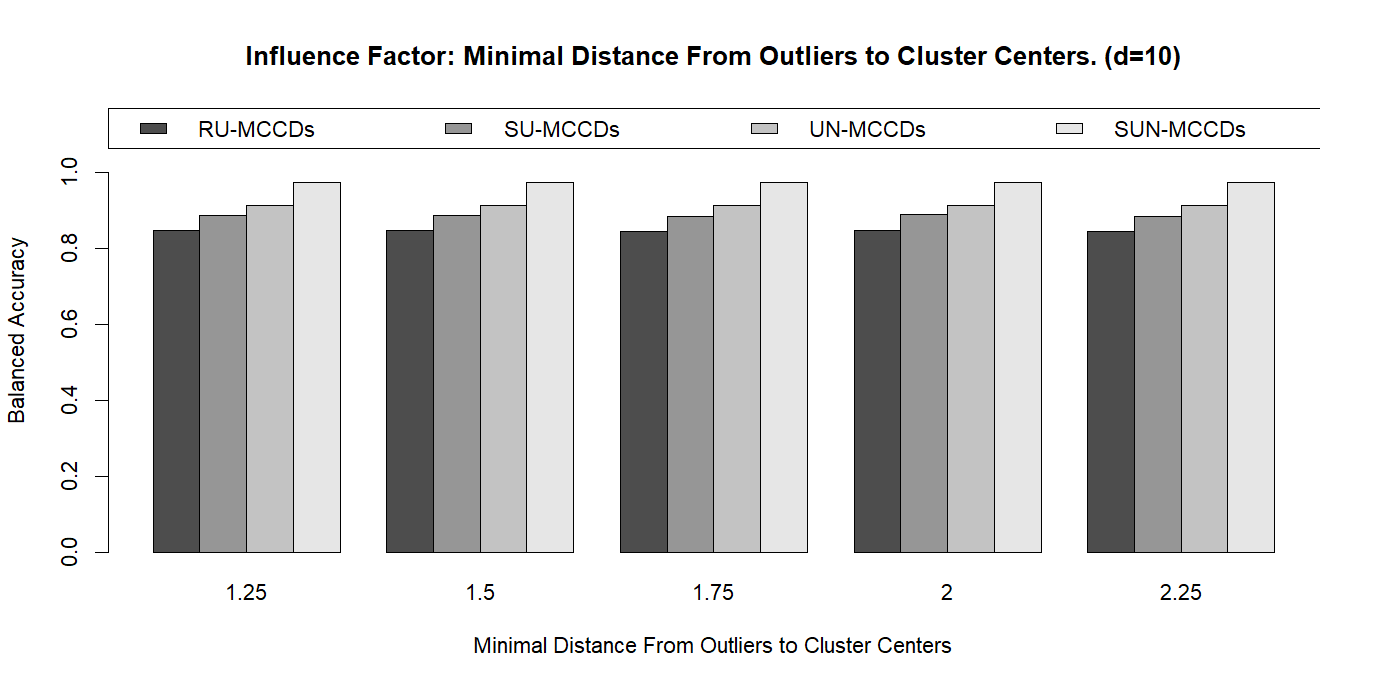}}
\subfigure[The $F_2$-scores when $d=10$.]{
\label{Barplot10_GOutlierDist FS}
\includegraphics[width=0.45\textwidth]{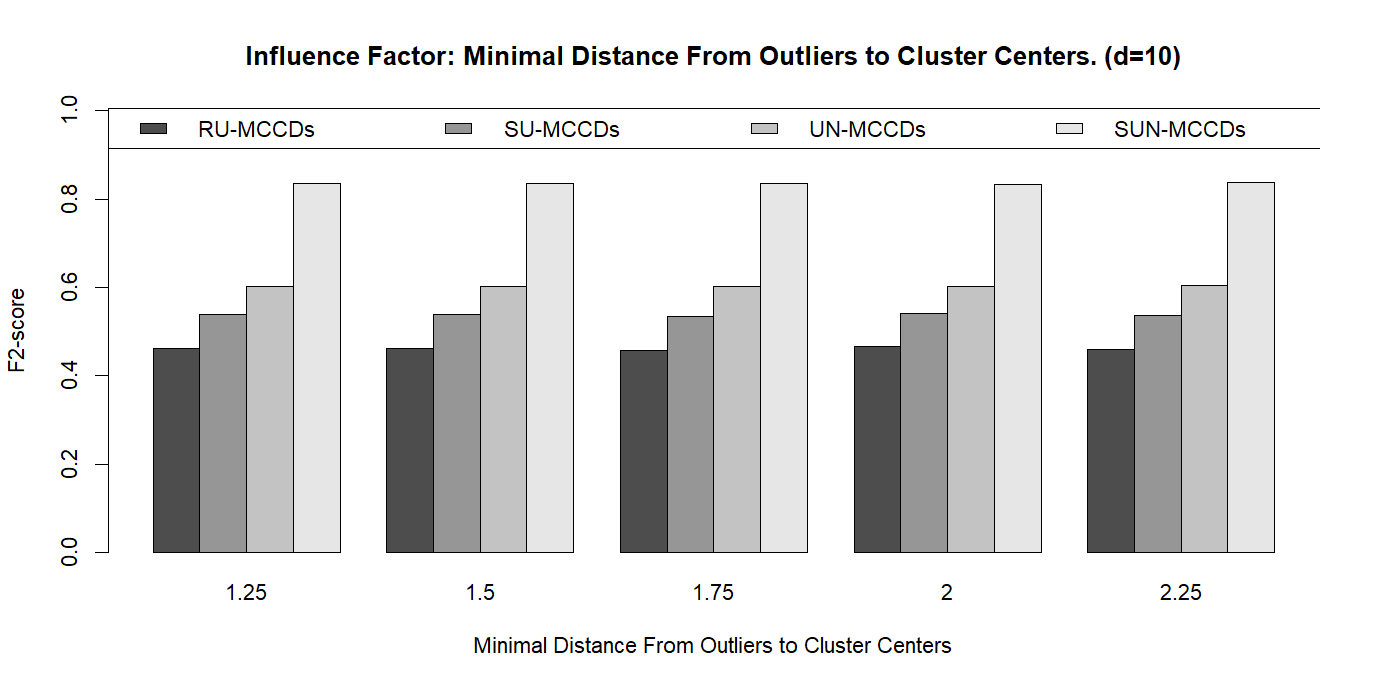}}
\caption{The barplots summarizing the performances of the CCD-based outlier detection algorithms as the minimal distance from outliers to any cluster centers increases (points within each clusters are (multivariate) normally distributed). (a) The BAs for $d=3$. (b) The $F_2$-scores for $d=3$. (c) The BAs for $d=10$. (d) The $F_2$-scores for $d=10$.}
\label{fig:Barplot_GOutlierDist}
\end{figure}

In the simulations with only uniform clusters, observe that when $d=3$ and the minimal distance is 1.25, the four algorithms yield TPRs of 0.979, 0.966, 0.980, and 0.962, slightly lower than those in other scenarios. It aligns with our expectations since a few outliers may fall into the range of regular clusters. Additionally, although in small margins, it is worth noting that the RU-MCCD and UN-MCCD algorithms achieve higher TPRs than the other two algorithms (0.979, 0.980 versus 0.966 and 0.962). Their different mechanisms can explain the reason. The two ``flexible" algorithms construct clusters using multiple covering balls. Consequently, the odds of outliers incorporated by multiple covering balls increase when they approach clusters. When $d=10$, the results remain promising even with a minimal distance of 1.25. All four algorithms seem unaffected by the minimal distance as long as it exceeds 1.5,  when the outlier group and clusters are separable.

With Gaussian clusters, the four CCD-based algorithms exhibit stable performance despite the varying distances between outliers and clusters (e.g., when $d=3$, the $F_2$-scores of the SUN-MCCD algorithm range from 0.948 to 0.949 when the minimal distance increases from 1.25 to 2.25). The reason is that all the algorithms determine the radius of each covering ball by SR-MCT; therefore, they cannot capture an entire Gaussian cluster, and some regular observations that are relatively far from the cluster center tend to be uncovered, especially under high dimensions. Thus, even when an outlier set is close to or overlaps with some regular observations near the border, they get labeled as positives (i.e., outliers). In other words, with Gaussian clusters, these algorithms identify most or all of the outliers, even if the outliers are close to regular observations at the cost of some false positives along the border of each cluster. Echoing the results of previous simulations, the ``cost" is much lower for the SU-MCCD and SUN-MCCD algorithms than their prototypes because these two ``flexible" algorithms generally end up with more than one covering ball for each cluster, which has better coverage for the regular observations.

\subsubsection{Varying The Distances Between Cluster Centers}

In this section, we investigate whether the distance between clusters affects the performance of the four CCD-based outlier detection algorithms. Previously, the first cluster center is $(\underbrace{3,...,3}_{d})$, and others are obtained by shifting three units from the first one in various directions. Therefore, these simulated clusters are always distinct and easy to separate. As a result, the four CCD-based algorithms could identify each cluster without difficulty in most cases, which is helpful for the subsequent steps in outlier detection. In this setting, we alter the difficulty level of clustering by changing the inter-cluster distances (the distances between pairs of points from different clusters). We keep the first cluster centered at $(\underbrace{3,...,3}_{d})$, but we vary its distances to other cluster centers from 1.5 to 4. Here is where things get interesting: when the distance is smaller than 2, the chance that two or more clusters overlap is high, making it challenging to figure out the correct number of clusters and their locations. We are curious to see if the increasing difficulty level of clustering will affect the accuracy of outlier detection. Some challenges include (1) capturing the outliers close to two or more overlapping clusters with different intensities and (2) dealing with the swapping problem when clusters with different intensities overlap, since some regular observations from low-intensity clusters could be located near high-intensity clusters or the overlapping area, which could lead to many false positives for some outlier detection algorithms. Similar to the previous simulations, all other irrelevant factors are fixed, and we only list the relevant parts below. Again, we present realizations of synthetic data sets with uniform clusters in 2-dimensional space (although the simulation experiments are conducted on 3 and 10-dimensional space) in Figure \ref{fig:Demo_2d_Dis_Otl} (for illustration proposes). It is not hard to see that when the distance is equal to 1.5 (Figure \ref{fig:Demo_2d_Dis_Otl} (a)), the three clusters are highly overlapping, and separating them from each other is a challenging task.

\begin{itemize}
  \item[\romannumeral1.] The centers of clusters are: $\bm{\mu_1} = (\underbrace{3,...,3}_{d})$, $\bm{\mu_2} = (3+s,\underbrace{3,...,3}_{d-1})$, and $\bm{\mu_3} = (3,3+s,\underbrace{3,...,3}_{d-2})$, where $s$ could be 1.5, 2, 2.5, 3.0, 3.5, and 4 (the study of focus in this section);
  \label{Cls_Dis_Setting}
\end{itemize}

\begin{figure}[htb]
  \centering
  \subfigure[1.5]{
  \includegraphics[width=0.30\textwidth]{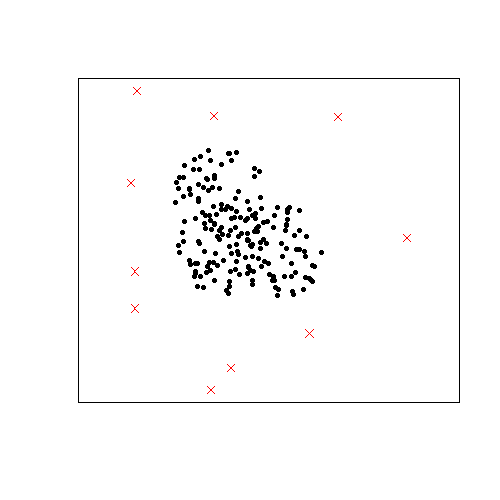}}
  \subfigure[2]{
  \includegraphics[width=0.30\textwidth]{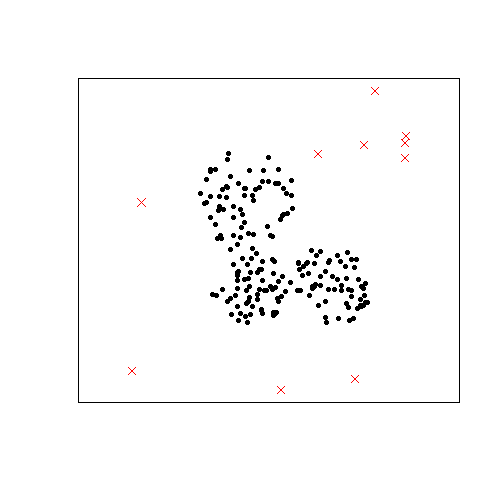}}
  \subfigure[2.5]{
  \includegraphics[width=0.30\textwidth]{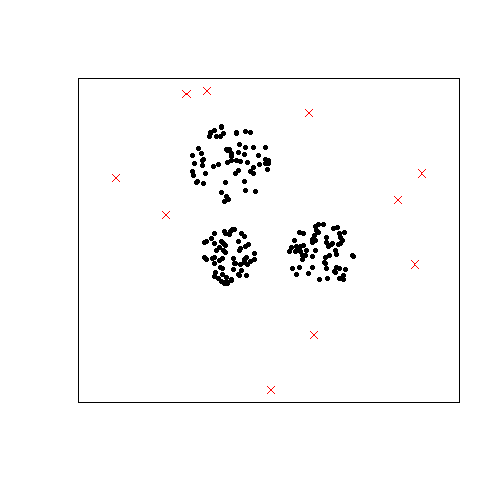}}
  \subfigure[3]{
  \includegraphics[width=0.30\textwidth]{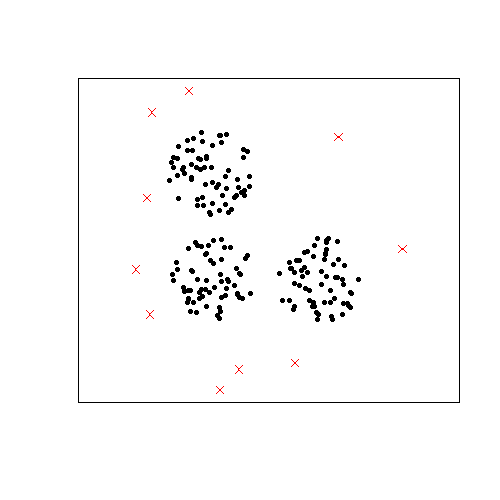}}
  \subfigure[3.5]{
  \includegraphics[width=0.30\textwidth]{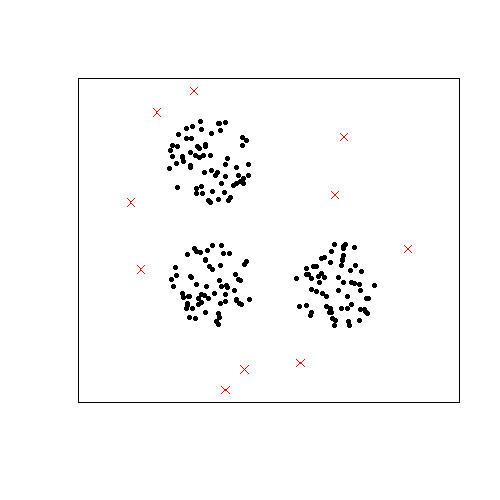}}
  \subfigure[4]{
  \includegraphics[width=0.30\textwidth]{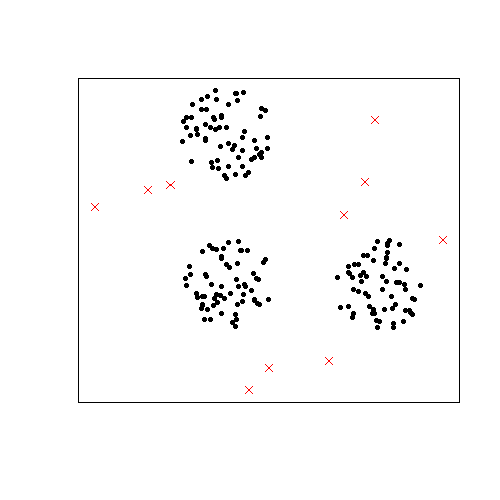}}
  \caption{Some realizations (with uniform clusters) of the simulation setting in Section \ref{Cls_Dis_Setting}, the distance between cluster centers increases from 1.5 to 4. Red crosses are outliers, black points are regular observations. The distance between clusters are indicated below the sub-figures.}
  \label{fig:Demo_2d_Dis_cls}
\end{figure}

We summarize the results we obtained from Tables \ref{tab:Cls_Dis_U_Cls1} to \ref{tab:Cls_Dis_G_Cls2}. The same as before, the BAs and $F_2$-scores (Tables \ref{tab:Cls_Dis_U_Cls1} and \ref{tab:Cls_Dis_G_Cls1}) are also presented as barplots in Figures \ref{fig:Barplot_NCluster_Dist} and \ref{fig:Barplot_GCluster_Dist}, respectively.

\begin{table}[htb]
  \centering
  \resizebox{0.7\columnwidth}{!}{\begin{tabular}{|c|c|c|c|c|c|c|c|c|c|c|c|c|c|}
    \hline
    \multicolumn{2}{|c|}{} & \multicolumn{12}{|c|}{Distances Between Cluster Centers} \\ \cline{3-14}

    \multicolumn{2}{|c|}{} & \multicolumn{2}{|c|}{$1.5$} & \multicolumn{2}{|c|}{$2$} & \multicolumn{2}{|c|}{$2.5$} & \multicolumn{2}{|c|}{$3$} & \multicolumn{2}{|c|}{$3.5$} & \multicolumn{2}{|c|}{$4$} \\ \cline{3-14}

    \multicolumn{2}{|c|}{} & TPR & TNR & TPR & TNR & TPR & TNR & TPR & TNR & TPR & TNR & TPR & TNR \\ \hline

    \multirow{4}*{$d=3$} & RU-MCCDs & 0.987 & 0.969 & 0.961 & 0.982 & 0.970 & 0.987 & 0.985 & 0.988 & 0.985 & 0.988 & 0.986 & 0.988 \\ \cline{2-14}
    & SU-MCCDs & 1.000 & 0.984 & 0.999 & 0.992 & 0.999 & 0.997 & 1.000 & 0.998 & 0.999 & 0.999 & 0.995 & 0.998 \\ \cline{2-14}
    & UN-MCCDs & 0.991 & 0.972 & 0.984 & 0.982 & 0.983 & 0.987 & 0.991 & 0.990 & 0.993 & 0.989 & 0.991 & 0.990 \\ \cline{2-14}
    & SUN-MCCDs & 0.999 & 0.985 & 0.998 & 0.991 & 0.999 & 0.996 & 0.998 & 0.998 & 0.998 & 0.998 & 0.997 & 0.998 \\ \cline{1-14}

    \multirow{4}*{$d=10$} & RU-MCCDs & 1.000 & 0.922 & 1.000 & 0.933 & 1.000 & 0.928 & 1.000 & 0.920 & 1.000 & 0.920 & 1.000 & 0.914 \\ \cline{2-14}
    & SU-MCCDs & 1.000 & 0.941 & 1.000 & 0.947 & 1.000 & 0.940 & 1.000 & 0.939 & 1.000 & 0.929 & 1.000 & 0.925 \\ \cline{2-14}
    & UN-MCCDs & 1.000 & 0.965 & 1.000 & 0.987 & 1.000 & 0.990 & 1.000 & 0.990 & 1.000 & 0.990 & 1.000 & 0.991 \\ \cline{2-14}
    & SUN-MCCDs & 1.000 & 0.982 & 1.000 & 0.994 & 1.000 & 0.998 & 1.000 & 0.999 & 1.000 & 0.999 & 1.000 & 0.999 \\ \cline{1-14}
  \end{tabular}}
  \caption{The TPRs and TNRs of the CCD-based algorithms as the distance between cluster centers increases from 1.5 to 4 (for simulations with uniform clusters).}
  \label{tab:Cls_Dis_U_Cls1}
\end{table}

\begin{table}[htb]
  \centering
  \resizebox{0.7\columnwidth}{!}{\begin{tabular}{|c|c|c|c|c|c|c|c|c|c|c|c|c|c|}
    \hline
    \multicolumn{2}{|c|}{} & \multicolumn{12}{|c|}{Distances Between Cluster Centers} \\ \cline{3-14}

    \multicolumn{2}{|c|}{} & \multicolumn{2}{|c|}{$1.5$} & \multicolumn{2}{|c|}{$2$} & \multicolumn{2}{|c|}{$2.5$} & \multicolumn{2}{|c|}{$3$} & \multicolumn{2}{|c|}{$3.5$} & \multicolumn{2}{|c|}{$4$} \\ \cline{3-14}

    \multicolumn{2}{|c|}{} & BA & $F_2$-score & BA & $F_2$-score & BA & $F_2$-score & BA & $F_2$-score & BA & $F_2$-score & BA & $F_2$-score \\ \hline

    \multirow{4}*{$d=3$} & RU-MCCDs & 0.978 & 0.885 & 0.972 & 0.906 & 0.979 & 0.930 & 0.987 & 0.945 & 0.987 & 0.945 & 0.987 & 0.946 \\ \cline{2-14}
    & SU-MCCDs & 0.992 & 0.943 & 0.996 & 0.970 & 0.998 & 0.988 & 0.999 & 0.992 & 0.999 & 0.995 & 0.997 & 0.988 \\ \cline{2-14}
    & UN-MCCDs & 0.982 & 0.897 & 0.983 & 0.924 & 0.985 & 0.940 & 0.991 & 0.956 & 0.991 & 0.954 & 0.991 & 0.956 \\ \cline{2-14}
    & SUN-MCCDs & 0.992 & 0.945 & 0.995 & 0.965 & 0.998 & 0.984 & 0.998 & 0.991 & 0.998 & 0.991 & 0.998 & 0.990 \\ \cline{1-14}

    \multirow{4}*{$d=10$} & RU-MCCDs & 0.961 & 0.771 & 0.967 & 0.797 & 0.964 & 0.785 & 0.960 & 0.767 & 0.960 & 0.767 & 0.957 & 0.754 \\ \cline{2-14}
    & SU-MCCDs & 0.971 & 0.817 & 0.974 & 0.832 & 0.970 & 0.814 & 0.970 & 0.812 & 0.965 & 0.788 & 0.963 & 0.778 \\ \cline{2-14}
    & UN-MCCDs & 0.983 & 0.883 & 0.994 & 0.953 & 0.995 & 0.963 & 0.995 & 0.963 & 0.995 & 0.963 & 0.996 & 0.967 \\ \cline{2-14}
    & SUN-MCCDs & 0.991 & 0.936 & 0.997 & 0.978 & 0.999 & 0.992 & 1.000 & 0.996 & 1.000 & 0.996 & 1.000 & 0.996 \\ \cline{1-14}
  \end{tabular}}
  \caption{The BAs and $F_2$-scores of the CCD-based algorithms as the distance between cluster centers increases from 1.5 to 4 (for simulations with uniform clusters).}
  \label{tab:Cls_Dis_U_Cls2}
\end{table}

\begin{figure}[htb]
\centering
\subfigure[The BAs when $d=3$.]{
\label{Barplot3_NCluster_Dist_BA}
\includegraphics[width=0.45\textwidth]{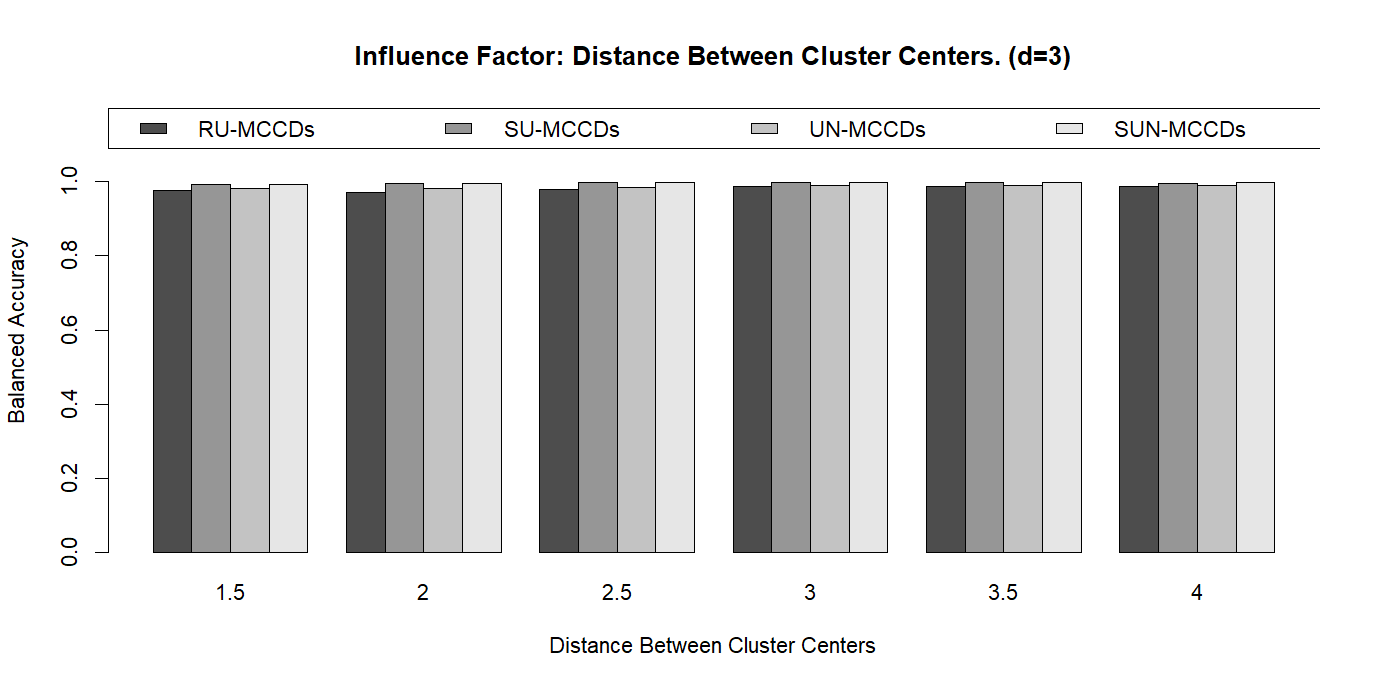}}
\subfigure[The $F_2$-scores when $d=3$.]{
\label{Barplot3_NCluster_Dist_FS}
\includegraphics[width=0.45\textwidth]{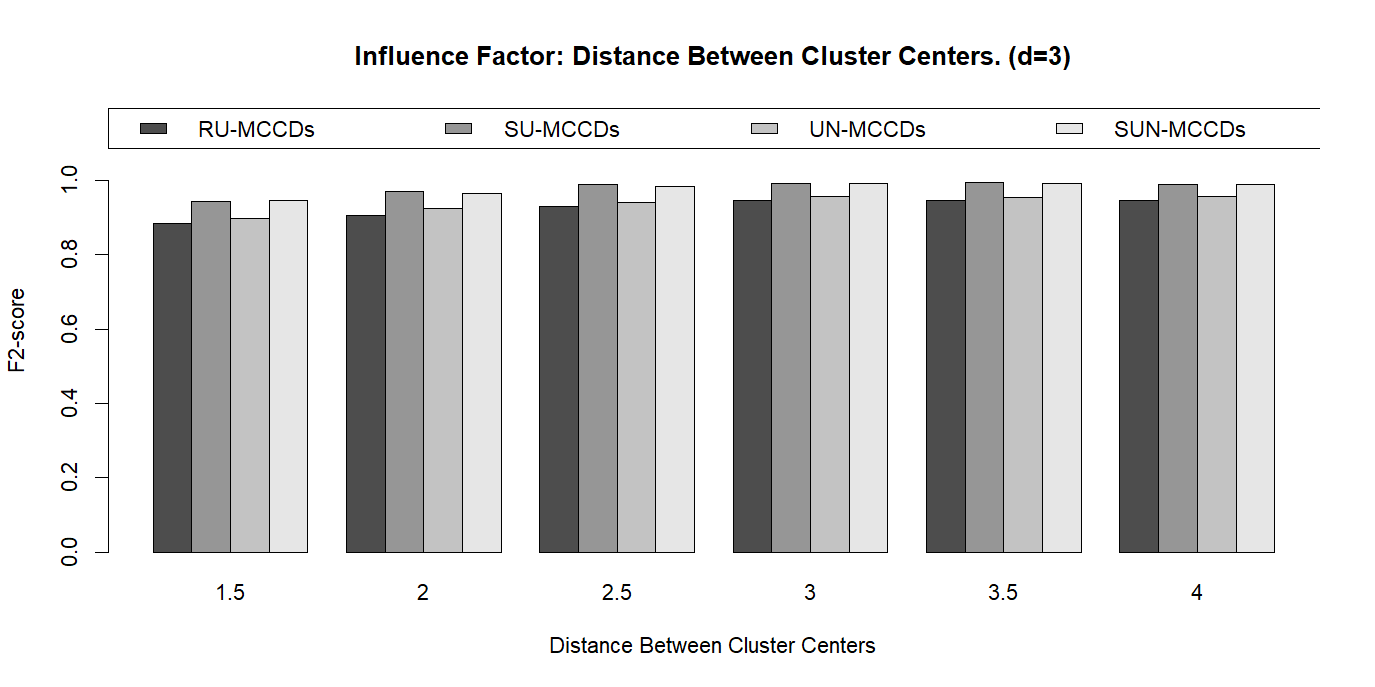}}

\subfigure[The BAs when $d=10$.]{
\label{Barplot10_NCluster_Dist_BA}
\includegraphics[width=0.45\textwidth]{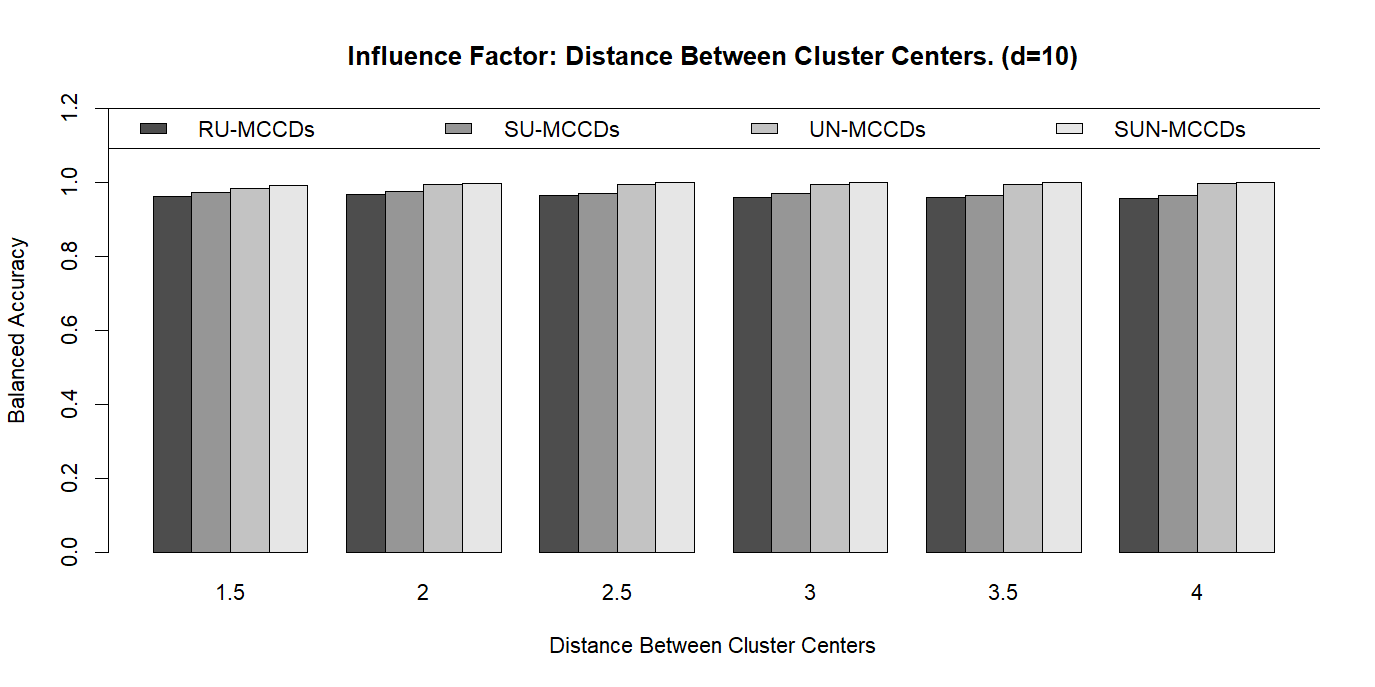}}
\subfigure[The $F_2$-scores when $d=10$.]{
\label{Barplot10_NCluster_Dist FS}
\includegraphics[width=0.45\textwidth]{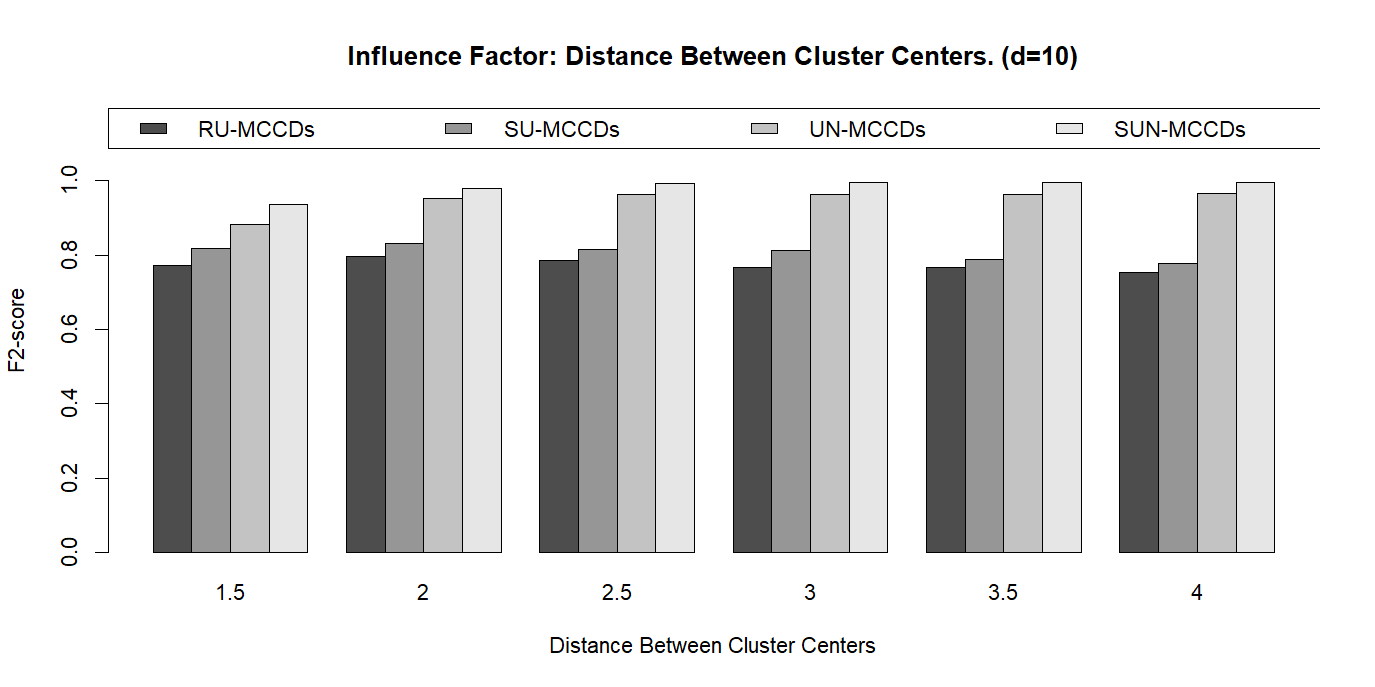}}
\caption{The barplots summarizing the performances of the CCD-based outlier detection algorithms as the distance between cluster centers increases (points within each clusters are uniformly distributed). (a) The BAs for $d=3$. (b) The $F_2$-scores for $d=3$. (c) The BAs for $d=10$. (d) The $F_2$-scores for $d=10$.}
\label{fig:Barplot_NCluster_Dist}
\end{figure}

\begin{table}[htb]
  \centering
  \resizebox{0.7\columnwidth}{!}{\begin{tabular}{|c|c|c|c|c|c|c|c|c|c|c|c|c|c|}
    \hline
    \multicolumn{2}{|c|}{} & \multicolumn{12}{|c|}{Distances Between Cluster Centers} \\ \cline{3-14}

    \multicolumn{2}{|c|}{} & \multicolumn{2}{|c|}{$1.5$} & \multicolumn{2}{|c|}{$2$} & \multicolumn{2}{|c|}{$2.5$} & \multicolumn{2}{|c|}{$3$} & \multicolumn{2}{|c|}{$3.5$} & \multicolumn{2}{|c|}{$4$} \\ \cline{3-14}

    \multicolumn{2}{|c|}{} & TPR & TNR & TPR & TNR & TPR & TNR & TPR & TNR & TPR & TNR & TPR & TNR \\ \hline

    \multirow{4}*{$d=3$} & RU-MCCDs & 1.000 & 0.849 & 1.000 & 0.835 & 1.000 & 0.835 & 1.000 & 0.833 & 1.000 & 0.834 & 1.000 & 0.836 \\ \cline{2-14}
    & SU-MCCDs & 1.000 & 0.948 & 1.000 & 0.942 & 1.000 & 0.941 & 1.000 & 0.944 & 1.000 & 0.945 & 1.000 & 0.941 \\ \cline{2-14}
    & UN-MCCDs & 0.961 & 0.901 & 0.983 & 0.896 & 0.988 & 0.895 & 0.997 & 0.890 & 0.999 & 0.895 & 0.999 & 0.894 \\ \cline{2-14}
    & SUN-MCCDs & 0.999 & 0.961 & 1.000 & 0.959 & 1.000 & 0.959 & 1.000 & 0.956 & 1.000 & 0.959 & 1.000 & 0.958 \\ \cline{1-14}

    \multirow{4}*{$d=10$} & RU-MCCDs & 1.000 & 0.687 & 1.000 & 0.691 & 1.000 & 0.698 & 1.000 & 0.697 & 1.000 & 0.690 & 1.000 & 0.690 \\ \cline{2-14}
    & SU-MCCDs & 1.000 & 0.769 & 1.000 & 0.771 & 1.000 & 0.777 & 1.000 & 0.775 & 1.000 & 0.780 & 1.000 & 0.773 \\ \cline{2-14}
    & UN-MCCDs & 1.000 & 0.817 & 1.000 & 0.827 & 1.000 & 0.829 & 1.000 & 0.829 & 1.000 & 0.829 & 1.000 & 0.829 \\ \cline{2-14}
    & SUN-MCCDs & 1.000 & 0.939 & 1.000 & 0.948 & 1.000 & 0.947 & 1.000 & 0.947 & 1.000 & 0.947 & 1.000 & 0.947 \\ \cline{1-14}
  \end{tabular}}
  \caption{The TPRs and TNRs of the CCD-based algorithms as the distance between cluster centers increases from 1.5 to 4 (for simulations with Gaussian clusters).}
  \label{tab:Cls_Dis_G_Cls1}
\end{table}

\begin{table}[htb]
  \centering
  \resizebox{0.7\columnwidth}{!}{\begin{tabular}{|c|c|c|c|c|c|c|c|c|c|c|c|c|c|}
    \hline
    \multicolumn{2}{|c|}{} & \multicolumn{12}{|c|}{Distances Between Cluster Centers} \\ \cline{3-14}

    \multicolumn{2}{|c|}{} & \multicolumn{2}{|c|}{$1.5$} & \multicolumn{2}{|c|}{$2$} & \multicolumn{2}{|c|}{$2.5$} & \multicolumn{2}{|c|}{$3$} & \multicolumn{2}{|c|}{$3.5$} & \multicolumn{2}{|c|}{$4$} \\ \cline{3-14}

    \multicolumn{2}{|c|}{} & BA & $F_2$-score & BA & $F_2$-score & BA & $F_2$-score & BA & $F_2$-score & BA & $F_2$-score & BA & $F_2$-score \\ \hline

    \multirow{4}*{$d=3$} & RU-MCCDs & 0.925 & 0.635 & 0.918 & 0.615 & 0.918 & 0.615 & 0.917 & 0.612 & 0.917 & 0.613 & 0.918 & 0.616 \\ \cline{2-14}
    & SU-MCCDs & 0.974 & 0.835 & 0.971 & 0.819 & 0.971 & 0.817 & 0.972 & 0.825 & 0.973 & 0.827 & 0.971 & 0.817 \\ \cline{2-14}
    & UN-MCCDs & 0.931 & 0.702 & 0.940 & 0.706 & 0.942 & 0.707 & 0.944 & 0.703 & 0.947 & 0.714 & 0.947 & 0.712 \\ \cline{2-14}
    & SUN-MCCDs & 0.980 & 0.870 & 0.980 & 0.865 & 0.980 & 0.865 & 0.978 & 0.857 & 0.980 & 0.865 & 0.979 & 0.862 \\ \cline{1-14}

    \multirow{4}*{$d=10$} & RU-MCCDs & 0.844 & 0.457 & 0.846 & 0.460 & 0.849 & 0.466 & 0.849 & 0.465 & 0.845 & 0.459 & 0.845 & 0.459 \\ \cline{2-14}
    & SU-MCCDs & 0.885 & 0.533 & 0.886 & 0.535 & 0.889 & 0.541 & 0.888 & 0.539 & 0.890 & 0.545 & 0.887 & 0.537 \\ \cline{2-14}
    & UN-MCCDs & 0.909 & 0.590 & 0.914 & 0.603 & 0.915 & 0.606 & 0.915 & 0.606 & 0.915 & 0.606 & 0.915 & 0.606 \\ \cline{2-14}
    & SUN-MCCDs & 0.970 & 0.812 & 0.974 & 0.835 & 0.974 & 0.832 & 0.974 & 0.832 & 0.974 & 0.832 & 0.974 & 0.832 \\ \cline{1-14}
  \end{tabular}}
  \caption{The BAs and $F_2$-scores of the CCD-based algorithms as the distance between cluster centers increases from 1.5 to 4 (for simulations with Gaussian clusters).}
  \label{tab:Cls_Dis_G_Cls2}
\end{table}

\begin{figure}[htb]
\centering
\subfigure[The BAs when $d=3$.]{
\label{Barplot3_GCluster_Dist_BA}
\includegraphics[width=0.45\textwidth]{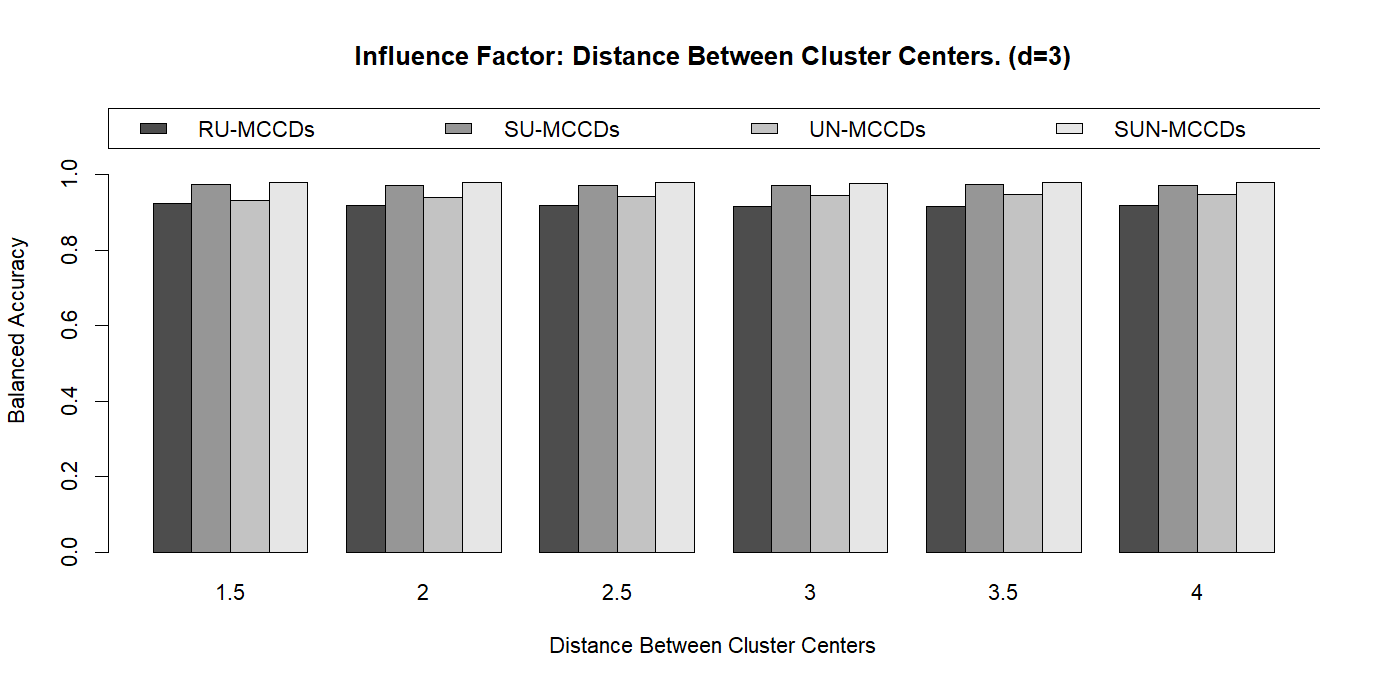}}
\subfigure[The $F_2$-scores when $d=3$.]{
\label{Barplot3_GCluster_Dist_FS}
\includegraphics[width=0.45\textwidth]{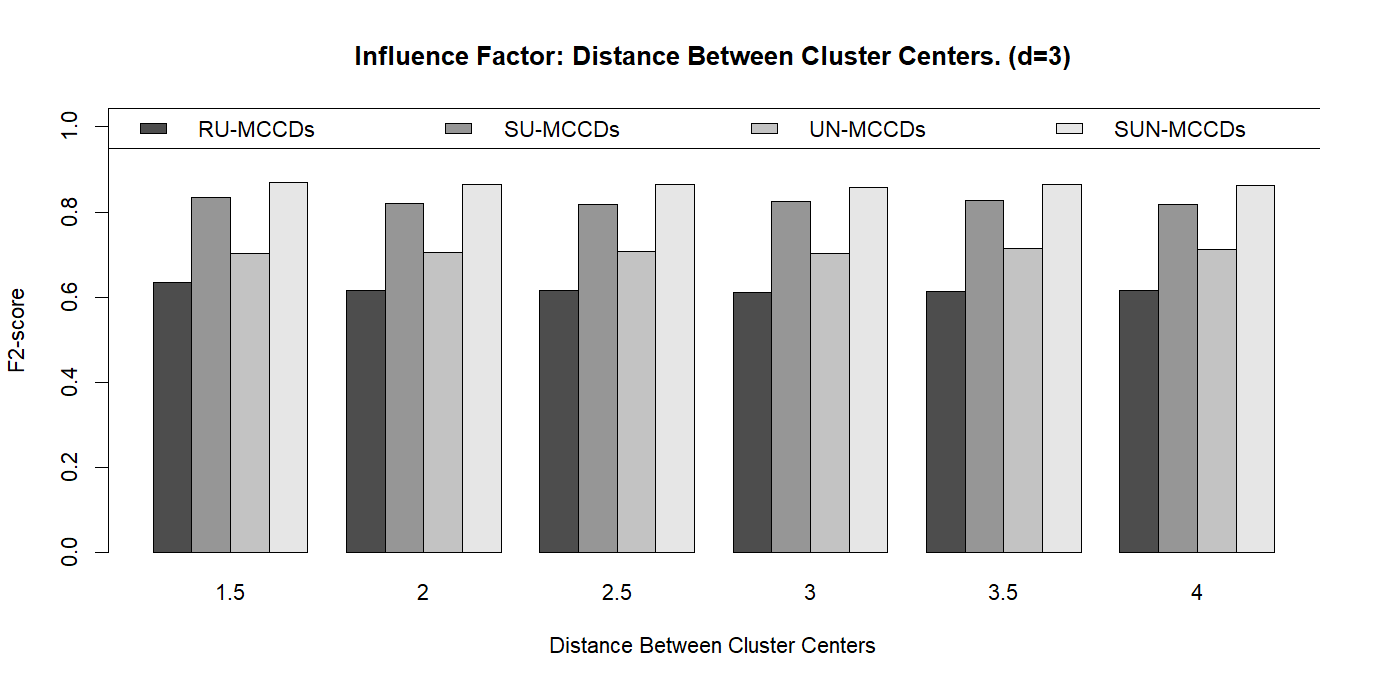}}

\subfigure[The BAs when $d=10$.]{
\label{Barplot10_GCluster_Dist_BA}
\includegraphics[width=0.45\textwidth]{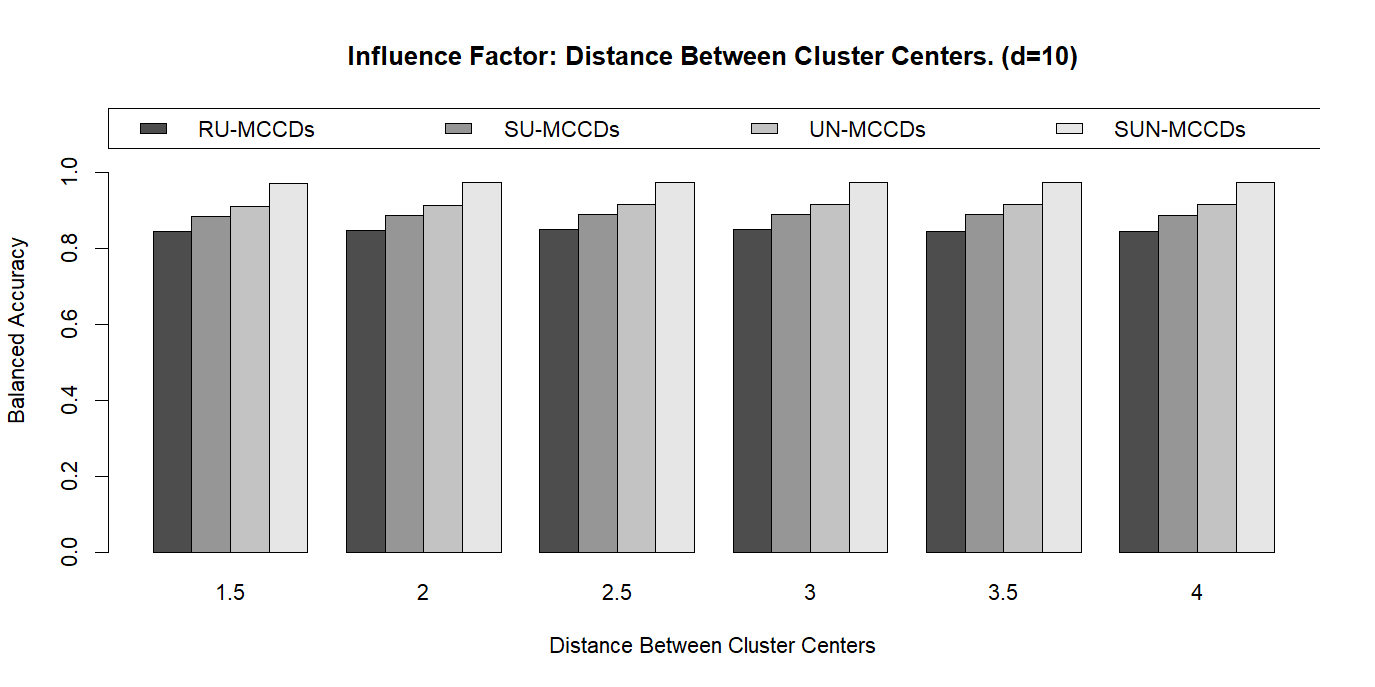}}
\subfigure[The $F_2$-scores when $d=10$.]{
\label{Barplot10_GCluster_Dist FS}
\includegraphics[width=0.45\textwidth]{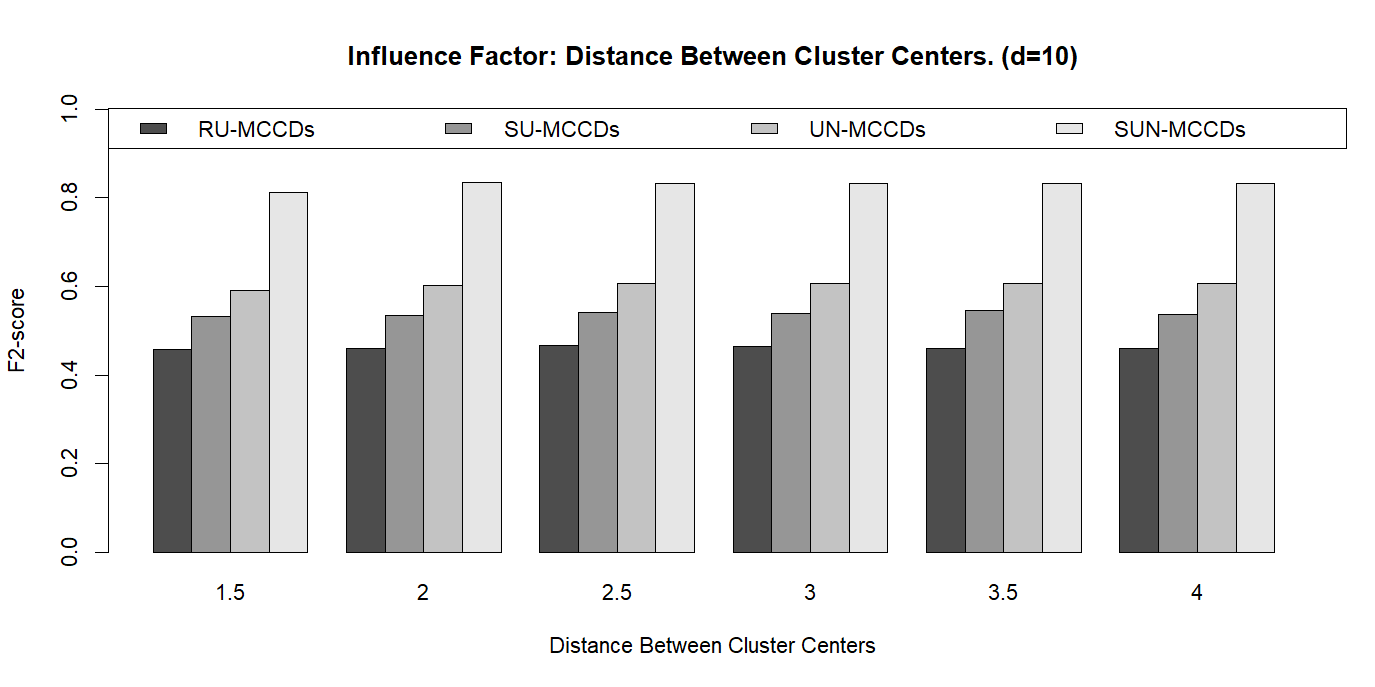}}
\caption{The barplots summarizing the performances of the CCD-based outlier detection algorithms as the distance between cluster centers increases (points within each clusters are (multivariate) normally distributed). (a) The BAs for $d=3$. (b) The $F_2$-scores for $d=3$. (c) The BAs for $d=10$. (d) The $F_2$-scores for $d=10$.}
\label{fig:Barplot_GCluster_Dist}
\end{figure}

Recall that when clusters with different intensities overlap (the inter-cluster center distance $s\leq2$), the challenges include identifying the outliers near overlapping clusters with different intensities and addressing the swapping problem. Thus, we compare the performance of these algorithms under the simulation settings when $s\leq2$.

Firstly, we explore the simulations with uniform clusters. When $d=3$, all four algorithms address the two challenges effectively. The SU-MCCD and SUN-MCCD algorithms exhibit stable behavior regardless of the cluster distances. However, the TPRs and TNRs of the RU-MCCD and UN-MCCD algorithms are slightly lower when $s\leq2$, compared to the other cases where clusters are distinct. When increasing the number of dimensions to 10, all the algorithms become insensitive to cluster distances, even when clusters overlap. For example, the $F_2$-scores of the RU-MCCD algorithm are stable (0.771, 0.797, 0.785, 0.767, 0.767, and 0.754), although they lag behind other algorithms.

Then, we consider the simulation settings with Gaussian clusters. It is interesting to see that the cluster distance has minimal influence on the performance, no matter how close the simulated clusters are. This could be explained as follows: the two challenges we discussed at the beginning of this section exist for Gaussian clusters even when they do not overlap because outlier and regular points can be close due to the wide span of Gaussian clusters. Similar to the previous simulations, the two ``flexible" algorithms perform better than the others when $d=3$, and the SUN-MCCD algorithms deliver the best results and outperform other algorithms by a large gap when $d=10$.

\subsubsection{Varying the Noise Level of Gaussian Clusters}

The second last factor to study is the noise level for Gaussian clusters. Therefore, this simulations are conducted only on data sets with Gaussian clusters. In the previous study, ``noise" is defined as the points close to the clusters, typically exhibiting much lower vicinity intensity than the observations deep in the clusters. In the previous work, we constructed the support with a radius randomly chosen between 0.7 and 1.3 for a Gaussian cluster. We tune the covariance such that approximately $1\%$ of the regular observations fell beyond the desired support and were thus perceived as noise. In other words, each support is a $99^{th}$ percentile contour of an uncorrelated Gaussian density. In the current setting, without changing the range of the radii, we conduct simulations with the noise level increasing from $1\%$ to $10\%$. Different noise levels can be achieved by adjusting the scale of the covariance matrix. Once the radius of the support is known, the desired scale can be obtained via a $\chi^2_d$ distribution. All the other factors remain consistent with previous simulations. Some realizations in a 2-dimensional space are presented in Figure \ref{fig:Demo_2d_Noise}. Observe that the Gaussian clusters have a wider span as the noise level increases, and the noise and outliers get much closer. Therefore, we expect the severity level of the swamping problem to rise incrementally, and we are particularly interested in the behaviour of all four CCD-based algorithms under these conditions.

\begin{itemize}
  \item[\romannumeral1.] We only conduct the simulations with Gaussian clusters since we study the noise level in this section.
  \item[\romannumeral2.] The noise level of each Gaussian cluster is set to $1\%$, $3\%$, $5\%$, $7\%$, and $10\%$ (the study of focus in this section).
  \label{sec:Noise_Setting}
\end{itemize}

\begin{figure}[htb]
  \centering
  \subfigure[$1\%$]{
  \includegraphics[width=0.30\textwidth]{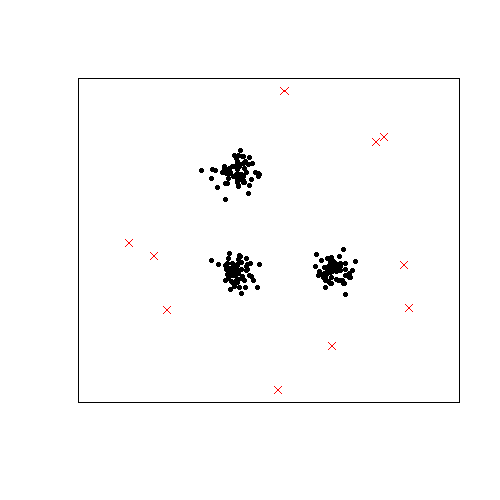}}
  \subfigure[$3\%$]{
  \includegraphics[width=0.30\textwidth]{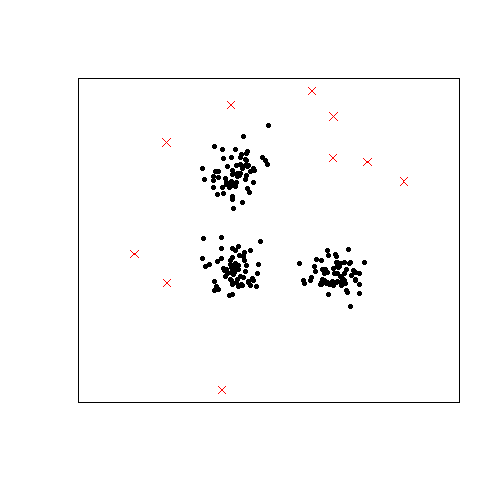}}
  \subfigure[$5\%$]{
  \includegraphics[width=0.30\textwidth]{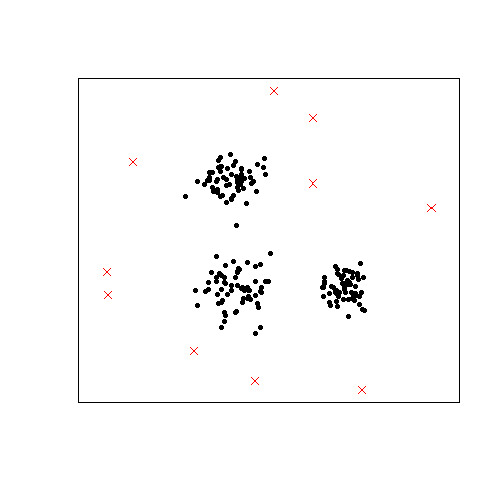}}
  \subfigure[$7\%$]{
  \includegraphics[width=0.30\textwidth]{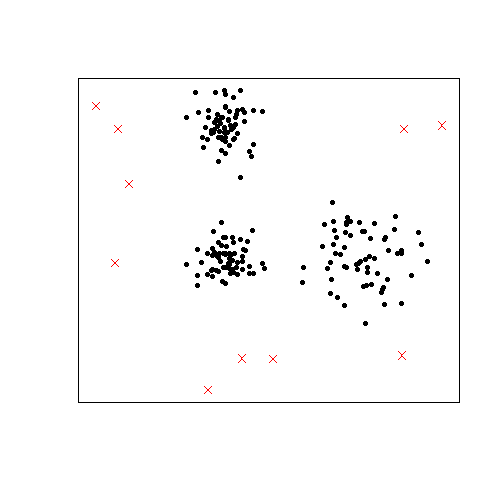}}
  \subfigure[$10\%$]{
  \includegraphics[width=0.30\textwidth]{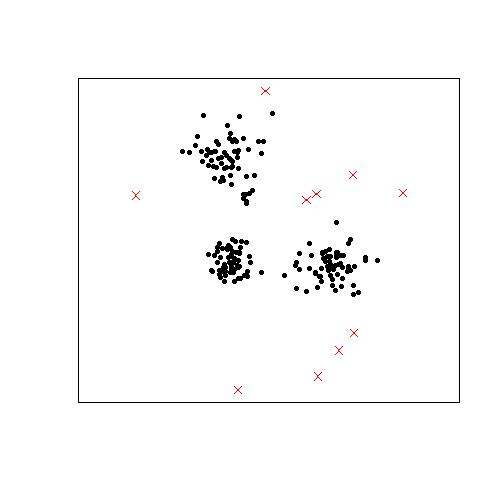}}
  \caption{Some realizations of the simulation setting in Section \ref{sec:Noise_Setting}, the noise level of Gaussian cluster centers increases from $1\%$ to $10\%$. Red crosses are outliers, black points are regular observations. The noise levels are indicated below each sub-figure.}
  \label{fig:Demo_2d_Noise}
\end{figure}

The results obtained from this simulation setting are summarized in Tables \ref{tab:Noise_G_Cls1} and \ref{tab:Noise_G_Cls2}. The BAs and $F_2$-scores, which can be found in Table \ref{tab:Noise_G_Cls2}, are also represented as a barplot in Figure \ref{fig:Barplot_GNoise}.

\begin{table}[htb]
  \centering
  \resizebox{0.7\columnwidth}{!}{\begin{tabular}{|c|c|c|c|c|c|c|c|c|c|c|c|}
    \hline
    \multicolumn{2}{|c|}{} & \multicolumn{10}{|c|}{Level of Noise} \\ \cline{3-12}

    \multicolumn{2}{|c|}{} & \multicolumn{2}{|c|}{$1\%$} & \multicolumn{2}{|c|}{$3\%$} & \multicolumn{2}{|c|}{$5\%$} & \multicolumn{2}{|c|}{$7\%$} & \multicolumn{2}{|c|}{$10\%$} \\ \cline{3-12}

    \multicolumn{2}{|c|}{} & TPR & TNR & TPR & TNR & TPR & TNR & TPR & TNR & TPR & TNR \\ \hline

    \multirow{4}*{$d=3$} & RU-MCCDs & 1.000 & 0.833 & 1.000 & 0.833 & 1.000 & 0.833 & 1.000 & 0.833 & 1.000 & 0.833 \\ \cline{2-12}
    & SU-MCCDs & 1.000 & 0.941 & 1.000 & 0.941 & 1.000 & 0.941 & 1.000 & 0.941 & 1.000 & 0.941 \\ \cline{2-12}
    & UN-MCCDs & 0.997 & 0.890 & 0.998 & 0.890 & 0.997 & 0.890 & 0.997 & 0.890 & 0.998 & 0.890 \\ \cline{2-12}
    & SUN-MCCDs & 1.000 & 0.957 & 0.999 & 0.957 & 0.998 & 0.957 & 0.997 & 0.957 & 0.994 & 0.957 \\ \cline{1-12}

    \multirow{4}*{$d=10$} & RU-MCCDs & 1.000 & 0.697 & 1.000 & 0.697 & 1.000 & 0.697 & 1.000 & 0.698 & 1.000 & 0.698 \\ \cline{2-12}
    & SU-MCCDs & 1.000 & 0.777 & 1.000 & 0.777 & 1.000 & 0.776 & 1.000 & 0.777 & 1.000 & 0.777 \\ \cline{2-12}
    & UN-MCCDs & 1.000 & 0.829 & 1.000 & 0.829 & 1.000 & 0.829 & 1.000 & 0.829 & 1.000 & 0.829 \\ \cline{2-12}
    & SUN-MCCDs & 1.000 & 0.948 & 1.000 & 0.948 & 1.000 & 0.948 & 1.000 & 0.948 & 1.000 & 0.948 \\ \cline{1-12}
  \end{tabular}}
  \caption{The TPRs and TNRs of the CCD-based algorithms as the approximate noise level of each Gaussian cluster increases from $1\%$ to $10\%$.}
  \label{tab:Noise_G_Cls1}
\end{table}

\begin{table}[htb]
  \centering
  \resizebox{0.7\columnwidth}{!}{\begin{tabular}{|c|c|c|c|c|c|c|c|c|c|c|c|}
    \hline
    \multicolumn{2}{|c|}{} & \multicolumn{10}{|c|}{Level of Noise} \\ \cline{3-12}

    \multicolumn{2}{|c|}{} & \multicolumn{2}{|c|}{$1\%$} & \multicolumn{2}{|c|}{$3\%$} & \multicolumn{2}{|c|}{$5\%$} & \multicolumn{2}{|c|}{$7\%$} & \multicolumn{2}{|c|}{$10\%$} \\ \cline{3-12}

    \multicolumn{2}{|c|}{} & BA & $F_2$-score & BA & $F_2$-score & BA & $F_2$-score & BA & $F_2$-score & BA & $F_2$-score \\ \hline

    \multirow{4}*{$d=3$} & RU-MCCDs & 0.917 & 0.612 & 0.917 & 0.612 & 0.917 & 0.612 & 0.917 & 0.612 & 0.917 & 0.612 \\ \cline{2-12}
    & SU-MCCDs & 0.971 & 0.817 & 0.971 & 0.817 & 0.971 & 0.817 & 0.971 & 0.817 & 0.971 & 0.817 \\ \cline{2-12}
    & UN-MCCDs & 0.944 & 0.703 & 0.944 & 0.704 & 0.944 & 0.703 & 0.944 & 0.703 & 0.944 & 0.704 \\ \cline{2-12}
    & SUN-MCCDs & 0.979 & 0.860 & 0.978 & 0.859 & 0.978 & 0.858 & 0.977 & 0.857 & 0.976 & 0.855 \\ \cline{1-12}

    \multirow{4}*{$d=10$} & RU-MCCDs & 0.849 & 0.465 & 0.849 & 0.465 & 0.849 & 0.465 & 0.849 & 0.466 & 0.849 & 0.466 \\ \cline{2-12}
    & SU-MCCDs & 0.889 & 0.541 & 0.889 & 0.541 & 0.888 & 0.540 & 0.889 & 0.541 & 0.889 & 0.541 \\ \cline{2-12}
    & UN-MCCDs & 0.915 & 0.606 & 0.915 & 0.606 & 0.915 & 0.606 & 0.915 & 0.606 & 0.915 & 0.606 \\ \cline{2-12}
    & SUN-MCCDs & 0.974 & 0.835 & 0.974 & 0.835 & 0.974 & 0.835 & 0.974 & 0.835 & 0.974 & 0.835 \\ \cline{1-12}
  \end{tabular}}
  \caption{The BAs and $F_2$-scores of the CCD-based algorithms as the approximate noise level of each Gaussian cluster increases from $1\%$ to $10\%$.}
  \label{tab:Noise_G_Cls2}
\end{table}

\begin{figure}[htb]
\centering
\subfigure[The BAs when $d=3$.]{
\label{Barplot3_GNoise_BA}
\includegraphics[width=0.45\textwidth]{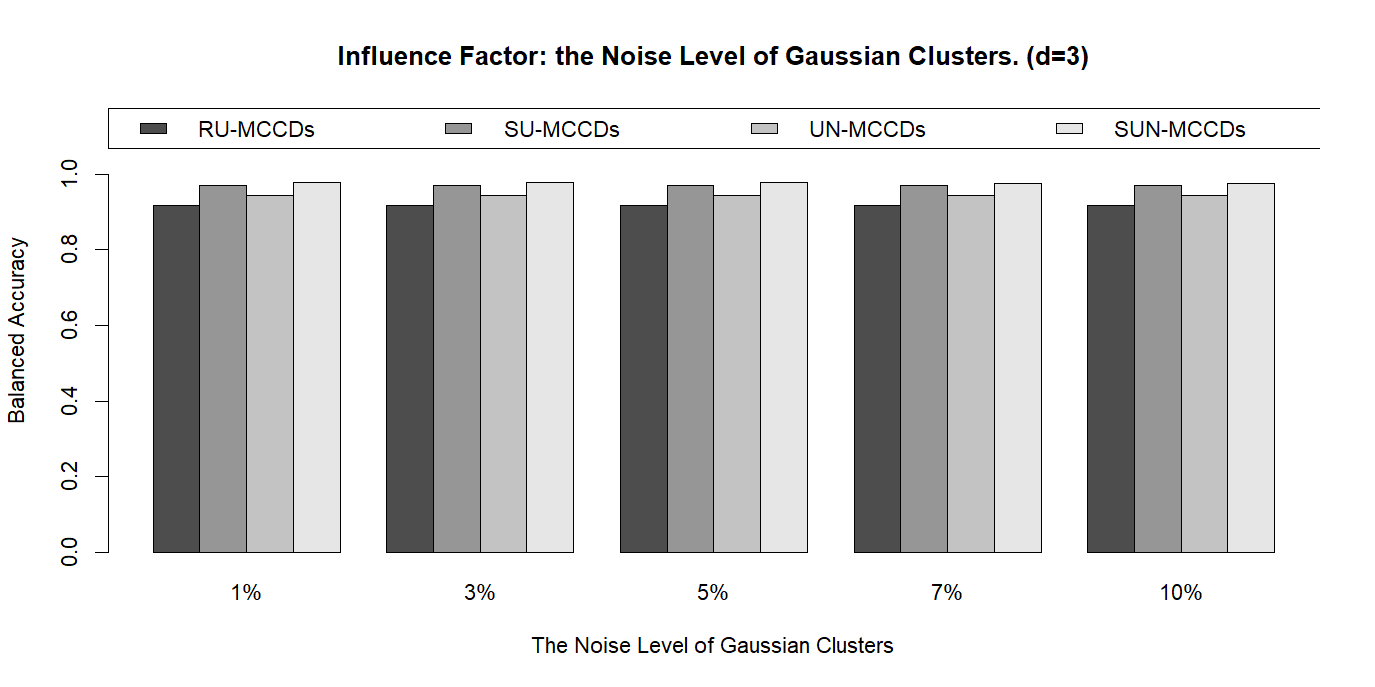}}
\subfigure[The $F_2$-scores when $d=3$.]{
\label{Barplot3_GNoise_FS}
\includegraphics[width=0.45\textwidth]{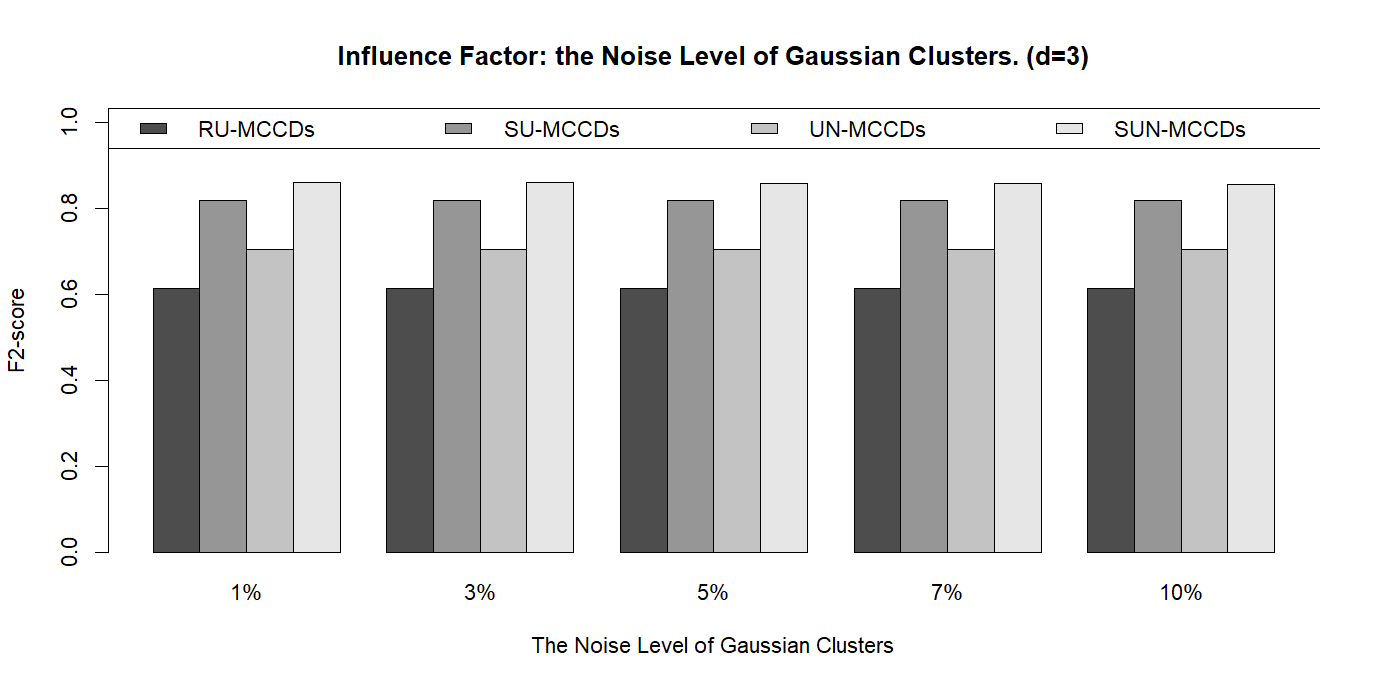}}

\subfigure[The BAs when $d=10$.]{
\label{Barplot10_GNoise_BA}
\includegraphics[width=0.45\textwidth]{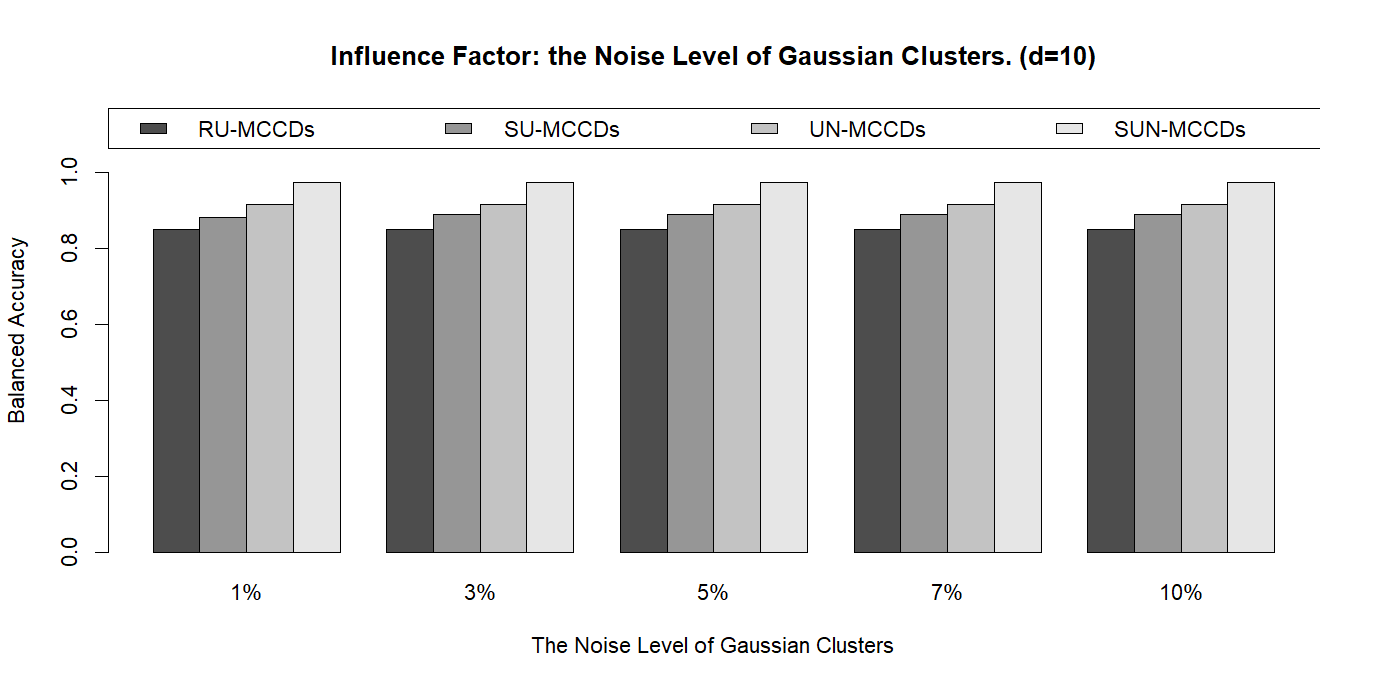}}
\subfigure[The $F_2$-scores when $d=10$.]{
\label{Barplot10_GNoise FS}
\includegraphics[width=0.45\textwidth]{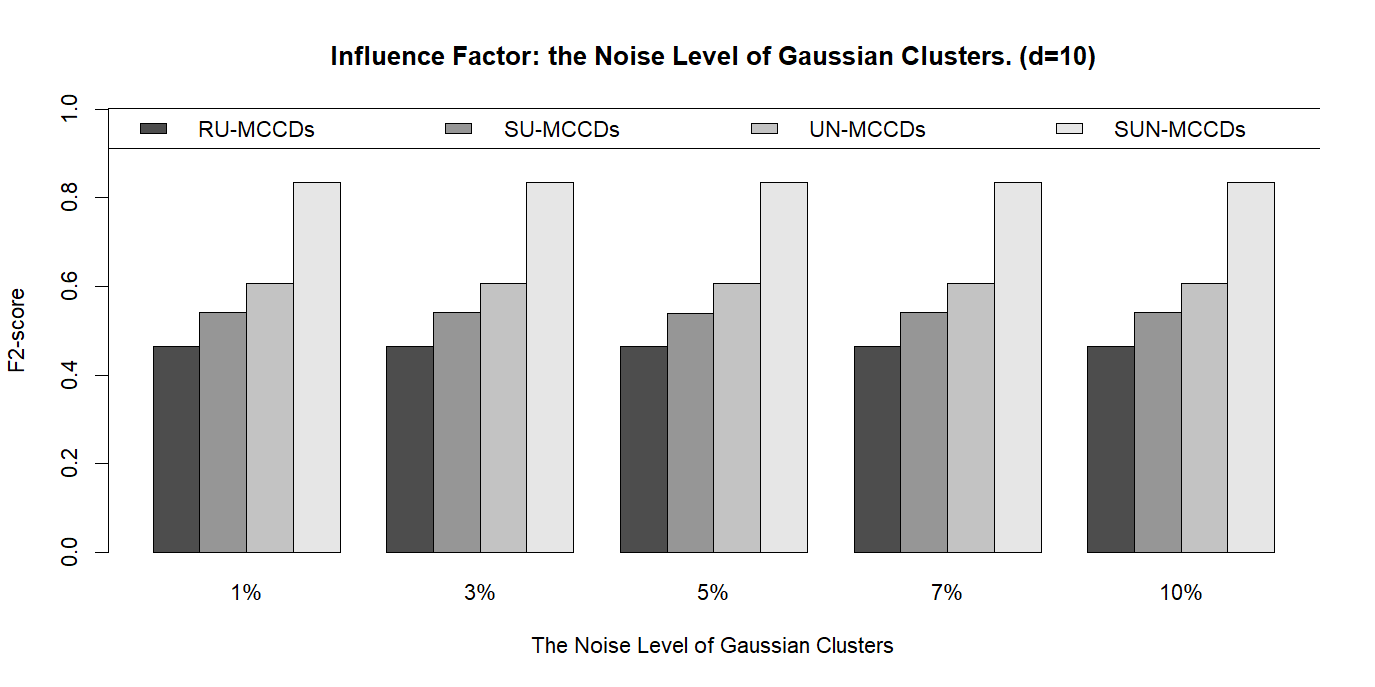}}
\caption{The barplots summarizing the performances of the CCD-based outlier detection algorithms as the approximate noise level increases (points within each clusters are (multivariate) normally distributed). (a) The BAs for $d=3$. (b) The $F_2$-scores for $d=3$. (c) The BAs for $d=10$. (d) The $F_2$-scores for $d=10$.}
\label{fig:Barplot_GNoise}
\end{figure}

Observe that all four CCD-based algorithms perform stably, regardless of the noise level. For instance, when $d=3$, the $F_2$-scores of the SUN-MCCD algorithm are 0.860, 0.859, 0.858, 0.857, and 0.855, presenting a slight downtrend, it suggests that all the algorithms are highly adaptable to the span of Gaussian clusters and their distances to outliers. This phenomenon can be attributed to a similar reason discussed in Section \ref{sec:Olt_Dis_Setting}. Notably, the TPRs of all the algorithms are 1 or close to 1, while the TNRs are substantially lower, particularly when $d=10$. Therefore, all the CCD-based algorithms isolate outliers from regular observations at the expense of some false positives, and this mechanism dynamically adapts to the scale of the covariance matrix of a Gaussian cluster. Moreover, the four algorithms achieve different levels of TNRs, with the SUN-MCCD algorithm performing the best and the RU-MCCD algorithm comparatively inferior (the worst).

\subsubsection{Collective Outliers in Convex Hull}

In all the previous simulation settings, the outliers are scattered around the ground truth clusters as they are drawn from a large hypersphere of radius 5. That said, most outliers are isolates far from one another, except when the contamination level is exceptionally high (we investigated the cases when the contamination level is as high as $15\%$ in Section \ref{sec:Cont_Setting}). In this section, we study the scenarios when outliers form a small group, called collective outliers. We want to explore the robustness of all the CCD-based algorithms to the mask problem, which usually emerges when collective outliers exist. Therefore, in the artificial data sets of this section, outliers are generated within a hypersphere of radius 1. To add more challenges, the hypersphere covering outliers is located inside the convex hull of regular points, with the distance between the hypersphere and cluster centers varying. We conduct the simulations with only uniform clusters to ensure all the outliers are within the convex hull. Simulation details are as follows. Similarly, only the different factors (compared to the first focus study in Section \ref{sec:N_Cls_Setting}) are presented.

\begin{itemize}
  \item[\romannumeral1.] Number of clusters: 2;
  \item[\romannumeral2.] The centers of clusters are: $\bm{\mu_1} = (3,\underbrace{3,...,3}_{d-1})$ and $\bm{\mu_2} = (9,\underbrace{3,...,3}_{d-1})$ (where $d=3,10$);
  \item[\romannumeral3.] The outlier set $C_{outlier}$ is generated uniformly within a hypersphere of radius 1. The center of the hypersphere is $\bm{\mu_0} = (3+s,\underbrace{3,...,3}_{d-1})$, where $s$ represents the distance of it to the first cluster center, and it is set to $1.5$, $2$, $2.5$, and $3$, respectively. When $s \leq 2$, the outlier set and the first cluster overlap, and there is no minimal distance between outliers and any cluster centers.
  \label{Convex_Hull}
\end{itemize}

Figure \ref{fig:Demo_2d_Noise} illustrates some realizations of the data set in a 2-dimensional space. Apparently, in the first two sub-figures where $s \leq 2$, the support of the outlier set and the left cluster overlap, and separating them is challenging.

\begin{figure}[htb]
  \centering
  \subfigure[$s$: 1.5]{
  \includegraphics[width=0.23\textwidth]{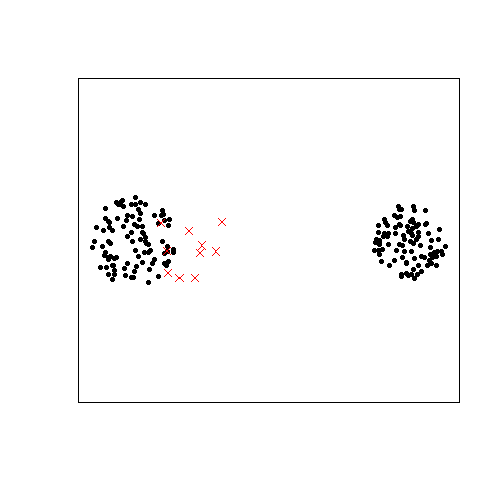}}
  \subfigure[$s$: 2]{
  \includegraphics[width=0.23\textwidth]{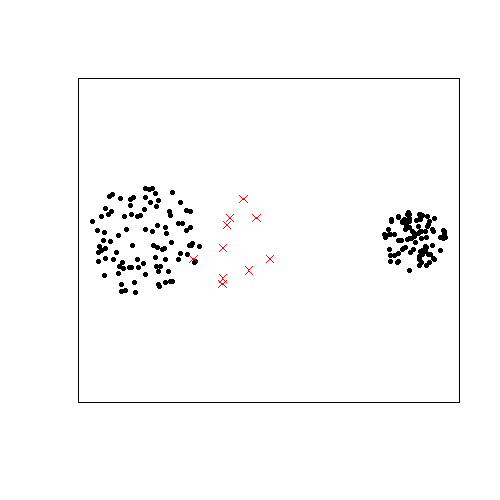}}
  \subfigure[$s$: 2.5]{
  \includegraphics[width=0.23\textwidth]{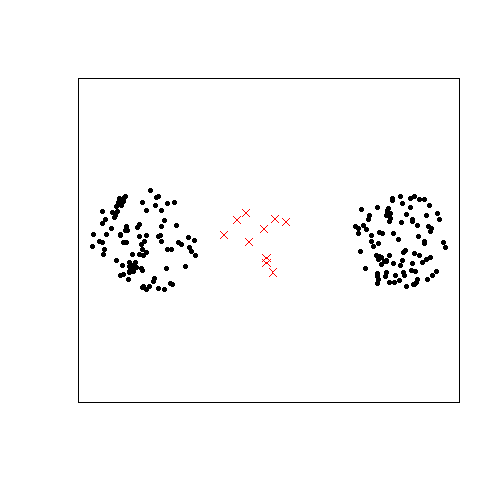}}
  \subfigure[$s$: 3]{
  \includegraphics[width=0.23\textwidth]{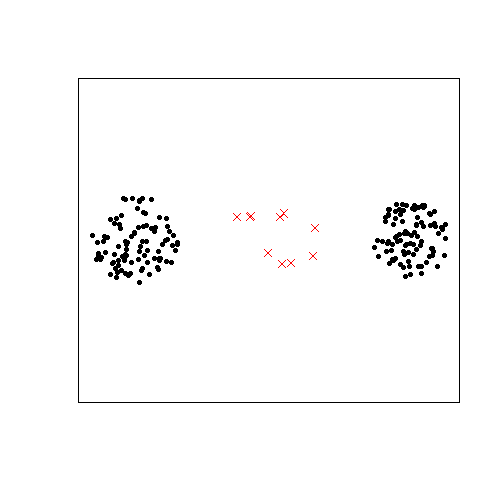}}
  \caption{Some realizations of the simulation setting containing collective outliers, where $s$ (indicated below each sub-figure) represents the distance between the left cluster center and the outlier center, and it increases from 1.5 to 3. Red crosses are outliers, black points are regular observations. All the outliers are within the convex hull of regular points.}
  \label{Demo_Collective_Outliers}
\end{figure}

The simulation results are summarized in Tables \ref{tab:Collective_N_Cls1} and \ref{tab:Collective_N_Cls2}. Similarly, the BAs and $F_2$-scores are also represented as a barplot in Figure \ref{fig:Barplot_Collective}.

\begin{table}[htb]
  \centering
  \resizebox{0.7\columnwidth}{!}{\begin{tabular}{|c|c|c|c|c|c|c|c|c|c|}
    \hline
    \multicolumn{2}{|c|}{} & \multicolumn{8}{|c|}{Distance} \\ \cline{3-10}

    \multicolumn{2}{|c|}{} & \multicolumn{2}{|c|}{$1.5$} & \multicolumn{2}{|c|}{$2$} & \multicolumn{2}{|c|}{$2.5$} & \multicolumn{2}{|c|}{$3$} \\ \cline{3-10}

    \multicolumn{2}{|c|}{} & TPR & TNR & TPR & TNR & TPR & TNR & TPR & TNR \\ \hline

    \multirow{4}*{$d=3$} & RU-MCCDs & 0.756 & 0.993 & 0.943 & 0.992 & 0.998 & 0.992 & 1.000 & 0.992 \\ \cline{2-10}
    & SU-MCCDs & 0.639 & 0.999 & 0.873 & 0.999 & 0.985 & 0.999 & 1.000 & 0.999 \\ \cline{2-10}
    & UN-MCCDs & 0.740 & 0.993 & 0.931 & 0.993 & 0.997 & 0.992 & 1.000 & 0.992 \\ \cline{2-10}
    & SUN-MCCDs & 0.604 & 0.999 & 0.837 & 0.999 & 0.971 & 0.999 & 0.999 & 0.999 \\ \cline{1-10}

    \multirow{4}*{$d=10$} & RU-MCCDs & 0.730 & 0.977 & 0.900 & 0.977 & 0.996 & 0.977 & 1.000 & 0.977 \\ \cline{2-10}
    & SU-MCCDs & 0.710 & 0.991 & 0.904 & 0.991 & 0.997 & 0.991 & 1.000 & 0.991 \\ \cline{2-10}
    & UN-MCCDs & 0.695 & 0.995 & 0.880 & 0.994 & 0.998 & 0.994 & 1.000 & 0.994 \\ \cline{2-10}
    & SUN-MCCDs & 0.608 & 0.999 & 0.837 & 0.999 & 0.992 & 0.999 & 1.000 & 0.999 \\ \cline{1-10}
  \end{tabular}}
  \caption{The TPRs and TNRs of the CCD-based algorithms as the distance between the collective outlier center and one of the cluster centers increases from $1.5$ to $2$.}
  \label{tab:Collective_N_Cls1}
\end{table}

\begin{table}[htb]
  \centering
  \resizebox{0.7\columnwidth}{!}{\begin{tabular}{|c|c|c|c|c|c|c|c|c|c|}
    \hline
    \multicolumn{2}{|c|}{} & \multicolumn{8}{|c|}{Distance} \\ \cline{3-10}

    \multicolumn{2}{|c|}{} & \multicolumn{2}{|c|}{$1.5$} & \multicolumn{2}{|c|}{$2$} & \multicolumn{2}{|c|}{$2.5$} & \multicolumn{2}{|c|}{$3$} \\ \cline{3-10}

    \multicolumn{2}{|c|}{} & BA & $F_2$-score & BA & $F_2$-score & BA & $F_2$-score & BA & $F_2$-score \\ \hline

    \multirow{4}*{$d=3$} & RU-MCCDs & 0.875 & 0.773 & 0.968 & 0.926 & 0.995 & 0.969 & 0.996 & 0.970 \\ \cline{2-10}
    & SU-MCCDs & 0.819 & 0.686 & 0.936 & 0.892 & 0.992 & 0.984 & 1.000 & 0.996 \\ \cline{2-10}
    & UN-MCCDs & 0.867 & 0.759 & 0.962 & 0.919 & 0.995 & 0.968 & 0.996 & 0.970 \\ \cline{2-10}
    & SUN-MCCDs & 0.802 & 0.653 & 0.918 & 0.862 & 0.985 & 0.973 & 0.999 & 0.995 \\ \cline{1-10}

    \multirow{4}*{$d=10$} & RU-MCCDs & 0.854 & 0.706 & 0.939 & 0.843 & 0.997 & 0.917 & 0.999 & 0.920 \\ \cline{2-10}
    & SU-MCCDs & 0.851 & 0.727 & 0.948 & 0.891 & 0.994 & 0.965 & 0.996 & 0.967 \\ \cline{2-10}
    & UN-MCCDs & 0.845 & 0.725 & 0.937 & 0.881 & 0.996 & 0.976 & 0.997 & 0.978 \\ \cline{2-10}
    & SUN-MCCDs & 0.804 & 0.657 & 0.918 & 0.862 & 0.996 & 0.990 & 1.000 & 0.996 \\ \cline{1-10}
  \end{tabular}}
  \caption{The BAs and $F_2$-scores of the CCD-based algorithms as the distance between the collective outlier center and one of the cluster centers increases from $1.5$ to $2$.}
  \label{tab:Collective_N_Cls2}
\end{table}

\begin{figure}[htb]
\centering
\subfigure[The BAs when $d=3$.]{
\label{Barplot3_Collective_BA}
\includegraphics[width=0.45\textwidth]{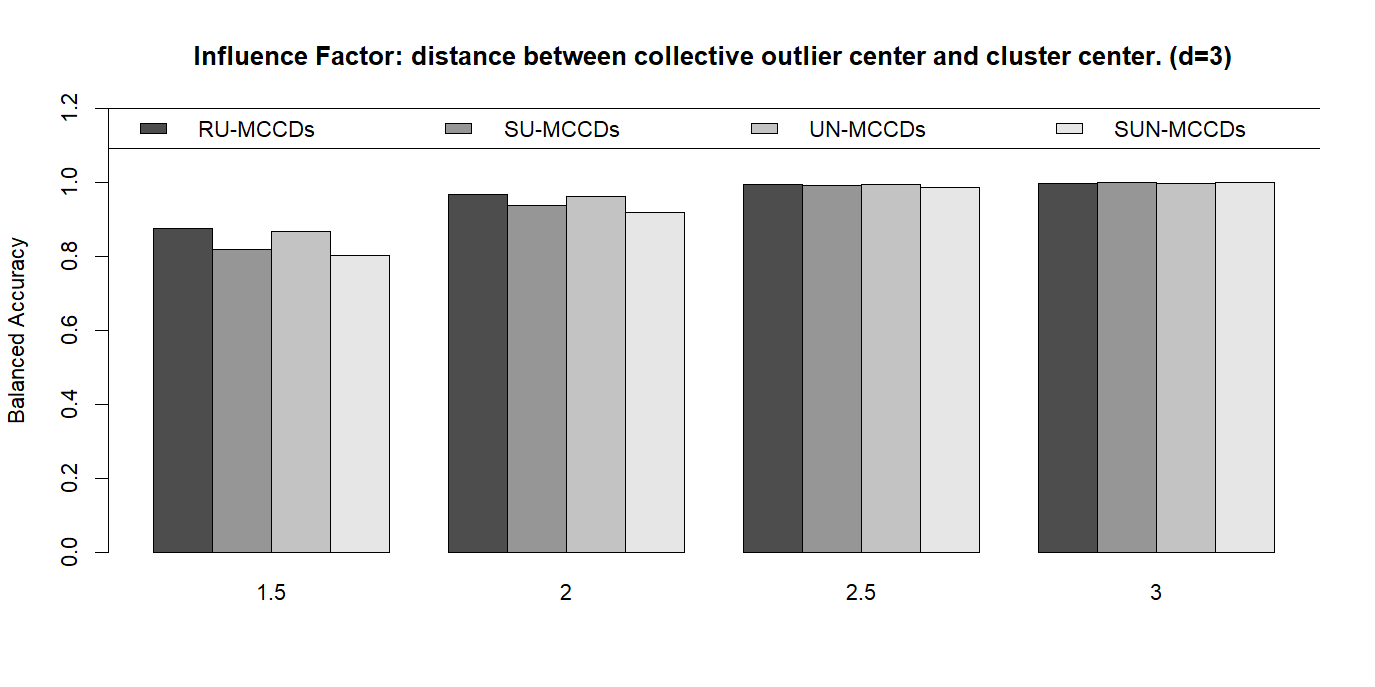}}
\subfigure[The $F_2$-scores when $d=3$.]{
\label{Barplot3_Collective_FS}
\includegraphics[width=0.45\textwidth]{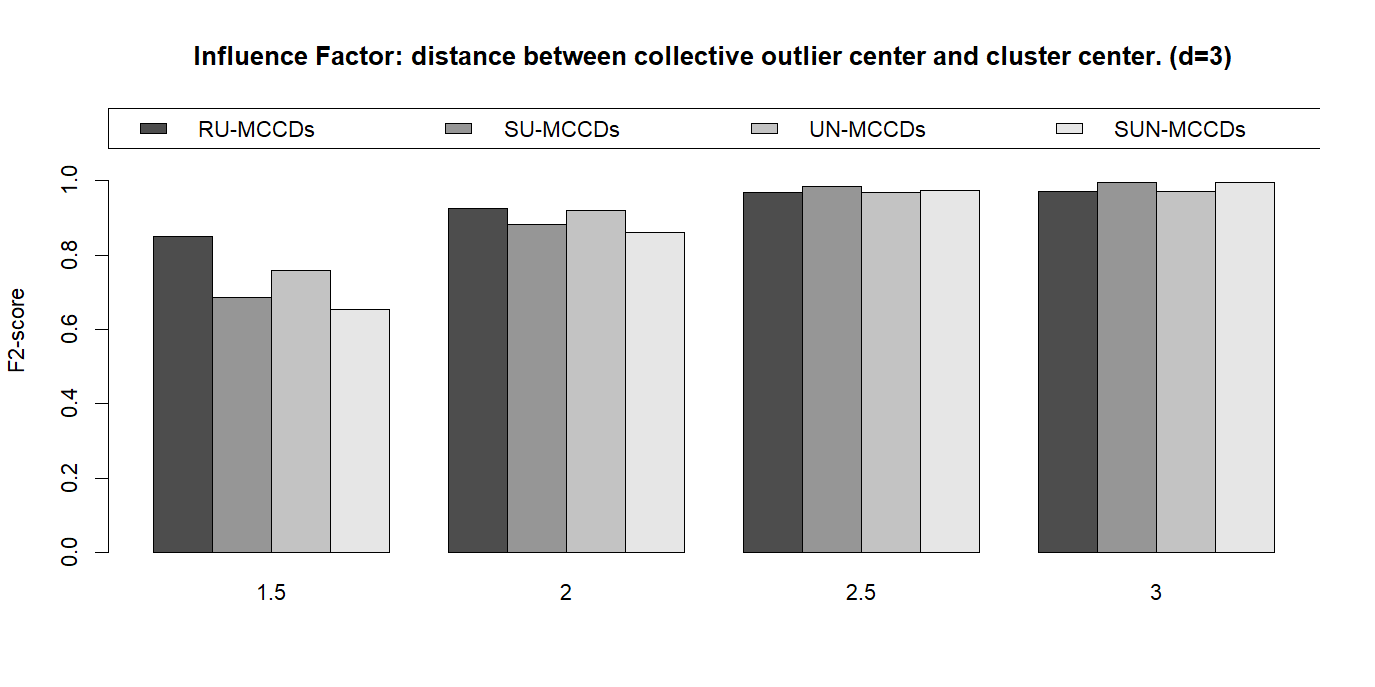}}

\subfigure[The BAs when $d=10$.]{
\label{Barplot10_Collective_BA}
\includegraphics[width=0.45\textwidth]{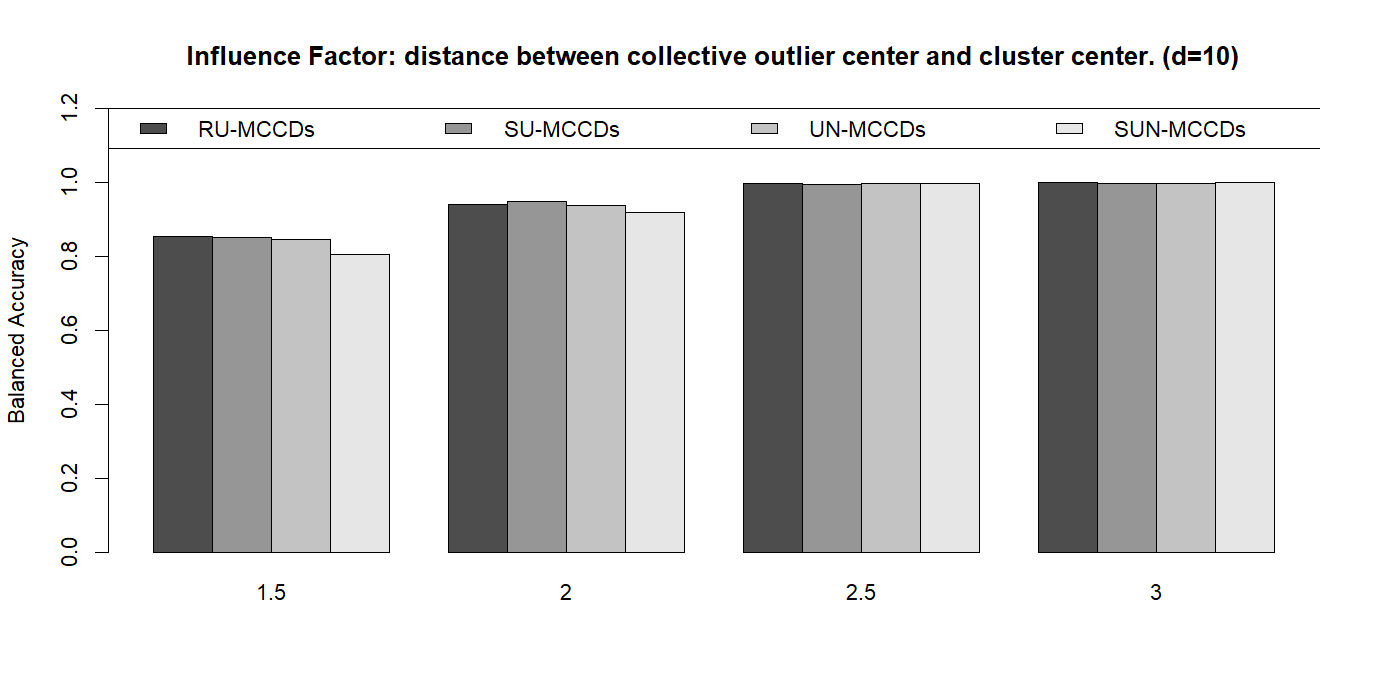}}
\subfigure[The $F_2$-scores when $d=10$.]{
\label{Barplot10_Collective_FS}
\includegraphics[width=0.45\textwidth]{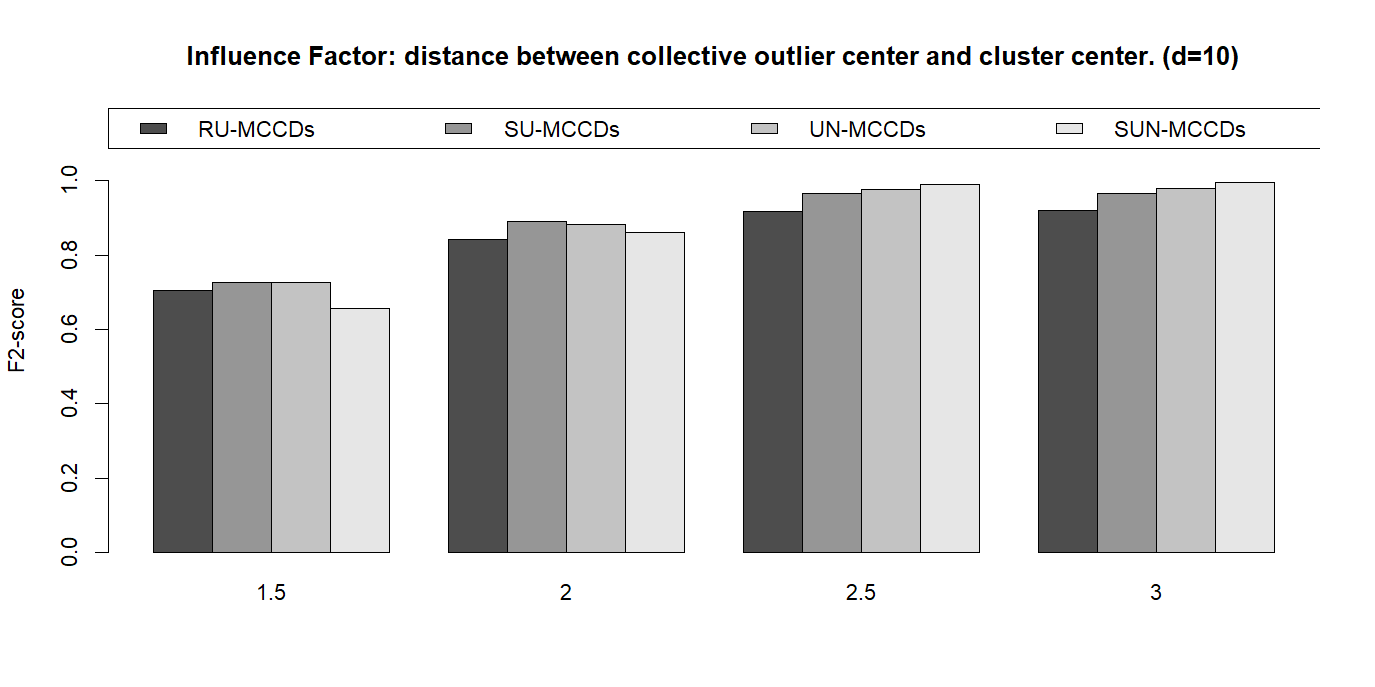}}
\caption{The barplots summarizing the performances of the CCD-based outlier detection algorithms as the approximate distance between the center of collective outliers and the center of one of the clusters increases from 1.5 to 3 (points within each clusters are normally distributed). (a) The BAs for $d=3$. (b) The $F_2$-scores for $d=3$. (c) The BAs for $d=10$. (d) The $F_2$-scores for $d=10$.}
\label{fig:Barplot_Collective}
\end{figure}

When $s \leq 2$, the simulation results show that the RU-MCCD and UN-MCCD algorithms perform comparably and are superior to the other two ``flexible" algorithms when $d=3$ or $d=10$. For example, when $d=3$ and $s=1.5$ or $s=2$, the $F_2$-scores of the RU-MCCD algorithm are 0.850 and 0.926, substantially higher than 0.686 and 0.892 delivered by the SU-MCCD algorithm. The reason is that the SU-MCCD and SUN-MCCD algorithms use multiple covering balls for each cluster. Thus, the chance of capturing the outliers close to regular points is much higher, yielding more false negatives. When $s>2$, the outlier set and regular points are well separate, and all four algorithms deliver similar performance and handle the collective outliers well with high $F_2$-scores (at least 0.9). Generally, the two ``flexible" algorithms perform slightly better in these cases.

\section{Monte Carlo Experiments Under Random Cluster Process}
\label{sec:Flex_Simul}
In the previous sections, we conducted Monte Carlo experiments to evaluate the performance of each proposed outlier detection algorithm. The UN-MCCD algorithm delivers comparable or better performance compared to the RU-MCCD algorithm when the dimensionality $d$ is small ($d \leq 5 $) and superior when $d=10$ and $20$. The conclusion is similar when comparing the SU-MCCD and SUN-MCCD algorithms. Additionally, the two ``shape-adaptive" algorithms outperform their ``vanilla versions" under the simulation cases with Gaussian clusters (except the simulation settings when $d \geq 50$), especially the SUN-MCCD algorithm, which outperforms other CCD-based algorithms when $d=5,\ 10$, and $20$.

However, the previous simulation settings (including the general simulation settings in Section \ref{sec:Simul_General} and the focus simulation settings in Section \ref{sec:Simul_Focus}) are relatively simplistic as the cluster centers are fixed. Additionally, the sizes of data sets, the number of clusters, the inter-cluster distances, the contamination levels, etc., are also fixed values under each simulation setting. In order to evaluate the CCD-based algorithms we proposed thoroughly and compare them with existing outlier detection algorithms, we conduct additional Monte Carlo experiments with more flexible settings.

Unlike previous simulation settings with levels of factors predetermined (e.g., $n=50, 100, ..., 500$, number of cluster$=2,3,4$.), converting those factors to random variables is a good solution towards our objective. To approach this goal, we try to simulate the \emph{Neyman-Scott cluster process} \cite{neyman1958statistical}, a class of cluster generation mechanisms with great randomness used widely in general practice. The realization of a general Neyman-Scott cluster process consists of two major steps, which are described as follows \cite{baddeley2023robust},

\begin{itemize}
\item[(1)] Firstly, a point set $\mathbf{S} = \{s_1,s_2,...,s_m\}$ is generated from an HPP with intensity parameter $\kappa>0$, these points are called ``parents". In the second step, each cluster is generated around one of the parents.

\item[(2)] A finite set/cluster $C_i=\{y_{i1},y_{i2},...,y_{in_i}\}$ is generated around each $s_i \in \mathbf{S}$, the size of $C_i$ (i.e., $n_i$) follows a Poisson distribution with mean $\mu$. The set of points $\{y_{i1},y_{i2},...,y_{in_i}\}$ are generated \emph{i.i.d} from the following probability density function, which depends on the distances (or similarities) to their parent $s_i$ \cite{baddeley2023robust},
    \begin{equation}\label{Neyman-Scott}
      P(x|s_i) = \frac{1}{\sigma^2} h \left( \frac{||x-s_i||}{\sigma} \right),
    \end{equation}
    where $||x-s_i||$ represents a distance measure between $x$ and $s_i$, $\sigma$ is a scale parameter, and $h$ is called the \emph{kernel} function of the Neyman-Scott cluster process. The points generated for cluster $C_i$ are also called the ``offspring" or the ``children" of $s_i$.

    Finally, the union of all offspring points $\cup_{s_i \in \mathbf{S}} C_i$ is a realization of a general Neyman-Scott cluster process, and the parent point set $\mathbf{S}$ will be dropped from the simulated data sets eventually.
\end{itemize}

One of the advantages of using the Neyman-Scott cluster process is the randomness of the intensity, location, and number of clusters. To simulate a general Neyman-Scott cluster process, $\kappa$, $\mu$, and the kernel function $h$ need to be specified. We shall consider two standard models, the \emph{Mat\'{e}rn cluster process} \cite{Matrn1960SpatialV} and the \emph{Thomas cluster process} \cite{thomas1949generalization, diggle1976statistical}. They only differ on the kernel function $h$.\\

\noindent \textbf{The Mat\'{e}rn and Thomas Cluster Processes}
\begin{itemize}
\item[(1)] The kernel of Mat\'{e}rn cluster process is $h(x)=\frac{1}{\pi} \mathbf{1}\{||x|| \leq 1\}$, i.e., a uniform density on a unit disc. The scale parameter $\sigma$ of Equation \eqref{Neyman-Scott} represents the radius of the disc.
\item[(2)] On the other hand, the Thomas cluster process employs the Gaussian kernel $h(x)=\frac{1}{2\pi}\exp(-||x||^2)$, and $\sigma$ is the standard deviation, controlling the intensity of each cluster.
\item[(3)] The formulas of the kernels above are for $\mathbb{R}^2$. We generalize and adopt kernels in the subsequent Monte Carlo experiments for high-dimensional spaces.
\end{itemize}

With the Mat\'{e}rn and Thomas cluster processes, we consider the following $3$ simulation settings within a unit (hyper) square across a different number of dimensions ($d=2,3,5,10,20$). $(\kappa_{M}, \mu_{M}, \sigma_{M})$ and $(\kappa_{T}, \mu_{T}, \sigma_{T})$ are the parameter sets of the two cluster processes. It is worth noting that any offspring falling beyond the unit (hyper) square will be dropped. For compensation, the values of $\mu_{M}$ and $\mu_{T}$ vary for different dimensions, such that the expected sizes of generated data sets are approximately 200. Except for the pure Mat\'{e}rn or Thomas cluster process, we consider the hybrid of them as the third simulation setting and call it the ``mixed" point process. The details of each simulation setting are presented below,

\begin{itemize}
\item[\uppercase\expandafter{\romannumeral1}] Simulate a Mat\'{e}rn cluster process with parents intensity $\kappa_{M}=6$, radius $\sigma_{M}=0.1$. The mean size of each cluster $\mu_{M}$ is set to be 33.00, 35.26, 37.45, 40.37, and 44.48 as the number of dimensions $d$ increases from 2 to 20.

\item[\uppercase\expandafter{\romannumeral2}] Conduct a Thomas cluster process with $\kappa_{T}=6$, $\sigma_{T}=0.07$ (the covariance matrix is $\sigma_{T}^2I_d$). The mean size of each cluster $\mu_{T}$ is set to 33.70, 36.13, 42.38, 55.16, and 90.54 as the dimensionality $d$ increases from 2 to 20.

\item[\uppercase\expandafter{\romannumeral3}] Conduct a Mat\'{e}rn cluster process and a Thomas cluster process synchronously with $\kappa_{M} = \kappa_{T}=3$, $\sigma_{M}=0.1$, and $\sigma_{T}=0.07$. $\mu_{M}$ and $\mu_{T}$ are set to 33.30, 36.15, 39.72182, 46.78, and 60.31 as $d$ increases from 2 to 20.
\label{sec:Complex_Settings}
\end{itemize}

Under the above simulation settings, latent outliers follow an HPP with an intensity of 20. Outliers have certain distances to parents depending on the type of the corresponding cluster process (the minimum distance to any parents in the Mat\'{e}rn and Thomas cluster processes are $2\sigma_{M}$ and $3.33\sigma_{T}$, respectively). Additionally, to avoid generating data sets where the sizes of regular observations and outliers are close, we set the lower bound of the size of regular observations to 80. We want to ensure that every simulated data set is strictly imbalanced (regular points outnumber outliers by a large margin).

Figures \ref{fig:Matern_Plot}, \ref{fig:Thomas_Plot}, and \ref{fig:Matern-Thomas_Plot} present some realizations of the three simulation settings on $\mathbb{R}^2$. We compare the performance with some other existing outlier detection algorithms, including \emph{Local Outlier Factor} (LOF) \cite{breunig2000lof}, \emph{Density Based Spatial Clustering of Applications with Noise} (DBSCAN) \cite{ester1996density}, the \emph{Minimal Spanning Tree} (MST) Method \cite{MST}, \emph{Outlier Detection using In-degree Number} (ODIN) \cite{hautamaki2004outlier} and \emph{isolation Forest} for outlier detection \cite{liu2008isolation}.

LOF \cite{breunig2000lof} is a density-based outlier detection algorithm. It measures the outlyingness of points by comparing their \emph{local reachability density} with their nearest neighbors. The number of nearest neighbors is an input parameter, denoted as $k$. Rather than choosing only one value for $k$, Breunig \emph{et al.} provided a heuristic that considers a range of $k$ value instead and computes the corresponding LOF values; then all the points are ranked by their highest LOF values \cite{breunig2000lof}. We conduct our experiment following this heuristic and choose the lower and upper bound of $k$ to be 11 and 30, respectively, consistent with the guidelines provided by Breunig \emph{et al.}. After several experiments, we found the optimal threshold is 1.5, which is as expected, given the fact that the LOFs of most regular points are close to 1 \cite{breunig2000lof}.

DBSCAN \cite{ester1996density} is a density-based clustering method proposed by Ester \textit{et al.}, tuned for data sets with noise or outliers. Thus, it can also be used for outlier detection. This approach is constructed based on the idea that points deep inside a cluster generally have a minimum number (denoted \textit{MinPts}) of neighbors within a given radius (denoted \textit{Eps}); Ester \textit{et al.} call these points \emph{core points} or \emph{seeds}. To find a cluster, DBSCAN starts with an arbitrary seed, denoted as $p$; then it builds a cluster with $p$ by finding all the points that are \emph{density-reachable} from it; after that, the above steps are repeated on the next unassigned seed until no more new seed can be found; finally, the points that are not connected to any seeds are labeled as noise or outliers. To determine the value of the input parameters \textit{MinPts} and \textit{Eps}, Ester \textit{et al.} offered a heuristic which sets \emph{MinPts} to 4, then sorts the $4$-dist (the distance of a point to its $4^{th}$ nearest neighbor) of the entire data set, and find the value at the first ``elbow", setting it to be \emph{Eps} \cite{ester1996density}. Although finding the first ``elbow" point is easy with the naked eye, it is not feasible in our Monte Carlo experiments with 1000 data sets. Fortunately, Ester \textit{et al.} provided another heuristic allowing users to enter the estimated percentage of outliers to derive proper value for \emph{Eps}. To give DBSCAN some advantages, we adopt the second heuristic and set the percentage of outliers $9\%$.

The MST method \cite{MST} is a graph-based approach used for clustering. It can label any minority clusters or isolates as outliers. First, it constructs a graph with data points as nodes and the distance between any two points as edge weight. The MST is then created by linking all nodes with the minimum sum of weights while avoiding cycles. Then, the edges with substantially larger weights than the average weight of their adjacent edges are considered ``inconsistent" and are removed, effectively breaking the MST into subtrees that correspond to clusters. Clustering based on MST helps identify clusters with arbitrary shapes. However, constructing the MST can be computationally expensive for large data sets, and its performance is sensitive to the choice of threshold for identifying inconsistent edges \cite{zhang2013advancements}. In the subsequent Monte Carlo experiments, we tested several thresholds ranging from 1.1 to 3 and found the optimal thresholds are 1.7, 1.7, 1.4, 1.2, and 1.1 as $d$ increases from 2 to 20, that they deliver the best overall performance. Additionally, we label any clusters with sizes smaller than 4\% of the size of the entire data set as outliers, which is consistent with our CCD-based algorithms. We want to explore the performance of a typical clustering algorithm on outlier detection.

ODIN \cite{hautamaki2004outlier} is a graph-based outlier detection algorithm using $k$-nearest-neighbor graphs. The in-degree of an observation refers to the number of times that point appears within the $k$ nearest-neighbor sets of other points. The main idea of ODIN is based on the assumption that outliers typically have lower in-degrees because they deviate from regular observations. The observations with in-degrees smaller than a pre-specified threshold $T$ are labeled as outliers. ODIN is simple, computationally efficient, and can work without assumptions on data distribution \cite{wang2019survey}. However, like most other algorithms, it is sensitive to the choice of $k$ and $T$, and the optimal values depend on the specific data set and domain knowledge. We make $k$ and $T$ in the following Monte Carlo simulations dynamic. ODIN delivers decent overall performance when setting the two input parameters to 0.5 and 0.33 degrees of the size of the corresponding data set.

iForest \cite{liu2008isolation} is an unsupervised graph-based outlier detection algorithm. The main idea is based on the fact that outliers are rare and generally distinctive and are more likely to be separated from other regular points in a binary tree. Specifically, this is done by constructing a random decision tree (called \emph{iTrees}) and partitioning a random sub-sample based on randomly chosen features and split values; the depth of the tree is determined by sample size. iForest (also called \emph{iForest}) is a collection of iTree, and the outlyingness score of a point is determined by the average path length to the root; regular points will be more easily isolated near the root of these trees, leading to shorter average path lengths and smaller outlyingness score. We construct an iForest with 1000 iTrees with sub-sample size 64 for each to ensure the convergence of the outlyingness scores, which align with the guidance offered by Liu \emph{et al.} \cite{liu2008isolation}. Additionally, we found that a threshold of 0.57 (for the outlyingness score) delivers decent overall performance and is close to Liu \emph{et al.}'s choice.

The mean performance (out of 1000 repetitions) of each outlier detection algorithm under the three simulation settings are summarized in the subsequent tables (Tables \ref{tab:Matern_Result1} to \ref{tab:Matern-Thomas_Result1}, and \ref{tab:Matern-Thomas_Result2}). The same as the previous simulation settings, we select TPR, TNR, BA, and $F_2$-score to assess their performance.

\begin{figure}[htb]
\centering
\subfigure{
\label{Matern1}
\includegraphics[width=0.30\textwidth]{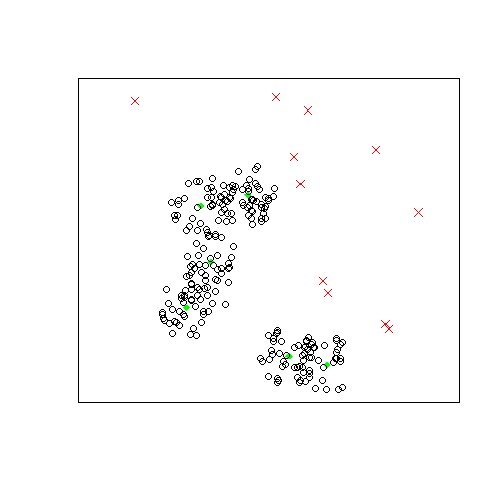}}
\subfigure{
\label{Matern2}
\includegraphics[width=0.30\textwidth]{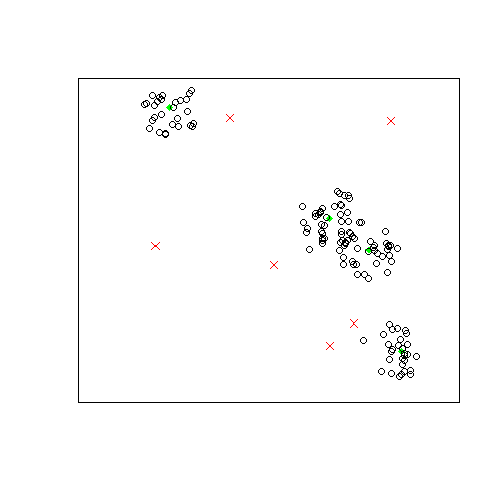}}
\subfigure{
\label{Matern3}
\includegraphics[width=0.30\textwidth]{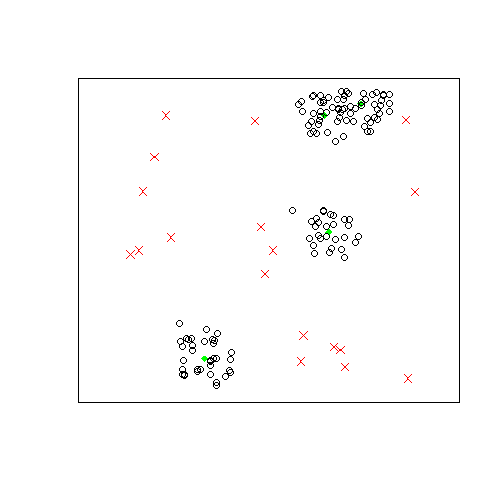}}
% \subfigure{
% \label{Matern4}
% \includegraphics[width=0.35\textwidth]{}}

\caption{Three realizations of a Mat\'{e}rn cluster process (the simulation setting \uppercase\expandafter{\romannumeral1} on Section \ref{sec:Complex_Settings}) on a 2-dimensional plane with $\kappa_{M}=6$, $\sigma_{M}=0.1$, and $\mu_{M}=33$, where black dots are regular points, green dots are parents, and red dots are outliers.}\label{fig:Matern_Plot}
\end{figure}

\begin{figure}[htb]
\centering
\subfigure{
\label{Thomas1}
\includegraphics[width=0.30\textwidth]{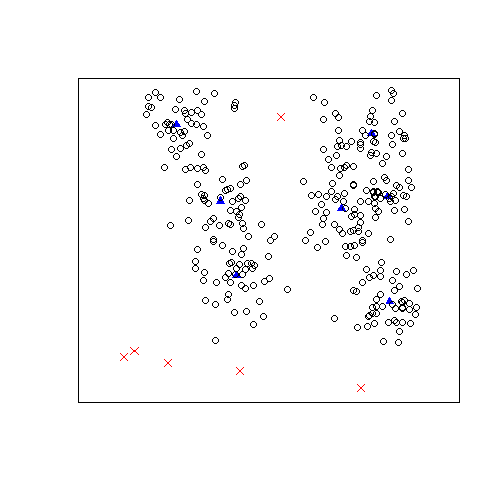}}
\subfigure{
\label{Thomas2}
\includegraphics[width=0.30\textwidth]{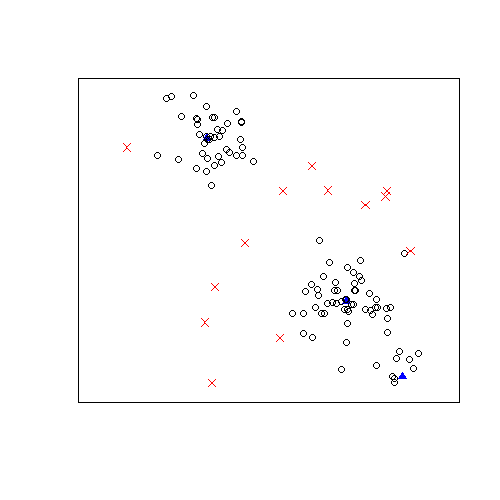}}
\subfigure{
\label{Thomas3}
\includegraphics[width=0.30\textwidth]{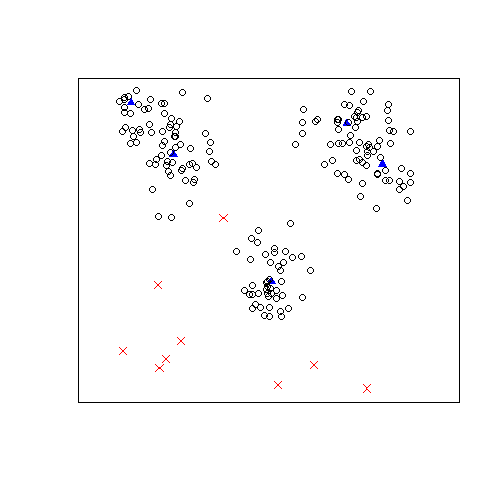}}
% \subfigure{
% \label{Thomas4}
% \includegraphics[width=0.35\textwidth]{}}

\caption{Three realizations of a Thomas cluster process (the simulation setting \uppercase\expandafter{\romannumeral2} on Section \ref{sec:Complex_Settings}) on a 2-dimensional plane with $\kappa_{T}=6$, $\sigma_{T}=0.005$, and $\mu_{T}=33.7$, where black dots are regular points, blue dots are parents, and red dots are outliers.}\label{fig:Thomas_Plot}
\end{figure}

\begin{figure}[htb]
\centering
\subfigure{
\label{Matern-Thomas1}
\includegraphics[width=0.30\textwidth]{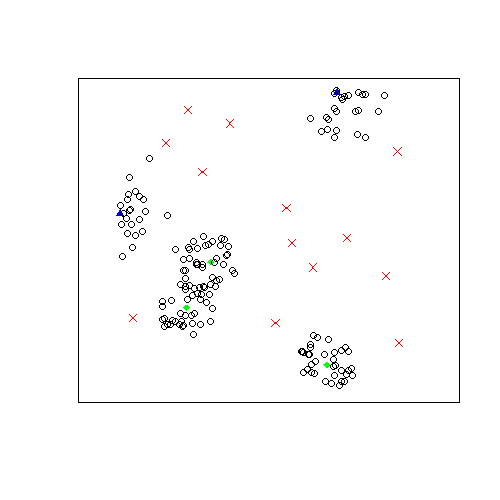}}
\subfigure{
\label{Matern-Thomas2}
\includegraphics[width=0.30\textwidth]{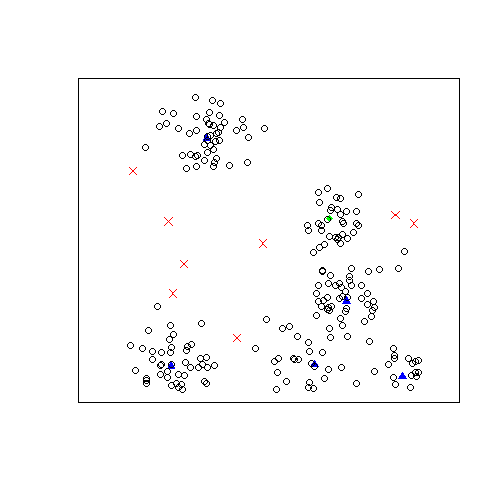}}
\subfigure{
\label{Matern-Thomas3}
\includegraphics[width=0.30\textwidth]{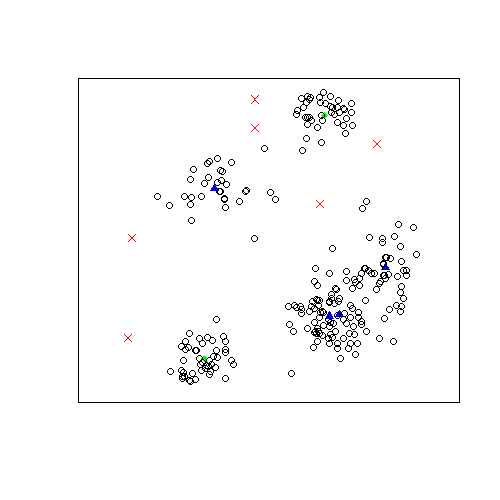}}
% \subfigure{
% \label{Matern-Thomas4}
% \includegraphics[width=0.35\textwidth]{}}

\caption{Three realizations of a mixed cluster process (the simulation setting \uppercase\expandafter{\romannumeral3} on Section \ref{sec:Complex_Settings}) on a 2-dimensional plane with $\kappa_{M}=\kappa_{T}=3$, $\sigma_{M}=0.1$, $\sigma_{T}=0.005$, and $\mu_{M}=\mu_{T}=33.40$, where black dots are regular points, green and blue dots are parents, and red dots are outliers.}\label{fig:Matern-Thomas_Plot}
\end{figure}

\begin{table}[htb]
  \resizebox{\textwidth}{!}{\begin{tabular}{|c|c|c|c|c|c|c|c|c|c|c|}
  \hline
  \multirow{2}*{Algorithms} & \multicolumn{2}{|c|}{$d=2$} & \multicolumn{2}{|c|}{$d=3$} & \multicolumn{2}{|c|}{$d=5$} & \multicolumn{2}{|c|}{$d=10$} & \multicolumn{2}{|c|}{$d=20$} \\ \cline{2-11}
  & TPR & TNR & TPR & TNR & TPR & TNR & TPR & TNR & TPR & TNR \\ 
  \hline
  RU-MCCDs  & 0.949 & 0.926	& 0.941 & 0.932 & 0.973 & 0.923 & 0.982 & 0.828 & 0.981 & 0.654 \\ \hline
  SU-MCCDs  & 0.969 & 0.954 & 0.970 & 0.940 & 0.971 & 0.945 & 0.982 & 0.849 & 0.979 & 0.678 \\ \hline
  UN-MCCDs  & 0.939 & 0.931 & 0.940 & 0.936 & 0.942 & 0.957 & 0.978 & 0.948 & 0.978 & 0.841 \\ \hline
  SUN-MCCDs & 0.952 & 0.948 & 0.970 & 0.932 & 0.940 & 0.973 & 0.977 & 0.961 & 0.977 & 0.853 \\ \hline
  LOF       & 0.999 & 0.962 & 0.999 & 0.962 & 1.000 & 0.927 & 0.999 & 0.866 & 0.999 & 0.842 \\ \hline
  DBSCAN    & 0.891 & 0.988 & 0.789 & 0.996 & 0.768 & 1.000 & 0.771 & 1.000 & 0.750 & 1.000 \\ \hline
  MST       & 0.659 & 0.661 & 0.558 & 0.875 & 0.623 & 0.881 & 0.713 & 0.855 & 0.757 & 0.802 \\ \hline
  ODIN      & 0.912 & 0.937 & 0.918 & 0.977 & 0.905 & 0.988 & 0.898 & 0.991 & 0.870 & 0.999 \\ \hline
  iForest   & 0.855 & 0.904 & 0.756 & 0.946 & 0.800 & 0.967 & 0.915 & 0.974 & 0.982 & 0.972	\\ \hline
\end{tabular}}
  \caption{The TPRs and TNRs of selected outlier detection algorithms under a Mat\'{e}rn cluster process (the simulation setting \uppercase\expandafter{\romannumeral1} in Section \ref{sec:Complex_Settings}).}\label{tab:Matern_Result1}
\end{table}

\begin{table}[htb]
  \resizebox{\textwidth}{!}{\begin{tabular}{|c|c|c|c|c|c|c|c|c|c|c|}
  \hline
  \multirow{2}*{Algorithms} & \multicolumn{2}{|c|}{$d=2$} & \multicolumn{2}{|c|}{$d=3$} & \multicolumn{2}{|c|}{$d=5$} & \multicolumn{2}{|c|}{$d=10$} & \multicolumn{2}{|c|}{$d=20$} \\ \cline{2-11}
  & BA & $F_2$-score & BA & $F_2$-score & BA & $F_2$-score & BA & $F_2$-score & BA & $F_2$-score \\ 
  \hline
  RU-MCCDs  & 0.938 & 0.732 & 0.937 & 0.824 & 0.948 & 0.853 & 0.905 & 0.747 & 0.818 & 0.595 \\ \hline
  SU-MCCDs  & 0.962 & 0.822 & 0.955 & 0.863 & 0.958 & 0.886 & 0.916 & 0.886 & 0.829 & 0.610 \\ \hline
  UN-MCCDs  & 0.935 & 0.730 & 0.938 & 0.833 & 0.950 & 0.875 & 0.963 & 0.892 & 0.910 & 0.755 \\ \hline
  SUN-MCCDs & 0.950 & 0.787 & 0.951 & 0.851 & 0.957 & 0.902 & 0.969 & 0.912 & 0.915 & 0.768 \\ \hline
  LOF       & 0.981 & 0.866 & 0.981 & 0.926 & 0.964 & 0.884 & 0.933 & 0.802 & 0.921 & 0.774 \\ \hline
  DBSCAN    & 0.940 & 0.827 & 0.893 & 0.794 & 0.884 & 0.786 & 0.886 & 0.789 & 0.875 & 0.767 \\ \hline
  MST       & 0.660 & 0.283 & 0.717 & 0.450 & 0.752 & 0.525 & 0.784 & 0.556 & 0.780 & 0.536 \\ \hline
  ODIN      & 0.925 & 0.783 & 0.948 & 0.882 & 0.947 & 0.901 & 0.945 & 0.901 & 0.932 & 0.879 \\ \hline
  iForest   & 0.880 & 0.615 & 0.851 & 0.691 & 0.884 & 0.775 & 0.945 & 0.877 & 0.977 & 0.925 \\ \hline
\end{tabular}}
  \caption{The BAs and $F_2$-scores of selected outlier detection algorithms under a Mat\'{e}rn cluster process (the simulation setting \uppercase\expandafter{\romannumeral1} in Section \ref{sec:Complex_Settings}).}\label{Matern_Result2}
\end{table}

\begin{table}[htb]
  \resizebox{\textwidth}{!}{\begin{tabular}{|c|c|c|c|c|c|c|c|c|c|c|}
  \hline
  \multirow{2}*{Algorithms} & \multicolumn{2}{|c|}{$d=2$} & \multicolumn{2}{|c|}{$d=3$} & \multicolumn{2}{|c|}{$d=5$} & \multicolumn{2}{|c|}{$d=10$} & \multicolumn{2}{|c|}{$d=20$} \\ \cline{2-11}
  & TPR & TNR & TPR & TNR & TPR & TNR & TPR & TNR & TPR & TNR \\ 
  \hline
  RU-MCCDs  & 0.924 & 0.907 & 0.966 & 0.852 & 0.987 & 0.772 & 0.976 & 0.669 & 0.974 & 0.497 \\ \hline
  SU-MCCDs  & 0.880 & 0.959 & 0.942 & 0.922 & 0.983 & 0.849 & 0.976 & 0.734 & 0.973 & 0.534 \\ \hline
  UN-MCCDs  & 0.875	& 0.932 & 0.943 & 0.897 & 0.979 & 0.860 & 0.990 & 0.822 & 0.980 & 0.660 \\ \hline
  SUN-MCCDs & 0.824 & 0.963 & 0.918 & 0.941 & 0.970 & 0.922 & 0.989 & 0.889 & 0.979 & 0.744 \\ \hline
  LOF       & 0.979 & 0.943 & 0.960 & 0.960 & 0.967 & 0.961 & 0.997 & 0.921 & 0.996 & 0.862 \\ \hline
  DBSCAN    & 0.824 & 0.990 & 0.684 & 0.998 & 0.728 & 0.999 & 0.726 & 0.999 & 0.707 & 0.999 \\ \hline
  MST       & 0.602 & 0.697 & 0.485 & 0.875 & 0.668 & 0.868 & 0.769 & 0.809 & 0.869 & 0.739 \\ \hline
  ODIN      & 0.891 & 0.930 & 0.903 & 0.917 & 0.916 & 0.907 & 0.899 & 0.895 & 0.859 & 0.879 \\ \hline
  iForest   & 0.857 & 0.892 & 0.708 & 0.938 & 0.644 & 0.961 & 0.716 & 0.975 & 0.789 & 0.972 \\ \hline
\end{tabular}}
  \caption{The TPRs and TNRs of selected outlier detection algorithms under a Thomas cluster process (the simulation setting \uppercase\expandafter{\romannumeral2} in Section \ref{sec:Complex_Settings}).}\label{Thomas_Result1}
\end{table}

\begin{table}[htb]
  \resizebox{\textwidth}{!}{\begin{tabular}{|c|c|c|c|c|c|c|c|c|c|c|}
  \hline
  \multirow{2}*{Algorithms} & \multicolumn{2}{|c|}{$d=2$} & \multicolumn{2}{|c|}{$d=3$} & \multicolumn{2}{|c|}{$d=5$} & \multicolumn{2}{|c|}{$d=10$} & \multicolumn{2}{|c|}{$d=20$} \\ \cline{2-11}
  & BA & $F_2$-score & BA & $F_2$-score & BA & $F_2$-score & BA & $F_2$-score & BA & $F_2$-score \\ 
  \hline
  RU-MCCDs  & 0.916 & 0.611 & 0.909 & 0.706 & 0.880 & 0.682 & 0.823 & 0.603 & 0.736 & 0.509 \\ \hline
  SU-MCCDs  & 0.920 & 0.711 & 0.932 & 0.794 & 0.916 & 0.763 & 0.855 & 0.652 & 0.754 & 0.526 \\ \hline
  UN-MCCDs  & 0.904 & 0.639 & 0.920 & 0.751 & 0.920 & 0.756 & 0.906 & 0.743 & 0.820 & 0.601 \\ \hline
  SUN-MCCDs & 0.894 & 0.687 & 0.930 & 0.806 & 0.946 & 0.845 & 0.939 & 0.822 & 0.862 & 0.664 \\ \hline
  LOF       & 0.961 & 0.741 & 0.960 & 0.877 & 0.964 & 0.908 & 0.959 & 0.876 & 0.929 & 0.802 \\ \hline
  DBSCAN    & 0.907 & 0.755 & 0.841 & 0.708 & 0.864 & 0.751 & 0.863 & 0.744 & 0.853 & 0.726 \\ \hline
  MST       & 0.650 & 0.240 & 0.680 & 0.384 & 0.768 & 0.557 & 0.789 & 0.569 & 0.804 & 0.582 \\ \hline
  ODIN      & 0.911 & 0.634 & 0.910 & 0.749 & 0.912 & 0.778 & 0.897 & 0.759 & 0.869 & 0.713 \\ \hline
  iForest   & 0.875 & 0.545 & 0.823 & 0.632 & 0.803 & 0.633 & 0.846 & 0.717 & 0.881 & 0.774 \\ \hline
\end{tabular}}
  \caption{The BAs and $F_2$-scores of selected outlier detection algorithms under Thomas cluster process (the simulation setting \uppercase\expandafter{\romannumeral2} in Section \ref{sec:Complex_Settings}).}\label{Thomas_Result2}
\end{table}

\begin{table}[htb]
  \resizebox{\textwidth}{!}{\begin{tabular}{|c|c|c|c|c|c|c|c|c|c|c|}
  \hline
  \multirow{2}*{Algorithms} & \multicolumn{2}{|c|}{$d=2$} & \multicolumn{2}{|c|}{$d=3$} & \multicolumn{2}{|c|}{$d=5$} & \multicolumn{2}{|c|}{$d=10$} & \multicolumn{2}{|c|}{$d=20$} \\ \cline{2-11}
  & TPR & TNR & TPR & TNR & TPR & TNR & TPR & TNR & TPR & TNR \\ 
  \hline
  RU-MCCDs  & 0.942 & 0.907 & 0.953 & 0.884 & 0.978 & 0.856 & 0.975 & 0.745 & 0.980 & 0.588 \\ \hline
  SU-MCCDs  & 0.925 & 0.952 & 0.955 & 0.921 & 0.975 & 0.900 & 0.974 & 0.775 & 0.974 & 0.617 \\ \hline
  UN-MCCDs  & 0.910 & 0.926 & 0.927 & 0.912 & 0.952 & 0.914 & 0.984 & 0.883 & 0.975 & 0.757 \\ \hline
  SUN-MCCDs & 0.891 & 0.952 & 0.939 & 0.932 & 0.946 & 0.950 & 0.983 & 0.915 & 0.974 & 0.779 \\ \hline
  LOF       & 0.990 & 0.948 & 0.984 & 0.957 & 0.984 & 0.942 & 0.998 & 0.893 & 0.993 & 0.857 \\ \hline
  DBSCAN    & 0.849 & 0.988 & 0.789 & 0.996 & 0.749 & 0.998 & 0.746 & 0.998 & 0.725 & 0.997 \\ \hline
  MST       & 0.639 & 0.682 & 0.525 & 0.875 & 0.657 & 0.880 & 0.736 & 0.836 & 0.809 & 0.784 \\ \hline
  ODIN      & 0.899	& 0.943 & 0.906 & 0.944 & 0.911 & 0.952 & 0.885 & 0.956 & 0.827 & 0.968 \\ \hline
  iForest   & 0.851 & 0.898 & 0.730 & 0.941 & 0.708 & 0.960 & 0.837 & 0.963 & 0.955 & 0.943 \\ \hline
\end{tabular}}
  \caption{The TPRs and TNRs of selected outlier detection algorithms under a mixed cluster process (the simulation setting \uppercase\expandafter{\romannumeral3} in Section \ref{sec:Complex_Settings}).}\label{tab:Matern-Thomas_Result1}
\end{table}

\begin{table}[htb]
  \resizebox{\textwidth}{!}{\begin{tabular}{|c|c|c|c|c|c|c|c|c|c|c|}
  \hline
  \multirow{2}*{Algorithms} & \multicolumn{2}{|c|}{$d=2$} & \multicolumn{2}{|c|}{$d=3$} & \multicolumn{2}{|c|}{$d=5$} & \multicolumn{2}{|c|}{$d=10$} & \multicolumn{2}{|c|}{$d=20$} \\ \cline{2-11}
  & BA & $F_2$-score & BA & $F_2$-score & BA & $F_2$-score & BA & $F_2$-score & BA & $F_2$-score \\ 
  \hline
  RU-MCCDs  & 0.925 & 0.660 & 0.919 & 0.752 & 0.917 & 0.763 & 0.860 & 0.658 & 0.784 & 0.550 \\ \hline
  SU-MCCDs  & 0.939 & 0.756 & 0.938 & 0.813 & 0.938 & 0.819 & 0.875 & 0.685 & 0.796 & 0.582 \\ \hline
  UN-MCCDs  & 0.918 & 0.678 & 0.920 & 0.769 & 0.933 & 0.815 & 0.934 & 0.806 & 0.866 & 0.663 \\ \hline
  SUN-MCCDs & 0.922 & 0.736 & 0.936 & 0.816 & 0.948 & 0.866 & 0.949 & 0.850 & 0.877 & 0.682 \\ \hline
  LOF       & 0.969 & 0.794 & 0.971 & 0.899 & 0.963 & 0.889 & 0.946 & 0.835 & 0.925 & 0.785 \\ \hline
  DBSCAN    & 0.919 & 0.776 & 0.893 & 0.794 & 0.874 & 0.764 & 0.872 & 0.762 & 0.861 & 0.736 \\ \hline
  MST       & 0.661 & 0.266 & 0.700 & 0.419 & 0.769 & 0.535 & 0.786 & 0.566 & 0.797 & 0.561 \\ \hline
  ODIN      & 0.921 & 0.699 & 0.925 & 0.804 & 0.932 & 0.842 & 0.921 & 0.832 & 0.898 & 0.802 \\ \hline
  iForest   & 0.875 & 0.580 & 0.836 & 0.659 & 0.834 & 0.685 & 0.900 & 0.798 & 0.949 & 0.854 \\ \hline
\end{tabular}}
  \caption{The BAs and $F_2$-scores of selected outlier detection algorithms under a mixed cluster process (the simulation setting \uppercase\expandafter{\romannumeral3} in Section \ref{sec:Complex_Settings}).}\label{tab:Matern-Thomas_Result2}
\end{table}

LOF delivers excellent overall performance, outperforming other algorithms under most simulation settings as the TPRs exceed 0.95 and TNRs larger than 0.85 substantially, with $F_2$-scores approximately equal to or larger than 0.8 regardless of the type of point process. The results align with our expectation because the outliers generated have low local density, and LOF has the advantage of identifying those low-density points thanks to its mechanism involving \emph{Local Reachability Density} (LRD). Furthermore, unlike most clustering-based algorithms, the performance of LOF does not depend on the quality of the clustering result. However, its performance declines gradually when $d \geq 10$, e.g., under the Mat\'{e}rn cluster process, the $F_2$-scores are 0.866, 0.926, 0.844, 0.802, and 0.774 as $d$ goes from 2 to 20. Here is the reasoning: With the data size remaining at the same level, increasing the number of dimensions results in more regular observations with low local densities. Therefore, the chance that LOF misclassifies regular points as outliers increases, leading to higher False Positive Rates (FPRs).

DBSCAN exhibits strong performance when $d=2$, e.g., under the Thomas cluster process, its $F_2$-score reaches 0.755, outperforming all other algorithms. Additionally, the TNRs are almost 1 under all simulation cases thanks to the exceptional clustering quality. However, its TPRs decrease gradually as $d$ increases, particularly under the simulation settings with Gaussian clusters. For example, the TPRs are 0.849, 0.789, 0.749, 0.746, and 0.725 under the mixed cluster process. DBSCAN's distance-based mechanism can explain this issue. The algorithm labels outliers by identifying points whose $4^{th}$-nearest-neighbor distances ($4^{th}$-dists) are substantially greater than others. However, the $4^{th}$-dists of outliers become close to those of regular points located along the edges of Gaussian clusters, making it challenging to differentiate them with DBSCAN, and this issue deteriorates as the number of dimensions increases and distances between points become close. Consequently, even if an outlier has a slight chance of being a ``seed", the likelihood that this outlier being \emph{density-reachable} to an existing seed grows with higher dimensionality, leading to smaller TPRs. Nonetheless, DBSCAN remains a top-performing algorithm when $d \leq 5$.

The MST algorithm consistently delivers the poorest performance under each simulation setting. For instance, under the Thomas cluster process, its $F_2$-scores are 0.240, 0.384, 0.557, 0.569, and 0.582 – substantially lower than those of the other algorithms, even after carefully tuning the thresholds for different dimensions. The MST algorithm generally possesses several inherent weaknesses in clustering and outlier detection. Firstly, it lacks robustness against noise or outliers when identifying and removing ``inconsistent" edges. For example, given two distinct clusters of points and a few noise points or outliers between them, the MST algorithm might falsely link them, misinterpreting two clusters as one. Secondly, it lacks the mechanisms to address the masking problem. Since the distances between closely grouped outliers can be similar, the MST algorithm may retain most edges connecting them, resulting in low TPRs.

The performance of the ODIN algorithm is stable across different dimensions. It delivers the best performance under the Mat\'{e}rn cluster process, where the $F_2$-scores are 0.783, 0.882, 0.901, 0.901, and 0.879, close to or even higher than those by LOF. However, its performance degrades when there are Gaussian clusters, which is still comparable to LOF under the mixed point process but substantially worse under the Thomas cluster process. It is expected when considering the characteristics of the $k$NN graph, where the points along the border of Gaussian clusters tend to have low in-degree numbers.

Unlike other algorithms, iForest behaves uniquely compared to other algorithms: its performance improves incrementally as $d$ increases. For instance, under the Thomas cluster process, the $F_2$-scores progress from 0.545 to 0.774 as the dimensions increase from 2 to 20. While delivering mediocre performance when $d \leq 5$, it outperforms most other algorithms under most simulation settings when $d$ exceeds 10. For example, under the Mat\'{e}rn cluster process, the $F_2$-scores reach 0.925 when $d=20$, substantially higher than any other algorithms. This behavior can be explained by its sensitivity to swamping and masking problems, which are prevalent in low-dimensional space where the data points are relatively dense. Although building iTrees on smaller subsets of the data reduces the intensity, making it easier to isolate outliers, it is not a perfect solution. If swamping or masking is severe within a data set, even iTrees with sub-samples struggle to differentiate outliers effectively.

Now, we focus on the four CCD-based algorithms. Due to the reasons outlined earlier, the SUN-MCCD and SU-MCCD algorithms consistently perform better than their prototypes (the RU-MCCD and UN-MCCD algorithms) under all the simulation settings, which is consistent with the result of the previous Monte Carlo simulations. For instance, under the Thomas cluster process, the SUN-MCCD algorithm attains $F_2$-scores of 0.687, 0.806, 0.845, 0.822, and 0.664, surpassing those of the UN-MCCD algorithm. On the other hand, the SUN-MCCD and SU-MCCD algorithms exhibit similar performance when $d\leq3$ due to the same mechanisms they share. However, once $d$ exceeds 5, the SUN-MCCD algorithm achieves superior performance thanks to its better adaptability in high dimensions. For example, their $F_2$- scores are 0.850 and 0.685 under the mixed cluster process when $d=20$. Consequently, our primary comparison will focus on the SUN-MCCD algorithms against other established approaches.

Under the Mat\'{e}rn cluster process, the SUN-MCCD algorithm delivers decent results. When $d \leq 3$, its $F_2$-scores are 0.787 and 0.851, following ODIN closely and slightly lower than those of LOF and DBSCAN. When $d=5$ and 10, it attains the highest $F_2$-scores among all the algorithms, with both surpassing 0.9. It performs worse than ODIN and iForest when $d$ increases to 20; this is because SUN-MCCD is distribution-based, and capturing the distribution patterns in a data set with limited size within high-dimensional space poses challenges. Nonetheless, its performance remains comparable to LOF and DBSCAN.

Considering the Thomas cluster process, nearly all the algorithms degrade due to the non-uniformity of Gaussian clusters. LOF achieves the highest $F_2$ scores across all dimensions: 0.741, 0.877, 0.908, 0.876, and 0.802. In comparison, the SUN-MCCD algorithm achieves the second-best overall performance with $F_2$-scores of 0.687, 0.806, 0.845, 0.822, and 0.664; when $d=3,\ 5,\ $ and $10$, it closely follows LOF while substantially outperforming other existing algorithms.

The situation under the mixed cluster process resembles those of the Mat\'{e}rn cluster process. When $d=2$, the SUN-MCCD algorithm performs slightly below LOF and DBSCAN; when $d=3$ and 5, it delivers the second best results, closely aligned with LOF's performance; when d=10, the SUN-MCCD algorithm achieves a marginal advantage over LOF with the highest $F_2$-score of 0.850.

For each simulation setting, we rank the performance of algorithms by their $F_2$-scores in the following table (Table \ref{Complex_Cluster_Ranking}), and the top 3 are highlighted in bold.

% For each simulation setting, we rank the performance of algorithms by their $F_2$-scores in the following table (Table \ref{Complex_Cluster_Ranking}), and the top 3 are highlighted in bold. LOF takes the first place under 9 simulation settings (out of 15); the SUN-MCCD algorithm delivers the second best overall performance with 3 first places and 5 second places; ODIN, DBSCAN, and the SU-MCCD algorithm follow closely behind; the MST algorithm performs worst under most simulation settings.

\begin{table}[htb]
  \center
  \resizebox{0.7\columnwidth}{!}{\begin{tabular}{|c|c|c|c|c|c|c|c|c|c|c|c|c|c|c|c|}
  \hline
  & \multicolumn{5}{|c|}{Mat\'{e}rn} & \multicolumn{5}{|c|}{Thomas} & \multicolumn{5}{|c|}{Mixed} \\ \cline{1-16}
  $d$ & 2 & 3 & 5 & 10 & 20 & 2 & 3 & 5 & 10 & 20 & 2 & 3 & 5 & 10 & 20 \\ 
  \hline
  RU-MCCDs  & \emph{6} & \emph{6} & \emph{6} & \emph{8} & \emph{8} & \emph{7} & \emph{7} & \emph{7} & \emph{8} & \emph{9} & \emph{7} & \emph{7} & \emph{7} & \emph{8} & \emph{9} \\ \hline
  SU-MCCDs  & \emph{\textbf{3}} & \emph{\textbf{3}} & \emph{\textbf{3}} & \emph{4} & \emph{7} & \emph{\textbf{3}} & \emph{\textbf{3}} & \emph{4} & \emph{7} & \emph{8} & \emph{\textbf{3}} & \emph{\textbf{3}} & \emph{4} & \emph{7} & \emph{7} \\ \hline
  UN-MCCDs  & \emph{7} & \emph{5} & \emph{5} & \emph{\textbf{3}} & \emph{6} & \emph{5} & \emph{4} & \emph{5} & \emph{5} & \emph{7} & \emph{6} & \emph{6} & \emph{5} & \emph{4} & \emph{6} \\ \hline
  SUN-MCCDs & \emph{4} & \emph{4} & \emph{\textbf{1}} & \emph{\textbf{1}} & \emph{4} & \emph{4} & \emph{\textbf{2}} & \emph{\textbf{2}} & \emph{\textbf{2}} & \emph{5} & \emph{4} & \emph{\textbf{2}} & \emph{\textbf{2}} & \emph{\textbf{1}} & \emph{5} \\ \hline
  LOF       & \emph{\textbf{\textbf{1}}} & \emph{\textbf{1}} & \emph{4} & \emph{6} & \emph{\textbf{3}} & \emph{\textbf{1}} & \emph{\textbf{1}} & \emph{\textbf{1}} & \emph{\textbf{1}} & \emph{\textbf{1}} & \emph{\textbf{1}} & \emph{\textbf{1}} & \emph{\textbf{1}} & \emph{\textbf{2}} & \emph{\textbf{3}} \\ \hline
  DBSCAN    & \emph{\textbf{2}} & \emph{7} & \emph{7} & \emph{7} & \emph{5} & \emph{\textbf{2}} & \emph{6} & \emph{6} & \emph{4} & \emph{\textbf{3}} & \emph{\textbf{2}} & \emph{5} & \emph{6} & \emph{6} & \emph{4} \\ \hline
  MST       & \emph{9} & \emph{9} & \emph{9} & \emph{9} & \emph{9} & \emph{9} & \emph{9} & \emph{9} & \emph{9} & \emph{6} & \emph{9} & \emph{9} & \emph{9} & \emph{9} & \emph{8} \\ \hline
  ODIN      & \emph{5} & \emph{\textbf{2}} & \emph{\textbf{2}} & \emph{\textbf{2}} & \emph{\textbf{2}} & \emph{6} & \emph{5} & \emph{\textbf{3}} & \emph{\textbf{3}} & \emph{4} & \emph{5} & \emph{4} & \emph{\textbf{3}} & \emph{\textbf{3}} & \emph{\textbf{2}} \\ \hline
  iForest   & \emph{8} & \emph{8} & \emph{8} & \emph{5} & \emph{\textbf{1}} & \emph{8} & \emph{8} & \emph{8} & \emph{6} & \emph{\textbf{2}} & \emph{8} & \emph{8} & \emph{8} & \emph{5} & \emph{\textbf{1}} \\ \hline
\end{tabular}}
  \caption{The rankings (by $F_2$-scores) of all the algorithms under each simulation setting of this section, top 3 are highlighted in bold.}\label{Complex_Cluster_Ranking}
\end{table}

In summary, the SUN-MCCD algorithm consistently ranks among the top-performing algorithms with the ``flexible" simulation settings. It performs better than other cluster-based algorithms, such as DBSCAN and MST, while comparable to or better than ODIN and iForest. Although LOF delivers the best overall performance, the SUN-MCCD algorithm remains a compelling choice. Moreover, the SUN-MCCD algorithm simultaneously produces clustering results, a capability absent in LOF. Furthermore, the SUN-MCCD algorithm is almost input parameter-free, strengthening its appeal compared to other algorithms.

\section{Real Data Examples}
\label{sec:Real-Data-Examples}
In this section, we evaluate the performance of all four CCD-based algorithms in real-life data and compare them with the state-of-the-art methods. Real-life data are much more complicated than the artificial data sets in Sections \ref{sec:Simul_General}, \ref{sec:Simul_Focus}, and \ref{sec:Flex_Simul}. Those data sets are obtained from \emph{Outlier Detection Datasets (ODDS)} \cite{Rayana2016} and \emph{ELKI Outlier Datasets} \cite{DBLP:journals/corr/abs-1902-03616}. Before outlier detection, we need to normalize all the features. A traditional way of normalization is subtracting the sample mean and dividing by the sample standard deviation, which is not robust to outliers exhibiting extreme feature values \cite{maronna2019robust}. Therefore, we employ a robust alternative way with mean and standard deviation replaced by the median (Med) and the Normalized Median Absolute Deviation about the median (MADN). The details of the data sets are summarized below.

\begin{itemize}
  \item [] \textbf{Brief descriptions of each real-life data set.}
  \item \emph{hepatitis}: A data set contains patients suffering from hepatitis that have died (outliers) or survived (inliers).
  \item \emph{glass}: This data set consists of 6 types of glass, and the $6^{th}$ type is a minority class, thus marked as outliers, while all other points are inliers.
  \item \emph{vertebral}: A data set with six bio-mechanical features, which are used to classify orthopedic patients either as normal (inliers) or abnormal (outliers).
  \item \emph{ecoli}: A data set consists of eight classes, three of which are the minority classes and are used as outliers.
  \item \emph{stamps}: A data set with each observation representing forged (photocopied or scanned+printed) stamps (outliers) or genuine (ink) stamps (inlier). The features are based on the color and printing properties of the stamps.
  \item \emph{vowels}: Four male speakers (classes) uttered two Japanese vowels successively; class (speaker) 1 is used as an outlier. The other speakers (classes) are considered inliers.
  \item \emph{waveform}: This data set represents three classes of waves, where class 1 was defined as an outlier/minority class.
  \item \emph{wilt}: This data set differentiates diseased trees (outliers) from other land covers (inliers).
\end{itemize}

\begin{table}[htb]
  \center
  \footnotesize{\begin{tabular}{|c|c|c|c|}
  %\resizebox{\textwidth}{!}{\begin{tabular}{|c|c|c|c|c|}
  \hline
  & $n$ & $d$ & \# of outliers \\ \hline
  hepatitis & 74   & 19 & 7 (9.5\%)   \\ \hline
  glass     & 214  & 9  & 10 (4.5\%)  \\ \hline
  vertebral & 240  & 6  & 30 (12.5\%) \\ \hline
  ecoli     & 336  & 7  & 9 (2.6\%)   \\ \hline
  stamps    & 340  & 9  & 31 (9.1\%)  \\ \hline
  vowels    & 1456 & 12 & 50 (3.4\%)  \\ \hline
  waveform  & 3443 & 21 & 100 (2.9\%) \\ \hline
  wilt      & 4735 & 5  & 257 (5.4\%) \\ \hline
\end{tabular}}
 \caption{The size ($n$), dimensionality ($d$), and contamination level of each real-life data set.}\label{Real_Data}
\end{table}

Similar to the parameter selection in Section \ref{sec:Flex_Simul}, for LOF, we choose the lower and upper bound of $k$ to be 11 and 30, finding the highest LOF for each point, and setting the threshold to 1.5. Considering DBSCAN, to get an appropriate cutoff value for the $4$-dist, we assume the percentage of outliers is known when conducting DBSCAN. When conducting MST, we set the threshold value for ``inconsistent" edges to 1.2, and we label any clusters with sizes smaller than 2\% of the size of the entire data set as outliers. As for ODIN, we set the input parameters $k$ and $T$ to 0.5 and 0.33 degrees of the size of the data set; finally, we construct iForests with 1000 iTrees with the sub-sample size of 256 for each, a threshold of 0.55 (for the outlyingness score) is used to capture the outliers. We record TPRs, TNRs, BAs, and $F_2$-scores in Tables \ref{Real_Data_Result2.1} and \ref{Real_Data_Result2.2}.

\begin{table}[htb]
  \resizebox{\textwidth}{!}{\begin{tabular}{|c|c|c|c|c|c|c|c|c|c|c|c|c|c|c|c|c|}
  \hline
  \multirow{2}*{} & \multicolumn{2}{|c|}{hepatitis} & \multicolumn{2}{|c|}{glass} & \multicolumn{2}{|c|}{vertebral} & \multicolumn{2}{|c|}{ecoli} & \multicolumn{2}{|c|}{stamps} & \multicolumn{2}{|c|}{vowels} & \multicolumn{2}{|c|}{waveform} & \multicolumn{2}{|c|}{wilt} \\ \cline{2-17}
   & TPR & TNR & TPR & TNR & TPR & TNR & TPR & TNR & TPR & TNR & TPR & TNR & TPR & TNR & TPR & TNR \\
  \hline
  RU-MCCDs  & 0.286 & 0.881 & 1.000 & 0.363 & 0.467 & 0.643 & 0.750 & 0.558 & 0.065 & 0.958 & 1.000 & 0.327 & 0.870 & 0.678 & 0.763 & 0.630 \\ \hline
  SU-MCCDs  & 0.286 & 0.925 & 1.000 & 0.363 & 0.200 & 0.576 & 0.750 & 0.680 & 0.516	& 0.883 & 1.000 & 0.373 & 0.830 & 0.774 & 0.300 & 0.785 \\ \hline
  UN-MCCDs  & 0.714 & 0.657 & 0.222 & 0.765 & 0.033 & 0.914 & 0.750 & 0.668 & 0.484 & 0.812 & 1.000 & 0.541 & 0.860 & 0.664 & 0.140 & 0.897 \\ \hline
  SUN-MCCDs & 0.714 & 0.657 & 1.000 & 0.540 & 0.100 & 0.928 & 0.750 & 0.741 & 0.516	& 0.884 & 0.978 & 0.676 & 0.620 & 0.898 & 0.366 & 0.745 \\ \hline
  LOF       & 0.000 & 0.985 & 0.778 & 0.618 & 0.033 & 0.938 & 0.500 & 0.930 & 0.161 & 0.919 & 0.370 & 0.985 & 0.000 & 1.000 & 0.031 & 0.973 \\ \hline
  DBSCAN    & 0.000 & 0.955 & 0.000 & 0.980 & 0.000 & 0.943 & 0.000 & 0.988 & 0.161 & 0.955 & 0.304 & 0.996 & 0.090 & 0.996 & 0.000 & 0.959 \\ \hline
  MST       & 0.429 & 0.866 & 0.778 & 0.662 & 0.367 & 0.695 & 0.875 & 0.546 & 0.774 & 0.437 & 0.652 & 0.553 & 0.670 & 0.484 & 0.553 & 0.672 \\ \hline
  ODIN      & 0.429 & 0.746 & 0.111 & 0.848 & 0.167 & 0.848 & 0.750 & 0.857 & 0.290 & 0.874 & 0.587 & 0.925 & 0.370 & 0.844 & 0.062 & 0.976 \\ \hline
  iForest   & 0.143 & 0.821 & 0.111 & 0.936 & 0.000 & 0.957 & 0.750 & 0.976 & 0.097 & 0.961 & 0.022 & 0.999 & 0.000 & 0.999 & 0.004 & 0.953 \\ \hline
\end{tabular}}
 \caption{The TPRs and TNRs of selected outlier detection algorithms on real-life data sets.}\label{Real_Data_Result2.1}
\end{table}

\begin{table}[htb]
  \resizebox{\textwidth}{!}{\begin{tabular}{|c|c|c|c|c|c|c|c|c|c|c|c|c|c|c|c|c|}
  \hline
  \multirow{2}*{} & \multicolumn{2}{|c|}{hepatitis} & \multicolumn{2}{|c|}{glass} & \multicolumn{2}{|c|}{vertebral} & \multicolumn{2}{|c|}{ecoli} & \multicolumn{2}{|c|}{stamps} & \multicolumn{2}{|c|}{vowels} & \multicolumn{2}{|c|}{waveform} & \multicolumn{2}{|c|}{wilt} \\ \cline{2-17}
   & BA & $F_2$-score & BA & $F_2$-score & BA & $F_2$-score & BA & $F_2$-score & BA & $F_2$-score & BA & $F_2$-score & BA & $F_2$-score & BA & $F_2$-score \\
  \hline
  RU-MCCDs  & 0.583	& 0.263 & 0.681 & 0.257 & 0.555 & 0.335 & 0.654 & 0.164 & 0.511 & 0.072 & 0.664	& 0.196 & 0.774 & 0.278 & 0.696 & 0.336 \\ \hline
  SU-MCCDs  & 0.606 & 0.286 & 0.681 & 0.257 & 0.388 & 0.140 & 0.715 & 0.210 & 0.700 & 0.455 & 0.686	& 0.207 & 0.802 & 0.335 & 0.542 & 0.185 \\ \hline
  UN-MCCDs  & 0.686 & 0.446 & 0.493 & 0.116 & 0.474 & 0.036 & 0.709 & 0.204 & 0.648 & 0.381 & 0.771	& 0.263 & 0.762 & 0.267 & 0.519 & 0.117 \\ \hline
  SUN-MCCDs & 0.686 & 0.446 & 0.770 & 0.324 & 0.514 & 0.109 & 0.745 & 0.244 & 0.701 & 0.457 & 0.827	& 0.328 & 0.759 & 0.387 & 0.555 & 0.206 \\ \hline
  LOF       & 0.493 & 0.000 & 0.697 & 0.289 & 0.488 & 0.037 & 0.711 & 0.328 & 0.540 & 0.162 & 0.677	& 0.383 & 0.500 & 0.000 & 0.502 & 0.035 \\ \hline
  DBSCAN    & 0.478 & 0.000 & 0.490 & 0.000 & 0.471 & 0.000 & 0.494 & 0.000 & 0.557 & 0.178 & 0.650	& 0.343 & 0.543 & 0.107 & 0.673 & 0.381 \\ \hline
  MST       & 0.647 & 0.375 & 0.720 & 0.313 & 0.531 & 0.282 & 0.710 & 0.186 & 0.606 & 0.373 & 0.603	& 0.178 & 0.577 & 0.153 & 0.612 & 0.266 \\ \hline
  ODIN      & 0.587 & 0.313 & 0.480 & 0.074 & 0.507 & 0.159 & 0.803 & 0.353 & 0.582 & 0.262 & 0.756	& 0.427 & 0.607 & 0.193 & 0.519 & 0.069 \\ \hline
  iForest   & 0.482 & 0.122 & 0.524 & 0.100 & 0.479 & 0.000 & 0.863 & 0.652 & 0.529 & 0.108 & 0.510	& 0.027 & 0.500 & 0.000 & 0.479 & 0.004 \\ \hline
\end{tabular}}
 \caption{The BAs and $F_2$-scores of selected outlier detection algorithms on real-life data sets.}\label{Real_Data_Result2.2}
\end{table}

The UN-MCCD and SUN-MCCD algorithms perform the best with the hepatitis data set. Both achieve TPR and $F_2$-Scores of 0.714 and 0.446, respectively. All the other algorithms deliver much lower TPRs, leading to worse performance.

For the glass data set, the SUN-MCCD algorithm and MST achieve the highest $F_2$-scores of 0.313 and 0.324. DBSCAN fails to capture any outliers, resulting in 0 $F_2$-score. ODIN and iForest can only capture $11\%$ of outliers. Although the RU-MCCD and SU-MCCD algorithms can identify all the outliers, their TNRs are merely 0.363. 

The RU-MCCD algorithm obtains the highest $F_2$-score of 0.335 under the vertebral data set, while most other algorithms can hardly identify any outliers. 

The performance of the CCD-based algorithms is worse than other algorithms under the ecoli data set, with $F_2$-scores of approximately 0.2. Here is the reason, the intensity varies greatly across each cluster or class of the ecoli data set, making clustering and density-based algorithms unsuitable, as all of them perform badly (including MST and DBSCAN). iForest achieves the highest $F_2$-score of 0.652, with TPR and TNR of 0.750 and 0.976, respectively. LOF performs the second best, with a $F_2$-score of 0.328.

For the stamps data set, the SU-MCCD and SUN-MCCD algorithms achieve the best $F_2$ Scores of 0.455 and 0.457, respectively. All the other algorithms can barely distinguish the outliers from the regular points. 

ODIN performs the best with $F_2$-scores of 0.427 under the vowels data set, and LOF delivers comparable performance after it. Meanwhile, the performance of all the CCD-based algorithms is mediocre.

The CCD-based algorithms outperform others for the waveform data set, and the SUN-MCCD algorithm achieves the best $F_2$-score of 0.384.

Finally, considering the wilt data set, DBSCAN and the RU-MCCD algorithm get the highest $F_2$-scores of 0.381 and 0.336, respectively, while all the other algorithms perform much worse.

To summarize, the CCD-based algorithms can deliver comparable or better performance under most of the eight real-life data sets.

\section{Summary and Conclusion}
\label{sec:conclusions}

In this paper, we have developed and applied Cluster Catch Digraphs (CCDs) for outlier detection, 
aiming to identify points that deviate substantially from regular points.
One of our algorithms, the RU-MCCD algorithm, utilizes RK-CCDs to partition the data into clusters, 
followed by the D-MCG algorithm to detect outliers within each cluster 
by identifying the largest connected components. 
This method effectively captures outliers that lie outside the dominant covering balls, representing the primary clusters.

Despite its effectiveness, the RU-MCCD algorithm exhibits limitations when dealing with non-spherical clusters or clusters of varying intensities, often leading to many false positives. 
To address this, we proposed the SU-MCCD algorithm, which extends cluster coverage by including additional mutually-caught covering balls, 
thus enhancing its ability to handle clusters of arbitrary shapes or varying intensities. 
We also introduced a threshold $S_{\text{min}}$ to filter small clusters, improving robustness against the masking problem.
Monte Carlo simulations demonstrated that the SU-MCCD algorithm achieves substantially higher TNRs compared to the RU-MCCD algorithm, especially with Gaussian clusters.

However, both RU-MCCD and SU-MCCD algorithms face performance degradation in high-dimensional spaces (when $d > 10$), 
due to the intrinsic properties of the Spatial Randomness Monte Carlo Test (SR-MCT) with Ripley's \textit{K} function. 
To overcome this, we formulated the SR-MCT using Nearest Neighbor Distances (NND), 
resulting in the UN-CCDs for clustering. 
By integrating UN-CCDs into the RU-MCCD and SU-MCCD frameworks, 
we developed the UN-MCCD and SUN-MCCD algorithms, respectively. 
Monte Carlo simulations showed that these new algorithms maintain high performance in low-dimensional spaces and substantially improve $F_2$-scores when the number of dimensions exceeds 10.

In Sections \ref{sec:Simul_CCDs} and \ref{sec:Flex_Simul}, 
we compared the performance of the four CCD-based algorithms with existing outlier detection methods through extensive Monte Carlo simulations using artificially generated data. 
Among the CCD-based algorithms, 
the SUN-MCCD algorithm consistently delivered the best overall performance, 
particularly in terms of robustness and adaptability across various simulation settings. 
While the $F_2$-scores were comparable to or slightly lower than those of the Local Outlier Factor (LOF), 
the SUN-MCCD algorithm outperformed other cluster-based methods like DBSCAN and MST, 
and was on par with or better than ODIN and iForest. 
Additionally, the SUN-MCCD algorithm's near parameter-free nature makes it a compelling choice.
In Section \ref{sec:Real-Data-Examples}, 
we evaluated the algorithms using eight real-life data sets. 
Despite some performance degradation due to the increased complexity of real-world data, 
the CCD-based algorithms still delivered comparable or superior overall performance compared to other outlier detection methods.

Future research will focus on further enhancing the robustness and scalability of CCD-based algorithms for outlier detection. 
This includes developing adaptive mechanisms to dynamically determine optimal parameters, 
improving computational efficiency for large-scale data sets, and exploring hybrid approaches that combine CCDs with other advanced machine learning techniques. 
%Additionally, applying these algorithms to more diverse and complex real-world data sets will help validate their practical utility and identify areas for further improvement.

\section{Acknowledgements}

Most of the Monte Carlo simulations in this paper were completed in part with the computing resource provided by the Auburn University Easley Cluster. The remaining computations were conducted on a Ubuntu desktop powered by Intel Core i9-13900K and 64 gigabyte 6400MHz DDR5 memory, sponsored by the Department of Mathematics and Statistics of Auburn University. The authors are grateful to Art{\"u}r Manukyan for sharing the codes of KS-CCDs and RK-CCDs.

\bibliographystyle{plain}
\bibliography{rf}

\begin{thebibliography}{10}

\bibitem{akoglu2010oddball}
Leman Akoglu, Mary McGlohon, and Christos Faloutsos.
\newblock Oddball: Spotting anomalies in weighted graphs.
\newblock In {\em Advances in Knowledge Discovery and Data Mining: 14th
  Pacific-Asia Conference, PAKDD 2010, Hyderabad, India, June 21-24, 2010.
  Proceedings. Part II 14}, pages 410--421. Springer, 2010.

\bibitem{alrawashdeh2016toward}
Khaled Alrawashdeh and Carla Purdy.
\newblock Toward an online anomaly intrusion detection system based on deep
  learning.
\newblock {\em 2016 15th IEEE International Conference on Machine Learning and
  Applications (ICMLA)}, pages 195--200, 2016.

\bibitem{ankerst1999optics}
Mihael Ankerst, Markus~M Breunig, Hans-Peter Kriegel, and J{\"o}rg Sander.
\newblock Optics: Ordering points to identify the clustering structure.
\newblock {\em ACM Sigmod Record}, 28(2):49--60, 1999.

\bibitem{aryal2016revisiting}
Sunil Aryal, Kai~Ming Ting, and Gholamreza Haffari.
\newblock Revisiting attribute independence assumption in probabilistic
  unsupervised anomaly detection.
\newblock In {\em Intelligence and Security Informatics: 11th Pacific Asia
  Workshop. PAISI 2016, Auckland, New Zealand, April 19, 2016, Proceedings 11},
  pages 73--86. Springer, 2016.

\bibitem{ayadi2017outlier}
Aya Ayadi, Oussama Ghorbel, Abdulfattah~M Obeid, and Mohamed Abid.
\newblock Outlier detection approaches for wireless sensor networks: A survey.
\newblock {\em Computer Networks}, 129:319--333, 2017.

\bibitem{bacon1878novum}
Francis Bacon.
\newblock {\em Novum organum}.
\newblock Clarendon Press, Oxford, England, 1878.

\bibitem{baddeley2023robust}
Adrian Baddeley and Ya-Mei Chang.
\newblock Robust algorithms for simulating spatial cluster processes.
\newblock {\em Journal of Statistical Computation and Simulation}, pages 1--26,
  2023.

\bibitem{besag1977simple}
Julian Besag and Peter~J Diggle.
\newblock Simple monte carlo tests for spatial pattern.
\newblock {\em Journal of the Royal Statistical Society Series C: Applied
  Statistics}, 26(3):327--333, 1977.

\bibitem{bhar2013detection}
Lalmohan~M Bhar, Vivek~K Gupta, and Rajender Parsad.
\newblock Detection of outliers in designed experiments in presence of masking.
\newblock {\em Statistics and Applications}, 11(1\&2):147--160, 2013.

\bibitem{breunig2000lof}
Markus~M Breunig, Hans-Peter Kriegel, Raymond~T Ng, and J{\"o}rg Sander.
\newblock Lof: identifying density-based local outliers.
\newblock In {\em Proceedings of the 2000 ACM SIGMOD International Conference
  on Management of Data}, pages 93--104, 2000.

\bibitem{brito1997connectivity}
Maria~R Brito, Edgar~L Ch{\'a}vez, Adolfo~J Quiroz, and Joseph~E Yukich.
\newblock Connectivity of the mutual k-nearest-neighbor graph in clustering and
  outlier detection.
\newblock {\em Statistics \& Probability Letters}, 35(1):33--42, 1997.

\bibitem{campello2015hierarchical}
Ricardo~JGB Campello, Davoud Moulavi, Arthur Zimek, and J{\"o}rg Sander.
\newblock Hierarchical density estimates for data clustering, visualization,
  and outlier detection.
\newblock {\em ACM Transactions on Knowledge Discovery from Data (TKDD)},
  10(1):1--51, 2015.

\bibitem{chandola2009anomaly}
Varun Chandola, Arindam Banerjee, and Vipin Kumar.
\newblock Anomaly detection: A survey.
\newblock {\em ACM Computing Surveys (CSUR)}, 41(3):1--58, 2009.

\bibitem{chawla2004special}
Nitesh~V Chawla, Nathalie Japkowicz, and Aleksander Kotcz.
\newblock Special issue on learning from imbalanced data sets.
\newblock {\em ACM SIGKDD Explorations Newsletter}, 6(1):1--6, 2004.

\bibitem{chvatal1979greedy}
Vasek Chvatal.
\newblock A greedy heuristic for the set-covering problem.
\newblock {\em Mathematics of Operations Research}, 4(3):233--235, 1979.

\bibitem{clark1954distance}
Philip~J Clark and Francis~C Evans.
\newblock Distance to nearest neighbor as a measure of spatial relationships in
  populations.
\newblock {\em Ecology}, 35(4):445--453, 1954.

\bibitem{clark1979generalization}
Philip~J Clark and Francis~C Evans.
\newblock Generalization of a nearest neighbor measure of dispersion for use in
  $k$ dimensions.
\newblock {\em Ecology}, 60(2):316--317, 1979.

\bibitem{daskalaki2006evaluation}
Sophia Daskalaki, Ioannis Kopanas, and Nikolaos Avouris.
\newblock Evaluation of classifiers for an uneven class distribution problem.
\newblock {\em Applied Artificial Intelligence}, 20(5):381--417, 2006.

\bibitem{devinney2003class}
Jason~G DeVinney.
\newblock {\em The class cover problem and its applications in pattern
  recognition}.
\newblock PhD thesis, Johns Hopkins University, 2003.

\bibitem{diggle1976statistical}
Peter~J Diggle, Julian Besag, and J~Timothy Gleaves.
\newblock Statistical analysis of spatial point patterns by means of distance
  methods.
\newblock {\em Biometrics}, pages 659--667, 1976.

\bibitem{edgeworth1887xli}
Francis~Ysidro Edgeworth.
\newblock Xli. on discordant observations.
\newblock {\em The London, Edinburgh, and Dublin Philosophical Magazine and
  Journal of Science}, 23(143):364--375, 1887.

\bibitem{ester1996density}
Martin Ester, Hans-Peter Kriegel, J{\"o}rg Sander, and Xiaowei Xu.
\newblock A density-based algorithm for discovering clusters in large spatial
  databases with noise.
\newblock In {\em Proceedings of the Second International Conference on
  Knowledge Discovery and Data Mining (KDD-96)}, volume~96, pages 226--231,
  Portland, Oregon, USA, 1996. AAAI Press.

\bibitem{fan2009resolution}
Hongqin Fan, Osmar~R Za{\"\i}ane, Andrew Foss, and Junfeng Wu.
\newblock Resolution-based outlier factor: detecting the top-n most outlying
  data points in engineering data.
\newblock {\em Knowledge and Information Systems}, 19(1):31--51, 2009.

\bibitem{fung1999outlier}
Wing-Kam Fung.
\newblock Outlier diagnostics in several multivariate samples.
\newblock {\em Journal of the Royal Statistical Society Series D: The
  Statistician}, 48(1):73--84, 1999.

\bibitem{gan2020data}
Guojun Gan, Chaoqun Ma, and Jianhong Wu.
\newblock {\em Data clustering: theory, algorithms, and applications}.
\newblock Society for Industrial and Applied Mathematics, China, 2020.

\bibitem{gao2010community}
Jing Gao, Feng Liang, Wei Fan, Chi Wang, Yizhou Sun, and Jiawei Han.
\newblock On community outliers and their efficient detection in information
  networks.
\newblock In {\em Proceedings of the 16th ACM SIGKDD International Conference
  on Knowledge Discovery and Data Mining}, pages 813--822, 2010.

\bibitem{gareth2013introduction}
James Gareth, Witten Daniela, Hastie Trevor, and Tibshirani Robert.
\newblock {\em An introduction to statistical learning: with applications in
  R}.
\newblock Spinger, New York City, New York, 2013.

\bibitem{gebremeskel2016combined}
Gebeyehu~Belay Gebremeskel, Chai Yi, Zhongshi He, and Dawit Haile.
\newblock Combined data mining techniques based patient data outlier detection
  for healthcare safety.
\newblock {\em International Journal of Intelligent Computing and Cybernetics},
  2016.

\bibitem{gogoi2011survey}
Prasanta Gogoi, Dhruba~K Bhattacharyya, Bhogeswar Borah, and Jugal~K Kalita.
\newblock A survey of outlier detection methods in network anomaly
  identification.
\newblock {\em The Computer Journal}, 54(4):570--588, 2011.

\bibitem{cure}
Sudipto Guha, Rajeev Rastogi, and Kyuseok Shim.
\newblock Cure: An efficient clustering algorithm for large databases.
\newblock {\em ACM Sigmod Record}, 27(2):73--84, 1998.

\bibitem{guha2000rock}
Sudipto Guha, Rajeev Rastogi, and Kyuseok Shim.
\newblock Rock: A robust clustering algorithm for categorical attributes.
\newblock {\em Information Systems}, 25(5):345--366, 2000.

\bibitem{hadi2009detection}
Ali~S Hadi, AHM~Rahmatullah Imon, and Mark Werner.
\newblock Detection of outliers.
\newblock {\em Wiley Interdisciplinary Reviews: Computational Statistics},
  1(1):57--70, 2009.

\bibitem{hautamaki2004outlier}
Ville Hautamaki, Ismo Karkkainen, and Pasi Franti.
\newblock Outlier detection using k-nearest neighbour graph.
\newblock In {\em Proceedings of the 17th International Conference on Pattern
  Recognition, 2004. ICPR 2004.}, volume~3, pages 430--433. IEEE, 2004.

\bibitem{hawkins1980identification}
Douglas~M Hawkins.
\newblock {\em Identification of outliers}, volume~11.
\newblock Springer, New York City, New York, 1980.

\bibitem{hinneburg1998efficient}
Alexander Hinneburg, Daniel~A Keim, et~al.
\newblock {\em An efficient approach to clustering in large multimedia
  databases with noise}, volume~98.
\newblock Bibliothek der Universit{\"a}t, Konstanz, Germany, 1998.

\bibitem{hochbaum1982approximation}
Dorit~S Hochbaum.
\newblock Approximation algorithms for the set covering and vertex cover
  problems.
\newblock {\em SIAM Journal on Computing}, 11(3):555--556, 1982.

\bibitem{huber2011robust}
Peter~J Huber and Elvezio~M Ronchetti.
\newblock {\em Robust statistics}.
\newblock John Wiley \& Sons, 2011.

\bibitem{jin2006ranking}
Wen Jin, Anthony~KH Tung, Jiawei Han, and Wei Wang.
\newblock Ranking outliers using symmetric neighborhood relationship.
\newblock In {\em Pacific-Asia Conference on Knowledge Discovery and Data
  Mining}, pages 577--593. Springer, 2006.

\bibitem{karp1972reducibility}
Richard~M Karp.
\newblock Reducibility among combinatorial problems.
\newblock In {\em Complexity of Computer Computations}, pages 85--103.
  Springer, 1972.

\bibitem{karypis1999chameleon}
George Karypis, Eui-Hong Han, and Vipin Kumar.
\newblock Chameleon: Hierarchical clustering using dynamic modeling.
\newblock {\em Computer}, 32(8):68--75, 1999.

\bibitem{kaufman2009finding}
Leonard Kaufman and Peter~J Rousseeuw.
\newblock {\em Finding groups in data: an introduction to cluster analysis}.
\newblock John Wiley \& Sons, 2009.

\bibitem{keller2012hics}
Fabian Keller, Emmanuel Muller, and Klemens Bohm.
\newblock Hics: High contrast subspaces for density-based outlier ranking.
\newblock In {\em 2012 IEEE 28th International Conference on Data Engineering},
  pages 1037--1048. IEEE, 2012.

\bibitem{kriegel2009loop}
Hans-Peter Kriegel, Peer Kr{\"o}ger, Erich Schubert, and Arthur Zimek.
\newblock Loop: local outlier probabilities.
\newblock In {\em Proceedings of the 18th ACM Conference on Information and
  Knowledge Management}, pages 1649--1652, 2009.

\bibitem{liu2008isolation}
Fei~Tony Liu, Kai~Ming Ting, and Zhi-Hua Zhou.
\newblock Isolation forest.
\newblock In {\em 2008 eighth IEEE international conference on data mining},
  pages 413--422. IEEE, 2008.

\bibitem{macqueen1967classification}
James MacQueen.
\newblock Classification and analysis of multivariate observations.
\newblock In {\em 5th Berkeley Symp. Math. Statist. Probability}, pages
  281--297, 1967.

\bibitem{manukyan2016classification}
Art{\"u}r Manukyan and Elvan Ceyhan.
\newblock Classification of imbalanced data with a geometric digraph family.
\newblock {\em The Journal of Machine Learning Research}, 17(1):6504--6543,
  2016.

\bibitem{manukyan2019parameter}
Art{\"u}r Manukyan and Elvan Ceyhan.
\newblock Parameter free clustering with cluster catch digraphs (technical
  report).
\newblock {\em ArXiv Preprint ArXiv:1912.11926}, 2019.

\bibitem{marchette2005random}
David~J Marchette.
\newblock {\em Random graphs for statistical pattern recognition}.
\newblock John Wiley \& Sons, Hoboken, New Jersey, 2005.

\bibitem{maronna2019robust}
Ricardo~A Maronna, R~Douglas Martin, Victor~J Yohai, and Mat{\'\i}as
  Salibi{\'a}n-Barrera.
\newblock {\em Robust statistics: theory and methods (with R)}.
\newblock John Wiley \& Sons, Chischester, England, 2019.

\bibitem{Matrn1960SpatialV}
Bertil Mat{\'e}rn.
\newblock {\em Stochastic models and their application to some problems in
  forest surveys}, volume~49.
\newblock Stockholm, 1960.

\bibitem{momtaz2013dwof}
Rana Momtaz, Nesma Mohssen, and Mohammad~A Gowayyed.
\newblock Dwof: A robust density-based outlier detection approach.
\newblock In {\em Iberian Conference on Pattern Recognition and Image
  Analysis}, pages 517--525. Springer, 2013.

\bibitem{moonesinghe2006outlier}
HDK Moonesinghe and Pang-Ning Tan.
\newblock Outlier detection using random walks.
\newblock In {\em 2006 18th IEEE International Conference on Tools with
  Artificial Intelligence (ICTAI'06)}, pages 532--539. IEEE, 2006.

\bibitem{neyman1958statistical}
Jerzy Neyman and Elizabeth~L Scott.
\newblock Statistical approach to problems of cosmology.
\newblock {\em Journal of the Royal Statistical Society Series B: Statistical
  Methodology}, 20(1):1--29, 1958.

\bibitem{ng2002clarans}
Raymond~T Ng and Jiawei Han.
\newblock Clarans: A method for clustering objects for spatial data mining.
\newblock {\em IEEE Transactions on Knowledge and Data Engineering},
  14(5):1003--1016, 2002.

\bibitem{noble2003graph}
Caleb~C Noble and Diane~J Cook.
\newblock Graph-based anomaly detection.
\newblock In {\em Proceedings of the 9th ACM SIGKDD International Conference on
  Knowledge Discovery and Data Mining}, pages 631--636, 2003.

\bibitem{olteanu2023meta}
Madalina Olteanu, Fabrice Rossi, and Florian Yger.
\newblock Meta-survey on outlier and anomaly detection.
\newblock {\em Neurocomputing}, 555:126634, 2023.

\bibitem{papadimitriou2003loci}
Spiros Papadimitriou, Hiroyuki Kitagawa, Phillip~B Gibbons, and Christos
  Faloutsos.
\newblock Loci: Fast outlier detection using the local correlation integral.
\newblock In {\em Proceedings 19th International Conference on Data Engineering
  (Cat. No. 03CH37405)}, pages 315--326. IEEE, 2003.

\bibitem{paula2016deep}
Ebberth~L Paula, Marcelo Ladeira, Rommel~N Carvalho, and Thiago Marzagao.
\newblock Deep learning anomaly detection as support fraud investigation in
  brazilian exports and anti-money laundering.
\newblock {\em 2016 15th IEEE International Conference on Machine Learning and
  Applications (ICMLA)}, pages 954--960, 2016.

\bibitem{marchette2003classification}
Carey~E Priebe, David~J Marchette, Diego Socolinsky, and Jason DeVinney.
\newblock Classification using class cover catch digraphs.
\newblock {\em Journal of Classification}, 20:3--23, 2003.

\bibitem{Rayana2016}
Shebuti Rayana.
\newblock {ODDS} library.
\newblock https://odds.cs.stonybrook.edu, 2016.
\newblock Stony Brook, NY: Stony Brook University, Department of Computer
  Science.

\bibitem{ren2004rdf}
Dongmei Ren, Baoying Wang, and William Perrizo.
\newblock Rdf: A density-based outlier detection method using vertical data
  representation.
\newblock In {\em Fourth IEEE International Conference on Data Mining
  (ICDM'04)}, pages 503--506. IEEE, 2004.

\bibitem{ripley1976second}
Brian~D Ripley.
\newblock The second-order analysis of stationary point processes.
\newblock {\em Journal of Applied Probability}, 13(2):255--266, 1976.

\bibitem{rcoh2020}
Alex Roh.
\newblock Linear time median finding.
\newblock https://rcoh.me/posts/linear-time-median-finding/, January 2018.
\newblock Accessed: 2023-09-13.

\bibitem{samariya2023comprehensive}
Durgesh Samariya and Amit Thakkar.
\newblock A comprehensive survey of anomaly detection algorithms.
\newblock {\em Annals of Data Science}, 10(3):829--850, 2023.

\bibitem{samiuddin1990nonparametric}
Muhammad Samiuddin and Ghada~M El-Sayyad.
\newblock On nonparametric kernel density estimates.
\newblock {\em Biometrika}, 77(4):865--874, 1990.

\bibitem{DBLP:journals/corr/abs-1902-03616}
Erich Schubert and Arthur Zimek.
\newblock {ELKI:} {A} large open-source library for data analysis - {ELKI}
  release 0.7.5 ``heidelberg".
\newblock {\em CoRR}, abs/1902.03616, 2019.

\bibitem{schubert2014local}
Erich Schubert, Arthur Zimek, and Hans-Peter Kriegel.
\newblock Local outlier detection reconsidered: a generalized view on locality
  with applications to spatial, video, and network outlier detection.
\newblock {\em Data Mining and Knowledge Discovery}, 28:190--237, 2014.

\bibitem{smiti2020critical}
Abir Smiti.
\newblock A critical overview of outlier detection methods.
\newblock {\em Computer Science Review}, 38:100306, 2020.

\bibitem{sokolova2006beyond}
Marina Sokolova, Nathalie Japkowicz, and Stan Szpakowicz.
\newblock Beyond accuracy, f-score and roc: a family of discriminant measures
  for performance evaluation.
\newblock In {\em Australasian Joint Conference on Artificial Intelligence},
  pages 1015--1021. Springer, 2006.

\bibitem{stigler1981gauss}
Stephen~M Stigler.
\newblock Gauss and the invention of least squares.
\newblock {\em The Annals of Statistics}, pages 465--474, 1981.

\bibitem{tang2002enhancing}
Jian Tang, Zhixiang Chen, Ada Wai-Chee Fu, and David~W Cheung.
\newblock Enhancing effectiveness of outlier detections for low density
  patterns.
\newblock In {\em Pacific-Asia Conference on Knowledge Discovery and Data
  Mining}, pages 535--548. Springer, 2002.

\bibitem{tax2004support}
David~MJ Tax and Robert~PW Duin.
\newblock Support vector data description.
\newblock {\em Machine Learning}, 54(1):45--66, 2004.

\bibitem{thomas1949generalization}
Marjorie Thomas.
\newblock A generalization of {Poisson's} binomial limit for use in ecology.
\newblock {\em Biometrika}, 36(1/2):18--25, 1949.

\bibitem{wang2018new}
Chao Wang, Hui Gao, Zhen Liu, and Yan Fu.
\newblock A new outlier detection model using random walk on local information
  graph.
\newblock {\em IEEE Access}, 6:75531--75544, 2018.

\bibitem{wang2019survey}
Hongzhi Wang, Mohamed~Jaward Bah, and Mohamed Hammad.
\newblock Progress in outlier detection techniques: A survey.
\newblock {\em IEEE Access}, 7:107964--108000, 2019.

\bibitem{wang2018masking}
Shanshan Wang and Robert Serfling.
\newblock On masking and swamping robustness of leading nonparametric outlier
  identifiers for multivariate data.
\newblock {\em Journal of Multivariate Analysis}, 166:32--49, 2018.

\bibitem{witten2013introduction}
Daniela Witten and Gareth James.
\newblock {\em An introduction to statistical learning with applications in R}.
\newblock Springer Publication, New York City, New York, 2013.

\bibitem{xu1998distribution}
Xiaowei Xu, Martin Ester, Hans-Peter Kriegel, and J{\"o}rg Sander.
\newblock A distribution-based clustering algorithm for mining in large spatial
  databases.
\newblock In {\em Proceedings 14th International Conference on Data
  Engineering}, pages 324--331. IEEE, 1998.

\bibitem{MST}
Charles~T Zahn.
\newblock Graph-theoretical methods for detecting and describing gestalt
  clusters.
\newblock {\em IEEE Transactions on Computers}, 100(1):68--86, 1971.

\bibitem{zhang2013advancements}
Ji~Zhang.
\newblock Advancements of outlier detection: A survey.
\newblock {\em ICST Transactions on Scalable Information Systems}, 13(1):1--26,
  2013.

\bibitem{zhang2009new}
Ke~Zhang, Marcus Hutter, and Huidong Jin.
\newblock A new local distance-based outlier detection approach for scattered
  real-world data.
\newblock In {\em Advances in Knowledge Discovery and Data Mining: 13th
  Pacific-Asia Conference, PAKDD 2009 Bangkok, Thailand, April 27-30, 2009
  Proceedings 13}, pages 813--822. Springer, 2009.

\end{thebibliography}

\end{document}